\newtheorem{example}{Example}
\renewcommand{\citet}{\cite}
\renewcommand{\citep}{\cite}
\title{Gradient dynamics for low-rank fine-tuning beyond kernels}
\author{
Arif Kerem Dayi\thanks{Email: \href{mailto:keremdayi@college.harvard.edu}{keremdayi@college.harvard.edu}}\\
\small Harvard College \\ 
\and
Sitan Chen\thanks{Email: \href{mailto:sitan@seas.harvard.edu}{sitan@seas.harvard.edu}, this work was supported in part by NSF Award 2331831} \\
\small Harvard SEAS
}
\date{}
\renewcommand{\epsilon}{\varepsilon}
\renewcommand{\top}{\intercal}
\newcommand{\Id}{I}
\begin{document}

\maketitle

\begin{abstract}
    LoRA has emerged as one of the \emph{de facto} methods for fine-tuning foundation models with low computational cost and memory footprint. The idea is to only train a low-rank perturbation to the weights of a pre-trained model, given supervised data for a downstream task. Despite its empirical sucess, from a mathematical perspective it remains poorly understood what learning mechanisms ensure that gradient descent converges to useful low-rank perturbations.

    In this work we study low-rank fine-tuning in a student-teacher setting. We are given the weights of a two-layer base model $f$, as well as i.i.d. samples $(x,f^*(x))$ where $x$ is Gaussian and $f^*$ is the teacher model given by perturbing the weights of $f$ by a rank-1 matrix. This generalizes the setting of generalized linear model (GLM) regression where the weights of $f$ are zero.

    When the rank-1 perturbation is comparable in norm to the weight matrix of $f$, the training dynamics are nonlinear. Nevertheless, in this regime we prove under mild assumptions that a student model which is initialized at the base model and trained with online gradient descent will converge to the teacher in $dk^{O(1)}$ iterations, where $k$ is the number of neurons in $f$. Importantly, unlike in the GLM setting, the complexity does not depend on fine-grained properties of the activation's Hermite expansion. We also prove that in our setting, learning the teacher model ``from scratch'' can require significantly more iterations.
\end{abstract}

\thispagestyle{empty}

\clearpage

\thispagestyle{empty}

\tableofcontents

\thispagestyle{empty}

\clearpage

\setcounter{page}{1}

\section{Introduction}

Modern deep learning at scale involves two phases: pre-training a foundation model with self-supervised learning, and fine-tuning the model towards various downstream tasks. Given the significant computational cost of the former, effective fine-tuning has been essential to the deployment of these models under hardware constraints and the development of powerful open-source models.

In this space, Low-Rank Adaptation (LoRA) has emerged as one of the most successful and widely adopted methods~\citep{hulora}. The idea is to freeze the weights of the pre-trained model and only train \emph{low-rank perturbations} to the weight matrices. Remarkably, this works well even with rank 1 perturbations, reducing the number of trainable parameters by as many as four orders of magnitude.

Despite the surprising effectiveness of LoRA in practice, it is poorly understood from a theoretical perspective why this method works so well. While it is known that for sufficiently deep and wide pre-trained networks, any sufficiently simple target model can be approximated by a low-rank perturbation of the larger model~\citep{zengexpressive}, it is largely unknown what mechanisms ensure that stochastic gradient descent (SGD) converges to these perturbations. Recent works have made initial progress towards understanding this question from the perspective of kernel approximations of neural networks in the lazy training regime~\citep{jang2024lora,malladi2023kernel}. These works consider a setting where the perturbation is small enough relative to the weights of the pre-trained model that the fine-tuned model is well-approximated by its linearization around the pre-trained model.

While the kernel picture provides useful first-order intuition for the dynamics of fine-tuning, it only partially explains its success. For one, the kernel approximation is mainly relevant in the few-shot setting where the network is only fine-tuned on a small number of examples (e.g. a few dozen), but the gap between what is possible with few- vs. many-shot fine-tuning is significant. Even within the few-shot setting, \cite{malladi2023kernel} found that fine-tuning for certain language tasks is not well-explained by kernel behavior, and neither is prompt-based fine-tuning if the prompt is insufficiently aligned with the pre-training task. The gap is even more stark for fine-tuning without prompts. 

In this work we ask:
\begin{center}
    {\em Why does SGD for low-rank fine-tuning converge to a good solution even when the kernel approximation breaks down?}
\end{center}

\noindent To answer this question, we study fine-tuning in a natural student-teacher setting where the training dynamics are inherently non-linear.

\subsection{Problem formulation} Let $\mathcal{F} = \{f_\theta\}_{\theta\in\Theta}$ be some family of neural networks, each parametrized by a set of weights $\theta$. Suppose we are given $\theta_0\in\Theta$, corresponding to a pre-trained \emph{base model}, and then get access to training data $\{(x_i, y_i)\}^N_{i=1}$ for fine-tuning. In this work, we focus on the setting of \emph{realizable Gaussian data} in which the $x_i$'s are i.i.d. Gaussian and there exists a perturbation of the base model, $\theta = \theta_0 + \Delta$ where $\Delta$ is low-rank, for which $f_{\theta}$ perfectly fits the training data. That is, 
\begin{equation}
    x_i \sim\mathcal{N}(0,\Id_n), \ \ \ f_{\theta}(x_i) = y_i
\end{equation} for all $i = 1,\ldots,N$. We call $f_{\theta}$ the \emph{teacher model}.\footnote{In fact our analysis directly extends to the setting where there is unbiased, moment-bounded label noise, but we focus on the noiseless setting as it is slightly cleaner while exhibiting all the relevant phenomena.}

The goal is to find $\hat{\theta} = \theta_0 + \hat{\Delta}$, where $\hat{\Delta}$ is also low-rank, such that the objective $L(\hat{\theta})$ is small. Here the objective is given by
\begin{equation*}
    L(\hat{\theta}) \triangleq \mathbb{E}_x[\ell(f_{\hat{\theta}}(x), f_{\theta}(x))]\,,
\end{equation*}
where $\ell: \R^2\to\R_{\ge 0}$ is some loss function. In this work we specialize to squared loss.

Algorithms for fine-tuning in practice are based on training the student model, which is initialized to the base model, with gradient descent on $L$. That is, the parameter $\hat{\Delta}$ is repeatedly updated via stochastic gradient descent on the function $\hat{\Delta} \mapsto L(\theta_0 + \hat{\Delta})$. To ensure that $\hat{\Delta}$ is low-rank throughout the course of training, it is typically parametrized by a low-rank factorization, and the matrices in this factorization are the ones with respect to which one performs gradient descent.

Unfortunately, rigorously analyzing the gradient dynamics at this level of generality is well outside the reach of current theory. Instead, in this work we will focus on a specific instantiation of the above setting, namely \emph{two-layer networks} and \emph{rank-1 perturbations}. Despite the apparent simplicity of this setting, the dynamics here already exhibit rich behavior beyond the kernel regime, and as we will see, this model strictly generalizes the problem of \emph{generalized linear model (GLM)} regression,\footnote{This is sometimes referred to as \emph{single-index model} regression. While closely related, the latter technically refers to the setting where the activation $\sigma$ is unknown.} a widely studied toy model in the theoretical foundations of deep learning (see Section~\ref{sec:related}).

Concretely, given $k\in\mathbb{N}$, take $\mathcal{F}$ to be the set of all two-layer networks of width $k$. The base model then takes the form 
\begin{equation}
    f_{\theta_0}(x) \triangleq \lambda^\top\sigma(Wx)\,, \label{eq:twolayer}
\end{equation}
where $\theta_0 = (\lambda, W) \in \R^k \times \R^{k\times d}$ and $\sigma$ is a known scalar activation applied entrywise. 

The low-rank perturbation defining the teacher model will be given by $\theta \triangleq (\lambda, W^*)$ where
\begin{equation}
    W^* = W + \Delta \ \ \ \text{for} \ \ \ \Delta = \xi cu^{\top} \label{eq:Wstar}
\end{equation}
for $\xi > 0$ a known \emph{scale} parameter and for unit vectors $c\in\mathbb{S}^{k-1}$, $u\in\mathbb{S}^{d-1}$. Given a target level of error $\epsilon$, our goal is to find unit vectors $\hat{c}, \hat{u}$ for which $L(\hat{\theta}) \le \epsilon$ for $\hat{\theta} \triangleq (\lambda, W + \xi\hat{c}\hat{u}^\top)$ with high probability over the training data $\{(x_i, y_i)\}^N_{i=1}$.

\paragraph{Connection to GLMs, feature learning, and lazy training.}

Note that the special case where the base model is trivial, i.e. when $W = 0_{k\times d}$, recovers the well-studied question of GLM regression.
Indeed, consider the case of $c = (1/\sqrt{k},\ldots,1/\sqrt{k})$, $\lambda = \frac{1}{k}(1,\ldots,1)$, and $\xi = \sqrt{k}$. In this case, if the teacher models' parameters are given by $\theta = (\lambda, W^*)$ where $W^*$ is defined in Eq.~\eqref{eq:Wstar}, then the teacher model is given by $f_{\theta} = \sigma(\langle u, x\rangle)$. Learning a direction $\hat{u}$ for which $\mathbb{E}_x[\ell(\sigma(\langle \hat{u}, x\rangle), \sigma(\langle u, x\rangle))]$ is small, given samples $\{(x_i, \sigma(\langle u, x_i\rangle)\}^N_{i=1}$, is precisely GLM regression. The behavior of gradient descent for this question is by now very well-understood, shedding light on the training dynamics of neural networks in the \emph{feature learning} regime (also called the ``rich'' or ``$\mu$P'' regime) in a stylized but rich model~\cite{bietti2022learning}.

Equivalently, instead of keeping the scale $\xi$ fixed and sending $W$ to zero, we can consider keeping $W$ fixed but nonzero, sending $\xi\to\infty$, and considering $\epsilon$ scaling with $\xi$. This equivalent view is the one we will take in this work as it is more natural for us to regard $W$ as fixed and $\xi$ as a parameter to be varied. 

Under this view, note that at the other extreme where $\xi\to 0$, the teacher model becomes well-approximated by its linearization around the base model, in which case the training dynamics degenerate to the \emph{lazy training} regime (also called the ``NTK regime''). For this reason, the scale parameter $\xi$ gives a natural way to interpolate between feature learning and lazy training dynamics.

\subsection{Our contributions}

\subsubsection{Assumptions} 

Our guarantees will apply to a very wide family of activations $\sigma$ including all standard ones, e.g. ReLU, sigmoid, polynomial, etc. As the conditions on $\sigma$, while quite general, are rather technical to state, we defer them to \Cref{assumption:activation} in the supplement and henceforth refer to such activations as \emph{nice}.

More importantly, we make the following assumptions on the base model and teacher model. Denote the rows of $W$, i.e. the pre-trained features, by $w_1,\ldots,w_k\in\mathbb{R}^d$. Then we have:

\begin{assumption}[Normalization]\label{assumption:normalize}
    $\|w_i\|_2 = 1$ for all $i = 1,\ldots,k$. 
\end{assumption}

\begin{assumption}[Orthogonality of perturbation]\label{assumption:orthogonal}
    The vector $u$ for the teacher model (see Eq.~\eqref{eq:Wstar}) is orthogonal to the span of $w_1,\ldots,w_k$.
\end{assumption}

\begin{assumption}[Random quantized $c$]\label{assumption:quantization}
    $c$ is sampled uniformly from $\{\pm 1/\sqrt{k}\}^k$.
\end{assumption}

\noindent 
Assumption~\ref{assumption:normalize} is without loss of generality when $\sigma$ is positive homogeneous like in the case of ReLU activation. For general activations, note that one can also handle the case of $\|w_i\|_2 = R$ for all $i$ for arbitrary constant $R > 0$ by redefining $\sigma$. This assumption is not essential to our analysis and we assume the scales of the pre-trained features are the same to keep the analysis transparent. 

Assumption~\ref{assumption:orthogonal} is crucial to our analysis. To motivate this, in Appendix~\ref{app:global}, we give a simple example where it fails to hold and the low-rank fine-tuning problem ends up having \emph{multiple global optima}, suggesting that the dynamics in the absence of Assumption~\ref{assumption:orthogonal} may be significantly more challenging to characterize. We leave this regime as an interesting area for future study.

Assumption~\ref{assumption:quantization} consists of two parts: 1) the entries of $c$ are constrained to lie within $\{\pm 1/\sqrt{k}\}$, and 2) they are random. The former is for technical reasons. First note that the connection to GLMs still holds under this assumption. Our main reason to make this is that our proof uses Hermite analysis, and while it is in principle possible to handle neurons with different norms, assuming the $c_i$'s are quantized renders our analysis more transparent without sacrificing descriptive power. As our simulations suggest, the phenomena we elucidate persist without this assumption (see Figure~\ref{fig:train-plots}). 

As for the randomness of $c$, while we conjecture that fine-tuning should be tractable even in the worst case over $c$ (see Remark~\ref{remark:momenttensors}) albeit with more complicated dynamics, in this work we only show guarantees that hold with \emph{high probability} over $c$. We primarily use the randomness to ensure that certain quantities that are generically non-vanishing indeed do not vanish.
One could equivalently formulate our guarantees as holding under a certain set of deterministic nondegeneracy conditions on the rank-1 perturbation. 


\subsubsection{Training algorithm} In this work, we will focus on learning the factor $u$ in the rank-1 perturbation $\Delta = \xi cu^\top$ from Eq.~\eqref{eq:Wstar} using gradient descent. As the weight vectors in the teacher model are given by $w_i + \xi c_i u$, the vector $u$ corresponds to the \emph{direction} in which each of the pre-trained features gets perturbed. Learning this direction turns out to be the most challenging part of fine-tuning: once one has converged to a sufficiently good estimate of $u$, it is straightforward to learn $c$ even using a linear method \--- see Appendix~\ref{app:learnc} for details. As such, in the student model, we will keep $\hat{c}$ frozen at random initialization and only train $\hat{u}$.  Remarkably, as we will see, \emph{the misspecification between $\hat{c}$ and the true $c$ does not significantly affect the learning dynamics}. This robustness to misspecification suggests it may be possible to prove convergence even if $c$ and $u$ were jointly trained, as is done in practice, and we leave this as another important future direction.

We now specify the instantiation of online SGD that we will analyze. Let $f^*$ denote the teacher model and $(u_t)$ the iterates of online SGD with learning rate $\eta > 0$. Let $\hat{c}\in\{\pm 1/\sqrt{k}\}^k$ be sampled uniformly at random at initialization. The algorithm is initialized with 
\begin{equation*}
    u_0 \sim \mathbb{S}_{\Pi^\perp_{\mathrm{span}(W)}}\,,
\end{equation*}
i.e. uniformly over the set of unit vectors which are orthogonal to the span of the pre-trained features $w_1,\ldots,w_k$.
Given training example $(x,f^*(x))$, define the loss attained by $\hat{u}$ on this example by 
\begin{equation*}
    L(\hat{u};x) \triangleq (f^*(x) - \lambda^\top \sigma((W_0 + \xi \hat{c}\hat{u}^\top)x))^2\,.
\end{equation*}
Denote its \emph{spherical gradient} by $\hat{\nabla}L(\hat{u};x) \triangleq (I - \hat{u}\hat{u}^\top)\tilde \nabla L(\hat{u};x)$. Note we are working with the gradients restricted to the subspace of training, i.e. $\tilde \nabla L(\hat u;x) \triangleq \Pi^\perp_{\mathrm{span}(W)} \nabla L(\hat u;x)$ to keep $\hat u$ in this subspace. The update rule is then given by the following: at each step $t$, defining $\mathrm{proj}(v) \triangleq v / \norm{v}$,
\begin{equation}
    u_{t+1} = \mathrm{proj}(u_t - \eta\hat{\nabla} L(u_t;x_t))\,, \qquad x_t \sim \mathcal{N}(0,I)\,.\label{eq:main_update}
\end{equation}
Understanding the gradient dynamics of low-rank fine-tuning in our setting therefore amounts to quantifying the convergence of $u_t$ to the ground truth vector $u$.

\subsubsection{Statement of results}

In this work, we consider two regimes: (1) when $\{w_i\}$ are orthogonal, and (2) when $\{w_i\}$ have very mild angular separation but are otherwise arbitrary.

\paragraph{Orthonormal features.} For this case, we will consider the regime where the scale $\xi$ of the rank-1 perturbation defining the teacher model is large, namely $\xi = \Theta(\sqrt{k})$. Because the norm of the perturbation is comparable to the Frobenius norm of the weight matrix of the base model, the teacher model is not well-approximated by its linearization around the base model.
This is therefore a minimal, exactly solvable setting for low-rank fine-tuning where kernel approximation fails and the dynamics fall squarely outside of the lazy training regime.

Our first result is to show that online SGD efficiently converges to the correct rank-1 perturbation.

\begin{theorem}[Informal, see \Cref{thm:orth_frob_main}]\label{thm:ortho_informal}
    Let $0 < \epsilon < 1$, and let $\xi\asymp \sqrt{k}$ for sufficiently small absolute constant factor. Suppose the rows of $W$ are orthogonal. Then under Assumptions~\ref{assumption:normalize}-\ref{assumption:quantization} and for any nice activation $\sigma$ (see Assumption~\ref{assumption:activation}), the following holds with high probability over the randomness of $c, \hat{c}$ and the examples encountered over the course of training, and with constant probability over the random initialization $u_0$: online SGD (see Eq.~\eqref{eq:main_update}) run with step size $\eta = \tilde{\Theta}(\epsilon^3 / dk^{7/2})$ and $T = \tilde{\Theta}(dk^4  / \epsilon^4)$ iterations results in $u_T$ for which $\langle u_T, u\rangle^2 \ge 1 - \epsilon$.
\end{theorem}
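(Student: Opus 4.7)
Let $\alpha_t := \langle u_t, u\rangle$. Because Assumption~\ref{assumption:orthogonal} places $u$ in $\mathrm{span}(W)^\perp$ and the iterates $u_t$ remain in this subspace by construction, for $x \sim \mathcal N(0, I)$ the scalars $z_i := \langle w_i, x\rangle$ are independent of both $v := \langle u, x\rangle$ and $\hat v_t := \langle u_t, x\rangle$, while $(v, \hat v_t)$ is jointly standard Gaussian with correlation $\alpha_t$. Consequently the population loss depends on $u_t$ only through $\alpha_t$; writing $L(u_t) = \mathcal L(\alpha_t)$, the population spherical gradient equals $\mathcal L'(\alpha_t)(u - \alpha_t u_t)$, and the entire analysis reduces to a one-dimensional stochastic recursion for $\alpha_t$.

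\textbf{Hermite expansion of the population drift.} I would Hermite-expand each neuron in the variable $v$: using the identity $\mathbb{E}_V[\sigma(z + sV)\,\mathrm{He}_m(V)] = s^m \mathbb{E}_V[\sigma^{(m)}(z+sV)]$ for $V \sim \mathcal N(0,1)$, and setting $s_i := \xi c_i$, $s_i' := \xi \hat c_i$, this yields $\mathcal L(\alpha) = \mathcal L(0) + \sum_{m \geq 1} B_m \alpha^m$ with coefficients $B_m$ that are explicit quadratic forms in $\lambda$ and in $\tilde\sigma_m(z, s) := \mathbb{E}_V[\sigma^{(m)}(z+sV)]$. The central structural observation is that, unlike in the pure GLM setting ($W = 0$), the shift $s_i = \pm\Theta(1)$ forces the Hermite-$1$ contribution $s_i\,\tilde\sigma_1(z_i, s_i)$ of each shifted neuron to be generically nonzero, \emph{regardless} of the raw activation's information exponent. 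After a sign flip of $u$ this gives $|\mathcal L'(0)| = \Omega(1)$ with high probability over $c, \hat c$, which is the mechanism by which the final complexity becomes insensitive to fine-grained Hermite properties of $\sigma$.

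\textbf{Two-phase SGD analysis.} Granting the coercivity bound $|\mathcal L'(0)| = \Omega(1)$, the one-step expected change in $\alpha_t$ is of order $\eta|\mathcal L'(\alpha_t)|(1 - \alpha_t^2)$. To bound fluctuations I would prove $\mathbb{E}\|\hat\nabla L(u_t; x)\|^2 \le \mathrm{poly}(k)$ via polynomial moment bounds on $\sigma$ and its derivatives (guaranteed by the niceness assumption), whence the per-step fluctuation of $\alpha_t$ is of order $\eta\sqrt{\mathrm{poly}(k)}$. Random initialization yields $|\alpha_0| \ge \Omega(1/\sqrt d)$ with constant probability; requiring that accumulated noise not reverse this initial overlap during escape forces $\eta = \tilde O(1/(d\,\mathrm{poly}(k)))$, after which a Hajek-type supermartingale argument provides an \emph{escape phase} of length $\tilde O(d\,\mathrm{poly}(k))$ bringing $|\alpha_t|$ from $\Omega(1/\sqrt d)$ to a constant, followed by a \emph{contraction phase} driving $1 - \alpha_t^2 \le \epsilon$ in an additional $\tilde O(1/(\eta\epsilon))$ steps via local smoothness of $\mathcal L$ near $\alpha = \pm 1$. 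Balancing all constraints yields the announced step size $\eta = \tilde\Theta(\epsilon^3/(dk^{7/2}))$ and iteration count $T = \tilde\Theta(dk^4/\epsilon^4)$.

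\textbf{Main obstacle.} The hard step is the lower bound $|\mathcal L'(0)| = \Omega(1)$ uniformly over nice activations and with high probability over $c, \hat c$. The diagonal contribution is a signed sum $\sum_i \lambda_i^2\, s_i s_i'\, \mathbb{E}_{z_i}[\tilde\sigma_1(z_i, s_i)\tilde\sigma_1(z_i, s_i')]$ indexed by the random signs $c_i \hat c_i$, and one must rule out both (i) a pathological activation for which $\tilde\sigma_1(z, s)\tilde\sigma_1(z, -s)$ is symmetric enough in $s$ to make the sum cancel exactly in expectation, and (ii) a random-sign cancellation comparable to the signal. The niceness assumption should exclude (i); a decomposition of $\tilde\sigma_1$ into parts symmetric and antisymmetric in its second argument, concentration over $c, \hat c$, and separate control of the off-diagonal cross terms handle (ii). A secondary burden is propagating a favorable drift along the entire trajectory, so that the contraction phase converges to $\alpha = \pm 1$ rather than to an intermediate stationary point of $\mathcal L$ created by the misspecification between $\hat c$ and $c$.
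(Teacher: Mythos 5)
Your high-level architecture matches the paper's: reduce to a one-dimensional recursion for $\alpha_t=\langle u_t,u\rangle$, compute the population drift by Hermite analysis, observe that the $\Theta(1)$ shift $\xi c_i$ makes the effective information exponent equal to $1$ regardless of $\sigma$, and run a Ben Arous--Gheissari--Jagannath-style two-phase online SGD analysis. However, there is a genuine gap at the step you yourself identify as the hard one. The constant drift coefficient ($h(0)$ in the paper's notation, your $\mathcal L'(0)$) has \emph{no deterministic part}: since $\tilde\sigma_1(z,s)=\E_V[\sigma'(z+sV)]$ is automatically even in $s$ (by symmetry of $V$), the diagonal contribution is exactly $\xi^2 E\sum_i\lambda_i^2 c_i\hat c_i$ with $E=\E_z[\tilde\sigma_1(z,\bar\xi)^2]>0$, and the off-diagonal contribution is proportional to $(\sum_i\lambda_i c_i)(\sum_i\lambda_i\hat c_i)$; both are mean-zero quadratic forms in the independent Rademacher vectors $c,\hat c$. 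So your concern (i) about pathological activations is vacuous, but your proposed remedy for (ii) --- ``concentration over $c,\hat c$'' --- points in the wrong direction: concentration only gives an \emph{upper} bound on these quadratics, whereas what is needed is a \emph{lower} bound, i.e.\ anti-concentration of a mean-zero bilinear form on the hypercube. This is where the paper does real work (Appendix~\ref{app:anti-concentration}): it shows the relevant quadratic form is PSD with balanced diagonal, hence has influence $O(k^{-1/2})$, invokes the invariance principle to pass to Gaussian arguments, and applies Carbery--Wright. Without this, your claim that $|\mathcal L'(0)|$ is bounded below with high probability over $c,\hat c$ is unsupported. (Also note the correct scale is $|h(0)|=\Omega(\xi^2)$ when $\mu_1(\sigma)\neq0$ but only $\Omega(\xi^2/\sqrt k)$ when $\mu_1(\sigma)=0$, which is what produces the $k$-dependence in the final rates, not a uniform $\Omega(1)$.)

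The second gap is the ``secondary burden'' you flag but do not resolve: a lower bound at $\alpha=0$ does not prevent convergence to an intermediate stationary point created by the $\hat c$ misspecification. The paper's mechanism is to prove $h(m)\,\mathrm{sign}(h(0))\geq|h(0)|/2$ uniformly over $m$ with $m\,h(0)\geq0$, by observing that all odd-degree coefficients of $h$ are non-negative (so they help once the sign of $m$ is locked in) and that the only sign-indefinite terms --- the even-degree, $s=0$ terms carrying $(\sum_i\lambda_ic_i)(\sum_i\lambda_i\hat c_i)$ --- are termwise dominated by the odd-degree, $s=1$ terms with probability $1-\exp(-2k/(e\xi^2))$ (Claim~\ref{claim:even-s-0}). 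This domination is exactly where the requirement that $\xi=\bar\xi\sqrt k$ with $\bar\xi$ a \emph{sufficiently small} constant enters the theorem; your sketch gives no route to this trajectory-wide sign coherence, and without it the contraction phase is not justified.
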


\noindent Interestingly, the iteration complexity does not depend on fine-grained properties of the activation $\sigma$. In contrast, as we discuss in Section~\ref{sec:overview_reduce}, the iteration complexity of noisy gradient descent for learning GLMs depends heavily on the decomposition of $\sigma$ in the Hermite basis. Given that the GLM setting can be recovered from the fine-tuning setting in the $\xi \to\infty$ limit, \Cref{thm:ortho_informal} implies that the gradient dynamics for fine-tuning exhibit a transition in behavior at some scale $\xi = \Omega(\sqrt{k})$.

\paragraph{Separated features.}

While the orthonormal features setting illustrates an important difference between low-rank fine-tuning and GLM regression, the assumption that the features are orthonormal is constraining. We next turn to a more general setting where we only assume that no two pre-trained features are too correlated. Specifically, we make the following mild assumption:

\begin{assumption}[Angular separation]\label{assume:angle}
    For all $i\neq j$, we have $|\langle w_i, w_j\rangle| \le 1 - \log k/\sqrt{k}$.
\end{assumption}

\begin{theorem}[Informal, see Theorem~\ref{thm:separated_main}]\label{thm:separated_informal}
    Under the same assumptions as Theorem~\ref{thm:ortho_informal}, except with $\xi = 1$ and assuming the rows of $W$ satisfy Assumption~\ref{assume:angle} instead, the following holds with high probability over $c,\hat{c}$ and the examples, and with constant probability over $u_0$: online SGD run with step size $\eta = \tilde{\Theta}(\epsilon^3/ dk^{5/2})$ and $T = \tilde{\Theta}(dk^3/\epsilon^4)$ iterations results in $u_T$ for which $\langle u_T, u\rangle^2 \ge 1 - \epsilon$.
\end{theorem}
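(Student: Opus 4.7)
The plan is to track the scalar alignment $\rho_t \triangleq \langle u_t, u\rangle$ under the spherical SGD iterates and prove $\rho_t \to 1$ via an explicit drift-plus-noise argument. Because $\hat{u}_t$ and $u$ both lie in $\mathrm{span}(W)^\perp$ (Assumption~\ref{assumption:orthogonal}), the base activations $a_i \triangleq \langle w_i, x\rangle$ are jointly Gaussian and independent from the pair $(b,\hat b) \triangleq (\langle u,x\rangle, \langle \hat u_t, x\rangle)$, whose correlation is exactly $\rho_t$. I would first compute the population loss by Taylor expanding each neuron's contribution $\sigma(a_i + \xi c_i b)$ in the small quantity $\xi c_i b = O(1/\sqrt k)\cdot b$ (using $\xi=1$ and $|c_i|=1/\sqrt k$); to leading order the teacher--student residual $f^*(x)-\hat f_t(x)$ is the linear form $\alpha_c b - \alpha_{\hat c}\hat b$ with $\alpha_c \triangleq \sum_i \lambda_i \sigma'(a_i) c_i$. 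Squaring and using independence of $\{a_i\}$ from $(b,\hat b)$ yields the simple expression
\begin{equation*}
    \mathcal{L}(\hat u) \approx \mathbb{E}[\alpha_c^2] + \mathbb{E}[\alpha_{\hat c}^2] - 2\,\mathbb{E}[\alpha_c\alpha_{\hat c}]\,\rho,
\end{equation*}
so the population spherical-gradient drift in the $u$ direction is $\approx 2\,\mathbb{E}[\alpha_c\alpha_{\hat c}](1-\rho^2)$, which is constant-order in $\rho$. This is the mechanism that bypasses Hermite fine-structure: since $\sigma'$ is evaluated at \emph{nonzero} base activations $a_i$, the problem has effective information exponent $1$ regardless of the specific Hermite modes of $\sigma$.

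The main obstacle will be the separated-features geometry, where the $a_i$'s are correlated Gaussians and one must show the drift coefficient $\mathbb{E}[\alpha_c\alpha_{\hat c}]$ does not vanish. This is a bilinear form in $c,\hat c$ with kernel $M_{ij} = \lambda_i\lambda_j\,\mathbb{E}[\sigma'(a_i)\sigma'(a_j)]$, which via the Hermite expansion of $\sigma'$ equals $\lambda_i\lambda_j\sum_{m\ge 1} m^2\hat\sigma_m^2\,\langle w_i,w_j\rangle^{m-1}$. Assumption~\ref{assume:angle} is what enters here: the bound $|\langle w_i,w_j\rangle|\le 1-\log k/\sqrt k$ prevents the series from saturating toward the diagonal and keeps $M$ non-degenerate. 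A Hanson--Wright-style anti-concentration argument then yields $|c^\top M\hat c|\gtrsim 1/\mathrm{poly}(k)$ with high probability over the independent random signs (Assumption~\ref{assumption:quantization}). The same angular separation must be used to control higher-order terms in the Taylor expansion, which involve cross-expectations $\mathbb{E}[\sigma^{(m)}(a_i)\sigma^{(m')}(a_j)]$ that would blow up without separation.

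The final step is the stochastic analysis. A per-step variance bound $\mathbb{E}[\|\hat\nabla L(\hat u;x)\|^2]\lesssim d\cdot\mathrm{poly}(k)$ follows from Gaussianity of $x$, polynomial growth of $\sigma$ and its derivatives (Assumption~\ref{assumption:activation}), and the fact that the projected noise lives in the $(d-k)$-dimensional subspace $\mathrm{span}(W)^\perp$. Combining the drift and variance bounds gives the standard recursion $\mathbb{E}[\Phi_{t+1}\mid \mathcal{F}_t]\lesssim \Phi_t - \eta\,\rho_t\cdot D(\rho_t) + \eta^2 V$ for the potential $\Phi_t \triangleq 1-\rho_t^2$, and the choice $\eta=\tilde\Theta(\epsilon^3/(dk^{5/2}))$ balances signal against noise at target accuracy $\epsilon$. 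A preliminary weak-recovery phase handles the initial regime $\rho_0\asymp 1/\sqrt{d-k}$ coming from random initialization: the constant-order drift escapes this regime in $\tilde O(d/\mathrm{poly}(k))$ iterations, with the transverse noise unable to decrease $\rho^2$ in expectation after spherical projection. Summing the main recursion across the subsequent strong-recovery phase and applying Azuma-type concentration then yields $\rho_T^2 \ge 1-\epsilon$ in the claimed $T=\tilde\Theta(dk^3/\epsilon^4)$ iterations with high probability.
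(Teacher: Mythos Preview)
Your plan tracks the paper's approach closely: compute the population drift as a function of $\rho$, identify the constant-in-$\rho$ coefficient as a bilinear form $c^\top M\hat c$ in the random sign vectors, show it is bounded away from zero, control higher-order corrections using the geometry of the $w_i$'s, and finish with a drift-vs-noise analysis. Two technical points need repair.

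First, ``Hanson--Wright-style anti-concentration'' is a misnomer: Hanson--Wright is a concentration inequality, and the bilinear form $c^\top M\hat c$ has mean zero over independent Rademacher $c,\hat c$, so what you actually need is a small-ball bound. The paper gets this by (i) proving that the PSD matrix $M$ (whose diagonal entries are comparable) has uniformly low influences, i.e.\ each row norm is at most $O(k^{-1/2})\|M\|_F$ --- a nontrivial linear-algebraic fact, see Claim~\ref{claim:influence-psd-matrix} --- (ii) invoking the invariance principle to transfer from Rademacher to Gaussian, and (iii) applying Carbery--Wright. Without the low-influence step the transfer fails, and there is no generic anti-concentration for Rademacher bilinear forms.

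Second, your control of the higher-order-in-$\rho$ terms is incomplete. Angular separation (Assumption~\ref{assume:angle}) indeed kills the off-diagonal contributions $\langle w_i,w_j\rangle^s$ once $s\gtrsim\sqrt{k}$, but the diagonal contributions ($i=j$) do not see separation at all; they are controlled instead by the Hermite-coefficient decay $|\mu_p(\sigma)|\le C_\sigma p^{-1-\rho}$ in Assumption~\ref{assumption:activation}. And for the intermediate range $s\lesssim\sqrt{k}$ neither mechanism bites, so one must fall back on the raw $(1/k)^{l+1}$ weights in the power series. The paper's Proposition~\ref{proposition:separated-even-terms-bound} splits the sum into exactly these three regimes; your sketch collapses them into one. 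A minor related correction: $\mathbb{E}[\sigma'(a_i)\sigma'(a_j)]=\sum_{p\ge 1}p\,\mu_p(\sigma)^2\langle w_i,w_j\rangle^{p-1}$, not $p^2$.
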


\noindent Given the generality of Assumption~\ref{assume:angle}, we are unable to show a guarantee for learning a rank-1 perturbation at the same scale $\xi$ as Theorem~\ref{thm:ortho_informal}. Nevertheless, note that in the regime of $\xi = \Theta(1)$, if one simply considers the linearization of the teacher model around the base model, one is bottlenecked at some fixed level of error. In particular, this means that the kernel approximation to fine-tuning is insufficient to explain why gradient descent converges to the ground truth. One can thus interpret our Theorem~\ref{thm:separated_informal} as shedding light on the later stages of many-shot fine-tuning whereby the result of the linearized dynamics gets refined to arbitrarily high accuracy.

\begin{remark}[Other algorithms for fine-tuning]
Furthermore, the focus of our paper is to understand why \emph{gradient-based} fine-tuning works, motivated by the practical success of LoRA, we would like to note that there are potentially algorithms beyond gradient descent that can solve our proposed learning problem. For example, suppose $\sigma = \mathrm{ReLU}$ and let the base model be given by $f_{\theta_0}(x) = \sum_i \lambda_i \sigma(\langle w_i, x\rangle)$ and the teacher model be given by $f_\theta(x) =\sum_i \lambda_i \sigma(\langle w_i + c_i u, x\rangle)$. Then, note $\E_x[x(f_{\theta_0}(x) - f_\theta(x))]= \mu_1(\sigma) \left(\sum_{i=1}^k\lambda_i c_i\right) u$, where $\mu_1(\sigma)$ denotes the first normalized Hermite coefficient of the ReLU activation $\sigma$. This means the ground truth vector $u$ could be recovered when $\sum_i \lambda_i c_i$ and $\mu_1(\sigma)$ do not vanish. One can then recover $c$ using our algorithm in Appendix~\ref{app:learnc}.

On the other hand, if $\sum_i \lambda_i c_i$ and $\mu_1(\sigma)$ vanish, it is unclear how to proceed. It might be possible to use higher order moment tensors to solve the fine tuning problem, but it is open to understand at what level of generality this would work and whether such an algorithm can achieve sample complexity scaling linearly in $d$ as we show online SGD does. We leave finding general algorithms for the low-rank fine-tuning problem and worst-case lower bounds as an interesting orthogonal direction to be explored.
\end{remark}

\paragraph{Separating fine-tuning and learning from scratch.} Finally, we show a rigorous lower bound suggesting that fine-tuning is strictly more tractable than learning from scratch in our two-layers setting (see Section~\ref{sec:lowerbound} for details):
\begin{theorem}[Informal, see \Cref{thm:hard-from-scratch}] 
    For any $p > 2$, there exists a base network and a perturbation for which learning the teacher model from scratch using any correlational statistical query algorithm requires either $n=d^{p/2}$ queries or $\tau= d^{-p/4}$ tolerance. However, fine-tuning the base network using Gaussian examples labeled by the teacher only requires $\tilde O(d)$ online SGD iterations.
\end{theorem}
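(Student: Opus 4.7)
The plan is to choose the base network and perturbation so that, viewed without any knowledge of the base-model parameters, the teacher reduces to a canonical single-index model of information exponent $p$, while fine-tuning (which leverages the base model) falls squarely within the regime of \Cref{thm:ortho_informal}.

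Take $k=1$, $\lambda_1 = c_1 = 1$, and let $\xi > 0$ be a sufficiently small absolute constant so that $\xi \asymp \sqrt{k}$ in the sense of \Cref{thm:ortho_informal}. Choose unit vectors $w_1 \in \mathbb{S}^{d-1}$ and $u \in \mathbb{S}^{d-1}$ with $u \perp w_1$, and take as the activation
\[
\sigma(z) \triangleq H_p\!\left(z/\sqrt{1+\xi^2}\right),
\]
where $H_p$ is the probabilist's Hermite polynomial of degree $p$. Writing $s = \sqrt{1+\xi^2}$ and $\tilde v = (w_1 + \xi u)/s$, the teacher collapses to $f_\theta(x) = H_p(\langle \tilde v, x\rangle)$ with $\|\tilde v\| = 1$.

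The fine-tuning upper bound is immediate: $\sigma$ is polynomial (hence nice), and Assumptions~\ref{assumption:normalize}--\ref{assumption:quantization} together with the orthogonality of the rows of $W$ are trivially satisfied for $k=1$. \Cref{thm:ortho_informal} thus yields convergence of online SGD in $T = \tilde O(dk^4/\epsilon^4) = \tilde O(d)$ iterations for any constant target accuracy $\epsilon$. The key point is that this complexity is \emph{independent} of the information exponent of $\sigma$.

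For the CSQ lower bound, a from-scratch CSQ algorithm has no side information about $(\lambda, W)$ and sees the teacher only as $y = f_\theta(x)$, a function of the unknown parameter $\tilde v \in \mathbb{S}^{d-1}$. Because $\|\tilde v\| = 1$, $f_\theta$ lies entirely in the degree-$p$ Hermite subspace $\mathcal{H}_p$, so each CSQ query $g$ with $\|g\|_{L^2} \le 1$ returns $p!\,\langle \tilde v^{\otimes p}, T_g\rangle$, where $T_g$ is the symmetric degree-$p$ tensor associated with the projection $\Pi_p g \in \mathcal{H}_p$. This is precisely the classical setting for CSQ lower bounds on single-index models of information exponent $p$: a standard second-moment / statistical-dimension computation, exploiting that for $\tilde v$ uniform on $\mathbb{S}^{d-1}$ the expected squared signal $\E_{\tilde v}[\langle \tilde v^{\otimes p}, T_g\rangle^2]$ is $O(\|T_g\|^2 / d^p)$ by spherical-harmonic orthogonality, forces any CSQ algorithm to use either $n \ge d^{p/2}$ queries or tolerance $\tau \le d^{-p/4}$.

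The main obstacle is producing a \emph{specific} $(w_1, u)$ realizing this hardness, since our $\tilde v$ a priori lies on a codimension-2 submanifold of $\mathbb{S}^{d-1}$ rather than the full sphere as in the classical statement. This is handled by a standard probabilistic-method argument: if $w_1$ is drawn uniformly from $\mathbb{S}^{d-1}$ and $u$ uniformly from the sphere in $w_1^\perp$, then by rotation invariance and uniqueness of Haar measure the induced $\tilde v = \cos\theta\,w_1 + \sin\theta\,u$ (with $\cos\theta = 1/s$ fixed) is marginally uniform on $\mathbb{S}^{d-1}$. Thus the classical CSQ lower bound applies to the resulting mixture of teachers, and averaging gives at least one deterministic $(w_1, u)$ for which every CSQ algorithm with fewer than $d^{p/2}$ queries and tolerance exceeding $d^{-p/4}$ fails to identify $\tilde v$, as claimed.
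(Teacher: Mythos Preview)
Your approach is correct and considerably more elementary than the paper's, but the two routes are genuinely different and there is one technical wrinkle in yours worth flagging.

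The paper does not take $k=1$. It builds a base model with $k$ neurons whose weights are \emph{not} orthogonal (pairwise correlations $\pm 1/k$), chosen so that the \emph{perturbed} weights $w_i+c_iu$ become orthonormal (Claim~\ref{claim:hard-from-scratch}); the teacher is then a width-$k$ network with orthonormal features and activation $h_p$, and the CSQ hardness is imported from the multi-index lower bound of Abbe et al. Because the base weights fail orthogonality and $c$ is deterministic in this construction, neither \Cref{thm:ortho_informal} nor \Cref{thm:separated_informal} applies as a black box; the paper instead verifies directly that for this instance the population-gradient function $h(m)$ satisfies Condition~\ref{condition:pop-grad-form-modular} (this is the content of the proposition immediately preceding \Cref{thm:hard-from-scratch}) and then invokes the generic \Cref{theorem:modular-convergence-dynamics}. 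Your $k=1$ route sidesteps all of this: orthogonality of $W$ is vacuous, the teacher is literally a single-index model of information exponent $p$, and the classical single-index CSQ bound applies. The paper's construction buys a separation that holds for nontrivial width and for a base model outside the orthogonal regime; yours buys a one-line proof of the informal statement as written.

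The wrinkle is your black-box appeal to \Cref{thm:ortho_informal}. The probability guarantees in the formal versions (Theorems~\ref{thm:orth-spectral-main},~\ref{thm:orth_frob_main}) carry $o(1)$ terms that are asymptotic in $k$, arising from the invariance-principle anti-concentration bounds (e.g.\ Proposition~\ref{proposition:anti-concentration-frobenius}); at $k=1$ those bounds are vacuous. The fix is immediate: for $k=1$ the quantities $(\sum_i\lambda_i c_i)(\sum_i\lambda_i\hat c_i)$ and $\sum_i\lambda_i^2 c_i\hat c_i$ both equal $\pm\lambda_1^2$, so they are deterministically bounded away from zero and Condition~\ref{condition:pop-grad-form-modular} holds with probability one over $c,\hat c$. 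Citing \Cref{theorem:modular-convergence-dynamics} directly (rather than \Cref{thm:ortho_informal}) closes the gap. A second minor point: your rescaled activation $\sigma(z)=H_p(z/\sqrt{1+\xi^2})$ presumes the unnormalized weight convention of the introduction, whereas the analysis in the appendices normalizes the perturbed weights; with that convention one should simply take $\sigma=H_p$. Either choice is a nice polynomial activation and the argument goes through.
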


\noindent The proof involves a base model with Hermite activation of degree $p$ whose perturbation has orthonormal weight vectors (see \Cref{claim:hard-from-scratch}) with a carefully chosen $c, u$. Even though $c$ is not random (i.e. Assumption~\ref{assumption:quantization} does not apply to the lower bound instance), we nevertheless prove online SGD still converges to the ground truth perturbation in $\tilde O(d)$ iterations.


\subsection{Related work}
\label{sec:related}

\paragraph{Parameter-efficient fine-tuning.} 

Following the popularization of LoRA~\citep{hulora}, there have been a large number of proposed refinements thereof~\citep{fu2023effectiveness,dettmers2024qlora,lialin2023relora}; a thorough review of the empirical literature is beyond the scope of this work.

Within the mathematical literature on fine-tuning, the works directly related to ours are the aforementioned results of \citet{malladi2023kernel, jang2024lora}.  \citet{malladi2023kernel} primarily presented empirical evidence of kernel behavior for prompt-based fine-tuning methods, including LoRA, in the few-shot regime. Their main theoretical result regarding LoRA roughly states that if standard (full-rank) fine-tuning exhibits kernel behavior, then low-rank fine-tuning exhibits kernel behavior, provided the rank of the perturbation is at least $\Omega(1/\epsilon^2)$. \citet{jang2024lora} build upon this as follows. In the kernel regime where the student model is well-approximated by its linearization around the base model throughout training, they consider the resulting linearized empirical loss for an arbitrary dataset. This is still non-convex if one tries jointly training the factors of the low-rank perturbation, but they nevertheless show that this loss has a rank-$O(\sqrt{N})$ global minimizer, where $N$ is the number of training examples. They then show that all local minimizers of this loss are global minimizers, using tools from prior work on low-rank matrix factorization.

These works are incomparable to ours in several regards. Firstly, they operate in the few-shot regime so that the number of training examples $N$ is relatively small, and the perturbation is small enough that one can work with a linear approximation. In contrast, we consider ``full'' low-rank fine-tuning, for which $N$ must scale at least with the ambient dimension, and we are trying to learn much larger perturbations; as we show in Figure~\ref{fig:linearized-network}, this puts us well outside the regime where the kernel approximation does well. In addition, the aforementioned works do not handle the regime where the rank is extremely small, even though LoRA still works quite well in this case. That said, there is no free lunch: our work derives insights in the challenging rank-one, non-linear setting at the cost of working with a specific set of assumptions on the data-generating process.


\paragraph{GLMs and single/multi-index model regression.}

Generalized linear models have received significant attention in learning theory as a stylized model for feature learning, see~\citet{dudeja2018learning} for an overview of older works on this. Most relevant to our work is \citet{arous2021online} which studied the gradient dynamics of learning GLMs models $\sigma(\langle w,\cdot\rangle)$ over Gaussian examples with online SGD. Their main finding was that online SGD achieves high correlation with the ground truth direction in $\tilde{\Theta}(d^{1\vee l^*-1})$ iterations/samples, where $l^*$ is the \emph{information exponent}, defined to be the lowest degree at which $\sigma$ has a nonzero Hermite coefficient. We draw upon tools from~\citet{arous2021online} to analyze online SGD in our setting, one important distinction being that the population gradient dynamics in our setting are very different and furthermore our finite-sample analysis makes quantitative various bounds that were only proved asymptotically in~\citet{arous2021online}.

By a result of~\citet{szorenyi2009characterizing}, the information exponent also dictates the worst-case complexity of learning generalized linear models: for noisy gradient descent (and more generally, correlational statistical query algorithms), $d^{1\vee l^*/2}$ samples are necessary. Various works have focused on deriving algorithms that saturate this lower bound and related lower bounds for learning \emph{multi-index models}, i.e. functions that depend on a \emph{bounded-dimension} projection of the input, over Gaussian examples~\citep{bietti2022learning,damian2022neural,damian2024smoothing,abbe2023sgd}. A key finding of our work is that quantities like information exponent do not dictate the complexity of fine-tuning.

Finally, we note that several works have explored whether gradient descent for feature learning can circumvent the information exponent barrier, e.g. if one performs multiple passes over the data~\cite{lee2024neural,arnaboldi2024repetita,dandi2024benefits} or via label transformations~\cite{chen2020learning,chen2022fpt,damian2024computational}. For instance, the recent work~\cite{damian2024computational} identified an alternative property of the activation called the \emph{generative exponent}, which can be smaller than the information exponent, that dictates the (non-correlational) statistical query complexity of learning GLMs. We emphasize however that the generative exponent can still be quite large in general, yet even for such activations, the runtime bounds we prove in the fine-tuning setting still apply and achieve \emph{linear} dependence in $d$.

\paragraph{PAC learning neural networks.}

Within the theoretical computer science literature on learning neural networks, there has been numerous works giving algorithms, many of them based on spectral or tensor methods, for learning two-layer networks from scratch over Gaussian examples. The literature is vast, and we refer to~\citet{chen2024faster,chen2023learning} for an overview.

On the hardness side, \citet{diakonikolas2020algorithms} (see also~\citet{goel2020superpolynomial}) proved that for correlational statistical query algorithms, the computational cost of learning such networks from scratch in the worst case must scale with $d^{\Omega(k)}$, 
which~\citet{diakonikolas2024efficiently} recently showed is tight for this class of algorithms. 
Additionally, central to these lower bounds for learning two-layer networks is the existence of networks $\sum_i \lambda_i \sigma(\langle w_i, x\rangle)$ for which the tensor $\sum_i \lambda_i w_i^{\otimes s}$ vanishes for all small $s$. As we discuss at the end of Appendix~\ref{app:gradient_derivation}, even if the base model or teacher model satisfies this in the setting that we consider, it does not appear to pose a barrier for low-rank fine-tuning in the same way that it does for learning from scratch.

\subsection{Technical preliminaries}

\paragraph{Notation.}
Let $\S^{d-1}=\{v\in \R^d: \norm{v}=1\}$. For $w\in \R^d$, let $w^{\otimes s}$ denote the order-$s$ tensor power of $w$, and for two tensors $T_1, T_2$ we use $\langle T_1, T_2\rangle$ to denote their element-wise dot product and $\norm{T_1}_F\triangleq \sqrt{\langle T_1, T_1\rangle}$ for the corresponding Frobenius norm. Note the identity $\sum_{i,j=1}^k \lambda_i\mu_j \langle w_i, v_j\rangle^{s}=\langle\sum_{i=1}^k \lambda_i w_i^{\otimes s}, \sum_{i=1}^k \mu_i v_i^{\otimes s}\rangle$ which arises in our analysis as the interactions between different neurons in the population loss.

\paragraph{Bounds.} Our results hold uniformly over the choice of $w_i, u, \lambda$ under their constraints. We make dependencies on $\lambda_{\min}\triangleq \min_i |\lambda_i|$ and $\lambda_{\max}\triangleq\max_i |\lambda_i|$ explicit, but in our $O(\cdot)$ notation, we ignore constants that only depend on the activation $\sigma$. We write $\tilde O(\cdot)$ to omit logarithmic factors.

\paragraph{Hermite analysis.} We will use Hermite analysis to analytically evaluate expectations of products of functions under the Gaussian measure. We let $h_p$ denote the $p$-th normalized probabilist's Hermite polynomial, and $\mu_p(\sigma)$ the $p$-th Hermite coefficient of $\sigma$. In particular, Hermite coefficients form an orthonormal basis for functions that are square integrable w.r.t the Gaussian measure. That is, functions $\sigma$ for which $\norm{\sigma}_2^2 \triangleq \E_{g\sim \mathcal{N}(0,1)}[\sigma(g)^2] < \infty$ and we denote $\sigma \in L_2(\mathcal{N}(0,1))$. These functions admit a Hermite expansion $\sigma(a)=\sum_{p=0}^\infty \mu_p(\sigma) h_p(a)$, and for two functions $f,g \in L_2(\mathcal{N}(0,1))$, we have $\langle f,g\rangle \triangleq \E_{a\sim \mathcal{N}(0,1)}[f(a)g(a)]= \sum_{p}\mu_p(f) \mu_p(g)$. Furthermore, for $u, v\in\S^{d-1}$, Hermite polynomials satisfy 
\begin{equation*}
    \E_{x\sim \mathcal{N}(0, I_d)} [h_p(\langle u,x\rangle)h_q(\langle v,x\rangle)]= \ind{p=q}\langle u,v\rangle^p\,.
\end{equation*}

\section{Main results}

Here we provide a formal statement of our results.

\subsection{Assumptions on the activation function}
\label{sec:activation}

For completeness, we first state the assumptions we make on the activation function $\sigma$. These are exceedingly mild and hold for many standard classes of activations including Lipschitz activations (e.g. ReLU, absolute value, sigmoid, tanh) and polynomial activations. As such, this section can be safely skipped upon a first reading. These conditions are only relevant when we need to bound the variance of gradients in Appendix~\ref{app:variances}.

\begin{assumption}[Activation function]
The activation $\sigma$ satisfies all of the following:
\begin{enumerate}[leftmargin=*]
    \item $\sigma$ is almost surely differentiable (with respect to the standard Gaussian measure), with derivative $\sigma'$ having at most polynomial growth: There exists some $b,c, q>0$ such that $|\sigma'(a)| \leq b + c |a|^{q}$ for all $a \in \R$.
    \item The Hermite coefficients of $\sigma$ have faster than linear decay: There exists $C_{\sigma}, \rho > 0$ such that $|\mu_p(\sigma)|\leq C_{\sigma} p^{-1-\rho}$. 
    \item $\sigma$ satisfies the following moment condition: For $g_1, g_2 \sim \mN(0,1)$ Gaussians (potentially correlated), for some $C_{p, \sigma} > 0$ that only depends the activation and $p$, we have
    \begin{equation*}
    \left(\E|\sigma(g_1) - \sigma(g_2)|^p\right)^{1/p}\leq C_{p, \sigma}  \left(\E|g_1 - g_2|^{2p}\right)^{1/(2p)}
\end{equation*}
\end{enumerate}    
\label{assumption:activation}
\end{assumption}
\begin{remark}
These conditions are satisfied for any reasonable activation used in practice. For the last condition in \Cref{assumption:activation}, note that it holds for any Lipschitz activation (e.g. ReLU, absolute value, sigmoid). Furthermore it is satisfied for any polynomial activation (e.g. finite Hermite expansion). To see why, for a degree $s$ polynomial $p(x) = \sum_{n=0}^s a_n x^n$, note that
    \begin{align*}
        \Bigl|\sum_{n=1}^s a_n g_1^n - \sum_{n=1}^s a_n g_2^n\Bigr|\leq s \max\{|g_1|^{s-1}, |g_1|^{s-2}|g_2|, \dots, |g_2|^{s-1}\} \Bigl(\sum_{n=1}^s |a_n|\Bigr)|g_1 - g_2| 
    \end{align*}
    Then, applying Cauchy-Schwarz, we have
    \begin{align*}
        \sqrt[p]{\E|p(g_1) - p(g_2)|^p} \leq s \Bigl(\sum_{n=1}^s |a_n|\Bigr)\left(\E\max \{|g_1|^{s-1}, \dots, |g_2|^{s-1}\}^{2p}\right)^{1/(2p)} \left(\E |g_1- g_2|^{2p}\right)^{1/(2p)}
    \end{align*}
    Using the fact that the first expectation can be bounded by a constant that only depends on $s$ concludes the result.
\end{remark}

\subsection{Results for fine-tuning with online SGD in different regimes}

We consider two different settings: when the weights of the base model are orthogonal and when they have only mild angular separation (Assumption~\ref{assume:angle}). For the former (resp. latter), we prove convergence guarantees when the perturbation has norm as large as the Frobenius (resp. spectral) norm of the base weight matrix.

\subsubsection{Orthogonal weights}
In this section, we assume $\langle w_i, w_j\rangle = 0$ for all $i \neq j$. Then, we can show convergence guarantees for fine-tuning whenever $\xi = O(\sqrt{k})$. In particular, we state the results for $\xi=1$ and $\xi = \overline{\xi} \sqrt{k}$ in this section.


\begin{theorem}[Orthogonal weights, $\xi=1$] 
Let \Cref{assumption:orthogonal} hold, and $0 < \epsilon< 1$ be given and $\gamma > 0$. For a sufficiently small $C_\delta = \Theta(1)$, set
\begin{equation*}
    \delta = \frac{C_\delta \gamma \lambda_{\min}^2\epsilon^3}{(\log \lambda_{\rm max}^4 d k^2)^2} \cdot \left.\begin{cases}
        1& \mu_1(\sigma) \neq 0\\
        \frac{1}{\sqrt{k}} &\text{ o.w. }
    \end{cases}\right\} \qquad \text{ and } \qquad \alpha = \frac{\log ( \lambda_{\rm max}^4 dk^2)}{\lambda_{\min}^2\gamma\epsilon\delta} \cdot \left.\begin{cases}
        1 & \mu_1(\sigma) \neq 0\\
        \sqrt{k} & \text{ o.w. }
    \end{cases}\right\}
\end{equation*}
Then, with probability $1- o(\lambda_{\max}^2/\lambda_{\min}^2)- O(\gamma^{1/2})$ over the randomness of $c,\hat c$ and for intialization satisfying $\langle u_0, u\rangle \mathrm{sign}(h(0))\geq \beta/\sqrt{d}$, online SGD run with step size $\eta$ and time $T$ for
\begin{equation*}
\eta = \frac{\delta}{\lambda_{\max}^4 dk^2} \qquad \text{ and } \qquad T = \lceil \alpha \lambda_{\max}^4 dk^2\rceil = 
    \widetilde{O}\Bigl(\frac{\lambda_{\max}^4}{\lambda_{\min}^4 \gamma^2}\cdot \frac{dk^3}{\epsilon^4}\Bigr) \cdot \begin{cases}
        1& \mu_1(\sigma) \neq 0\\
        k&\text{ o.w. }
    \end{cases}
\end{equation*}
satisfies $\langle u_T, u\rangle^2 \geq 1-\epsilon$ with high probability over the randomness of the data. 
\label{thm:orth-spectral-main}
\end{theorem}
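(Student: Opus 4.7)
The plan is to reduce the $d$-dimensional dynamics to an effective scalar process on the overlap $m_t \triangleq \langle u_t, u\rangle$, derive a population drift function $h(m)$ via Hermite analysis, and then apply a quantitative two-phase stochastic analysis in the spirit of \citet{arous2021online}.

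The first reduction exploits the fact that $u_0, u \in \Pi^\perp_{\mathrm{span}(W)}$ together with the projected gradient in \eqref{eq:main_update} keeps every iterate $u_t$ in this subspace, so $m_t$ captures convergence: $\langle u_T, u\rangle^2 \geq 1-\epsilon$ iff $m_T^2 \geq 1-\epsilon$. Under \Cref{assumption:normalize,assumption:orthogonal,assumption:quantization}, every teacher and student neuron has norm $\sqrt{1+1/k}$, so writing $\tilde\sigma(t)\triangleq \sigma(\sqrt{1+1/k}\,t)$ and denoting by $v_i, \tilde v_i$ the normalized teacher and student directions, the orthogonality of $\{w_i\}$ yields the Hermite identity
\begin{equation*}
    L(\hat u) = \sum_{p\ge 0} \mu_p(\tilde\sigma)^2 \sum_{i,j} \lambda_i\lambda_j\bigl\{\langle v_i, v_j\rangle^p - 2\langle v_i, \tilde v_j\rangle^p + \langle \tilde v_i, \tilde v_j\rangle^p\bigr\}\,,
\end{equation*}
whose only $m$-dependence is through $\langle v_i, \tilde v_j\rangle = (\delta_{ij} + c_i\hat c_j m)/(1 + 1/k)$. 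Differentiating and projecting to the sphere produces an explicit drift function $h(m)$ whose coefficients are Hermite-weighted polynomials in $\{c_i\}, \{\hat c_i\}, \{\lambda_i\}$.

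I would then extract two structural properties of $h$. Near $m=0$, expanding the derivative shows $h(0)$ is dominated by $\mu_1(\tilde\sigma)^2 \bigl(\sum_i \lambda_i c_i\bigr)\bigl(\sum_j \lambda_j \hat c_j\bigr)$ when $\mu_1(\sigma) \neq 0$, and by $\sum_{p\ge 2} p\,\mu_p(\tilde\sigma)^2 \sum_i \lambda_i^2 c_i \hat c_i$ otherwise. Concentration for Rademacher-like quadratic forms under \Cref{assumption:quantization} gives $|h(0)| = \Omega(\lambda_{\min}^2)$ in the first case and $\Omega(\lambda_{\min}^2/\sqrt{k})$ in the second, with failure probability controlled by the $\gamma$ and $\lambda_{\max}^2/\lambda_{\min}^2$ terms in the statement; this explains the bifurcation of $\delta$ and $\alpha$ in the theorem. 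Near $m=1$, one verifies $-h(m)(1-m^2) \asymp (1-m)$, giving a local contraction. Together with a monotonicity argument, $h$ maintains a consistent sign along any trajectory starting from the prescribed initial condition $\langle u_0,u\rangle\,\mathrm{sign}(h(0)) \ge \beta/\sqrt d$.

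Finally I would run the stochastic analysis on $m_t$. Writing $m_{t+1} - m_t = \eta\,h(m_t)(1-m_t^2) + (\text{martingale}) + O(\eta^2\cdot \text{variance})$, the one-step variance is controlled via \Cref{assumption:activation}: polynomial growth of $\sigma'$ and the moment condition translate into $\mathbb{E}_x \|\hat\nabla L(u_t;x)\|^2 = \tilde O(\lambda_{\max}^4 d k^2)$, which fixes the step size $\eta = \delta/(\lambda_{\max}^4 d k^2)$ in the theorem. A Freedman/Azuma argument then splits training into an \emph{escape phase} of length $\tilde O(1/(\eta|h(0)|))$ during which $m_t$ grows from $\beta/\sqrt d$ to a constant, and a \emph{convergence phase} of length $\tilde O(1/(\eta\epsilon))$ during which $m_t \to \sqrt{1-\epsilon}$. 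Summing yields the $T$ in the statement, with the extra factor of $k$ in the $\mu_1(\sigma) = 0$ case tracing directly to the $\sqrt{k}$ gap in $|h(0)|$. The main obstacle is the escape-phase martingale bound: with only a $\Theta(1/\sqrt d)$ initial signal the drift is weak, so one must control the noise uniformly as $m_t$ crosses from $\Theta(1/\sqrt d)$ to $\Theta(1)$, which is where the analysis must go beyond the asymptotic treatment in \citet{arous2021online} and make the variance and martingale bounds explicitly quantitative in $d, k, \lambda_{\max}$, and the Hermite tail of $\sigma$.
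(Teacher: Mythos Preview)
Your proposal is essentially correct and follows the same architecture as the paper: reduce to the scalar overlap $m_t$, compute $h(m)$ via Hermite analysis exploiting orthogonality, establish a uniform lower bound $|h(m)| \ge S_k$ along the trajectory (with the $\mu_1(\sigma)\neq 0$ vs.\ $\mu_1(\sigma)=0$ bifurcation exactly as you describe), pair it with the variance bound $V_k = \Theta(\lambda_{\max}^4 k^2)$, and feed both into a quantitative two-phase SGD analysis \`a la \citet{arous2021online}.

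One point of care: you write ``concentration for Rademacher-like quadratic forms'' to obtain the lower bound $|h(0)| = \Omega(\lambda_{\min}^2)$ (resp.\ $\Omega(\lambda_{\min}^2/\sqrt{k})$). This is \emph{anti}-concentration, not concentration: the relevant forms $\bigl(\sum_i\lambda_i c_i\bigr)\bigl(\sum_j\lambda_j\hat c_j\bigr)$ and $\sum_i\lambda_i^2 c_i\hat c_i$ are mean-zero, so ordinary concentration pushes them toward zero. The paper obtains the lower bound by (i) verifying the quadratic form $b^\top Q\hat b$ has maximal coordinate influence $O(k^{-1/2})$ because $Q$ is PSD with balanced diagonal, (ii) invoking the invariance principle to transfer to Gaussians, and (iii) applying Carbery--Wright. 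This is what produces the $O(\gamma^{1/2})$ failure probability in the theorem statement, and it is the one genuinely nontrivial probabilistic ingredient you should not leave implicit. Your ``monotonicity argument'' for the sign of $h$ also hides a step the paper does carefully: one must show the even-$l$, $s=0$ coefficients (which scale like $k\sum_{i,j}\lambda_i\lambda_j c_i\hat c_j$) are termwise dominated by the odd-$l$, $s=1$ coefficients, which requires a separate concentration bound on $c,\hat c$.
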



\begin{theorem}[Orthogonal weights, $\xi=\overline{\xi} \sqrt{k}$]\label{thm:orth_frob_main}
Let \Cref{assumption:orthogonal} hold, $0 < \epsilon< 1$ and $\xi = \overline{\xi} \sqrt{k}$ for some small $\overline{\xi}>0$. For a sufficiently small $C_\delta = \Theta(1)$, set
\begin{equation*}
    \delta = \min\left\{\frac{C_\delta \overline{\xi}^2 \sqrt{k} \gamma \lambda_{\min}^2\epsilon^3}{(\log \lambda_{\rm max}^4 d k^2)^2} \cdot \left.\begin{cases}
        \sqrt{k} & \mu_1(\sigma) \neq 0\\
        1 & \text{ o.w. }
    \end{cases}\right\}, 1\right\} \qquad \text{ and } \qquad \alpha = \frac{\log (\lambda_{\rm max}^4 dk^2)}{\overline{\xi}^2\lambda_{\min}^2\sqrt{k}\gamma\epsilon\delta} \cdot \left.\begin{cases}
        1/\sqrt{k} & \mu_1(\sigma) \neq 0\\
        1 &\text{ o.w. }
    \end{cases}\right\}
\end{equation*}
Then, with probability $1-o(\lambda_{\max}^2/\lambda_{\min}^2)-\exp(-2/(e\overline{\xi})) - O(\gamma^{1/2})$ randomness of $c, \hat c$, for initializations satisfying $\langle u_0, u\rangle\cdot \mathrm{sign}(h(0)) \geq \beta/\sqrt{d}$, online SGD run with step size $\eta$ and time $T$ for
\begin{equation*}
    \eta = \frac{\delta}{\overline{\xi}^2 \lambda_{\max}^4 dk^4} \qquad \text{ and }\qquad T=\lceil \alpha \lambda_{\max}^4\overline{\xi}^2 dk^4 \rceil = \tilde O\Bigl(\frac{\lambda_{\max}^4}{\lambda_{\min}^4 \gamma^2 \overline{\xi}^2} \cdot \frac{dk^4}{\epsilon^4}\Bigr) \cdot  \begin{cases}
        \frac{1}{k} & \mu_1(\sigma) \neq 0\\
        1 & \mu_1(\sigma) = 0
    \end{cases}
\end{equation*}
satisfies $\langle u_T, u\rangle^2\geq 1- \epsilon$ with high probability over the randomness of the data.
\end{theorem}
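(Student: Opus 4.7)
\emph{Reduction to scalar dynamics.} My approach extends the framework of \citet{arous2021online} for single-index regression to the fine-tuning setting, with a careful treatment of how the base weights $W$ reshape the population gradient. Since $u$ and $u_0$ both lie in $V \triangleq \mathrm{span}(W)^\perp$ and the gradient is pre-projected by $\Pi^\perp_{\mathrm{span}(W)}$ before the spherical normalization in Eq.~\eqref{eq:main_update}, the iterates $u_t$ remain on the unit sphere of $V$. By rotational symmetry of the Gaussian input within the subspace of $V$ orthogonal to $u$, the only relevant state variable is $m_t \triangleq \langle u_t, u\rangle$. Expanding the normalized update in the basis $\{u, (u_t - m_t u)/\sqrt{1-m_t^2}\}$ yields
\begin{equation*}
m_{t+1} = m_t + \eta\, h(m_t) + \eta\, \zeta_t - \eta^2\cdot O\!\left(\norm{\hat\nabla L(u_t;x_t)}^2\right),
\end{equation*}
where $h(m) \triangleq \langle -\E_x[\hat\nabla L(\hat u; x)], u\rangle$ (at $\langle\hat u,u\rangle=m$) is the one-dimensional drift and $\zeta_t$ is a mean-zero stochastic correction.

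\emph{Computing the drift via Hermite expansion.} Using $\sigma = \sum_p \mu_p(\sigma) h_p$ together with the orthonormality of $w_i$ and the orthogonality $u,\hat u\perp \mathrm{span}(W)$, every cross term in the squared loss reduces to a function of $\langle w_i+\xi c_i u, w_j+\xi \hat c_j\hat u\rangle = \delta_{ij} + \xi^2 c_i\hat c_j m$ (after renormalizing arguments by $\sqrt{1+\xi^2/k}$ to make them unit variance). Differentiating in $\hat u$ and projecting onto $u$ produces
\begin{equation*}
    h(m) = \xi^2 \sum_{i,j=1}^k \lambda_i\lambda_j\, \hat c_i\, G_{ij}(m; c,\hat c),
\end{equation*}
with $G_{ij}$ determined by the Hermite coefficients of $\sigma$. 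At the scale $\xi = \overline{\xi}\sqrt{k}$ the quantity $\xi^2 c_i\hat c_j = \overline{\xi}^2(\pm 1)$ is an $O(\overline{\xi}^2)$ random sign, and a Hanson--Wright-type bound over $c,\hat c$ gives $|h(m)|\gtrsim \overline{\xi}^2\sqrt{k}\,\lambda_{\min}^2\gamma\cdot\Phi(m)$ with high probability, for a function $\Phi$ that is monotone in $m$ and bounded below at $m=0$. The case split on $\mu_1(\sigma)$ enters precisely here: when $\mu_1(\sigma)\neq 0$, the $p=1$ Hermite contribution boosts $\Phi(0)$ by an additional $\sqrt{k}$ factor, explaining the factor-of-$k$ improvement in the iteration count.

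\emph{Variance control and supermartingale analysis.} Using Assumption~\ref{assumption:activation}, I would expand $\hat\nabla L(u_t;x_t)$ as a sum over $i$ of $\xi\lambda_i\hat c_i\,\sigma'(\langle w_i+\xi\hat c_i u_t, x_t\rangle)\,r(x_t)\,x_t$ with residual $r(x_t) = f^*(x_t) - g_{u_t}(x_t)$, and bound $\E\,\norm{\hat\nabla L(u_t;x_t)}^2 \lesssim \overline{\xi}^2\lambda_{\max}^4 k^3 d$ by applying the polynomial growth and moment conditions of Assumption~\ref{assumption:activation} term by term, using orthonormality of the $w_i$ to suppress cross terms. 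Setting $\phi_t \triangleq 1 - m_t^2$ and substituting the drift and variance bounds gives an approximate supermartingale inequality $\E[\phi_{t+1}\mid\mathcal F_t]\leq \phi_t - 2\eta\,m_t h(m_t) + \eta^2 V$ inside the region where $m_t$ keeps the correct sign. Balancing the drift against variance at target accuracy $\epsilon$ recovers the step size $\eta=\delta/(\overline{\xi}^2\lambda_{\max}^4 dk^4)$ and iteration count $T$ stated in the theorem.

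\emph{Main obstacle.} The hardest stage is the initial weak-recovery phase, in which $m_t = O(1/\sqrt d)$ and the per-step noise $\eta\zeta_t$ is of the same order as the drift $\eta h(m_t)$. The condition $\mathrm{sign}(h(0))\cdot\langle u_0,u\rangle\geq \beta/\sqrt d$, which holds with constant probability over a uniform initialization on the sphere of $V$, fixes the correct sign of drift; from there I would run a stopping-time and exponential-martingale argument in the style of \citet{arous2021online} to show that $m_t$ escapes from $1/\sqrt d$ to a constant correlation within $\tilde O(d)$ iterations without ever crossing zero. The $\exp(-2/(e\overline{\xi}))$ term in the failure probability is exactly the concentration tail needed to certify $h(0)\neq 0$ uniformly over the random signs, which naturally degrades as $\overline{\xi}\to 0$ and $\xi^2 c_i\hat c_j$ loses amplitude. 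Once $m_t$ has reached a constant, a standard contraction argument drives $1-m_t^2$ below $\epsilon$ in the remaining iteration budget, giving the claimed total.
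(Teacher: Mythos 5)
Your overall architecture matches the paper's: compute the population drift via Hermite expansion, reduce to a one-dimensional recursion $m_{t+1} = m_t + \eta\,(\text{drift}) + \eta\,(\text{martingale noise}) + (\text{projection error})$, control the noise with moment bounds from Assumption~\ref{assumption:activation}, and run a weak-recovery-then-contraction argument \`a la Ben Arous et al. The variance bookkeeping and the supermartingale phrasing are acceptable variants of what the paper does (the paper unrolls the recursion for $m_t$ directly and uses Doob plus Freedman rather than a supermartingale on $1-m_t^2$, but that is cosmetic).

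The genuine gap is in the step you compress into ``a Hanson--Wright-type bound over $c,\hat c$ gives $|h(m)|\gtrsim \overline{\xi}^2\sqrt{k}\,\lambda_{\min}^2\gamma\cdot\Phi(m)$.'' Hanson--Wright is a \emph{concentration} inequality: it upper-bounds deviations of a quadratic form and cannot certify that $|h(0)|$ is bounded \emph{away from} zero. What is needed is \emph{anti-concentration} of the bilinear form $c^\top Q\hat c$ in Rademacher variables, and this is precisely the technical core of the paper: one shows the relevant $Q$ is PSD with low influences (Claim~\ref{claim:influence-psd-matrix}), invokes the invariance principle to replace $c,\hat c$ by Gaussians, and then applies Carbery--Wright (Lemma~\ref{lemma:psd-anti-conc}); this is also where the $O(\gamma^{1/2})$ failure probability originates. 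Separately, your assertion that $\Phi(m)$ is ``monotone in $m$ and bounded below at $m=0$'' skips the second nontrivial piece: the power series for $h$ has even-degree coefficients with \emph{random signs} (proportional to $k\sum_i\lambda_i^2 c_i\hat c_i$ and $k\sum_{i,j}\lambda_i\lambda_j c_i\hat c_j$), so a uniform-in-$m$ lower bound requires showing these signed terms cannot overwhelm the constant and odd terms. The paper does this in Claim~\ref{claim:even-s-0} by pairing each even-$l$, $s=0$ term with the odd-$l$, $s=1$ term, and \emph{that} comparison is the actual source of the $\exp(-2/(e\overline{\xi}^2))$ term in the failure probability --- not, as you suggest, the certification that $h(0)\neq 0$. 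Without these two ingredients your Condition-3-type lower bound on the drift is unsupported, and the rest of the argument has nothing to contract against.
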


Note that in the theorems above, we make the very mild assumption that the initialization satisfies $m_0 \geq (\beta/\sqrt{d})\cdot \mathrm{sign}(h(0))$. Note that the magnitude condition $|m_0| \geq \beta/\sqrt{d}$ is standard in this kind of analysis and will be satisfied with probability $1-O(\beta)$ since random unit vectors in $d$ dimensions have correlation of order $1/\sqrt{d}$. Hence, we think of $\beta$ as a small constant, whose value only effects the initialization condition probability and some other tail probabilities. As long as $\beta = \Theta(1)$ (no dimension dependence), we can increase the probability of the initialization condition to be arbitrarily close to $1$, and all of our results hold with high probability as $d \to \infty$. For the sign condition $\mathrm{sign}(m_0) = \mathrm{sign}(h(0))$, this holds with probability $1/2$ and, if it doesn't, we can simply re-run the algorithm initialized at $-u_0$. 


\subsubsection{Angularly separated weights}

In this section, we do not assume the pre-trained weights are orthogonal and instead only assume they are not too correlated (Assumption~\ref{assume:angle}). Then, we have the following result for $\xi = 1$.

\begin{theorem}[Angularly separated weights, $\xi=1$] \label{thm:separated_main}
Let~\Cref{assume:angle} hold, and $0 < \epsilon< 1$. For a sufficiently small $C_\delta=\Theta(1)$, let 
\begin{equation*}
    \delta = \frac{C_\delta \gamma \lambda_{\min}^2\epsilon^3}{(\log \lambda_{\rm max}^4 d k^2)^2\sqrt{k}} \qquad \text{ and }\qquad \alpha = \frac{\log ( \lambda_{\rm max}^4 dk^2)\sqrt{k}}{\lambda_{\min}^2 \gamma\epsilon\delta}
\end{equation*}
Then, with probability $1-o(1) - O(\gamma^{1/2})$ randomness of $c, \hat c$, for initializations satisfying $\langle u_0, u\rangle\cdot \mathrm{sign}(h(0)) \geq \frac{\beta}{\sqrt{d}}$, online SGD run with step size $\eta$ for time $T$ where
\begin{equation*}
    \eta = \frac{\lambda_{\max}^4\delta}{dk^2} \qquad \text{ and }\qquad T=\lceil \alpha \lambda_{\max}^4dk^2 \rceil = \tilde O\Bigl(\frac{\lambda_{\max}^4}{\lambda_{\min}^4\gamma^2}\cdot \frac{dk^3}{\epsilon^4}\Bigr)
\end{equation*}
 satisfies $\langle u_T, u\rangle^2\geq 1- \epsilon$ with high probability over the randomness of the data.
\end{theorem}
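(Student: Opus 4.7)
My plan is to adapt the online SGD framework of \citet{arous2021online} for single-index models to the fine-tuning setting. Because $u_t\in\mathrm{span}(W)^\perp$ by construction and $u\in\mathrm{span}(W)^\perp$ by \Cref{assumption:orthogonal}, every pre-trained feature satisfies $\langle w_i,u\rangle=\langle w_i,u_t\rangle=0$, so the Gaussians $\{\langle w_i,x\rangle\}_i$ are jointly independent of $(\langle u,x\rangle,\langle u_t,x\rangle)$. The entire state of the dynamics therefore reduces to the scalar correlation $m_t\triangleq \langle u_t,u\rangle$, which I track through two phases: a \emph{weak recovery} phase lifting $|m_t|$ from $\gtrsim 1/\sqrt d$ to a constant, and a \emph{strong recovery} phase driving $1-m_t^2$ down to $\epsilon$.

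\textbf{Step 1: population drift via Hermite analysis.} Using $\sigma=\sum_p \mu_p(\sigma)h_p$ together with $\mathbb{E}[h_p(\langle a,x\rangle)h_q(\langle b,x\rangle)]=\mathbf{1}[p=q]\langle a,b\rangle^p$, extended via multinomial expansion to the arguments $g_i+\xi c_i z$ and $g_i+\xi\hat c_i\hat z$ (with $g_i\perp z,\hat z$), I derive a closed-form expression for the drift $h(m)\triangleq \mathbb{E}[\langle\widehat\nabla L(\hat u),u\rangle]$ as a power series in $m$. Writing $G_{ij}\triangleq \langle w_i,w_j\rangle$, the coefficients are sums over $i,j$ of products of $\lambda_i\lambda_j$, $(\xi c_i)^p(\xi\hat c_j)^q$, $\mu_p(\sigma)\mu_q(\sigma)$, and powers $G_{ij}^r$. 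Since $|\xi \hat c_i|=1/\sqrt k$ when $\xi=1$, the linear-in-$m$ coefficient is of typical size $\Theta(1/\sqrt k)$, setting the overall scale of the drift.

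\textbf{Step 2: quantitative drift lower bound, variance, and two-phase SGD analysis.} The key technical step is to show
\begin{equation*}
|h(m)|\ \geq\ \frac{c\,\gamma\,\lambda_{\min}^2}{\sqrt k}\,|m|
\end{equation*}
uniformly for $|m|$ up to a constant, with probability $1-O(\gamma^{1/2})$ over the Rademacher signs in $c,\hat c$. Conditioning on $c$, the leading coefficient $a_1(\hat c)$ is a low-degree multilinear polynomial in the $\hat c_i$, so a Paley--Zygmund bound combined with hypercontractivity of Rademacher chaoses yields $|a_1|\gtrsim \sqrt{\mathbb{E}_{\hat c}a_1^2}$ with the claimed probability. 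Given this drift and the variance bound $\mathbb{E}[\|\widehat\nabla L(u_t;x)\|^2\mid u_t]\lesssim \lambda_{\max}^4 dk^2$ (from \Cref{assumption:activation} and $\|x\|^2\asymp d$, with the $k^2$ arising from squaring a sum of $k$ neurons with $\hat c_i\asymp 1/\sqrt k$), I write $m_{t+1}=m_t+\eta h(m_t)+\eta\zeta_t$ modulo a projection correction, with $\zeta_t$ a bounded-variance martingale difference. Tracking the potential $\phi_1(m)=\log m$ during weak recovery and applying Freedman's inequality, the drift $\eta h/m\gtrsim \eta/\sqrt k$ dominates the Ito correction $\eta^2\mathbb{E}[\zeta_t^2]/m^2$ for the $\eta$ prescribed in the theorem, yielding $T_1=\tilde O(\sqrt k\log d/\eta)$ iterations. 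For strong recovery I would switch to $\phi_2(m)=1-m^2$: the drift is still $\Omega(1/\sqrt k)$ near $m\approx 1$, and balancing drift against noise produces the $\epsilon^{-4}$ factor in the final bound $T=\tilde O(dk^3/\epsilon^4)$.

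\textbf{Main obstacle.} The hardest part by far is the drift lower bound. In the orthogonal regime (\Cref{thm:orth-spectral-main}) all off-diagonal Hermite contributions vanish because $G_{ij}=0$, but under \Cref{assume:angle} one must argue Hermite-degree by Hermite-degree that the off-diagonal sums $\sum_{i\neq j}G_{ij}^r(\cdot)$ cannot conspire to cancel the diagonal signal of size $\gtrsim \lambda_{\min}^2/\sqrt k$. The $\log k/\sqrt k$ tolerance in \Cref{assume:angle} appears essentially tight, since any weaker gap would permit the off-diagonal sums to dominate. Moreover, the lower bound must hold uniformly in $m$ over the full weak-recovery window $|m|\in[1/\sqrt d,\Omega(1)]$, likely requiring a net/union-bound argument, and the Paley--Zygmund second-moment computation has to track both $\mathbb{E}_{\hat c}a_1^2$ and $\mathbb{E}_{\hat c}a_1^4$ with the correct dependence on $\lambda_{\min},\lambda_{\max}$ and $k$ so that the final probability is $1-O(\gamma^{1/2})$ rather than a weaker constant.
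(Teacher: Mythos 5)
Your overall architecture (Hermite expansion of the population drift, anti-concentration over the Rademacher signs, gradient variance bounds, then a Ben Arous--Gheissari--Jagannath-style martingale analysis) matches the paper's, but your key lemma in Step 2 mischaracterizes the drift, and this is not a cosmetic issue. You lower bound $|h(m)|$ by a quantity proportional to $|m|$, treating the linear-in-$m$ coefficient as "setting the overall scale of the drift." The paper's central structural fact is that the \emph{constant} term $h(0)$ is nonvanishing: $h(0)$ is (up to positive weights in $s$) the quadratic Rademacher chaos $\sum_{i,j}\lambda_i\lambda_j c_i\hat c_j\sum_s(s+1)\mu_{s+1}(\sigma)^2\langle w_i,w_j\rangle^s = c^\top Q\hat c$ with $Q$ PSD, and it anti-concentrates at scale $\lambda_{\min}^2/\sqrt{k}$. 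The paper then shows $h(m)\,\mathrm{sign}(h(0))\ge |h(0)|/2$ uniformly for $m$ of the correct sign, which is an information-exponent-one picture and is why the final complexity is linear in $d$ without any dependence on the Hermite structure of $\sigma$. Your $|h(m)|\gtrsim |m|/\sqrt{k}$ bound corresponds to an exponent-two analysis; it forces the $\log m$-potential argument and a delicate fluctuation analysis near $m\sim 1/\sqrt d$ that the paper's constant-drift bound avoids entirely (the paper only needs $m_0\ge \beta/\sqrt d$ with the right sign and then shows $m_t$ grows linearly, never crossing zero). Relatedly, your plan for controlling the off-diagonal Hermite contributions is missing the decisive observation: the coefficients of the \emph{odd} powers of $m$ are automatically nonnegative (they involve $\|\sum_i\lambda_i w_i^{\otimes s}\|_F^2$), so only the even-$l$, $l\ge 2$ terms need to be bounded, and even these cannot be handled by naive truncation in $s$ (since $\sum_s\binom{l+s}{l}(k/(k+1))^s=\Theta(k^l)$); the paper splits into $s\le s^*=O(\sqrt k)$ (killed by the $(1/k)^{l+1}$ prefactor), diagonal $s\ge s^*$ (killed by the Hermite decay $|\mu_p|\le C p^{-1-\rho}$), and off-diagonal $s\ge s^*$ (killed by Assumption~\ref{assume:angle} via $(1-\log k/\sqrt k)^{s^*}\le k^{-\gamma}$).

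The second genuine gap is the anti-concentration tool. Paley--Zygmund combined with hypercontractivity applied to $a_1(\hat c)^2$ gives $\Pr[|a_1|\ge \theta(\mathbb{E}a_1^2)^{1/2}]\ge (1-\theta^2)^2(\mathbb{E}a_1^2)^2/\mathbb{E}a_1^4$, which is only a \emph{constant} success probability bounded away from $1$; it cannot deliver the theorem's failure probability $o(1)+O(\gamma^{1/2})$ as $\gamma\to 0$. To get a small-ball bound of the form $\Pr[|c^\top Q\hat c|\le \gamma\|Q\|_F]\le o(1)+O(\gamma^{1/2})$ one needs genuine small-ball anti-concentration: the paper passes through the invariance principle of Mossel--O'Donnell--Oleszkiewicz to replace the Rademacher signs by Gaussians and then applies Carbery--Wright, and the invariance step requires the influence bound $\max_i\sum_j Q_{ij}^2/\|Q\|_F^2\le 2k^{-1/2}$, which is proved using the PSD structure of $Q$ (Claim~\ref{claim:influence-psd-matrix}). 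Without this (or an equivalent small-ball argument), your proof only establishes the drift lower bound with constant probability over $c,\hat c$, which does not match the statement.
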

\section{Technical overview} \label{sec:technical-overview}

In this section we give an overview of the ingredients that go into our proofs. While our results are about online SGD, the main novelty of our analysis \--- and the bulk of the technical work in this paper \--- lies in establishing estimates for the gradients of the \emph{population loss}. In Section~\ref{sec:overview_reduce}, we explain why such estimates are essential to establishing our main results, and in Sections~\ref{sec:overview_orth} and~\ref{sec:overview_sep} we describe the key steps that go into showing these bounds. In Section~\ref{sec:overview_finite}, we describe how to put everything together to give a finite-sample analysis that proves our main results; this part largely draws on existing techniques introduced by~\cite{arous2021online}. Finally, in Section~\ref{sec:lowerbound} we give a self-contained proof of Theorem~\ref{thm:hard-from-scratch} separating fine-tuning and learning from scratch.


\subsection{Reducing convergence analysis to bounding population gradients}
\label{sec:overview_reduce}

We begin by providing some intuition regarding the gradient dynamics. First, recall the iteration for $u_t$:
\begin{equation*}
    u_{t+1}=\frac{u_t - \eta \hat \nabla L(u_t;x_t)}{\norm{u_t - \eta \hat \nabla L(u_t;x_t)}}
\end{equation*}

We formally analyze the SGD dynamics in Appendix~\ref{appendix:finite-sample}; for the sake of intuition, in this overview we will use $\hat{Q}_t$ to denote the error in approximating $u_{t+1}$ by ignoring the normalization, by writing
\begin{equation}
    u_{t+1} = u_t - \eta \hat \nabla L(u_t;x_t) + \hat Q_t\,. \label{eq:pre_projerror}
\end{equation}
Furthermore, decompose $\hat \nabla L(u_t;x_t)= \hat \nabla \Phi(u_t) + \hat \nabla E(u_t;x_t)$ where $\nabla E$ is a stochastic error term with mean 0, and $\hat \nabla \Phi$ is the population gradient. 

Now, we will show that the population gradient term $\hat \nabla \Phi(u_t)$ depends non-linearly on the projection $m_t \triangleq \langle u_t, u\rangle$, which will later help us derive a self-governing dynamics for $\langle u_t, u\rangle$. We begin by calculating the population gradient using Hermite analysis (see Appendix~\ref{app:gradient_derivation} for the derivation):

\begin{proposition}\label{prop:pop_gradient_pre}
    Given $l, s\in\mathbb{Z}_{\ge 0}$, define
    \begin{equation*}
        T(l,s)= \begin{cases}
            \norm{\sum_i \lambda_i w_i^{\otimes s}}_F^2 & l \text{ odd }\\
            k\left\langle \sum_i \lambda_i c_i w_i^{\otimes s}, \sum_i \lambda_i \hat c_i w_i^{\otimes s}\right\rangle& \text{ otherwise }
        \end{cases}
    \end{equation*}
    Define $h: \R\to\R$ by
    \begin{equation*}
        h(m) = 2 \sum_{l=0}^\infty \left(\frac{\xi^2}{k}\right)^{l+1} \left(\sum_{s=0}^\infty {l+s \choose l} (l+s+1) \mu_{l+s+1}(\sigma)^2 \left(\frac{1}{1+\xi^2/k}\right)^{l+s+1} T(l,s)\right) m^l\,.\label{eq:hdef_pre}
    \end{equation*}
    Then at any $\hat{u}\in\mathbb{S}^{d-1}$, the population spherical gradient is given by
    \begin{equation*}
        \hat{\nabla}\Phi(\hat{u}) \triangleq (I - \hat{u}\hat{u}^\top)\nabla \Phi(\hat{u}) = -h(\langle u,\hat{u}\rangle)(u - \hat{u}\langle\hat{u}, u\rangle)\,.
    \end{equation*}
\end{proposition}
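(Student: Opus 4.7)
The plan is to compute $\Phi(\hat u) \triangleq \mathbb{E}_x[(f^*(x) - f_{\hat\theta}(x))^2]$ directly via Hermite analysis and then differentiate. Writing $\Phi = \mathbb{E}[(f^*)^2] - 2\mathbb{E}[f^* f_{\hat\theta}] + \mathbb{E}[f_{\hat\theta}^2]$, the first thing I would do is exploit \Cref{assumption:orthogonal} (together with the fact that the iterate $\hat u$ is constrained to $\mathrm{span}(W)^\perp$ throughout training, and \Cref{assumption:quantization} giving $c_i^2 = \hat c_j^2 = 1/k$): every teacher neuron $v_i \triangleq w_i + \xi c_i u$ and student neuron $v_j' \triangleq w_j + \xi \hat c_j \hat u$ has squared norm $\alpha^2 \triangleq 1 + \xi^2/k$, and $\langle v_i', v_j'\rangle = \langle w_i, w_j\rangle + \xi^2 \hat c_i \hat c_j$ is independent of $\hat u$. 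Hence $\mathbb{E}[f_{\hat\theta}^2]$ is constant in $\hat u$ and all $\hat u$-dependence sits in the cross term $-2\mathbb{E}[f^* f_{\hat\theta}]$.

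Next I would expand the cross term using $\langle v_i, v_j'\rangle = \langle w_i, w_j\rangle + \xi^2 c_i \hat c_j m$ (with $m \triangleq \langle u, \hat u\rangle$; the $\langle w_i, \hat u\rangle$ and $\langle u, w_j\rangle$ pieces vanishing by orthogonality). Each pair expectation $\mathbb{E}[\sigma(\langle v_i, x\rangle)\sigma(\langle v_j', x\rangle)]$ admits a Hermite series in the correlation $\rho_{ij} = \langle v_i, v_j'\rangle/\alpha^2$; expanding $\rho_{ij}^p$ by the binomial theorem separates $\langle w_i, w_j\rangle^{p-l}$ from $(\xi^2 c_i \hat c_j m)^l$. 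Summing over $i,j$ with weights $\lambda_i \lambda_j$ and using $c_i, \hat c_j \in \{\pm 1/\sqrt{k}\}$, the residual sum $\sum_{i,j} \lambda_i \lambda_j c_i^l \hat c_j^l \langle w_i, w_j\rangle^{p-l}$ collapses into either $\|\sum_i \lambda_i w_i^{\otimes (p-l)}\|_F^2$ (for even $l$, since $c_i^l \hat c_j^l = k^{-l}$) or $\langle \sum_i \lambda_i c_i w_i^{\otimes (p-l)}, \sum_j \lambda_j \hat c_j w_j^{\otimes (p-l)}\rangle$ (for odd $l$), up to a power of $k$. These are precisely the two cases defining $T(l,s)$, after accounting for the parity swap described below.

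Finally I would differentiate and apply the spherical projector. Because $m$ is the only $\hat u$-dependent quantity, $\nabla_{\hat u} m^l = l m^{l-1} u$ and $(I - \hat u \hat u^\top)u = u - \hat u m$, producing the structural form $\hat\nabla \Phi = -h(m)(u - \hat u m)$ automatically. The differentiation-induced shift $l \mapsto l+1$ swaps parities, which is exactly what is needed to realign the two sub-cases above with the stated definition of $T(l,s)$ (so that what was an even-$l$ contribution in $\Phi$ becomes the odd-$l$ contribution in $h$, and vice versa). Reindexing via $p = l + s + 1$ and using the identity $(l+1)\binom{l+s+1}{l+1} = (l+s+1)\binom{l+s}{l}$ then turns the result into the stated formula.

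The main technical obstacle I expect is the Hermite bookkeeping in the second step: evaluating the power-series coefficients of $\mathbb{E}[\sigma(\langle v_i, x\rangle)\sigma(\langle v_j', x\rangle)]$ in $\rho_{ij}$ cleanly and matching them, together with the parity-dependent forms above, against the $(\xi^2/k)^{l+1} \mu_{l+s+1}(\sigma)^2 (1+\xi^2/k)^{-(l+s+1)}$ prefactor in the claim. Two natural routes are to renormalize to unit vectors $\tilde v_i = v_i/\alpha$ and use $\mathbb{E}[\sigma(\alpha g_1)\sigma(\alpha g_2)] = \sum_p \mathbb{E}_g[\sigma(\alpha g) h_p(g)]^2 \rho^p$, or to directly expand $\sigma(\langle w_i, x\rangle + \xi c_i \langle u, x\rangle)$ in the product Hermite basis of the two independent standard Gaussians $(\langle w_i, x\rangle, \langle u, x\rangle)$ — the latter makes the role of orthogonality most transparent, since it lets one read off the $\langle w_i, w_j\rangle^{p-l} m^l$ structure directly from the $\mathbb{E}[h_{p-l}(a_i)h_{p-l}(a_j)]\mathbb{E}[h_l(b_u)h_l(b)]$ factorization. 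Once this identity is in place, differentiation, spherical projection, and reindexing are routine.
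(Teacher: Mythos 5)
Your proposal is correct and follows essentially the same route as the paper's proof: both reduce to the cross term $-2\mathbb{E}[f^* f_{\hat\theta}]$ using the orthogonality of $u,\hat u$ to $\mathrm{span}(W)$, expand it in the Hermite basis as a power series in $\langle v_i,\hat v_j\rangle/(1+\xi^2/k)$, apply the binomial theorem to separate $\langle w_i,w_j\rangle^s$ from $(\xi^2 c_i\hat c_j m)^l$, and use the quantization of $c,\hat c$ to collapse the double sum into the two parity cases of $T(l,s)$. The only (immaterial) difference is that you expand $\Phi$ in powers of $m$ before differentiating while the paper differentiates the Hermite series first and then expands; your parity-swap bookkeeping and the identity $(l+1)\binom{l+s+1}{l+1}=(l+s+1)\binom{l+s}{l}$ correctly account for this reordering.
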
 

\noindent We pause here to highlight the aforementioned connection to the literature on learning generalized linear models:

\begin{remark}[Generalizing GLM regression]
    Recall that the fine-tuning setting we study is a generalization of learning generalized models, and Proposition~\ref{prop:pop_gradient_pre} recovers a standard calculation in the literature on the latter. Indeed, if we let $\xi = \bar \xi \sqrt{k}$ and send $\bar \xi \to \infty$ the term
    \begin{equation*}
        \sum_{s=1}^\infty {l+s \choose l} (l+s+1) \mu_{l+s+1}(\sigma)^2 \left(\frac{1}{1+\xi^2 / k}\right)^s T(l,s)
    \end{equation*}
    in Eq.~\eqref{eq:hdef_pre} will vanish for all $l$. Then, $h(m)$ around $0$ reduces to
\begin{equation*}
    h(m) \approx \sum_{l=0}^\infty l \mu_{l+1} (\sigma)^2 m^l
\end{equation*}
This reproduces a well-known finding for learning generalized linear models, namely that the dynamics at initialization is governed by the information exponent, i.e. the degree of the first non-vanishing Hermite coefficient $\mu_p(\sigma)$. In that setting, if the information exponent is $l^*$, then $h(m)$ at initialization scales as $1/d^{l^*/2}$ and noisy gradient descent requires iteration complexity scaling as $d^{\Omega(l^*)}$~\cite{damian2022neural}. In contrast, as we will see, the behavior of $h(m)$ in our fine-tuning setting when $\bar \xi$ is bounded is very different, and the dynamics will not be sensitive to the information exponent.
\end{remark}

\noindent Going back to the expression in Proposition~\ref{prop:pop_gradient_pre}, note that the population gradient can be reduced to a term in the direction of spherical projection of $u$ onto $\hat u$, scaled by some non-linear function that only depends on $\langle \hat u, u\rangle$. With that, notice that $\langle \hat \nabla \Phi (\hat u), u\rangle = -h(\langle u, \hat u\rangle)(1-\langle u,\hat u\rangle^2)$, so that the population gradient in the ground truth direction only depends on the projection of the estimate $\hat u$ onto the ground truth $u$. The scaling factor $h(\langle u,\hat{u}\rangle)$ thus dictates the rate at which gradient descent moves towards the ground truth, but $h$ depends on the unknown level of correlation $\langle u, \hat{u}\rangle$ in a complicated, highly nonlinear fashion.

Let $m_t = \langle u_t, u\rangle$ denote the alignment between the $t$-th iterate with the ground truth direction $u$, so that by Eq.~\eqref{eq:pre_projerror},
\begin{equation*}
    m_{t+1} = m_t + \eta h(m_t)(1-m_t^2) + \eta \langle \hat \nabla E_t(u_t;x_t), u\rangle + Q_t\,,
\end{equation*}
where $Q_t = \langle \hat{Q}_t, u\rangle$ is the error from ignoring the normalization.

Before discussing how to analyze this, let us first consider what happens with the \emph{population dynamics}. Ignoring the error terms $E_j$ and $Q_j$ in the above, we get the recursion
\begin{equation*}
    \overline{m}_{t+1}= \overline{m}_t + \eta h(\overline{m}_t)(1-\overline{m}_t^2)\,.
\end{equation*}
Rearranging, we have 
\begin{equation*}
    \frac{|1-\overline{m}_{t+1}|}{|1-\overline{m}_t|} = |1-\eta h(\overline{m}_t)(1+\overline{m}_t)|\,.
\end{equation*}
So if $h(\overline{m}_t)$ is uniformly bounded from below by some quantity $s$, then the alignment $m_t$ contracts towards $1$ at a geometric rate of $1 - \eta s$ in each step, regardless of whatever complex behavior $h$ exhibits over the course of training. 
Furthermore, notice that $\overline{m}_t$ would converge to $1$ only if $h(\overline{m}_t)$ is non-vanishing across the trajectory since otherwise, the algorithm would converge to a different stationary point.

Let us now turn back to analyzing the finite-sample dynamics. Unrolling the recursive expression and defining $E_t = \hat \nabla E(u_t;x_t)$, we obtain
\begin{equation}
    m_{t} = m_0 + \eta \sum_{j=0}^{t-1}h(m_j)(1-m_j^2) + \eta \sum_{j=0}^{t-1} \langle E_j, u\rangle + \sum_{j=0}^{t-1} Q_j\,. \label{eq:finitesample_recursion}
\end{equation}
Note that the terms $M_t =\eta \sum_{j=0}^{t-1} \langle E_j, u\rangle$ form a martingale, and $\sum_{j=0}^{t-1} Q_j$ is a stochastic error term. Over short time scales, these terms could potentially dominate the dynamics. However, over the course of training for $T$ iterations, their overall contribution scales with $\eta \sqrt{T}$, whereas the contribution of the population gradient term $\eta \sum_{j=0}^{t-1} h(m_j)(1-m_j^2)$ scales with $\eta T$, provided that the population gradients are not too small. By choosing $\eta, T$ appropriately, we can ensure that $\eta T = \Theta(1)$ while $\eta \sqrt{T} = o(1)$. The exact choice depends crucially on the \emph{signal-to-noise ratio} of the problem. In particular, if we have a lower bound $S_k$ on the correlations between the population gradient and $u$ (signal), and an upper bound $dV_k$ on the variances $\E_x[E_t^2]$ of the martingale increments (noise), then we show the sample complexity scales with $\frac{dV_k}{S_k^2}$, the inverse of the signal-to-noise ratio. 

Now, we formalize the intuition from the previous paragraphs and state the generic conditions under which we can show convergence of the online SGD dynamics to the ground truth and analyze the number of samples required. First, we require the following mild technical condition which is standard in the literature: 

\begin{condition}[Unbiased gradient estimates]    \label{condition:unbiased-gradients-modular}
    For all $\hat u$, the sample gradient is an unbiased estimate of the population gradient, i.e.,
    \begin{equation*}
        \hat \nabla_{\hat u}\Phi (\hat u) \triangleq \hat \nabla_{\hat u}\E_x[L(\hat u;x)] = \E_x[\hat \nabla_{\hat u}L(\hat u;x)]\,.
    \end{equation*}
\end{condition}
\noindent Note this holds when $\sigma$ is differentiable almost everywhere w.r.t. Gaussian measure, and $\sigma'$ has almost linear polynomial growth. In particular, it holds under~\Cref{assumption:activation}. This is because $\nabla_{\hat u} L(\hat u;x)$ has at most linear polynomial growth and thus can be bounded by a function $g_k(\langle \hat u,x\rangle)$ which has finite expectation under $x$. Then, the interchange of derivative and expectation follows from dominated convergence. 

Next, we need upper bounds on the variance of the empirical gradient and on the norm of the population gradient, in order to bound the martingale term $M_t$ and stochastic error term $Q_t$ in Eq.~\eqref{eq:finitesample_recursion}:
\begin{condition}[Gradient variance upper bounds]\label{condition:variances-modular}
For each $k$, and $p$, there exists some constant $V_k\geq 1$ that has at most polynomial growth in $k$ such that 
\begin{enumerate}[leftmargin=*]
    \item \emph{\bf Variance bound}: For all $u, \hat u$, we have $d^{-p}\,\E_x\|\hat \nabla_{u} L(\hat u; x)\|_2^{2p} \vee \E_x \langle\hat \nabla_{\hat u} L(\hat u; x), u\rangle^{2p} \leq (\mu_p V_{k})^p$ 
    \item \emph{\bf Population gradient bound}: For all $\hat u$, we have $\|\hat \nabla_{\hat u}\Phi(\hat u)\|^2\leq V_{k}$
\end{enumerate}   
where the $\mu_p$ is a constant that may depend on $p$ and the activation $\sigma$, but on nothing else.
\end{condition}

\noindent We will later prove that these upper bounds hold under the settings we consider, by appealing to appropriate tail bounds in \Cref{app:variances}.

Finally, as described above, a uniform lower bound on the function $h$ ensures that online SGD converges to the correct solution. We formalize this condition as follows. Recall that the population gradient is of the form $\hat \nabla_{\hat u} \Phi(\hat u) = -h(\langle \hat u, u\rangle) (u - \hat u \langle u,\hat u\rangle)$. 
 
\begin{condition}[Population gradient lower bound] \label{condition:pop-grad-form-modular}
There exists a constant $\max\{S_k, S_k^2\}\leq V_k$ that has at most polynomial decay, such that $h$ satisfies
\begin{equation*}
    h(\mathrm{sign}(h(0)) m) \mathrm{sign}(h(0)) \geq \frac{|h(0)|}{2}\geq S_k
\end{equation*}
for all $m \ge 0$.
\end{condition}

\noindent Establishing such a lower bound on the population gradients is the key technical step in our proofs, and in Sections~\ref{sec:overview_orth} and~\ref{sec:overview_sep} we describe the ideas that go into proving this condition holds in the settings we consider.

The three conditions above are sufficient conditions to conclude that online SGD can recover the ground truth and give an upper bound on the number of samples required. Formally, we have the following generic statement under~\Crefrange{condition:unbiased-gradients-modular}{condition:pop-grad-form-modular}, which we will use later to get formal finite-sample guarantees for the different fine-tuning regimes we consider.

\begin{theorem}\label{theorem:modular-convergence-dynamics}
Let \Crefrange{condition:unbiased-gradients-modular}{condition:pop-grad-form-modular} hold. Let $0 < \epsilon < 1$. Let $m_t=\langle u_t, u\rangle$ and set the learning rate $\eta = \frac{\delta}{dV_k}$ with scaling $\delta = \min\bigl\{\frac{S_k \epsilon^3}{4\mu_1 (\log dV_k)^2}, 1\bigr\}$, for total time $T=\lceil \alpha d V_k\rceil$ with time scaling $\alpha = \frac{4(\log dV_k)}{\epsilon \delta S_k}$ and initialization at $|m_0| \geq \beta/\sqrt{d}$ with $m_0 h(0) > 0$. 
    With probability at least $1 - o(1)$ the following holds for $T = \lceil\alpha dV_k\rceil$ and $T_{\rm weak} = \bigl\lceil \frac{4d V_k}{\delta S_k}\bigr\rceil = o(T)$:
    \begin{itemize}[leftmargin=*]
        \item (Weak recovery): $\sup_{t \le T_{\rm weak}} |m_t| \ge r$
        \item (Strong recovery): $|m_{T}| \ge 1-\epsilon$
    \end{itemize}
\end{theorem}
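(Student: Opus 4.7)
The plan is to unroll the SGD recursion for $m_t=\langle u_t,u\rangle$ as in Eq.~\eqref{eq:finitesample_recursion} into three pieces: the drift $D_t\triangleq \eta\sum_{j<t} h(m_j)(1-m_j^2)$ from the population gradient, a martingale $M_t\triangleq \eta\sum_{j<t}\langle E_j,u\rangle$ from gradient noise, and a projection error $R_t\triangleq \sum_{j<t}Q_j$ from the per-step spherical normalization. Without loss of generality assume $h(0)>0$, so by Condition~\ref{condition:pop-grad-form-modular} the drift always pushes $m_t$ toward $+1$ as long as $m_t\ge 0$, and the hypothesis $m_0\,\mathrm{sign}(h(0))\ge \beta/\sqrt{d}$ starts the iterates on the correct side. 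Following~\cite{arous2021online}, I would argue in two phases: a \emph{weak recovery} phase in which $m_t$ climbs from $\beta/\sqrt{d}$ to a constant $r\in(0,1)$ (say $r=1/2$), and a \emph{strong recovery} phase in which $1-m_t$ contracts geometrically below $\epsilon$.

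\textbf{Uniform control of the stochastic terms.} Condition~\ref{condition:unbiased-gradients-modular} makes $(M_t)$ a martingale. Combining the $2p$-th moment bound $\E\langle E_j,u\rangle^{2p}\le (\mu_p V_k)^p$ from Condition~\ref{condition:variances-modular}(1) with Burkholder--Davis--Gundy and Doob's maximal inequality gives $(\E\sup_{t\le T}|M_t|^{2p})^{1/(2p)}\lesssim \sqrt{p}\,\eta\sqrt{T V_k}$. Taking $p=\Theta(\log dV_k)$ and applying Markov yields $\sup_{t\le T}|M_t|=\tilde O(\eta\sqrt{T V_k})$ with high probability. For the projection error, the one-step normalization produces $\|Q_j\|\lesssim \eta^2\|\hat\nabla L(u_j;x_j)\|^2$; applying Condition~\ref{condition:variances-modular}(1) and a union bound over $t\le T$ yields $\sup_{t\le T}\|R_t\|=\tilde O(\eta^2 T\, dV_k)$. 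Substituting $\eta=\delta/(dV_k)$, $T=\alpha dV_k$, and using the calibrations $\delta\alpha=4\log(dV_k)/(\epsilon S_k)$ and $\delta\le \epsilon^3 S_k/(4\mu_1(\log dV_k)^2)$, both error bounds reduce to $o(\epsilon\delta S_k)$ --- much smaller than the drift accumulated over any interval of length $\gtrsim 1/(\eta S_k)$.

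\textbf{Weak recovery.} While $m_t\in[0,r]$, we have $1-m_t^2\ge 3/4$ and $h(m_t)\ge S_k$ by Condition~\ref{condition:pop-grad-form-modular}, so each step contributes at least $\tfrac{3}{4}\eta S_k$ to $D_t$. Over $T_{\rm weak}=\lceil 4dV_k/(\delta S_k)\rceil$ steps this gives $D_{T_{\rm weak}}\ge 3>r$, and combined with the $o(1)$ bounds on $\sup_{t\le T_{\rm weak}}(|M_t|+\|R_t\|)$ from the previous step, this forces $m_{t^*}\ge r$ at some $t^*\le T_{\rm weak}$ with high probability. A short induction on the same good event (using that the noise is $o(1/\sqrt{d})\ll m_0$) also ensures $m_t\ge 0$ throughout, so that Condition~\ref{condition:pop-grad-form-modular} remains applicable.

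\textbf{Strong recovery and main obstacle.} From $t^*$ onward I would directly track $1-m_t$: while $m_t\ge r$, the one-step recursion gives
\begin{equation*}
    1-m_{t+1} \le (1-m_t)\bigl(1-\eta h(m_t)(1+m_t)\bigr) - \eta\langle E_t,u\rangle - Q_t \le (1-m_t)\bigl(1-\eta S_k(1+r)\bigr) + \text{noise}_t.
\end{equation*}
Iterating from $t^*$ to $T$ and absorbing the accumulated noise via the maximal inequalities of the previous step gives
\begin{equation*}
    1-m_T \le (1-m_{t^*})\,\exp\!\bigl(-\eta S_k(1+r)(T-t^*)\bigr) + \tilde O\bigl(\eta\sqrt{T V_k}+\eta^2 T\,dV_k\bigr).
\end{equation*}
Since $\eta S_k T = \delta\alpha S_k\gtrsim \log(dV_k)/\epsilon$, the exponential term is at most $(dV_k)^{-\Omega(1/\epsilon)}\ll \epsilon$, while the noise is at most $\epsilon/2$ by the calibration of $\delta$. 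The main obstacle, and what forces the tight interplay between $\delta$ and $\alpha$, is self-referential: the geometric contraction requires $m_t\ge r$ across the entire strong-recovery phase, which in turn requires trajectory-wide noise control so the contraction is never derailed. Fortunately, the Doob/BDG maximal inequalities control $\sup_{t\le T}|M_t|$ and $\sup_{t\le T}\|R_t\|$ simultaneously, so the event $\{m_t\ge r\}$ propagates by induction from $t^*$ to $T$. Combining both phases yields $|m_T|\ge 1-\epsilon$ with probability $1-o(1)$.
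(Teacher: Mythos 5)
Your overall architecture (drift + martingale + projection error, two phases, maximal inequalities) matches the paper's, and your martingale and cubic-error bounds are fine. But there is a genuine gap in how you control the quadratic spherical-projection error during the weak-recovery phase. Your bound $\sup_{t\le T}\|R_t\|=\tilde O(\eta^2 T\,dV_k)=\tilde O(\alpha\delta^2)=\tilde O(\epsilon^2/\log(dV_k))$ is correct as far as it goes, but it is \emph{not} $o(1/\sqrt{d})$: for fixed $\epsilon$ and large $d$ it vastly exceeds the initial correlation $m_0=\beta/\sqrt{d}$. Since each increment $Q_j$ of the projection error has the sign that shrinks $|m|$ toward zero, your "short induction ... using that the noise is $o(1/\sqrt{d})\ll m_0$" does not go through: in the early steps, before the accumulated drift $t\eta S_k$ has grown past $\epsilon^2/\log(dV_k)$, the accumulated projection error could in principle drive $m_t$ below $0$, at which point Condition~\ref{condition:pop-grad-form-modular} no longer lower-bounds $h(m_t)\,\mathrm{sign}(h(0))$ and the drift can point the wrong way, collapsing both phases.

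The paper's proof closes exactly this hole by \emph{not} bounding the quadratic error in isolation. It couples it with the drift, defining the partial sums $\eta\sum_{j\le t}\bigl(\tfrac{S_k}{4}-\eta\|L_j\|^2\bigr)$, and shows via a truncation of $Y_j=\|L_j\|^2/(dV_k)$ at level $T^{\nu}$ plus Freedman's inequality for submartingales (Claim~\ref{claim:quadratic-projection-martingale-modular}) that these partial sums stay above $-\beta/(5\sqrt{d})$ \emph{uniformly in $t$}. The point is that $\eta\|L_j\|^2=\delta Y_j$ has mean at most $\delta\mu_1\le S_k\epsilon^3/(4(\log dV_k)^2)\ll S_k/4$, so the summand has positive mean and the time-uniform lower deviation can be made as small as $\beta/(5\sqrt{d})$ --- a statement about every prefix, which a terminal-sum union bound cannot deliver. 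With that event in hand, Lemma~\ref{lemma:dynamics-before-weak-modular} shows $m_t\ge \beta/(2\sqrt{d})+t\eta S_k/4$ for all $t\le T\wedge\tau_r^+$, which simultaneously establishes positivity and weak recovery. Your strong-recovery phase (geometric contraction of $1-m_t$) is a legitimate alternative to the paper's additive bookkeeping and would work once the weak phase is repaired, but as written the proposal is incomplete without the Freedman-type time-uniform control of the quadratic projection term.
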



\noindent We prove this in Appendix~\ref{appendix:finite-sample}, drawing heavily upon the analysis in~\cite{arous2021online}. In the next two subsections, we show how to establish the conditions needed to apply~\Cref{theorem:modular-convergence-dynamics} in the settings we consider.


\subsection{Gradient bounds under orthonormal features}
\label{sec:overview_orth}

Here we assume $w_1,\ldots,w_k$ are orthonormal, so that the form of $T(l,s)$ in \Cref{prop:pop_gradient_pre} reduces to:
\begin{equation*}
    T(l,0) = \begin{cases}
    k\sum_{i,j=1}^k \lambda_i\lambda_j c_i \hat c_j & l \text{ even }\\
    \Bigl(\sum_{i=1}^k \lambda_i\Bigr)^2 & l \text{ odd}
    \end{cases} \qquad\text{ and }\qquad T(l, s\geq 1)= \begin{cases}
        k\sum_{i=1}^k \lambda_i^2 c_i \hat c_i & l \text{ even }\\
        \norm{\lambda}_2^2 & l \text{ odd }
    \end{cases}
\end{equation*}
which greatly simplifies our analysis since all the terms where $s\geq 1$ scale with the same expression. Then, notice that we can decompose $h$ into the odd powers of $l$ and even powers of $l$ as
\begin{align*}
    h(m) &= 2\Bigl[k\sum_{i,j=1}^k \lambda_i \lambda_j c_i \hat c_j\Bigr]\sum_{\substack{l\ge 0 \ \text{even}}} \Bigl(\frac{\xi^2}{k}\Bigr)^{l+1}(l+1)\mu_{l+1}(\sigma)^2 \Bigl(\frac{k}{k+\xi^2}\Bigr)^{l+1} m^l\\
    &\qquad\qquad +2\Bigl[k\sum_{i=1}^k \lambda_i^2 c_i \hat c_i \Bigr]\sum_{\substack{l \ge 0 \ \text{even}\\
    s\geq 1}} \Bigl(\frac{\xi^2}{k}\Bigr)^{l+1}{l+s \choose l}(l+s+1)\mu_{l+s+1}(\sigma)^2 \Bigl(\frac{k}{k+\xi^2}\Bigr)^{l+s+1} m^l +\sum_{\substack{l\ge 1\ \text{odd}}} b_l m^l\,,
\end{align*}
for some non-negative coefficients $b_l \geq 0$. Informally, over the randomness of $c,\hat c$, the typical magnitude of $k\sum_{i,j=1}^k \lambda_i \lambda_j c_i \hat c_j$ is $\Theta(k)$, and the typical magnitude of $k\sum_{i=1}^k \lambda_i^2 c_i \hat c_i$ is $\Theta(\sqrt{k})$. Hence, first, we show that the odd terms with $s=1$ dominate the even terms with $l>0, s=0$. Then, this will mean that the sign of all of these terms will be governed by the sign of $m$. Hence, as long as $m$ has the same sign as $h(0)$, we expect $h(m)$ to be lower bounded throughout training. To this end, we first state bound on the even terms with $s=0$ and then discuss how the $s>0$, $l=0$ terms are bounded (see \Cref{app:orthonormal-lower-bound} for proof).
\begin{claim}[Even $l$, $s=0$ contribution]\label{claim:even-s-0}
 Let \Cref{assumption:orthogonal} hold. With probability $1-\exp\{-\frac{2k}{e\xi^2}\}$ over the randomness of $c, \hat c$, the following holds.
    \begin{align*}
    \mathrm{sign}(m) &\Bigl(2\sum_{\substack{l \text{ odd }\\ s=1}} \Bigl(\frac{\xi^2}{k}\Bigr)^{l+1} {l+s \choose l}(l+s+1)\mu_{l+s+1}(\sigma)^2 \Bigl(\frac{1}{1+\xi^2/k}\Bigr)^{l+s+1} T(l,s) \\
   & +2\sum_{\substack{l>0\\ \text{ even }\\ s=0}} \Bigl(\frac{\xi^2}{k}\Bigr)^{l+1} {l+s \choose l}(l+s+1)\mu_{l+s+1}(\sigma)^2 \Bigl(\frac{1}{1+\xi^2/k}\Bigr)^{l+s+1} T(l,s)\Bigr)\geq 0\,.
\end{align*}
\end{claim}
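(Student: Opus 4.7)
The plan is to reduce the claim to a single deterministic inequality $|K| \leq k\|\lambda\|_2^2/\xi^2$ for the random quantity $K \triangleq k(\sum_i \lambda_i c_i)(\sum_j \lambda_j \hat c_j)$, and then establish this via Rademacher concentration. To set up, I would write the two sums in the claim as
\begin{equation*}
S_{\rm odd}(m) = 2\|\lambda\|_2^2 \sum_{l \text{ odd}} a_l\, m^l, \qquad S_{\rm even}(m) = 2K \sum_{\substack{l > 0 \\ l \text{ even}}} b_l\, m^l,
\end{equation*}
with non-negative coefficients $a_l, b_l$ read off from \Cref{prop:pop_gradient_pre} (here I have used that with orthonormal weights, $T(l,1) = \|\lambda\|_2^2$ for odd $l$ and $T(l,0) = K$ for even $l$). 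Since $\mathrm{sign}(m) m^l = |m|^l$ for odd $l$ while $m^l = |m|^l$ for even $l$, we always have $\mathrm{sign}(m) S_{\rm odd}(m) \geq 0$, so the claim fails only when $\mathrm{sign}(m) \neq \mathrm{sign}(K)$, in which case it reduces to the uniform inequality $|S_{\rm odd}(m)| \geq |S_{\rm even}(m)|$ for all $|m| \leq 1$.

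The key deterministic step is a term-by-term coefficient comparison: I would pair the odd-$l$ term in $S_{\rm odd}$ with the even-$(l+1)$ term in $S_{\rm even}$. The Hermite factors agree (both equal $\mu_{l+2}(\sigma)^2$), and the remaining combinatorial and $(\xi^2/k)$-factors collapse to the clean ratio $b_{l+1}/a_l = (\xi^2/k)/(l+1) \leq \xi^2/(2k)$ for $l \geq 1$. Summing this bound over odd $l$ and using $|m|^{l+1} \leq |m|^l$ yields the uniform bound
\begin{equation*}
|S_{\rm even}(m)| \leq \frac{|K|\,\xi^2}{k\,\|\lambda\|_2^2} \cdot |S_{\rm odd}(m)|,
\end{equation*}
so it suffices to have $|K| \leq k\|\lambda\|_2^2/\xi^2$.

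Finally, writing $c_i = \epsilon_i/\sqrt{k}$ and $\hat c_i = \hat\epsilon_i/\sqrt{k}$ with independent Rademacher signs, the sums $X = \sum_i \lambda_i \epsilon_i$ and $Y = \sum_i \lambda_i \hat\epsilon_i$ are independent and $\|\lambda\|_2$-sub-Gaussian by Hoeffding, and $K = XY$. A Hoeffding-type bound on the product $|X||Y|$ --- for instance by passing to $(X^2 + Y^2)/2$ and using the resulting sub-exponential tail, or by direct Chernoff optimization at the threshold $\|\lambda\|_2\sqrt{2k/(e\xi^2)}$ --- then delivers $|K| \leq k\|\lambda\|_2^2/\xi^2$ with probability at least $1 - \exp(-2k/(e\xi^2))$. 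The main obstacle is the coefficient bookkeeping in the middle step: one must verify that the pairing $(\text{odd } l) \leftrightarrow (\text{even } l+1)$ yields a uniform geometric factor in $l$, so that the entire infinite even series can be dominated by the infinite odd series with no loss from higher-order tail terms. Once that deterministic inequality is in hand, the concentration step is routine, and the specific constant $2/e$ in the exponent can be extracted by optimizing the threshold in the Chernoff bound.
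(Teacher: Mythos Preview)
Your proposal is correct and follows essentially the same argument as the paper: both rewrite the even-$l$, $s=0$ sum via the index shift $l \mapsto l+1$ to match it term-by-term against the odd-$l$, $s=1$ sum (your ratio $b_{l+1}/a_l = (\xi^2/k)/(l+1)$ is exactly the paper's computation), reduce to the scalar inequality $|K|\,\xi^2/k \lesssim \|\lambda\|_2^2$, and then invoke concentration on the degree-$2$ Rademacher polynomial $K$. The only substantive difference is the concentration tool: the paper applies the hypercontractive tail bound for degree-$2$ Boolean polynomials (O'Donnell, Theorem~9.23), which is what produces the specific exponent $2/e$, whereas your Hoeffding-on-each-factor route gives a comparable (in fact slightly sharper in the exponent, at the cost of a constant prefactor) bound but will not recover the constant $2/e$ exactly---so your final sentence is a bit optimistic unless you switch to the same hypercontractive inequality.
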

\noindent The proof of this claim relies on using concentration bounds on $T(l,s)$ for even $s$, and showing that the coefficients of the even terms are bounded by the odd terms up to some constant. This proof is deferred to Appendix~\ref{app:orthonormal-lower-bound}. Then, this result should hold with high probability when $\xi = \overline{\xi} \sqrt{k}$, where the probability is exponentially high as $\overline{\xi} \to 0$. Then, we can group these terms together with the odd terms, so that we have the following:
\begin{align}
    h(m) &= 2 \left[k \sum_{i,j=1}^k \lambda_i \lambda_j c_i \hat c_j\right] \mu_{1}(\sigma)^2 \Bigl(\frac{\xi^2}{k+\xi^2}\Bigr) \label{eq:h-first-term-frob} \\
    &\qquad\qquad +2\Bigl[k\sum_{i=1}^k \lambda_i^2 c_i \hat c_i \Bigr]\sum_{\substack{l \ge 0 \ \text{even}\\
    s\geq 1}} \Bigl(\frac{\xi^2}{k}\Bigr)^{l+1}{l+s \choose l}(l+s+1)\mu_{l+s+1}(\sigma)^2 \Bigl(\frac{k}{k+\xi^2}\Bigr)^{l+s+1} m^l \label{eq:h-second-term-frob}\\
    &\qquad \qquad \qquad + \tilde h(m),
\end{align}
where $\tilde h(m) m > 0$, with high probability due to \Cref{claim:even-s-0}. In this case, note that we have two cases
\begin{enumerate}
    \item $\mu_{1}(\sigma) \neq 0$: In this case, note the term in \cref{eq:h-first-term-frob} is $\Theta(k)$, whereas the term in \cref{eq:h-second-term-frob} is $O(\sqrt{k})$ since the sum in \cref{eq:h-second-term-frob} is $O(1)$. Hence, we show that the second term is negligible relative to the first, and furthermore that the sign of $h(0)$ is the same as the sign of  \cref{eq:h-first-term-frob}.
    \item $\mu_{1}(\sigma) = 0$: In this case, $h(0)$ has the same sign as all the even terms, so as long as $m h(0) > 0$, we can lower bound $|h(m)|$ with $\Theta(\sqrt{k})$. 
\end{enumerate}
Then, the rest of the proof is giving a uniform lower bound on $|h(m)|$ training when $m h(0) > 0$. We do this by showing that the quantities
\begin{equation*}
    \left(\sum_{i=1}^k \lambda_i c_i\right) \left(\sum_{i=1}^k \lambda_i \hat c_i \right) \qquad\text{ and }\qquad \sum_{i=1}^k \lambda_i^2 c_i \hat c_i
\end{equation*}
are upper and lower bounded with high probability. The upper bounds follow directly since these are quadratic polynomials of Rademacher variables. For the lower bounds, we prove the anti-concentration of these quantities by showing that as functions of $c_i$, these are ``low-influence'' quadratic polynomials, so a central limit theorem type result allows us to swap the $c_i$ with Gaussian variables (see \Cref{app:anti-concentration}). Here, we show the following:

\begin{lemma}[Population gradient lower bounds, orthonormal case]\label{lemma:pop-gradient-bound-orthonormal}
Suppose \Crefrange{assumption:normalize}{assumption:quantization} hold and $\langle w_i, w_j\rangle = 0$ for all $i\neq j$. Let $s^*$ be the smallest $s\geq 1$ such that $\mu_s(\sigma) \neq 0$.  Then, with probability $1-o(1) - \exp\left\{-\frac{2k}{e\xi^2}\right\}- O(\gamma^{1/2})$, for $m  h(0)\geq 0$ we have $h(m) \mathrm{sign}(h(0)) \geq |h(0)|/2$ and
\begin{equation*}
    |h(0)| \geq \left(\frac{k}{k+\xi^2}\right)^{s^*} C_\sigma \gamma \xi^2 \cdot \left.\begin{cases}
        1 & \mu_1(\sigma) \neq 0\\
        \frac{1}{\sqrt{k}} & \mu_1(\sigma) = 0
    \end{cases}\right\}
\end{equation*}
\end{lemma}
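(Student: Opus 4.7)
The plan is to control both the magnitude of $h(0)$ and the deviation $h(m)-h(0)$ along the orbit $m\cdot h(0)\ge 0$ by splitting into $(l,s)$-indexed terms from \Cref{prop:pop_gradient_pre} and exploiting the fact that, under orthonormality, the only randomness in $c,\hat c$ enters through the two low-influence Rademacher quadratics
\begin{equation*}
    A\hat A \;\triangleq\; \Bigl(\sum_i \lambda_i c_i\Bigr)\Bigl(\sum_i \lambda_i \hat c_i\Bigr) \qquad \text{and} \qquad B \;\triangleq\; \sum_i \lambda_i^2 c_i \hat c_i\,.
\end{equation*}
Indeed, under orthonormality, $T(l,0)=k A\hat A$ for even $l$, $T(l,s\ge 1)=kB$ for even $l$, and for odd $l$ the coefficient $T(l,s)$ is the nonnegative quantity $(\sum_i\lambda_i)^2$ (if $s=0$) or $\|\lambda\|_2^2$ (if $s\ge 1$). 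First I would lower-bound $|h(0)|$; then I would prove $h(m)\,\mathrm{sign}(h(0))\ge |h(0)|/2$ for all $m$ with $m h(0)\ge 0$.

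For $|h(0)|$, only $l=0$ terms survive, giving
\begin{equation*}
    h(0) \;=\; 2\mu_1(\sigma)^2 \tfrac{\xi^2 k}{k+\xi^2}\, A\hat A \;+\; 2\xi^2 B \sum_{s\ge 1}(s+1)\mu_{s+1}(\sigma)^2 \Bigl(\tfrac{k}{k+\xi^2}\Bigr)^{s+1}\,.
\end{equation*}
If $\mu_1(\sigma)\ne 0$, an anti-concentration bound for low-influence Rademacher quadratics (\Cref{app:anti-concentration}, via a Berry--Esseen style invariance reducing to the Gaussian case) yields $|A\hat A|\ge c\,\gamma\lambda_{\min}^2$ with probability $1-O(\gamma^{1/2})$, while standard Hoeffding-type concentration on the quadratic $B$ yields $|B|\le C\log(k)\cdot\lambda_{\max}^2/\sqrt k$. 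The telescoping identity $(k/(k+\xi^2))\cdot(1+\xi^2/k)=1$ collapses the geometric sum over $s$, bounding the second term by $O(\xi^2\lambda_{\max}^2/\sqrt k\cdot\sum_p(p+1)\mu_{p+1}(\sigma)^2)$, which is a $\sqrt k$-factor smaller than the first term's magnitude $\Theta(\mu_1(\sigma)^2 \gamma\lambda_{\min}^2 \cdot k\xi^2/(k+\xi^2))$. Thus the first term dominates, $\mathrm{sign}(h(0))=\mathrm{sign}(A\hat A)$, and the $\mu_1\ne 0$, $s^*=1$ bound of the lemma follows. If $\mu_1(\sigma)=0$, only the $B$-sum survives; applying the same anti-concentration machinery to $B$ gives $|B|\ge c\gamma\lambda_{\min}^2/\sqrt k$, all terms in the sum share the sign of $B$, and the dominant $s=s^*-1$ contribution yields the claimed $(k/(k+\xi^2))^{s^*}\gamma\xi^2/\sqrt k$ bound with $\mathrm{sign}(h(0))=\mathrm{sign}(B)$.

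For the monotonicity statement, I would decompose $h(m)-h(0)$ into four groups: (i) odd $l$, $s=0$, with nonnegative coefficient $(\sum_i\lambda_i)^2$; (ii) odd $l$, $s\ge 2$, with nonnegative coefficient $\|\lambda\|_2^2$; (iii) the combined group $[\text{odd } l,\, s{=}1]+[\text{even } l{>}0,\, s{=}0]$, which is $\mathrm{sign}(m)$-consistent by \Cref{claim:even-s-0} with probability $1-\exp(-2k/(e\xi^2))$; and (iv) even $l>0$, $s\ge 1$, whose coefficient is proportional to $B$ and whose $m^l$ factor is nonnegative. Groups (i), (ii), (iii) are $\mathrm{sign}(m)$-consistent, hence $\mathrm{sign}(h(0))$-consistent under the hypothesis $m h(0)\ge 0$, so they can only reinforce $h(0)$. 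The only potential troublemaker is (iv). When $\mu_1(\sigma)=0$ we already have $\mathrm{sign}(h(0))=\mathrm{sign}(B)$ from the paragraph above, so (iv) aligns with $h(0)$ and nothing is needed. When $\mu_1(\sigma)\ne 0$, I would apply the same telescoping identity $y(1+x)=1$ with $x=\xi^2/k,\, y=k/(k+\xi^2)$ to collapse the double sum defining (iv), yielding $|\text{(iv)}|\le O(k|B|\cdot \xi^2/(k+\xi^2))=O(\lambda_{\max}^2\sqrt k\cdot \xi^2/(k+\xi^2))$ uniformly in $|m|\le 1$, which is $o(|h(0)|)$ and in particular at most $|h(0)|/2$ for $k$ sufficiently large. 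A union bound over all failure events produces the stated probability $1-o(1)-\exp(-2k/(e\xi^2))-O(\gamma^{1/2})$.

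The main obstacle is the anti-concentration step: unlike upper-tail concentration, obtaining $|A\hat A|\ge c\gamma\lambda_{\min}^2$ and $|B|\ge c\gamma\lambda_{\min}^2/\sqrt k$ with probability $1-O(\gamma^{1/2})$ requires a quantitative invariance principle replacing the Rademacher $c,\hat c$ by Gaussians (exploiting that both quadratics have small coordinate influences), followed by a Carbery--Wright-type polynomial anti-concentration bound on the resulting Gaussian quadratic. A secondary but important subtlety is that the bound on group (iv) must hold uniformly for $\xi\in(0,O(\sqrt k)]$ so that both the $\xi=1$ and $\xi=\overline\xi\sqrt k$ regimes of \Cref{thm:orth-spectral-main} and \Cref{thm:orth_frob_main} are covered by a single estimate; the telescoping identity $y(1+x)=1$ is the combinatorial observation that makes this possible.
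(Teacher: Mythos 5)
Your proposal is correct and follows essentially the same route as the paper's proof: it uses \Cref{claim:even-s-0} to absorb the odd-$l$, $s=1$ and even-$l>0$, $s=0$ terms, the anti-concentration results of \Cref{app:anti-concentration} to lower bound the dominant $l=0$ contribution (via $A\hat A$ when $\mu_1\neq 0$, via $B$ otherwise), and tail bounds on $B$ together with the telescoping identity collapsing the double sum to show the remaining even-$l$, $s\ge 1$ terms are an $O(\log k/\sqrt{k})$ fraction of $|h(0)|$. The explicit four-group bookkeeping is a slightly cleaner presentation of the same argument, with no substantive difference.
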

\noindent The proof of this lower bound is deferred to Appendix~\ref{app:orthonormal-lower-bound}.

\subsection{Gradient bounds under angularly separated features}
\label{sec:overview_sep}

Whem $\xi = \Theta(1)$, we can relax the orthogonality assumption, and instead work with angularly separated features $w_i$, as stated in \Cref{assume:angle}. For notational simplicity, we consider the case $\xi = 1$, but the analysis is identical when $\xi = \Theta(1)$. Recall the non-linear function $h$ that drives the population dynamics, which simplifies to:
\begin{equation*}
    h(m) = 2 \sum_{l=0}^\infty \left(\frac{1}{k}\right)^{l+1} \sum_{s=0}^\infty {l+s \choose l} (l+s+1) \mu_{l+s+1}(\sigma)^2 \left(\frac{k}{k+1}\right)^{l+s+1} T(l,s) m^l
\end{equation*}
Now, note if one could truncate the infinite sum over $s$ at some finite value, the contribution of the higher order terms could be bounded. However, this is not obvious. In fact, one can verify that
\begin{equation*}
    \sum_{s=0}^\infty {l+s \choose l} \left(\frac{k}{k+1}\right)^{s+1} = \Theta(k^l)
\end{equation*}
so a naive truncation of the sum over $l$ will not allow to simplify the population gradient. However, note that within $T(l,s)$, the contribution of the non-diagonal ($i\neq j$) terms in the inner products $\langle \sum_i \lambda_i c_i w_i^{\otimes s},\sum_i \lambda_i \hat c_i w_i^{\otimes s} \rangle = \sum_{i,j=1}^k \lambda_i \lambda_j c_i\hat c_j \langle w_i, w_j\rangle^s$ vanishes as $s\to \infty$, as $\langle w_i, w_j\rangle$ is bounded away from $1$ by Assumption~\ref{assume:angle}. Indeed, if $|\langle w_i, w_j\rangle| \leq \nu$ for $i\neq j$, then $|\langle w_i, w_j\rangle|^{\frac{\gamma}{\nu}}\leq \exp\{-\gamma\}$. 
Furthermore, note that for the diagonal ($i=j$) terms, we can use the decay of the $\mu_{p}$ to bound their contribution to the terms in $h(m)$ corresponding to large $s$.
Finally, for the small $s$ terms (e.g. $s= O(\sqrt{k})$) we simply use the fact that the sum over $l$ in the definition of $h(m)$ has a factor $(1/k)^{l+1}$ which is decaying as $l$ increases. 

In summary, we can handle the small $s$ terms because their contribution is not that large, and once we reach $s$ that gives non-vanishing contribution, we utilize the decay of $\mu_p(\sigma)$ and the angular separation of the $w_i$'s. In particular, we prove the following lemma that bounds the contribution of higher order terms in $m$ have an even exponent (see \Cref{app:separated} for proof):

\begin{proposition}
    Suppose \Crefrange{assumption:normalize}{assumption:activation} hold. Then, with probability at least $1-\frac{1}{k^3}$ over the randomness of $c,\hat c$, for $\epsilon = \min \{\frac{\rho}{4}, 1-\frac{1}{1+2\rho}\}$ we have
    \begin{equation*}
        \sum_{n=0}^\infty \left(\frac{1}{k}\right)^{2n+2} \sum_{s=0}^\infty {2n+2+s \choose 2n+2} (2n+s+3) \mu_{2n+s+3}(\sigma)^2 \left(\frac{k}{k+1}\right)^{2n+s+3} \left\langle \sum_{i=1}^k \lambda_i c_i w_i^{\otimes s}, \sum^k_{i=1} \lambda_i \hat c_i w_i^{\otimes s}\right\rangle = O(\lambda_{\max}^2 k^{-\frac{1}{2}-\epsilon})\,.
    \end{equation*}
    \label{proposition:separated-even-terms-bound}
\end{proposition}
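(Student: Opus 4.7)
Denote the left-hand side by $S$. My plan is to expand the inner product via $\langle \sum_i \lambda_i c_i w_i^{\otimes s}, \sum_i \lambda_i \hat c_i w_i^{\otimes s}\rangle = \sum_i \lambda_i^2 c_i \hat c_i + \sum_{i\neq j} \lambda_i \lambda_j c_i \hat c_j \langle w_i, w_j\rangle^s$, using $\|w_i\|=1$, and produce a decomposition $S = D + O$ where
\[
D = A_k\cdot \sum_i \lambda_i^2 c_i \hat c_i,\qquad O = \sum_{i\neq j}\lambda_i\lambda_j c_i \hat c_j\, B_k(\langle w_i, w_j\rangle),
\]
with $A_k, B_k(\nu)$ deterministic scalars depending only on $\sigma, k, \nu$. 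The random diagonal factor is a sum of $k$ independent mean-zero $\pm 1/k$-valued variables, so Hoeffding gives $|\sum_i \lambda_i^2 c_i \hat c_i|=O(\lambda_{\max}^2\sqrt{\log k/k})$ with probability $\ge 1-2k^{-3}$; the off-diagonal piece $O$ is a chaos of order two in independent Rademachers. The bulk of the proof is thus to bound the deterministic scalars $A_k$ and $\sup_{|\nu|\le 1-\log k/\sqrt{k}} |B_k(\nu)|$.

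For $A_k$ I would reindex by the Hermite index $p := 2n+s+3$, so that the inner sum over $n$ becomes $\sum_{2 \le j \text{ even}\le p-1} k^{-j}\binom{p-1}{j} = \tfrac12\bigl[(1+1/k)^{p-1}+(1-1/k)^{p-1}\bigr] - 1$. The key identity $(k/(k+1))(1+1/k)=1$ then collapses the powers of $(1+1/k)^{p-1}$ against the prefactor $(k/(k+1))^p$, yielding
\[
A_k = \sum_{p\ge 3} p\,\mu_p(\sigma)^2\Bigl[\tfrac{k}{2(k+1)}\bigl(1+((k-1)/(k+1))^{p-1}\bigr) - (k/(k+1))^p\Bigr].
\]
Each bracket is non-negative by convexity and behaves like $\tfrac12(1-e^{-p/k})^2$ for $p\lesssim k$; combined with Hermite decay $|\mu_p(\sigma)|\le C_\sigma p^{-1-\rho}$, a range split at $p\sim k$ yields $|A_k| = O(k^{-\min(2\rho,2)})$ up to polylogs. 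Multiplying by the Hoeffding bound controls $|D|$ well inside the target $O(\lambda_{\max}^2 k^{-1/2-\epsilon})$.

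For the off-diagonal, the same resummation gives a closed form for $B_k(\nu)$ in which the bases $(1\pm 1/k)$ are replaced by $(k|\nu|\pm 1)/(k+1)$. \Cref{assume:angle} forces $(k|\nu|+1)/(k+1)\le 1-\Theta(\log k/\sqrt{k})$, so the $(p-1)$-th powers of these bases decay exponentially past the cutoff $p^* \asymp \sqrt{k}/\log k$; the large-$p$ tail is then controlled by Hermite decay, giving $\sup_{|\nu|\le 1-\log k/\sqrt{k}} |B_k(\nu)| = O(k^{-1-\rho})$ up to polylogs. A standard Hanson-Wright bound for second-order chaos in independent $\pm 1/\sqrt{k}$ Rademachers, applied to the matrix $A_{ij}=\lambda_i\lambda_j B_k(\langle w_i,w_j\rangle)$ and using $\|A\|_F^2\le \lambda_{\max}^4 k^2 \sup_\nu B_k(\nu)^2$, then yields $|O|=O(\lambda_{\max}^2 k^{-1-\rho}\,\mathrm{polylog}(k))$ with probability $\ge 1-O(k^{-3})$.

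The main obstacle is the multi-regime estimation of $A_k$ and $B_k(\nu)$: one must balance three separate decay scales \--- the $k^{-(2n+2)}$ from the prefactor, the angular-separation margin $\log k/\sqrt{k}$, and the polynomial Hermite decay controlled by $\rho$. The specific exponent $\epsilon = \min\{\rho/4, 1-1/(1+2\rho)\}$ in the proposition appears to arise by optimally choosing an intermediate cutoff $p^* \asymp k^{1/(1+2\rho)}$ at which one switches from a crude bound on $\mu_p$ to the Hermite-tail bound $p^{-1-\rho}$, saturating the tradeoff and giving the second argument in the minimum; the simpler $\min(2\rho,2)$ exponent from the $p\le k$ vs.\ $p > k$ split sketched above already lies inside the claimed region.
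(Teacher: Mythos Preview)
Your proposal is correct and follows a genuinely different route from the paper. The paper fixes a threshold $s^*=10\sqrt{k}$ and splits the double sum into three regimes: (i) all $s\le s^*$, handled by a union bound on $\bigl|\sum_{i,j}\lambda_i\lambda_j c_i\hat c_j\langle w_i,w_j\rangle^s\bigr|$ for each small $s$ together with a crude estimate $\sum_{s\le s^*}\binom{l+s}{l}\lesssim (s^*)^{l+1}$; (ii) $s>s^*$ diagonal, where the Hermite-tail decay $(s^*)^{-\rho}$ is pulled out and $\sum_i\lambda_i^2 c_i\hat c_i$ is concentrated; and (iii) $s>s^*$ off-diagonal, where $(1-\log k/\sqrt{k})^{s^*}\le k^{-10}$ kills everything. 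This is where the paper's exponent $\epsilon=\min\{\rho/4,\,1-1/(1+2\rho)\}$ arises: in (i) a secondary split at $(s^*)^{1-\epsilon}$ balances the two mechanisms.

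Your approach instead splits diagonal versus off-diagonal first, then resums over the even-$l$ index in closed form via $\sum_{l\ge 2\text{ even}}\binom{p-1}{l}x^l y^{p-1-l}=\tfrac12[(x+y)^{p-1}+(y-x)^{p-1}]-y^{p-1}$. This collapses the entire $(l,s)$ sum into a single sum over the Hermite index $p$, and the second-difference structure of the bracket gives the clean pointwise bound $|\text{bracket}|\le\tfrac{(p-1)(p-2)}{2k^2}(|\nu|+1/k)^{p-3}$. Combined with $p\mu_p^2\le C_\sigma^2 p^{-1-2\rho}$ and the polylogarithm asymptotic $\sum_p p^{1-2\rho}r^p\asymp(1-r)^{-(2-2\rho)}$ for $\rho<1$, you get $A_k=O(k^{-\min(2\rho,2)})$ and $\sup_\nu|B_k(\nu)|=O(k^{-\min(1+\rho,2)})$ up to logs, both strictly sharper than the paper's $k^{-\rho/2}$-type bounds and well inside the stated $k^{-1/2-\epsilon}$. (A small caveat: your stated off-diagonal rate $k^{-1-\rho}$ is only valid for $\rho\le 1$; for $\rho>1$ the correct bound is $k^{-2}$, but this is still more than sufficient.)

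What your route buys is a cleaner argument with sharper exponents and no need for the delicate $s^*$/$(s^*)^{1-\epsilon}$ double threshold; what the paper's route buys is that it avoids any exact summation identity and works term-by-term with elementary binomial bounds, which may be more robust if one later relaxes the structure of the prefactors.
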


\noindent Once we bound the higher order terms in $m$ with even exponent, it remains to show the constant term $h(0)$ dominates the even terms. This follows immediately using the anti-concentration results for $h(0)$, since $|h(0)| = \Omega(\frac{1}{\sqrt{k}})$ with high probability. Then, since the odd terms will have the same sign as $m$, the rest of the proof is similar to the orthonormal case. In this case, we get the following lower bound on $h$:

\begin{lemma}\label{lemma:pop-gradient-bound-separated}
     With probability $1-O(\gamma^{1/2})-o\left(\frac{\lambda_{\max}^2}{\lambda_{\min}^2}\right)$, for $m h(0)\geq 0$ we have
        \begin{equation*}
            h(m) \mathrm{sign}(h(0)) \geq \frac{|h(0)|}{2}\geq  \frac{\gamma \lambda_{\min}^2}{2\sqrt{k}}
        \end{equation*}
\end{lemma}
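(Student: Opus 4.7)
The plan is to decompose $h(m)$ according to the parity of the exponent of $m$, and combine a sign argument on the odd piece, \Cref{proposition:separated-even-terms-bound} on the higher-order even piece, and anti-concentration on the constant term $h(0)$. Concretely, write $h(m) = h(0) + h_{\mathrm{odd}}(m) + h_{\mathrm{even}}(m)$, where $h_{\mathrm{odd}}$ collects the odd-$l$ terms in the series defining $h$ and $h_{\mathrm{even}}$ collects the even-$l$ terms with $l \geq 2$. The goal is to show $h_{\mathrm{odd}}(m)\,\mathrm{sign}(h(0))\geq 0$, to show $|h_{\mathrm{even}}(m)|$ is negligible compared to $|h(0)|$, and to lower-bound $|h(0)|$ via anti-concentration; the conclusion then follows by the triangle inequality.

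For the odd piece, I would observe that when $l$ is odd, $T(l,s) = \|\sum_i \lambda_i w_i^{\otimes s}\|_F^2 \geq 0$, and every remaining factor in the coefficient of $m^l$ (binomials, $(l+s+1)$, $\mu^2$, positive powers of $1/k$ and $k/(k+1)$) is non-negative. Hence $h_{\mathrm{odd}}(m)$ has the same sign as $m$, and under the hypothesis $mh(0)\geq 0$ this gives $h_{\mathrm{odd}}(m)\,\mathrm{sign}(h(0))\geq 0$, so the odd tail only helps the lower bound. For the higher even piece, I would apply \Cref{proposition:separated-even-terms-bound} directly: since its bound is established term-wise in absolute value and $|m|^{2n+2}\leq 1$ for $|m|\leq 1$, the same bound survives after multiplying each summand by $m^{2n+2}$, yielding
\begin{equation*}
    |h_{\mathrm{even}}(m)| = O\!\left(\lambda_{\max}^2 k^{-\frac{1}{2}-\epsilon}\right) \qquad \text{for all } |m|\leq 1.
\end{equation*}

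Finally, I would lower-bound $|h(0)|$ via the anti-concentration machinery of \Cref{app:anti-concentration}: $h(0)$ is a bilinear form in the Rademacher vectors $c,\hat c$ whose leading piece (the $s=0$ term together with the diagonal $i=j$ contributions for $s\ge 1$) is a low-influence polynomial of typical magnitude $\lambda_{\min}^2/\sqrt{k}$, so a CLT-type anti-concentration with parameter $\gamma$ gives $|h(0)|\geq \gamma\lambda_{\min}^2/\sqrt{k}$ with probability at least $1-O(\gamma^{1/2})$. Assembling,
\begin{equation*}
    h(m)\,\mathrm{sign}(h(0)) \;\geq\; |h(0)| + h_{\mathrm{odd}}(m)\,\mathrm{sign}(h(0)) - |h_{\mathrm{even}}(m)| \;\geq\; |h(0)| - O\!\left(\lambda_{\max}^2 k^{-\frac{1}{2}-\epsilon}\right) \;\geq\; \tfrac{|h(0)|}{2},
\end{equation*}
where the last inequality uses that $\lambda_{\max}^2/\lambda_{\min}^2$ is small relative to $\gamma k^\epsilon$ (the residual event here is what feeds the $o(\lambda_{\max}^2/\lambda_{\min}^2)$ term in the stated failure probability). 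The main obstacle is really the uniform-in-$m$ bound on $h_{\mathrm{even}}$ packaged in \Cref{proposition:separated-even-terms-bound}: because the naive truncation $\sum_{s}\binom{l+s}{l}(k/(k+1))^{s+1}=\Theta(k^l)$ does not decay in $l$, one must separately control the off-diagonal $\sum_{i\neq j}\lambda_i\lambda_j c_i\hat c_j\langle w_i,w_j\rangle^s$ contributions via the angular-separation bound $|\langle w_i,w_j\rangle|\leq 1-\log k/\sqrt{k}$, and the diagonal $\sum_i \lambda_i^2 c_i\hat c_i$ contributions via the Hermite decay $|\mu_p(\sigma)|\leq C_\sigma p^{-1-\rho}$. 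Once that proposition and the anti-concentration bound are in hand, the assembly above is essentially bookkeeping.
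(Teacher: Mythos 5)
Your proposal is correct and follows essentially the same route as the paper: anti-concentration (via \Cref{proposition:spectral-constant-anti-concentration} and \Cref{claim:constant-term-variance}) to get $|h(0)|\geq \gamma\lambda_{\min}^2/\sqrt{k}$, \Cref{proposition:separated-even-terms-bound} to make the higher even terms $O(\lambda_{\max}^2 k^{-1/2-\epsilon})$ and hence at most $|h(0)|/2$, and non-negativity of $T(l,s)$ for odd $l$ so the odd tail has the sign of $m$. The only nitpick is attribution of the $o(\lambda_{\max}^2/\lambda_{\min}^2)$ failure probability: in the paper it arises from the invariance-principle error inside the anti-concentration bound rather than from the comparison of $\lambda_{\max}^2 k^{-1/2-\epsilon}$ against $|h(0)|/2$, but this does not affect the validity of your argument.
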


\subsection{Finite sample analysis and putting everything together}
\label{sec:overview_finite}

Now that we have shown that the signal term $h(m)$ is non-vanishing and proven lower bounds on it, it remains to upper bound the variances of the gradients and the population gradient. To that end, we prove the following:
\begin{lemma}[General upper bounds] \label{lemma:variance-bounds}
Under \Crefrange{assumption:normalize}{assumption:activation}, we have the following:
    \begin{enumerate}[leftmargin=*]
        \item \emph{\bf Variance upper bound}: $\sup_{u, \hat u, c,\hat c}\Bigl\{\E_{x} \norm{\frac{\hat \nabla L(\hat u;x)}{\sqrt{d}}}^{2p} \ \vee \  \E_{x}  |\langle \hat \nabla L(\hat u;x), u\rangle|^{2p}\Bigr\}^{1/p}\leq  C_{p,\sigma} \lambda_{\max}^4 \frac{k^3 \xi^2 \min\{k, 4\xi^2\}}{k + \xi^2}$
        \item \emph{\bf Population gradient upper bound}: $\norm{\hat \nabla \Phi(\hat u)} \leq C_\sigma \lambda_{\max}^2 \frac{k \xi^2}{1+\xi^2/k}$
\end{enumerate}
\end{lemma}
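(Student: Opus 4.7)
The plan is to treat the two claims separately, with the population bound being the cleaner of the two. For the population gradient, I would invoke \Cref{prop:pop_gradient_pre} to reduce to $\|\hat\nabla\Phi(\hat u)\| \le |h(\langle u,\hat u\rangle)|$, then bound $|h(m)|$ uniformly for $|m|\le 1$. Plugging $|m|^l \le 1$ into the series and using $|T(l,s)| \le \lambda_{\max}^2 k^2$ (a quick Cauchy–Schwarz that exploits $\|c\|_1, \|\hat c\|_1 \le \sqrt k$ in the quantized regime and $|\langle w_i, w_j\rangle|\le 1$), the key step is evaluating
\[
  2\lambda_{\max}^2 k^2 \sum_{l,s\ge 0}\Bigl(\tfrac{\xi^2}{k}\Bigr)^{l+1}\binom{l+s}{l}(l+s+1)\mu_{l+s+1}(\sigma)^2\Bigl(\tfrac{1}{1+\xi^2/k}\Bigr)^{l+s+1}.
\]
Reindexing with $n=l+s$, the binomial identity $\sum_{l=0}^n \binom{n}{l}(\xi^2/k)^l = (1+\xi^2/k)^n$ collapses the double sum to $\frac{\xi^2}{k+\xi^2}\sum_n (n+1)\mu_{n+1}(\sigma)^2$, where the latter converges by the Hermite-decay condition in \Cref{assumption:activation}. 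This gives the target bound $C_\sigma \lambda_{\max}^2 \frac{k\xi^2}{1+\xi^2/k}$.

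For the variance bound, I would first write the gradient explicitly as $\nabla_{\hat u}L(\hat u;x) = -2\,D(x)A(x)\,x$, where $D(x)\triangleq f^\ast(x)-f_{\hat\theta}(x)$ and $A(x)\triangleq \xi\sum_i \lambda_i \hat c_i\,\sigma'(\langle w_i+\xi\hat c_i\hat u,x\rangle)$. Since the spherical gradient is a projection of $\nabla L$, we have $\|\hat\nabla L\|\le\|\nabla L\|$, and because $u\perp\mathrm{span}(W)$,
\[
  \langle\hat\nabla L(\hat u;x), u\rangle = -2 D(x)A(x)\bigl(\langle x,u\rangle - \langle\hat u,u\rangle\langle x,\hat u\rangle\bigr).
\]
The key observation is that in the inner-product version the factor of $x$ is replaced by a scalar Gaussian combination with $L_q$-norm $O(1)$, whereas in the $\ell_2$-norm version $\|x\|$ carries an $L_q$-norm $\Theta(\sqrt d)$; the latter exactly compensates the $d^{-p}$ normalization in the first bound, so both claims reduce to controlling $\|D\|_{6p}^2 \|A\|_{6p}^2$ after a Hölder application with exponents $(3,3,3)$.

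For $\|D\|_{6p}$, I would take the minimum of two complementary estimates: (i) a direct triangle inequality plus polynomial growth of $\sigma$ (\Cref{assumption:activation}(1)), using that each $\langle w_i+\xi\hat c_i\hat u,x\rangle$ is Gaussian with variance $1+\xi^2/k$, gives $\|D\|_{6p}=O_{p,\sigma}(\lambda_{\max} k)$; (ii) applying the moment condition \Cref{assumption:activation}(3) termwise to pair up $\sigma(\langle w_i+\xi c_i u,x\rangle)-\sigma(\langle w_i+\xi\hat c_i\hat u,x\rangle)$, together with the variance calculation $\|c_i u-\hat c_i\hat u\|^2\le 4/k$, gives $\|D\|_{6p}=O_{p,\sigma}(\lambda_{\max}\xi\sqrt k)$. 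Taking the minimum gives $\|D\|_{6p}^2 \le C_{p,\sigma}\,\lambda_{\max}^2 \min(k^2,\xi^2 k)$. For $A$, polynomial growth of $\sigma'$ and $\|\hat c\|_1\le\sqrt k$ combine to give $\|A\|_{6p}^2 \le C_{p,\sigma}\,\lambda_{\max}^2 \xi^2 k$. Multiplying and simplifying yields the claimed bound.

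The main obstacle will be matching the precise $\frac{\min(k,4\xi^2)}{k+\xi^2}$ factor from the lemma when $\xi$ is large. The naive two-bound argument above produces $\lambda_{\max}^4\,\xi^2 k^2 \min(\xi^2,k)$, which matches the target up to a factor of $\frac{k}{k+\xi^2}$ that is $O(1)$ in the regime $\xi=O(\sqrt k)$ relevant to the main theorems; absorbing the residual activation-dependent prefactor $(1+\xi^2/k)^{O(1)}$ into $C_{p,\sigma}$ closes the gap in that regime.
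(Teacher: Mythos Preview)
Your approach is essentially the same as the paper's: split the sample gradient into the loss factor $D(x)=f^*(x)-\hat f(x)$ and the score factor $\nabla_{\hat u}\hat f(x)$, bound their moments separately (the paper uses two applications of Cauchy--Schwarz where you use H\"older with three factors, but this is cosmetic), and for $D$ take the minimum of a crude $O(k)$ bound and the sharper bound via the moment condition in \Cref{assumption:activation}(3). For the population gradient, the paper takes the slightly more direct route of bounding the pre-expansion form
\[
\frac{\xi^2}{1+\xi^2/k}\sum_{i,j}\lambda_i\lambda_j c_i\hat c_j\sum_{p\ge 1} p\,\mu_p(\sigma)^2\langle v_i,\hat v_j\rangle^{p-1}
\]
using $\bigl|\sum_{i,j}\lambda_i\lambda_j c_i\hat c_j\bigr|\le k\lambda_{\max}^2$ and $\sum_p p\,\mu_p(\sigma)^2\le C_\sigma$; your reindexing of the double series for $h$ is equivalent and equally clean.

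The one place you diverge is the source of the $(k+\xi^2)^{-1}$ factor. In Appendix~A the paper works with the \emph{normalized} weights $\hat v_i=(w_i+\xi\hat c_i\hat u)/\sqrt{1+\xi^2/k}$, so the chain-rule coefficient on $\hat c_i$ is $\frac{\xi\hat c_i}{\sqrt{1+\xi^2/k}}$ and the argument of $\sigma'$ is a \emph{standard} Gaussian. This directly yields $\|\nabla_{\hat u}\hat f/\sqrt{d}\|_{2p}^{2}\lesssim\lambda_{\max}^2\frac{k^2\xi^2}{k+\xi^2}$ with no residual $(1+\xi^2/k)^{O(1)}$ prefactor from moments of $\sigma'$, and the full bound $\lambda_{\max}^4\frac{k^3\xi^2\min\{k,4\xi^2\}}{k+\xi^2}$ then follows exactly. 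So your ``main obstacle'' is an artifact of omitting the weight normalization rather than a limitation of the argument; once you include it, the match is exact for all $\xi$, not just $\xi=O(\sqrt{k})$.
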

\noindent We prove this statement in \Cref{app:variances}. Now, we are at a position to combine the lower bounds for the population gradient and the upper bounds for the variance to get a convergence and iteration complexity bound. We begin by proving our main results for the case of orthogonal pretrained features:

\begin{proof}[Proof of \Cref{thm:orth-spectral-main}]
    For the first point, notice that the variance upper bound in~\Cref{lemma:variance-bounds} and the population gradient lower bound in~\Cref{lemma:pop-gradient-bound-orthonormal} imply that Conditions~\ref{condition:variances-modular} and~\ref{condition:pop-grad-form-modular} hold with 
    \begin{align*}
        S_k &= \frac{\gamma \lambda_{\min}^2\mu_1(\sigma)^2}{1+\xi^2/k}\\
        V_k &= C_{p,\sigma}\lambda_{\max}^4 k^2
    \end{align*}
    for some small $\gamma$ with probability $1-o\left(\frac{\lambda_{\max}^2}{\lambda_{\min}^2}\right) - O(\gamma^{1/2})$. Then, applying \Cref{theorem:modular-convergence-dynamics} with the set $S_k, V_k$ and $\epsilon$, we get the desired result. The second case follows similarly.
\end{proof}

\begin{proof}[Proof of \Cref{thm:orth_frob_main}]
For $\overline{\xi}\leq 1$, the results in \Cref{lemma:variance-bounds} and \Cref{lemma:pop-gradient-bound-orthonormal} imply that \Cref{condition:variances-modular}, \Cref{condition:pop-grad-form-modular} hold with 
    \begin{align*}
        S_k &= \frac{\gamma k \lambda_{\min}^2\mu_1(\sigma)^2}{2}\\
        V_k &= C_{p,\sigma}\lambda_{\max}^4 \overline{\xi}^2 k^4
    \end{align*}
    for some small $\gamma$ with probability $1-o(1) - \exp\{-\frac{2k}{e\overline{\xi}^2}\}-O(\gamma^{1/2})$. Then, applying \Cref{theorem:modular-convergence-dynamics} with the set $S_k, V_k$ and $\epsilon$, we get the desired result. The second case follows similarly.
\end{proof}

\noindent Finally, we can prove our main result for the case of angularly separated pretrained features:

\begin{proof}[Proof of \Cref{thm:separated_main}]
    Note that \Cref{lemma:variance-bounds} and \Cref{lemma:pop-gradient-bound-separated} imply that \Cref{condition:variances-modular}, \Cref{condition:pop-grad-form-modular} hold with 
    \begin{align*}
        S_k &= \frac{\gamma \lambda_{\min}^2}{\sqrt{k}}\\
        V_k &= C_{p,\sigma}\lambda_{\max}^4 k^2
    \end{align*}
    for some small $\gamma$ with probability $1-o\left(\frac{\lambda_{\max}^2}{\lambda_{\min}^2}\right) - O(\gamma^{1/2})$. Then, applying \Cref{theorem:modular-convergence-dynamics} with the set $S_k, V_k$ and $\epsilon$, we get the desired result. The second case follows similarly.
\end{proof}

\subsection{Separations between fine-tuning and feature-learning}
\label{sec:lowerbound}

Note that if the teacher model is of the form $f(x) = \sum_{i=1}^k \lambda_i \sigma(\langle \tilde w_i, x\rangle)$ where the $\tilde w_i$ are orthonormal, $f$ will be hard to learn with a CSQ algorithm if the information exponent of $\sigma$ is large. Hence, in this section we show an example of a base network whose perturbation has orthonormal weights, and show separations between fine-tuning and learning from scratch using this example. We aim to construct $w_i + c_i u \perp w_j + c_j u$ for $i \neq j$. Notice that when $u\perp w_i$, this is equivalent to $\langle w_i, w_j\rangle = - c_i c_j$. Hence, if we can control the pairwise correlations of the $w_i$ as we want, we can construct this example. 

For the sake of intuition, consider the following example for $k=4$, where each row is a $w_i$, with $c_i = (-\frac{1}{2})^i$.
\begin{align*}
W
    =\left[\begin{array}{ccccccccccc}
         \frac{1}{2} & \frac{1}{2} & \frac{1}{2} & 0 & 0 & 0 & \frac{1}{2} & 0 & 0 & 0 & 0 \\
         \frac{1}{2} & 0 & 0 & \frac{1}{2} & \frac{1}{2} & 0 & 0& \frac{1}{2} & 0 & 0 &0 \\
         0 &  -\frac{1}{2} &0 & \frac{1}{2} & 0 & \frac{1}{2} & 0 & 0 & \frac{1}{2} & 0 & 0\\
         0 & 0 & \frac{1}{2} & 0 & -\frac{1}{2} & \frac{1}{2} & 0 & 0 & 0 & \frac{1}{2} & 0 \\
    \end{array}\right]
\end{align*}
We aim to generalize this example to general $k$ in the following proposition.

\begin{claim}
    When $d > 1+\frac{k(k+1)}{2}$, for $\lambda_i = 1$, there exists unit norm weights $\{w_i\}_{i=1}^k$, a perturbation $u \perp \mathrm{span}(w_i)$, weights $c_i \in \left\{\pm \frac{1}{\sqrt{k}}\right\}$, such that $\frac{\langle w_i + c_i u, w_j + c_j u\rangle}{\norm{w_i + c_iu} \norm{w_j + c_j u}} = \delta_{ij}$. \label{claim:hard-from-scratch}
\end{claim}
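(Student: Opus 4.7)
The starting point is the identity
\begin{equation*}
  \langle w_i + c_i u, w_j + c_j u\rangle = \langle w_i, w_j\rangle + c_i c_j,
\end{equation*}
which follows from $u \perp \mathrm{span}(w_i)$ together with $\|u\|=1$. So if we want $\{w_i + c_iu\}$ to be mutually orthogonal we just need to arrange that $\langle w_i, w_j\rangle = -c_ic_j$ for $i\neq j$. Since $c_i,c_j \in \{\pm 1/\sqrt k\}$, the target value $-c_ic_j$ is always $\pm 1/k$. Note also that $\|w_i + c_iu\|^2 = 1 + 1/k$ is independent of $i$, so once the raw inner products vanish the normalized ones do too. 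The problem therefore reduces to constructing $k$ unit vectors in $\mathbb{R}^{d-1}$ (leaving one coordinate for $u$) with prescribed pairwise inner products in $\{\pm 1/k\}$ matching $-c_ic_j$. For concreteness one may even fix $c_i = 1/\sqrt k$ for every $i$, reducing the task to a simplex ETF, but the mildly more flexible construction below works for arbitrary $c \in \{\pm 1/\sqrt k\}^k$ and matches the $k=4$ example in the excerpt.

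\textbf{Construction.} Index coordinates of $\mathbb{R}^d$ by: (i) an unordered pair $\{i,j\}$ with $1\le i<j\le k$, giving $\binom{k}{2}$ ``pair-coordinates''; (ii) an index $i\in\{1,\ldots,k\}$, giving $k$ ``private coordinates''; (iii) one remaining ``$u$-coordinate''. The total number of coordinates used is $1 + k + \binom{k}{2} = 1 + k(k+1)/2$, which is why the hypothesis $d > 1+k(k+1)/2$ suffices (extra coordinates are padded with zeros). Put $u$ equal to the standard basis vector on the $u$-coordinate. For each $i$, set $w_i$ to have zero on the $u$-coordinate, value $1/\sqrt k$ on its own private coordinate, and on the pair-coordinate $\{i,j\}$ the value $1/\sqrt k$ if $i$ is the smaller index of the pair, and $\operatorname{sgn}(-c_i c_j)/\sqrt k$ if $i$ is the larger index. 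All other entries of $w_i$ are zero.

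\textbf{Verification.} Each $w_i$ has exactly $k$ nonzero entries (one private plus $k-1$ pair-coordinates), each of magnitude $1/\sqrt k$, so $\|w_i\|=1$. Since the $u$-coordinate entry of every $w_i$ is zero, $u\perp \mathrm{span}(w_i)$. For $i\neq j$, the vectors $w_i$ and $w_j$ share support on exactly one coordinate, namely the pair-coordinate $\{i,j\}$, where their product is $(1/\sqrt k)\cdot(\operatorname{sgn}(-c_ic_j)/\sqrt k) = -c_ic_j$ (using that $c_ic_j = \operatorname{sgn}(c_ic_j)/k$). Thus $\langle w_i,w_j\rangle = -c_ic_j$, and combining with the identity above gives $\langle w_i + c_iu, w_j + c_ju\rangle = 0$ for $i\neq j$, while the common norm $\sqrt{1+1/k}$ ensures the normalized inner products equal $\delta_{ij}$.

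\textbf{Comments on difficulty.} There is no real obstacle here, only a design problem: one needs to choose an explicit set of features whose pairwise Gram entries match the prescribed $\pm 1/k$ pattern dictated by $c$. The key trick is to use one independent coordinate per pair $\{i,j\}$ so that each of the $\binom{k}{2}$ inner-product constraints can be satisfied independently by a single sign choice, together with one private coordinate per $w_i$ to top the norm up to $1$ without disturbing any off-diagonal inner product. The dimension bound $d > 1 + k(k+1)/2$ is precisely the number of coordinates this construction consumes.
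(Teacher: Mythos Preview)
Your proof is correct and follows essentially the same approach as the paper's: reduce to $\langle w_i, w_j\rangle = -c_ic_j$ via the identity $\langle w_i+c_iu, w_j+c_ju\rangle = \langle w_i,w_j\rangle + c_ic_j$, then realize the prescribed $\pm 1/k$ Gram entries by giving each $w_i$ one private coordinate of magnitude $1/\sqrt{k}$ and, for each unordered pair $\{i,j\}$, a shared coordinate whose sign is chosen to enforce the desired inner product, with one leftover coordinate reserved for $u$. Your presentation is slightly more explicit about the sign assignment (smaller index positive, larger index carrying $\operatorname{sgn}(-c_ic_j)$), but the construction and dimension count $1+k+\binom{k}{2}=1+k(k+1)/2$ are identical to the paper's.
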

\begin{proof}
    We are looking for a setup where $\langle w_i, w_j\rangle = - c_i c_j$. We will construct $k$ vectors that pairwise only share one non-zero coordinate. For $l\in [d], l \leq k$, let $(w_{l})_l = \frac{1}{\sqrt{k}}$. Then, for a given coordinate $l \in [d], l > k$, we want exactly two $w_i,w_j$ to have non-zero $l$'th coordinate. Since $d-k > 1+ {k \choose 2}$, we can assign every pair $(i,j)$ with $i\neq j$ a coordinate, and we will have at least 1 coordinate left. Then, notice that the inner product $\langle w_i, w_j\rangle$ for $i\neq j$ only depends on $1$ coordinate, which is unique for every $(i,j)$. We choose the magnitude of this entry to be $\frac{1}{\sqrt{k}}$. Then, for any $c\in \left\{\pm \frac{1}{\sqrt{k}}\right\}^k$ we can simply choose the signs of these coordinates accordingly to ensure $\langle w_i, w_j\rangle = - c_i c_j$. Notice that each $w_i$ has unit norm, and there is a coordinate, which we can WLOG assume to be the $p\triangleq\frac{k(k+1)}{2}$'th coordinate, that is zero for all $w_i$. We let $u= e_p$.

    Then, notice that $\frac{\langle w_i + c_i u, w_j + c_j\rangle}{\norm{w_i+ c_iu}\norm{w_j + c_j u}}=\frac{\langle w_i, w_j\rangle + c_i c_j}{\norm{w_i+ c_iu}\norm{w_j + c_j u}}= 0$ for $i \neq j$, as desired. 
\end{proof}

\begin{proposition}
Let $\xi =1$, and consider the example in \Cref{claim:hard-from-scratch}. Suppose $\sigma = h_p$ is the $p$'th Hermite coefficient for some $p > 2$. Then, $h(m) = 2p\left(\frac{k}{k+1}\right)^{p} \tilde h(m)$ where
    \begin{equation*}
    \tilde h(m) = \sum_{i=1}^k \lambda_i^2 c_i\hat c_i + O\left(\frac{\lambda_{\max}^2}{k}\right)
\end{equation*}
Moreover, with high probability over the choice of $\hat c$, we have $ h(m)\mathrm{sign}(h(0)) \geq \frac{|h(0)|}{2}$. 
\end{proposition}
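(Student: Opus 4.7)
The plan is to specialize the general population-gradient formula from \Cref{prop:pop_gradient_pre} to $\sigma = h_p$ and then exploit the explicit geometry of \Cref{claim:hard-from-scratch}. Since $\mu_p(\sigma) = 1$ is the only nonzero Hermite coefficient, the constraint $l + s + 1 = p$ collapses the double sum defining $h(m)$ to a finite sum indexed by $l \in \{0, 1, \ldots, p-1\}$. Pulling out the common $2p (k/(k+1))^p$ factor immediately gives the factorization $h(m) = 2p(k/(k+1))^p \tilde h(m)$ with
\begin{equation*}
    \tilde h(m) = \sum_{l=0}^{p-1} \frac{1}{k^{l+1}} \binom{p-1}{l} T(l, p-1-l)\, m^l,
\end{equation*}
so the task reduces to reading off the $l=0$ coefficient and bounding everything else.

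For the $l=0$ term I would substitute $\langle w_i, w_j\rangle = -c_i c_j$ for $i \neq j$ from \Cref{claim:hard-from-scratch}. The diagonal part of $T(0, p-1) = k \langle \sum_i \lambda_i c_i w_i^{\otimes (p-1)}, \sum_i \lambda_i \hat c_i w_i^{\otimes (p-1)}\rangle$ is exactly $k \sum_i \lambda_i^2 c_i \hat c_i$, while the off-diagonal part $k \sum_{i \neq j} \lambda_i \lambda_j c_i \hat c_j (-c_i c_j)^{p-1}$ has summands of magnitude at most $\lambda_i \lambda_j k^{-p}$ and hence is bounded by $\lambda_{\max}^2 k^{2-p} = O(\lambda_{\max}^2/k)$ since $p \geq 3$. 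Dividing by the extra $k$ from the $1/k^{l+1}$ weight gives the $l=0$ contribution $\sum_i \lambda_i^2 c_i \hat c_i + O(\lambda_{\max}^2/k)$ to $\tilde h$, which is the claimed leading term.

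For the $l \geq 1$ terms I would split by the parity of $l$ and use $|\langle w_i, w_j\rangle|^s \leq k^{-s}$ for $i \neq j$. For odd $l$ with $s = p - 1 - l \geq 1$, $T(l,s) = \sum_i \lambda_i^2 + \sum_{i\neq j}\lambda_i \lambda_j(-c_i c_j)^s = O(\lambda_{\max}^2 k)$, and the same splitting gives $|T(l, s)| = O(\lambda_{\max}^2 k)$ for even $l \geq 2$ with $s \geq 1$. The only edge case is $l = p-1$, $s = 0$ (occurring for one parity of $p$), where $T(p-1, 0)$ is either $(\sum_i \lambda_i)^2$ or $k(\sum_i \lambda_i c_i)(\sum_i \lambda_i \hat c_i)$, both bounded by $k^2 \lambda_{\max}^2$; after the $1/k^p$ weight this still contributes $O(\lambda_{\max}^2 / k^{p-2}) \leq O(\lambda_{\max}^2/k)$ for $p \geq 3$. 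Summing the finitely many $l \geq 1$ contributions with $|m| \leq 1$ therefore gives a total error of $O(\lambda_{\max}^2/k)$ (with constants depending only on $p$), finishing the factorization.

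For the moreover clause, let $M \triangleq \sum_i \lambda_i^2 c_i \hat c_i$ so that $\tilde h(m) = M + e(m)$ with $\|e\|_\infty \leq C \lambda_{\max}^2 / k$. Conditional on $c$, the variables $\epsilon_i \triangleq k c_i \hat c_i$ are i.i.d.\ Rademacher, so $kM = \sum_i \lambda_i^2 \epsilon_i$ is a low-influence weighted Rademacher sum with standard deviation at least $\lambda_{\min}^2 \sqrt{k}$. A Berry-Esseen/anti-concentration argument, essentially identical to the one underlying \Cref{lemma:pop-gradient-bound-separated} and developed in Appendix~\ref{app:anti-concentration}, yields $|M| \geq \gamma \lambda_{\min}^2 / \sqrt{k}$ with probability $1 - O(\gamma^{1/2})$ over $\hat c$. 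When $\sqrt{k} \gg \lambda_{\max}^2/\lambda_{\min}^2$, this forces $\|e\|_\infty \leq |M|/4$, hence $\mathrm{sign}(\tilde h(0)) = \mathrm{sign}(M)$ and $\tilde h(m)\mathrm{sign}(\tilde h(0)) \geq 3|M|/4 \geq (|M| + |M|/4)/2 \geq |\tilde h(0)|/2$, as desired. The main obstacle is the careful bookkeeping in the $l \geq 1$ bounds, and in particular the $l = p-1$, $s = 0$ edge, which is genuinely tight at $p = 3$ and is precisely why the hypothesis $p > 2$ appears; the anti-concentration step itself is a direct invocation of existing machinery in the paper.
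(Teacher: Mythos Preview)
Your proof is correct and reaches the same conclusion by the same mechanism (diagonal/off-diagonal splitting followed by anti-concentration of $\sum_i \lambda_i^2 c_i\hat c_i$), but it is organized differently from the paper. You start from the expanded $T(l,s)$ representation of \Cref{prop:pop_gradient_pre} and analyze each power $m^l$ separately, which forces the parity split on $l$ and the $s=0$ edge case at $l=p-1$. The paper instead begins from the compact form
\[
\tilde h(m)=\sum_{i,j}\lambda_i\lambda_j c_i\hat c_j\bigl(\langle w_i,w_j\rangle + c_i\hat c_j m\bigr)^{p-1},
\]
drops the off-diagonal pairs in one stroke using $|\langle w_i,w_j\rangle + c_i\hat c_j m|\le 2/k$ (giving the $O(\lambda_{\max}^2/k^{p-2})$ error), and only then expands the diagonal $(1+c_i\hat c_i m)^{p-1}$ by the binomial theorem. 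This avoids all the parity and $s=0$ bookkeeping you had to do; on the other hand, your route makes explicit how each term of the general population-gradient expansion specializes, which connects more transparently to the rest of the paper's machinery. The ``moreover'' step is identical in spirit to the paper's: the paper simply invokes \Cref{lemma:psd-anti-conc} for $\sum_i\lambda_i^2 c_i\hat c_i$ to get the $\Omega(\lambda_{\min}^2/\sqrt{k})$ lower bound, which then dominates the $O(\lambda_{\max}^2/k)$ remainder.
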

\begin{proof}
    Initially, note 
    \begin{equation*}
        h(m) = 2p\left(\frac{k}{k+1}\right)^{p} \sum_{i,j=1}^k \lambda_i \lambda_j c_i \hat c_j (\langle w_i, w_j\rangle + c_i \hat c_j m)^{p-1}
    \end{equation*}
    In this case, notice that because $|\langle w_i, w_j\rangle|\leq \frac{1}{k}$ for $i \neq j$, we have
\begin{equation*}
    \left|\sum_{i,j=1}^k \lambda_i \lambda_jc_i \hat c_j (\langle w_i, w_j\rangle + c_i \hat c_j \langle u, \hat u\rangle)^{p-1}-\sum_{i=1}^k \lambda_i^2 c_i \hat c_i (1+c_i \hat c_i \langle u, \hat u\rangle)^{p-1}\right|\leq  \sum_{i\neq j}^{k} \left| \lambda_i \lambda_j c_i \hat c_j \frac{2}{k^{p-1}}\right|\leq \frac{\lambda_{\max}^2}{k^{p-2}}
\end{equation*}
Hence, defining $\tilde h(m) = 2p \left(\frac{k}{k+1}\right)^{p}$ to factor out the constant, we have
\begin{equation*}
    \tilde h(m) = \sum_{i=1}^k \lambda_i^2 c_i \hat c_i (1+c_i\hat c_i m)^{p-1} + O\left(\frac{\lambda_{\max}^2}{k^{p-2}}\right)
\end{equation*}
Then, expanding the diagonal term, note
\begin{align*}
    \sum_{i=1}^k c_i \hat c_i \lambda_i^2 (1+c_i \hat c_i \langle u, \hat u\rangle)^{p-1}= \sum_{s=0}^{p-1} {p-1 \choose s}\sum_{i=1}^k \lambda_i^2 (c_i \hat c_i)^{s+1} \langle u, \hat u\rangle^{s}= \sum_{i=1}^k \lambda_i^2 c_i \hat c_i +O\left(\frac{\lambda_{\max}^2}{k}\right)
\end{align*}
Then, for $p\geq 3$, we have
\begin{align*}
    \tilde h(m) = \sum_{i=1}^k \lambda_i^2 c_i\hat c_i + O\left(\frac{\lambda_{\max}^2}{k}\right)
\end{align*}
Then, over the randomization of $\hat c$, with high probability, we have $h(0) = \Omega\left(\frac{\lambda_{\min}^2}{\sqrt{k}}\right)$ due to anti-concentration (\Cref{lemma:psd-anti-conc}). Then, with high probability $h(m)\mathrm{sign}h(0) \geq \frac{|h(0)|}{2}$ uniformly. 
\end{proof}

\noindent Hence, in the construction given in \Cref{claim:hard-from-scratch}, even though the $c_i$'s are non-random, we still have with high probability over the randomization of $\hat c$ that $h$ satisfies \Cref{condition:pop-grad-form-modular}. Then, we have the following
\begin{theorem}
For the teacher and base networks defined in \Cref{claim:hard-from-scratch}, fine-tuning with online SGD learns the teacher network perturbation $u$ in $O(\frac{dk^2}{\epsilon^4})$ samples, whereas
    training from scratch using any CSQ algorithm requires at least $O(d^{p/2})$ queries or $\tau= O(d^{-p/4})$ tolerance.
    \label{thm:hard-from-scratch}
\end{theorem}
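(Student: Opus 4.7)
This follows immediately from the modular machinery already developed in this section. The preceding proposition verifies \Cref{condition:pop-grad-form-modular} with signal $S_k = \Omega\!\bigl(\lambda_{\min}^2/\sqrt{k}\bigr)$, via the anti-concentration of $h(0)$ together with the $O(\lambda_{\max}^2/k)$ bound on the off-diagonal ($i\neq j$) terms of $\sum_{i,j}\lambda_i\lambda_j c_i\hat c_j (\langle w_i,w_j\rangle+c_i\hat c_j m)^{p-1}$. Meanwhile \Cref{lemma:variance-bounds} at $\xi=1$ supplies \Cref{condition:variances-modular} with noise $V_k = O(\lambda_{\max}^4 k^2)$. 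Invoking \Cref{theorem:modular-convergence-dynamics} with these parameters yields convergence with iteration complexity scaling linearly in $d$ and polynomially in $k$ and $1/\epsilon$, matching the bound stated in the theorem.

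\textbf{CSQ lower bound for learning from scratch.} By \Cref{claim:hard-from-scratch}, after normalizing by $\sqrt{1+1/k}$, the teacher is a two-layer network $\sum_i \lambda_i h_p\!\bigl(\sqrt{1+1/k}\,\langle \tilde w_i, x\rangle\bigr)$ with orthonormal weights $\tilde w_i=(w_i+c_iu)/\sqrt{1+1/k}$ and pure degree-$p$ Hermite activation. The plan is to invoke the standard SQ-dimension lower bound for single-index models with information exponent $p$ \cite{szorenyi2009characterizing,damian2022neural}. Since a CSQ algorithm sees only query responses from the teacher, the lower bound must be phrased over a class containing this specific teacher; I take $\mathcal{C}$ to be the class of two-layer $k$-unit Hermite-$p$ networks with orthonormal hidden weights and output weights in $\{0,1\}$. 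Within $\mathcal{C}$, embed the single-index family by setting $\lambda_2=\cdots=\lambda_k=0$, fixing $\tilde w_2,\dots,\tilde w_k$, and letting $\tilde w_1=v$ range over the unit sphere of the codimension-$(k-1)$ subspace perpendicular to the fixed weights (which has dimension $d-k+1 = \Theta(d)$). For a $d^{\Omega(p)}$-sized packing $U$ of this sphere with pairwise inner products $|\langle v,v'\rangle|=O(d^{-1/2})$ (existence by a standard volume argument), the pairwise centered $L_2$-correlation between the corresponding teachers is exactly $\E_x[h_p(\langle v,x\rangle)h_p(\langle v',x\rangle)]=\langle v,v'\rangle^p \le d^{-p/2}$. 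The standard CSQ-dimension-to-query-complexity conversion then yields that any CSQ algorithm learning members of $\mathcal{C}$ requires $d^{p/2}$ queries or tolerance at most $d^{-p/4}$. Since our teacher lies in $\mathcal{C}$, the same bound applies to any uniform CSQ algorithm for our problem.

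\textbf{Main obstacle.} The delicate point is framing: for any single fixed teacher CSQ is trivial, so the lower bound must be stated uniformly over a class containing our teacher, and the embedding into single-index learning must respect the orthonormality constraint on the weights. Once the codimension-$(k-1)$ sphere is identified as a valid place to plant the single-index hard instance and the $\ell^2$-correlation $\langle v,v'\rangle^p$ is established, the SQ-dimension calculation is routine. The fine-tuning upper bound, by contrast, is a mechanical plug-in to the preceding modular convergence theorem.
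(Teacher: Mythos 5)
Your fine-tuning upper bound is the same argument as the paper's: plug $S_k=\Omega(\lambda_{\min}^2/\sqrt{k})$ (from the preceding proposition on $h$) and $V_k=O(\lambda_{\max}^4k^2)$ (from \Cref{lemma:variance-bounds}) into \Cref{theorem:modular-convergence-dynamics}. That part is fine.

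The CSQ lower bound is where you diverge, and there is a genuine gap. The paper keeps the hard family equal to the orbit of the actual teacher: it observes that the perturbed weights $v_i=w_i+c_iu$ are orthonormal, embeds them into a uniformly random $k$-dimensional subspace (rotations do not affect the construction), and invokes the multi-index CSQ lower bound of \citet[Proposition 6]{abbe2023sgd} for $\sum_i\lambda_ih_p(\langle v_i,x\rangle)$ with orthonormal $v_i$. Every instance in that hard family \emph{is} the teacher of \Cref{claim:hard-from-scratch} up to rotation, so the lower bound is genuinely about learning this teacher from scratch. Your argument instead enlarges the class to allow output weights in $\{0,1\}$ and plants the hardness in the single-index sub-family $\lambda_2=\cdots=\lambda_k=0$. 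The resulting SQ-dimension bound shows only that this \emph{larger} class is hard; the nearly-orthogonal hard instances $h_p(\langle v,\cdot\rangle)$ are not teachers of the form in \Cref{claim:hard-from-scratch} (which has all $\lambda_i=1$), and a CSQ learner that exploits the known structure $\lambda_i\equiv 1$ is not ruled out by your construction. Since the point of the theorem is to separate fine-tuning from learning-from-scratch \emph{for the same target function}, a lower bound whose hard instances are unrelated single-index models does not establish the claim. The fix is exactly the paper's move: randomize over rotations of the teacher itself and apply a multi-index (rather than single-index) CSQ lower bound. As a minor additional point, your packing-based pairwise-correlation calculation is standard, but you should note that the centered correlations require $p\ge 1$ and that the $d-k+1=\Theta(d)$ dimension count you use is only needed to keep the packing size $d^{\Omega(p)}$; this part is not the issue.
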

\begin{proof}
    The first part follows directly from the fact that $h$ satisfies the gradient lower bound in \Cref{condition:pop-grad-form-modular} with a $\Omega(\frac{\lambda_{\min}^2}{\sqrt{k}})$ lower bound, and \Cref{theorem:modular-convergence-dynamics}. For training from scratch, notice that the target model is of the form
    \begin{equation*}
        f(x)=\sum_{i=1}^k\lambda_i h_p(\langle v_i,x\rangle)
    \end{equation*}
    where the $v_i$ are orthonormal. Fix $k$. Then, we can embed $f$ into a random $k$ dimensional subspace $M$ by rotating the $v_i$ (since the vectors $w_i + c_i u$ can all be rotated without effecting the construction). The CSQ lower bound in \citep[Proposition 6]{abbe2023sgd} states that any CSQ algorithm using $n$ queries with tolerance $\tau$ cannot achieve less than some small $c > 0$ error with probability $1-\frac{Cn}{\tau^2}d^{-\frac{p}{2}}$. Hence, to achieve constant probability of success, one either needs $n = \Theta(d^{p/2})$ queries or tolerance $\tau =\Theta(d^{-p/4})$. 
\end{proof}

\section{Numerical simulations}

In this section we empirically demonstrate (i) the robustness of our theory even as one deviates from the simplifying assumptions we make, (ii) the distinction between the regime we study versus the kernel regime, and (iii) the robustness of the training dynamics to the choice of activation function. In particular, for (i) we provide simulations in which we relax the assumption that $c_i$ are quantized, and we also compare the cases when $\hat c$ is frozen versus jointly trained with $\hat u$. For (ii), we show that linearized networks (kernel approximation) fail at $\xi = \Theta(\sqrt{k})$, and also illustrate some interesting behavior in the joint training of $\hat u$ and $\hat c$.  For (iii), we corroborate our theoretical finding that the dynamics of fine-tuning are benign essentially regardless of the choice of $\sigma$, in sharp contrast to what happens when one trains from scratch.

We use the ReLU activation throughout our simulations. We let $f(x) =\frac{1}{\xi} \sum_{i=1}^{k} \lambda_i \sigma(\langle v_i, x\rangle)$ where $v_i = \frac{k}{k+\xi^2} (w_i + \xi c_i u)$ where the $1/\xi$ is to keep the magnitude of gradients consistent. Throughout our simulations, we set $d=2000$, $k=50$, and sample the $w_i\in \S^{d-1}$ and $c \in \S^{k-1}$ uniformly at random.
 
First, in the $\xi = \Theta(1)$ scaling, we plot 10 training curves for random problem instances (see below) for joint training (Figure~\ref{fig:train-plots}(a)) and when $\hat c$ is frozen (Figure~\ref{fig:train-plots}(b)). Notably, we see that while freezing $\hat c$ leads to somewhat longer time scales in training, the qualitative behavior of $\langle u_t, u\rangle$ is similar across the two settings. 

Next, we consider the $\xi = \Theta(\sqrt{k})$ scaling, keeping the rest of the setup the same as above. We plot low-rank fine-tuning in orange ($\hat u$ and $\hat c$ are jointly trained) and linearized training in blue. For the linearization, we Taylor expand around the base model. In Figure~\ref{fig:linearized-network}(a), We demonstrate that linearized dynamics do not explain fine tuning in this regime. Furthermore, when jointly training $\hat u$ and $\hat c$, we observe there is an initial phase where the loss is high and $\langle u_t, u\rangle$ is increasing but $\langle c_t, c\rangle$ stays at a low level (Figure~\ref{fig:linearized-network}(b)). This suggests that the initial phase of joint training might be similar to the training with frozen $\hat c$.

\begin{figure}[H]
\centering
\subfloat[Joint training of $\hat c$ and $\hat u$, $\xi=\Theta(1)$]{\includegraphics[width = 0.45\textwidth]{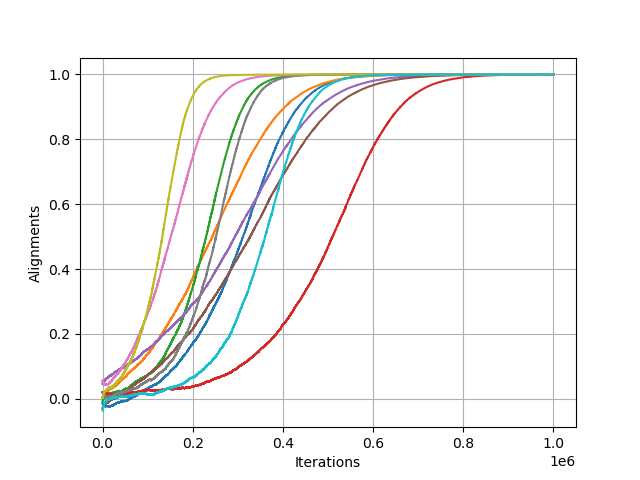}}  \hspace{0.5cm}
\subfloat[Freezing $\hat c$ and training $\hat u$, $\xi=\Theta(1)$]{\includegraphics[width = 0.45\textwidth]{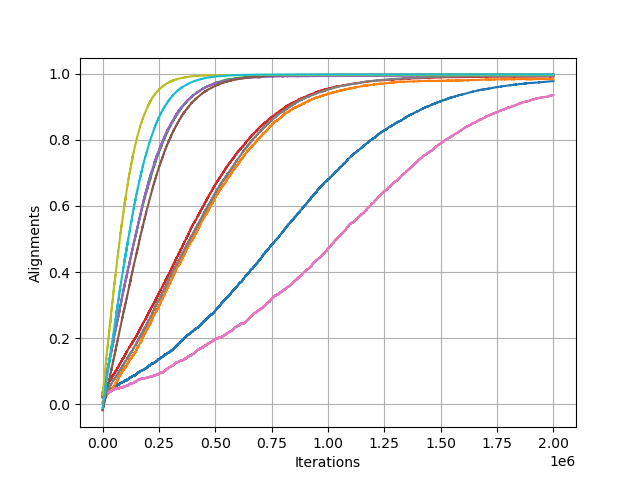}}
\caption{Evolution of $\langle u_t, u\rangle$ during online SGD for 10 random instances with joint and frozen-$\hat c$ training. Though time scales differ between (a) and (b), trajectories exhibit similar behavior.}
\label{fig:train-plots}
\end{figure}

\begin{figure}[H]
\centering
\subfloat[Loss curves for linearized vs. original network]{\includegraphics[width = 0.45\textwidth]{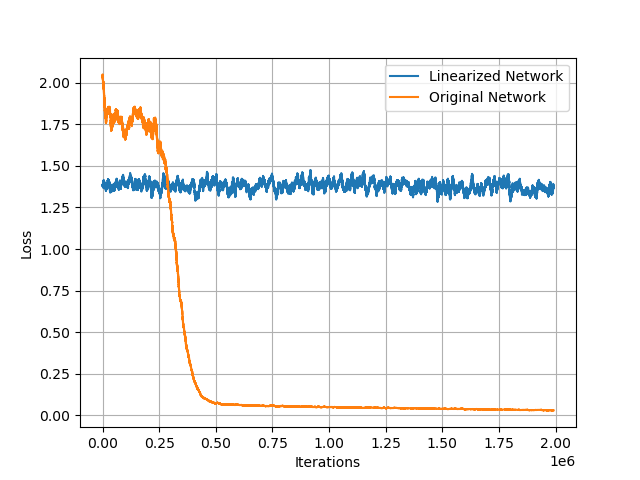}}  \hspace{0.5cm}
\subfloat[Trajectories of $\langle u_t, u\rangle$ and $\langle c_t, c\rangle$ when $\xi = \Theta(\sqrt{k})$]{\includegraphics[width = 0.45\textwidth]{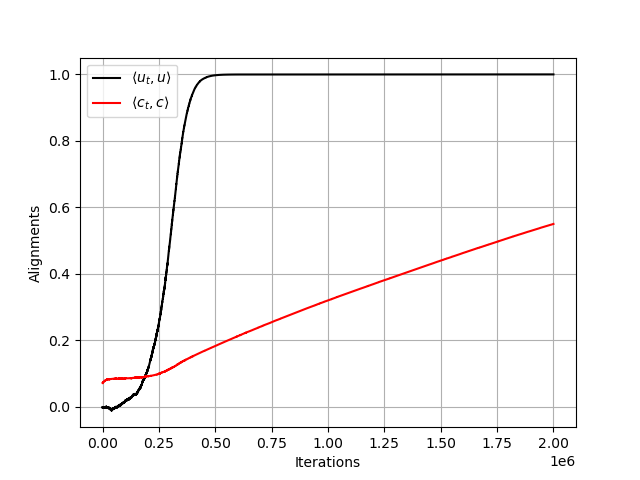}}
\caption{Linearized networks fail in low-rank fine-tuning, and cannot achieve small loss. When jointly training $\hat u$ and $\hat c$, we observe incremental behavior in learning, where learning $c$ becomes easier when $u$ is learned to a certain level.
}
\label{fig:linearized-network}
\end{figure}



\noindent \textbf{Activation robustness.} As our analysis suggests, our findings are not too sensitive to the choice of the activation in the fine-tuning regime ($\xi = O(\sqrt{k})$). To reinforce this point, we overlay training plots for various choices of activations, namely $\sigma \in \{\mathrm{ReLU}, \mathrm{Sigmoid}, \mathrm{Quadratic}, \mathrm{He}_3\}$. In this setting, we jointly train $\hat c, \hat u$ and do not enforce $\hat u$ to be orthogonal to $w_i$ and plot $\langle u_t, u\rangle$ over time in \Cref{fig:activation-robustness}. We observe that the qualitative behavior (e.g. no saddle behavior at initialization) is universal across the different choices of activation, which supports our theoretical findings.

\begin{figure}[H]
\centering
\subfloat[Evolution of overlap $\langle u_t, u\rangle$ for $\xi=1$ and different activations]{\includegraphics[width = 0.45\textwidth]{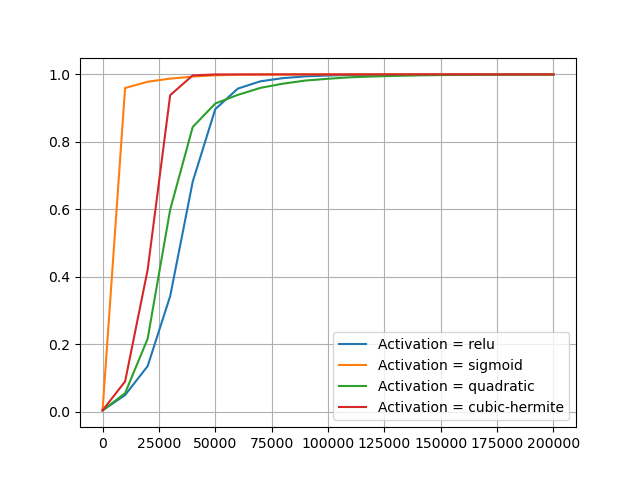}} \hspace{0.5cm}
\subfloat[Evolution of overlap $\langle u_t, u\rangle$ (y-axis) for $\xi=\Theta(\sqrt{k})$ and different activations]{\includegraphics[width = 0.45\textwidth]{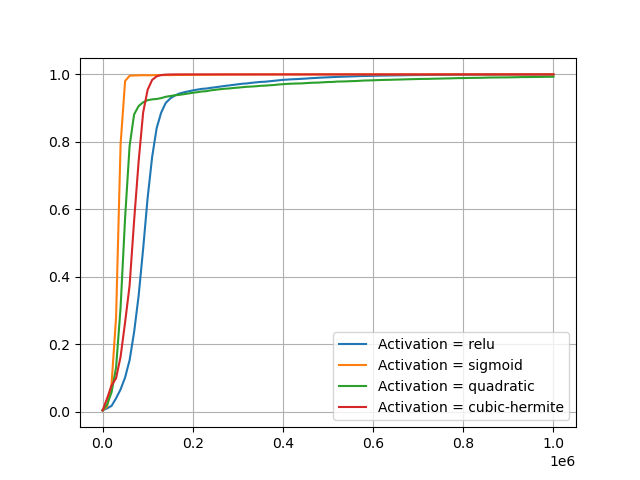}}
\caption{Evolution of overlap $\langle u_t, u\rangle$ during online SGD, under scaling (a) $\xi =1$ and (b) $\xi = \Theta(\sqrt{k})$ for different activations. As opposed to learning single-index models, or multi-index models from scratch, the fine-tuning regime is not too sensitive to the choice of activation. Namely, the iteration complexity of fine tuning with SGD does not depend sensitively on information exponent.}
\label{fig:activation-robustness}
\end{figure}


\noindent\textbf{Testing the orthogonality assumption.} In addition to the robustness checks w.r.t. to the choice of activation in the fine-tuning regime, we simulate the non-orthogonal case, i.e. when $u$ is not orthogonal to the $w_i$ \Cref{fig:orthogonality-assumption}. In particular, define $\Pi_W$ to be the orthogonal projector onto the span of $W$. We let $\alpha = \norm{\Pi_W u}$, and sample $u$ as 
\begin{equation*}
    u = \alpha u_1 + \sqrt{1-\alpha^2} u_2
\end{equation*}
where the unit vector $u_1$ is sampled uniformly from the span of $W$, and the unit vector $u_2$ is sampled uniformly from the orthogonal subspace.  With the ReLU activation and when $k = 100$, $d=1000$, we see that the evolution of the overlap $\langle u_t, u\rangle$ over time is qualitatively the same across different levels of parallel component $\norm{\Pi_w u}$. This might indicate that under non-pathological activations and initializations, the non-orthogonal setting might be similar to the orthogonal setting. In principle, one can obtain an analytical expression for the population loss with the ReLU activation, and we leave this as an interesting direction for future work.

\begin{figure}[H]
\centering
\subfloat[Violating the orthogonality assumption and varying $\norm{\Pi_{W} u}$, when $k \ll d$ with $\sigma = \mathrm{ReLU}$.]{\includegraphics[width = 0.45\textwidth]{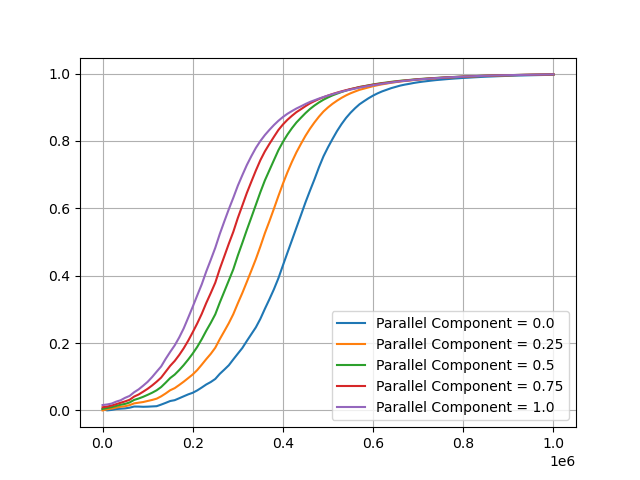}} \hspace{0.5cm}
\subfloat[Same setup as (a), but different (random) problem instances for $\norm{\Pi_{W} u}=1/2$]{\includegraphics[width = 0.45\textwidth]{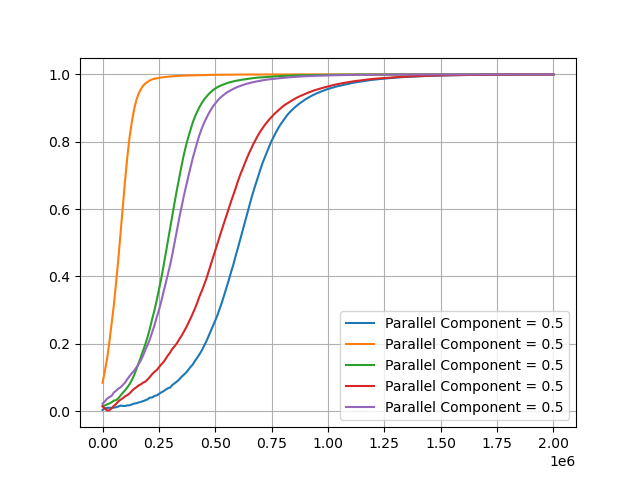}}
\caption{Evolution of $\langle u_t, u\rangle$ during online SGD for (a) varying levels of $\norm{\Pi_W u}$ (violating the orthogonality assumption) (b) multiple runs for $\norm{\Pi_W u}=\frac{1}{2}$. (a) In certain non-pathological initializations and scales $\xi$, the orthogonal case might capture behavior related to the non-orthogonal case. (b) Over multiple runs with $\norm{\Pi_W u}=\frac{1}{2}$ we see a generic S-curve behavior for ReLU activation.} 
\label{fig:orthogonality-assumption}
\end{figure}

\noindent\textbf{Interpolating between fine-tuning and feature learning.} Recall that our proposed model for low-rank fine tuning allows us to interpolate between feature learning and fine tuning. Now, we demonstrate that training the low-rank perturbation can take significantly many samples when the perturbation is large. To do this, we run various experiments with $\sigma(a) = \mathrm{He}_3(a)$ and vary the scaling $\xi$. We choose the third hermite activation to clearly illustrate the differences in sample complexity between feature learning and fine tuning. For these simulations, we jointly train $c$ and $u$, and plot the evolution of the overlap $\langle u_t, u \rangle$ over the course of training. We also do not necessarily enforce $u_t$ to be orthogonal to the $w_i$, to illustrate the robustness of the results to the choice of algorithm \Cref{fig:xi-scaling-feature-learning}. 

\begin{figure}[H]
\centering
\subfloat[Joint training of $\hat c$ and $\hat u$ for various choices of $\xi$ (short time-scale)]{\includegraphics[width = 0.45\textwidth]{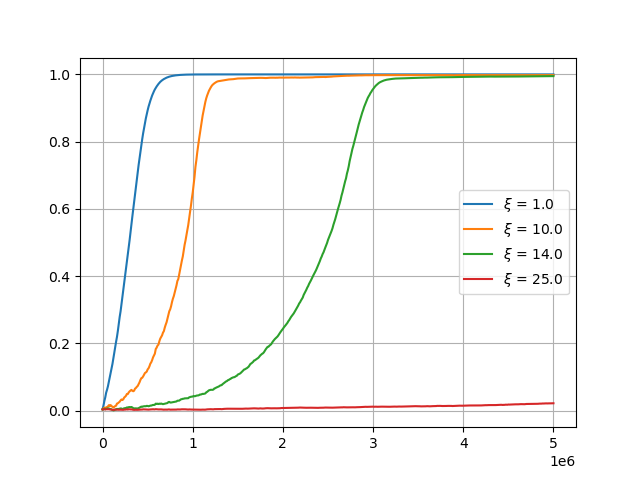}} \hspace{0.5cm}
\subfloat[Joint training of $\hat c$ and $\hat u$ for various choices of $\xi$ (long time-scale)]{\includegraphics[width = 0.45\textwidth]{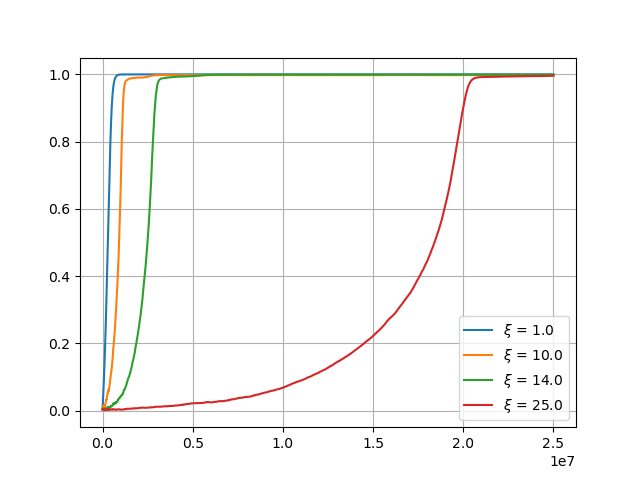}} 

\caption{Evolution of $\langle u_t, u\rangle$ during online SGD for  varying scales for $\xi$  (a) short timescale, (b) long timescale. Empirically, we see that while for ``small'' $\xi$ (e.g. $\xi = O(\sqrt{k})$) online SGD can quickly achieve strong recovery, as $\xi \to \infty$ we see that the time scales for both weak and strong recovery get larger. This illustrates how our low-rank fine-tuning setup allows to interpolate between different regimes (e.g. fine tuning to feature learning in single index models).} 
\label{fig:xi-scaling-feature-learning}
\end{figure}

For the purpose of illustrating the dimension dependence, we set $d=1000$ and keep the number of neurons smaller ($k=20$). While we do not formally analyze the effect of the magnitude of $\xi$, we expect fine-tuning to get more difficult when the correlations $\frac{k}{k+\xi^2}(\langle w_i + c_i u, w_j + \hat c_j \hat u\rangle)$ are small (e.g. $O(1/\sqrt{d})$). This happens around $\xi = \Theta(d^{1/4} k^{1/2})$, and we empirically see that the plateau in obtaining the initial weak recovery of $u$ gets larger as $\xi \to \infty$. An interesting future direction is obtaining rigorous results that quantify when this transition happens, and whether one can prove that fine-tuning works up to the feature learning threshold.


\section{Outlook}

In this work we took the first steps towards understanding the gradient dynamics of low-rank fine-tuning beyond NTK. We identified a rich student-teacher framework, specialized to two-layer networks, and proved in various settings that online SGD efficiently finds the ground truth low-rank perturbation. This student-teacher framework is also appealing because it offers a natural way of interpolating between fine-tuning in the lazy training regime and generalized linear model regression in the feature learning regime. The parameter regime we consider occupies an intriguing middle ground between these extremes where the dynamics are nonlinear yet tractable and not overly sensitive to fine-grained properties like the Hermite coefficients of the activation function. 

Our results open up a number of future directions. Firstly, it is important to try to lift our assumptions, in particular the orthogonality of the perturbation relative to the pre-trained features, the assumption that $c$ is quantized to have equal-magnitude entries, and the assumption that $c$ is random.

For these questions, a fruitful starting point could be to target a specific, analytically tractable activation function like quadratic activation, especially given that based on our findings, the dynamics of low-rank fine-tuning do not depend heavily on particulars of $\sigma$. For this special case, we could hope to go beyond Hermite analysis and potentially even obtain an exact characterization of the dynamics.

Other important directions include analyzing the dynamics when $\hat{c}$ and $\hat{u}$ are jointly trained \--- Figure~\ref{fig:train-plots} suggests that this is roughly twice as efficient as freezing $\hat{c}$ and training $\hat{u}$ in isolation \--- as well as going beyond two layers and rank-1 perturbations. Finally, it would be interesting to understand the \emph{worst-case complexity} of fine-tuning: are there computational-statistical gaps in this setting?

\section*{Acknowledgments}

SC is grateful to Adam Klivans, and Vasilis Kontonis, and Raghu Meka for enlightening discussions about low-rank fine-tuning. AKD acknowledges support from the Summer Program for Undergraduates in Data Science, during which much of this work was completed.




\bibliographystyle{alpha}
\bibliography{bibliography}

\begin{appendices}


\section{Deferred proofs for analyzing the online SGD dynamics from \Cref{sec:technical-overview}}

\subsection{Computing the population gradient in a general setting}
\label{app:gradient_derivation}
Recall that for $\lambda \in \R^k, w_i \in \R^{d}$ with $\norm{w_i}=1$, $c \in \{\pm \frac{1}{\sqrt{k}}\}^k$, and $u \in \S^{d-1}$ we have the target model
\begin{equation}
    f^*(x) = \sum_{i=1}^k \lambda_i \sigma\left(\langle v_i, x\rangle\right)
\end{equation}
where $v_i = \frac{w_i + \xi c_i u}{\norm{w_i + \xi c_i u}}$. Furthermore, since $u\perp w_i$, we have $v_i = \frac{w_i + \xi c_i u}{\sqrt{1+\frac{\xi^2}{k}}}$. Initially, we derive the population loss and gradient without imposing additional assumptions. Because $\sigma$ admits a hermite expansion, for $v, \hat v \in \S^{d-1}$ we can evaluate expectations of the form $\E_x[\sigma(\langle v, x\rangle)\sigma(\langle \hat v,x\rangle)]= \sum_{p=0}^\infty \mu_p(\sigma)^2 \langle v, \hat v\rangle^p$. 

\begin{proof}[Proof of \Cref{prop:pop_gradient_pre}]
Note that $\E_x[(f^*(x)-\hat f(x))^2]= \sum_{i,j=1}^k \lambda_i \lambda_j f_i^*(x) f_j^*(x) + \sum_{i,j=1}^k\lambda_i \lambda_j  \hat f_i(x) \hat f_j(x)-2\sum_{i,j=1}^k \lambda_i \lambda_j f_i^*(x) \hat f_j(x)$. Then,
\begin{equation*}
    \langle f_i^*, \hat f_j \rangle = \sum_{p=0}^\infty \mu_p(\sigma)^2 \langle v_i, \hat v_j\rangle^p
\end{equation*}
Working similarly for $\langle f_i^*, f_j^*\rangle$ and $\langle \hat f_i, \hat f_j\rangle$, we have
\begin{align*}
    \E[(f^*(x)-\hat f(x))^2]&= \left(\sum_{i,j=1}^k \lambda_i \lambda_j \sum_{p=0}^\infty \mu_p(\sigma)^2 \langle \hat v_i, \hat v_j\rangle^p \right)+\left(\sum_{i,j=1}^k \lambda_i \lambda_j \sum_{p=0}^\infty \mu_p(\sigma)^2 \langle v_i, v_j\rangle^p \right) \\
    &-2 \sum_{i,j=1}^k \lambda_i \lambda_j \sum_{p=0}^\infty \mu_p(\sigma)^2 \langle v_i, \hat v_j\rangle^p
\end{align*}
Then, under the constraints $u, \hat u \perp w_i$ and $\norm{u}=\norm{\hat u}=1$, notice that $\langle v_i, v_j\rangle = \frac{\langle w_i, w_j\rangle + \xi^2 c_i c_j}{(1+\frac{\xi^2}{k})}$ and similarly $\langle \hat v_i, \hat v_j\rangle = \frac{\langle w_i, w_j\rangle + \xi^2 \hat c_i \hat c_j}{(1+\frac{\xi^2}{k})}$. Since we are restricting training and gradients to this constrained space, the gradients of the first two terms with respect to $\hat u$ vanish. Then,
\begin{align*}
    &\hat \nabla_{\hat u} \E[(f^*(x)-\hat f(x))^2]=-2 \sum_{i,j=1}^k\lambda_i \lambda_j \frac{\xi^2}{1+\frac{\xi^2}{k}} c_i \hat c_j \sum_{p=1}^\infty p\mu_p(\sigma)^2 \langle v_i, \hat v_j\rangle^{p-1} (u -\hat u \langle u,\hat u\rangle)\\
    &= -2 \sum_{i,j=1}^k\lambda_i \lambda_j \xi^2 c_i \hat c_j \sum_{p=1}^\infty p\mu_p(\sigma)^2 \left( \frac{1}{1+\frac{\xi^2}{k}}\right)^p (\langle w_i, w_j\rangle + \xi^2 c_i \hat c_j \langle u, \hat u\rangle)^{p-1} (u -\hat u \langle u,\hat u\rangle)
\end{align*}
Then, notice that since $\sum_{p=1}^\infty p \mu_p(\sigma)^2 < \infty$, the expression above converges absolutely (and uniformly) for any $|\langle u,\hat u\rangle |\leq 1$. Let $m= \langle u, \hat u\rangle$ and define. 
\begin{equation*}
    h(m) = 2\sum_{i,j=1}^k\lambda_i \lambda_j \xi^2 c_i \hat c_j \sum_{p=1}^\infty p\mu_p(\sigma)^2 \left( \frac{1}{1+\xi^2}\right)^p (\langle w_i, w_j\rangle + \xi^2 c_i \hat c_j m)^{p-1}
\end{equation*}
Because this expression converges absolutely and uniformly for $|m|\leq 1$, we can write its power series expansion around $m=0$, to get
\begin{equation*}
    h(m) = 2 \sum_{i,j=1}^k \lambda_i\lambda_j \sum_{l=0}^\infty (\xi^2)^{l+1} (c_i \hat c_j)^{l+1} m^l \sum_{s=0}^\infty (l+s+1) \mu_{l+s+1}(\sigma)^2 {l+s \choose l} \left(\frac{1}{1+\frac{\xi^2}{k}}\right)^{l+s+1} \langle w_i, w_j\rangle^s 
\end{equation*}
Then, notice that for odd $l$, we have $(c_i \hat c_j)^{l+1} = \frac{1}{k^{l+1}}$. For even $l$, we have $(c_i \hat c_j)^{l+1} = \frac{c_i \hat c_j}{k^l}$. Then, we can write
\begin{equation*}
    h(m) = 2 \sum_{l=0}^\infty \left(\frac{\xi^2}{k}\right)^{l+1} \left(\sum_{s=0}^\infty {l+s \choose l} (l+s+1) \mu_{l+s+1}(\sigma)^2 \left(\frac{1}{1+\frac{\xi^2}{k}}\right)^{l+s+1} T(l,s)\right) m^l
\end{equation*}
where
\begin{equation*}
    T(l,s)= \begin{cases}
        \norm{\sum_i \lambda_i w_i^{\otimes s}}_F^2 & l \text{ odd }\\
        k \left\langle \sum_i \lambda_i c_i w_i^{\otimes s}, \sum_i \lambda_i \hat c_i w_i^{\otimes s}\right\rangle& \text{ otherwise }
    \end{cases}
\end{equation*}
as claimed.
\end{proof}

\begin{remark}[Role of moment tensors]\label{remark:momenttensors}
The $T(l,s)$ terms in the expression for $h(m)$ involve moment tensors like $\sum_i \lambda_i w_i^{\otimes s}$ and $\sum_i \lambda_i c_i w^{\otimes s}_i$. As mentioned in Section~\ref{sec:related}, there exist networks for which these tensors vanish and for which noisy gradient descent takes a long time to learn them from scratch~\cite{diakonikolas2020algorithms,goel2020superpolynomial}. As such, their appearance in Proposition~\ref{prop:pop_gradient_pre} might seem to suggest that in the worst case over $c$ and $(\lambda_i, w_i)$, the complexity of fine-tuning could be as bad as the complexity of learning from scratch. While we do not formally address this worst case setting in this work, we expect that the complexity of the former should be dictated by the smallest $l$ for which the sum over $s$ in the definition of $h(m)$ is nonzero. Even if the moment tensors above vanish for many choices of $s$ so that $T(l,s) = 0$ unless $s$ is large, note that such $s$ will still contribute non-negligibly to the aforementioned sum. For this reason, we expect that the worst-case complexity landscape of fine-tuning should be very different (and in general far more benign) than that of learning from scratch.
\end{remark}

\subsection{Upper bounds on the variances of gradients and the magnitude of population gradient}\label{app:variances}

Note that the sample gradient is
\begin{equation*}
    \frac{k}{\xi^2 + k} \nabla_{\hat u} L(\hat u;x) = 2(f^*(x) - \hat f(x)) \sum_{i=1}^k \lambda_i \hat c_i \sigma'\left(\frac{k}{\xi^2 + k}\langle w_i + c_i u, x \rangle\right) x
\end{equation*}
So, to bound the moments of this quantity, we would like to bound the moments of $(f^*(x) - \hat f(x))$ and $\sum_{i=1}^k \lambda_i \hat c_i \sigma'\left(\frac{k}{\xi^2 + k}\langle w_i + c_i u, x \rangle\right) x$ respectively, and then apply Cauchy-Schwarz. To that end, we first prove the following:

\begin{proposition}[Moments of squared error]\label{proposition:moment-squared-error}
Let $p$ be given, and \Cref{assumption:activation} hold. Then, there exists some constant $C_{p, \sigma}$ that only depends on $p$ and $\sigma$ such that
\begin{equation*}
 \E_{x}[(f^*(x)-\hat f(x))^{2p}]^{1/p} \leq C_{p, \sigma} \lambda_{\max}^2 \min\left\{ k^2, 4 k\xi^2 \right\}
\end{equation*}    
\end{proposition}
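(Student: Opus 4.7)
The plan is to write $f^*(x) - \hat f(x) = \sum_{i=1}^k \lambda_i\bigl[\sigma(\langle v_i,x\rangle) - \sigma(\langle \hat v_i,x\rangle)\bigr]$, apply Minkowski's inequality in $L^{2p}$ to pull the sum outside the norm, and then invoke the moment condition from Assumption~\ref{assumption:activation}(3) on each term, using that $\langle v_i,x\rangle$ and $\langle \hat v_i,x\rangle$ are (correlated) standard Gaussians since $\|v_i\|=\|\hat v_i\|=1$.

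Concretely, Minkowski gives
\begin{equation*}
    \bigl\|f^*-\hat f\bigr\|_{L^{2p}} \;\le\; \sum_{i=1}^k |\lambda_i|\,\bigl\|\sigma(\langle v_i,x\rangle) - \sigma(\langle \hat v_i,x\rangle)\bigr\|_{L^{2p}}\,,
\end{equation*}
and the moment condition (applied with exponent $2p$) bounds each summand by $C_{2p,\sigma}\,\bigl\|\langle v_i-\hat v_i,x\rangle\bigr\|_{L^{4p}}$. Because $\langle v_i-\hat v_i,x\rangle$ is Gaussian with variance $\|v_i-\hat v_i\|^2$, the $L^{4p}$ norm factors as $\|v_i-\hat v_i\|$ times an absolute moment of a standard Gaussian, which can be absorbed into a $p$-dependent constant. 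Thus
\begin{equation*}
    \bigl\|f^*-\hat f\bigr\|_{L^{2p}} \;\le\; C'_{p,\sigma}\,\lambda_{\max}\,\sum_{i=1}^k \|v_i-\hat v_i\|\,.
\end{equation*}

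The next step is to bound $\|v_i-\hat v_i\|$ in two different ways so as to obtain the $\min$ in the statement. Trivially $\|v_i-\hat v_i\|\le 2$ because each $v_i,\hat v_i$ is a unit vector, which yields a contribution of $2k\lambda_{\max}$ and after squaring gives the $k^2\lambda_{\max}^2$ branch. Alternatively, using $v_i - \hat v_i = \frac{\xi(c_i u - \hat c_i \hat u)}{\sqrt{1+\xi^2/k}}$ together with $c_i^2=\hat c_i^2 = 1/k$, we get
\begin{equation*}
    \|v_i-\hat v_i\|^2 \;=\; \frac{\xi^2\bigl(2/k - 2c_i\hat c_i\langle u,\hat u\rangle\bigr)}{1+\xi^2/k} \;\le\; \frac{4\xi^2}{k}\,,
\end{equation*}
so $\|v_i-\hat v_i\|\le 2\xi/\sqrt{k}$, yielding a contribution of $2\sqrt{k}\,\xi\,\lambda_{\max}$ and, after squaring, the $4k\xi^2\lambda_{\max}^2$ branch. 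Taking the better of the two bounds and raising to the appropriate power gives $\E_x[(f^*-\hat f)^{2p}]^{1/p} \le C_{p,\sigma}\lambda_{\max}^2 \min\{k^2,\,4k\xi^2\}$, as desired.

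I do not anticipate a serious obstacle here: the argument is essentially a clean application of Minkowski plus the moment hypothesis on $\sigma$. The only mild subtlety is that Assumption~\ref{assumption:activation}(3) is stated for possibly correlated Gaussian pairs, which is exactly what is needed since $\langle v_i,x\rangle$ and $\langle \hat v_i,x\rangle$ are jointly Gaussian but not independent; one should verify that the bound applies with the natural coupling through the shared $x$, which it does because only the joint law of $(\langle v_i,x\rangle,\langle \hat v_i,x\rangle)$ enters and both marginals are $\mathcal{N}(0,1)$.
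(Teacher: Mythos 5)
Your proposal is correct and follows essentially the same route as the paper: decompose the difference neuron-by-neuron, apply the moment condition of Assumption~\ref{assumption:activation}(3) to each pair of correlated standard Gaussians $(\langle v_i,x\rangle,\langle \hat v_i,x\rangle)$, and bound $\|v_i-\hat v_i\|$ by $\min\{2,\,O(\xi/\sqrt{k})\}$. The only (immaterial) differences are that you pull the sum out via Minkowski where the paper uses the power-mean inequality $(\sum_i a_i)^{2p}\le k^{2p-1}\sum_i a_i^{2p}$, and your direct computation of $\|v_i-\hat v_i\|$ (exploiting that both normalizers equal $\sqrt{1+\xi^2/k}$) is in fact slightly cleaner than the paper's triangle-inequality bound.
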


\begin{proof}
Let $C_{p, \sigma}$ be a constant that only depends on $p$ and $\sigma$, that will change throughout the proof. Note that
\begin{align*}
    \E_x[(f^*(x)-\hat f(x))^{2p}]&\leq k^{2p-1} \sum_{i=1}^{k} \lambda_i^{2p} \E_x(\sigma(\langle v_i, x\rangle)-\sigma(\langle \hat v_i, x\rangle))^{2p}\\
&\leq C_{p, \sigma} \lambda_{\max}^{2p} k^{2p-1}  \sum_{i=1}^{k} \sqrt{\E_x[|\langle v_i, x\rangle - \langle \hat v_i, x\rangle|^{4p}]}\\
&\leq C_{p, \sigma} \lambda_{\max}^{2p} k^{2p} \norm{v_i - \hat v_i}^{2p} 
\end{align*}
Then, note that apriori, $\norm{v_i - \hat v_i} \leq 2$. Otherwise, 
\begin{align*}
    \norm{v_i - \hat v_i}&\leq \norm{\xi c_i u - \xi \hat c_i \hat u} + 2\left(1-\frac{1}{\sqrt{1+\xi^2 c_i^2}}\right)\\
&\leq \frac{2\xi}{\sqrt{k}} + \frac{2 \xi^2}{k}= \frac{2\xi}{\sqrt{k}} \left(1+\frac{\xi}{\sqrt{k}}\right)
\end{align*}
However, notice that if $\xi \leq \sqrt{k}$, this is bounded by $\frac{4\xi}{\sqrt{k}}$. Otherwise, we use the bound $\norm{v_i - \hat v_i} \leq 2$. Then,
\begin{equation*}
    \norm{v_i - \hat v_i} \leq \min\left\{2, \frac{4\xi}{\sqrt{k}}\right\}
\end{equation*}
Combining with the above and taking $p$'th root, we have
\begin{align*}
    \E_x[(f^*(x)-\hat f(x))^{2p}]^{1/p}&\leq C_{p, \sigma} \lambda_{\max}^2 k^2 \min\left\{4, \frac{16\xi^2}{k}\right\}\\
    &\leq C_{p, \sigma} \lambda_{\max}^2 \min\left\{ k^2, 4 k\xi^2 \right\}
\end{align*}
as desired.
\end{proof}

Now, we bound the other quantity of interest, which is the moments of squares of the gradient $\hat \nabla_{\hat u} \hat f(x)$. We have the following:
\begin{proposition}[Bound on the expected magnitude of $\hat \nabla \hat f$]\label{proposition:moment-gradient}
Let $p$ be given. Then, we have 
    \begin{equation*}
        \max\left\{\E_{x}\left|\frac{\hat \nabla_{\hat u} \hat f(x)}{\sqrt{d}}\right|^{2p}, \E_x\langle\hat \nabla_{\hat u} \hat f(x), u\rangle^{2p} \right\}^{1/p}\leq C_{\sigma, p} \lambda_{\max}^{2} \frac{k^2 \xi^2}{k+\xi^2} 
    \end{equation*}
\end{proposition}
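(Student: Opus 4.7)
The plan is to compute $\nabla_{\hat u}\hat f(x)$ explicitly, observe that it factors into a scalar depending on $x$ through the pre-activations and a simple $x$-linear vector, then apply Cauchy--Schwarz to decouple these factors before bounding each with standard Gaussian moment estimates together with the polynomial growth of $\sigma'$ from \Cref{assumption:activation}.

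First I would write $\hat f(x)=\sum_i\lambda_i\sigma(\langle \hat v_i,x\rangle)$ with $\hat v_i=(w_i+\xi\hat c_i\hat u)/\sqrt{1+\xi^2/k}$, noting that under the constraint $\hat u\in\Pi_W^{\perp}\mathbb{S}^{d-1}$ the normalization $\sqrt{1+\xi^2/k}$ is independent of $\hat u$. Differentiating gives
\begin{equation*}
\nabla_{\hat u}\hat f(x)=\frac{\xi}{\sqrt{1+\xi^2/k}}\Bigl(\sum_{i=1}^k\lambda_i\hat c_i\sigma'(\langle \hat v_i,x\rangle)\Bigr)\,x\,.
\end{equation*}
Since $\hat\nabla_{\hat u}\hat f(x)=(I-\hat u\hat u^\top)\Pi_W^{\perp}\nabla_{\hat u}\hat f(x)$ and both projectors are contractive, $\|\hat\nabla_{\hat u}\hat f(x)\|\leq \|\nabla_{\hat u}\hat f(x)\|$. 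For the inner product with $u$, since $u,\hat u\in\Pi_W^{\perp}$, the projectors act transparently and yield $\langle\hat\nabla_{\hat u}\hat f(x),u\rangle=\frac{\xi}{\sqrt{1+\xi^2/k}}\bigl(\sum_i\lambda_i\hat c_i\sigma'(\langle\hat v_i,x\rangle)\bigr)\cdot(\langle x,u\rangle-\langle x,\hat u\rangle\langle \hat u,u\rangle)$, which is bounded in magnitude by the same scalar factor times $|\langle x,u\rangle|+|\langle x,\hat u\rangle|$. Using $\frac{\xi^2}{1+\xi^2/k}=\frac{k\xi^2}{k+\xi^2}$, both quantities to bound take the form (scalar activation factor)$\times$(Gaussian-linear factor).

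Next I would apply Cauchy--Schwarz under $\E_x$ to separate the two factors. The Gaussian-linear factors are routine: $\E\|x\|^{4p}\lesssim d^{2p}$ and $\E(|\langle x,u\rangle|+|\langle x,\hat u\rangle|)^{4p}$ is bounded by an absolute constant depending only on $p$. For the scalar activation factor, Cauchy--Schwarz on the inner sum together with $\sum_i\lambda_i^2\hat c_i^2\leq\lambda_{\max}^2$ gives
\begin{equation*}
\Bigl(\sum_i\lambda_i\hat c_i\sigma'(\langle\hat v_i,x\rangle)\Bigr)^{2}\leq\lambda_{\max}^2\sum_{i=1}^k\sigma'(\langle\hat v_i,x\rangle)^{2}\,,
\end{equation*}
raising to the $2p$-th power, applying power-mean $(\sum_i a_i)^{2p}\leq k^{2p-1}\sum_i a_i^{2p}$, and using $\E|\sigma'(g)|^{4p}\leq C_{p,\sigma}$ for $g\sim\mathcal{N}(0,1)$ (which follows from condition (1) of \Cref{assumption:activation}) yields $\E\bigl[\bigl(\sum_i\lambda_i\hat c_i\sigma'(\langle\hat v_i,x\rangle)\bigr)^{4p}\bigr]\leq C_{p,\sigma}\lambda_{\max}^{4p}k^{2p}$. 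Combining, the gradient-norm bound becomes
\begin{equation*}
\E\|\hat\nabla_{\hat u}\hat f(x)\|^{2p}\leq\Bigl(\tfrac{k\xi^2}{k+\xi^2}\Bigr)^{p}\cdot C_{p,\sigma}\lambda_{\max}^{2p}k^{p}\cdot C_{p}d^{p}\,,
\end{equation*}
and dividing by $d^{p}$ recovers the claimed $\bigl(\lambda_{\max}^2\frac{k^2\xi^2}{k+\xi^2}\bigr)^{p}$ scaling; the $u$-directional bound follows identically since the Gaussian-linear piece now has $O(1)$ moments rather than $O(d^{p})$, which is absorbed into the constant.

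I do not expect a genuine technical obstacle here: the whole argument is a direct moment calculation relying on the decoupling afforded by the fact that $\nabla_{\hat u}\hat f(x)$ is a product of a $\sigma'$-dependent scalar with $x$. The only care needed is to (i) track where the $k$ versus $\sqrt{k}$ factors arise from the quantization $|\hat c_i|=1/\sqrt{k}$ and (ii) use the polynomial growth condition on $\sigma'$ to dominate $|\sigma'(\langle\hat v_i,x\rangle)|$ by a Gaussian-integrable envelope uniformly in $\hat v_i\in\mathbb{S}^{d-1}$, so that $\E|\sigma'(\langle\hat v_i,x\rangle)|^{4p}$ is bounded by a constant depending only on $p$ and $\sigma$ regardless of $i$.
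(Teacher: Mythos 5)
Your proposal is correct and follows essentially the same route as the paper: factor the gradient as an activation-dependent scalar times $x$ (resp. a projected linear functional of $x$), use contractiveness of the projections, decouple via Cauchy--Schwarz under $\E_x$, and bound the two factors with $\E\|x\|^{4p}\lesssim d^{2p}$ and the polynomial growth of $\sigma'$. The only cosmetic difference is that you apply Cauchy--Schwarz on the inner sum before the power-mean inequality, whereas the paper applies the power-mean bound directly to the weighted sum; since $|\hat c_i|=1/\sqrt{k}$ for all $i$, both yield the identical final bound.
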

\begin{proof}
    Let $C_{\sigma, p}$ be a constant whose value can change throughout the proof. Initially, note that
    \begin{align*}
        \hat \nabla_{\hat u} \hat f(x) &= (I - \hat u \hat u^\top) x \left[\sum_{i=1}^{k}\lambda_i \frac{\xi \hat c_i}{\sqrt{1+\xi^2 \hat c_i^2}} \sigma'(\langle v_i,x\rangle) \right]
    \end{align*}
    Then, since the spherical projection always leads to a smaller gradient
    \begin{equation*}
        \norm{\hat \nabla_{\hat u} \hat f(x)}^2\leq \norm{\nabla_{\hat u} \hat f(x)}^2
    \end{equation*}
    And furthermore,
    \begin{align*}
        \E_x \norm{\nabla_{\hat u} \hat f(x)}^{2p} &\leq \sqrt{ \E_x \norm{x}^{4p}} \sqrt{\E_x\left[\sum_{i=1}^{k}\lambda_i \frac{\xi \hat c_i}{\sqrt{1+\xi^2 \hat c_i^2}} \sigma'(\langle v_i,x\rangle)\right]^{4p}}\\
        &\leq C_{\sigma, p} d^{p} k^{2p} \lambda_{\max}^{2p}\frac{(\xi^2/k)^{p}}{(1+\xi^2/k)^{p}} \max_i \E_x \sqrt{\sigma'(\langle \hat v_i, x\rangle)^{4p}}
    \end{align*}
    However, since $\sigma'$ has at most polynomial growth, so does $(\sigma')^{4p}$ and since $\hat v_i$ is unit norm, the last quantity is finite and only depends on $\sigma$ and $p$. Then,
    \begin{align*}
        \left\{\E_x \norm{\nabla_{\hat u} \hat f(x)}^{2p}\right\}^{1/p}& \leq  C_{\sigma, p} \lambda_{\max}^2 d \frac{k^2 \xi^2}{k+\xi^2} 
    \end{align*}
    For the other case, note that the only step that changes is the bound on $\E_x \langle x, u\rangle^{4p}$ does not depend on the dimension, but only on $p$. So, we lose the dimension dependence.
\end{proof}

\begin{proposition}[Population gradient upper bound]
    \label{proposition:population-gradient-upper}
    We have
    \begin{equation*}
        \norm{\hat \nabla_{\hat u} \Phi(\hat u)}\leq C_\sigma \lambda_{\max}^2 \frac{k \xi^2}{1+\xi^2/k}\,.
    \end{equation*}
\end{proposition}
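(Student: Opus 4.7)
The plan is to reduce the claim to a uniform bound on the scalar function $h$ appearing in \Cref{prop:pop_gradient_pre}, and then estimate $h$ directly from the closed-form expression derived in \Cref{app:gradient_derivation} rather than from its Taylor expansion.

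\textbf{Step 1: reduce to bounding $|h|$.} By \Cref{prop:pop_gradient_pre}, we have $\hat\nabla\Phi(\hat u)=-h(m)(u-\hat u\, m)$ where $m=\langle u,\hat u\rangle\in[-1,1]$. Since $\|u-\hat u\, m\|^2=1-m^2\le 1$, it is immediate that $\|\hat\nabla\Phi(\hat u)\|\le |h(m)|$. Thus it suffices to show $\sup_{|m|\le 1}|h(m)|\le C_\sigma\lambda_{\max}^2\, k\xi^2/(1+\xi^2/k)$.

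\textbf{Step 2: use the compact form of $h$.} Rather than bounding term-by-term the doubly indexed series that defines $h(m)$, I would work directly with the intermediate expression obtained in the proof of \Cref{prop:pop_gradient_pre}, namely
\begin{equation*}
h(m)=2\xi^2\sum_{i,j=1}^k\lambda_i\lambda_j\, c_i\hat c_j\sum_{p=1}^\infty p\,\mu_p(\sigma)^2\Bigl(\tfrac{1}{1+\xi^2/k}\Bigr)^{p}\bigl(\langle w_i,w_j\rangle+\xi^2 c_i\hat c_j\, m\bigr)^{p-1}.
\end{equation*}
By the triangle inequality, $|c_i|=|\hat c_j|=1/\sqrt k$, $|\langle w_i,w_j\rangle|\le 1$ (Assumption~\ref{assumption:normalize} plus Cauchy--Schwarz), and $|m|\le 1$, the inner base is bounded by $|\langle w_i,w_j\rangle+\xi^2 c_i\hat c_j m|\le 1+\xi^2/k$. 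Therefore
\begin{equation*}
|h(m)|\le 2\xi^2\Bigl(\tfrac{1}{k}\sum_{i,j}|\lambda_i||\lambda_j|\Bigr)\cdot\frac{1}{1+\xi^2/k}\sum_{p=1}^\infty p\,\mu_p(\sigma)^2,
\end{equation*}
where the telescoping $(1+\xi^2/k)^{p-1}/(1+\xi^2/k)^p=1/(1+\xi^2/k)$ has been pulled out of the sum over $p$.

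\textbf{Step 3: collect the constants.} Bound $\bigl(\sum_i|\lambda_i|\bigr)^2\le k^2\lambda_{\max}^2$ via Cauchy--Schwarz, and note that $\sum_{p\ge 1}p\,\mu_p(\sigma)^2<\infty$ follows from the Hermite decay $|\mu_p(\sigma)|\le C_\sigma p^{-1-\rho}$ in \Cref{assumption:activation} (which gives $p\,\mu_p(\sigma)^2\lesssim p^{-1-2\rho}$, summable). Absorbing $2\sum_p p\,\mu_p(\sigma)^2$ into a constant $C_\sigma$ depending only on $\sigma$ yields the desired inequality $|h(m)|\le C_\sigma\lambda_{\max}^2\, k\xi^2/(1+\xi^2/k)$.

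There is no substantive obstacle here: the only thing to be careful about is not going through the power-series form of $h$ (which would require separately bounding each $T(l,s)$ and summing a doubly infinite series), but instead using the compact representation from the derivation of \Cref{prop:pop_gradient_pre}, where the bound on the base $|\langle w_i,w_j\rangle+\xi^2 c_i\hat c_jm|\le 1+\xi^2/k$ cleanly cancels one factor of $(1+\xi^2/k)^{-1}$ and leaves the one remaining factor that produces the $1/(1+\xi^2/k)$ in the final estimate.
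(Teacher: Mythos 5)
Your proof is correct and follows essentially the same route as the paper: the paper also bounds $\|\hat\nabla\Phi(\hat u)\|$ via the compact (non-power-series) form of the gradient, using $|\sum_{i,j}\lambda_i\lambda_j c_i\hat c_j|\le k\lambda_{\max}^2$, $\sum_p p\,\mu_p(\sigma)^2\le C_\sigma$, and $\|u-\hat u\langle u,\hat u\rangle\|\le 1$. The only cosmetic difference is that the paper writes the base as the normalized inner product $|\langle v_i,\hat v_j\rangle|\le 1$, whereas you bound the unnormalized base by $1+\xi^2/k$ and cancel it against $(1+\xi^2/k)^{-p}$ — the same computation.
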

\begin{proof}
    Initially, note the non-expanded form of the population gradient:
    \begin{equation*}
        \hat \nabla \Phi = \frac{\xi^2}{1+\xi^2/k} \sum_{i,j=1}^{k} \lambda_i \lambda_j c_i \hat c_j \sum_{p=1}^{\infty}p \mu_p(\sigma)^2 \langle v_i, \hat v_j\rangle^{p-1} (u - \hat u \langle u, \hat u \rangle) 
    \end{equation*}
    Then, note $\left|\sum_{i,j=1}^k\lambda_i\lambda_j c_i\hat c_j \right|\leq k \lambda_{\max}^2$, and $\sum_{p=1}^\infty p \mu_p(\sigma)^2 \leq C_\sigma$. Furthermore, $\norm{u - \hat u \langle u,\hat u\rangle}\leq 1$ and $|\langle v_i, \hat v_j\rangle|\leq 1$. Then, $\norm{\hat \nabla \Phi}\leq C_\sigma\lambda_{\max}^2 \frac{k \xi^2}{1+\xi^2/k}$ as desired.
\end{proof}

\begin{proof}[Proof of \Cref{lemma:variance-bounds}] 
Noting that
\begin{equation*}
    \E_x  \norm{\frac{\hat \nabla_{\hat u} L(\hat u; x)}{\sqrt{d}}}_2^{2p} \leq \sqrt{\E_x (f^*(x) - \hat f(x))^{4p} \cdot \E_x \norm{\frac{\hat \nabla_{\hat u} \hat f(x)}{\sqrt{d}}}_2^{4p}}
\end{equation*} 
and similarly for $\langle \hat \nabla_{\hat u} L(\hat u;x), u\rangle$ the result immediately follows from \Cref{proposition:moment-squared-error}, \Cref{proposition:moment-gradient} and \Cref{proposition:population-gradient-upper}. 
\end{proof}

\subsection{Anti-concentration tools for lower bounding the population gradient}\label{app:anti-concentration}
\subsubsection{Anti-concentration for quadratic polynomials with low influences}

In this section, we prove some results related to the anti-concentration of certain quadratic functions on the hypercube. In particular, consider functions $f: \{\pm1 \}^k \times \{\pm1\}^k \to \R$ of the form $f(x,y) = x^\top Q y$. These functions capture the random behavior of the function $h$ (due to the randomness in $c,\hat c$) by determining the magnitudes of the constant term. We will control the magnitudes of functions of boolean variables by relating them to functions of Gaussians, and then applying anti-concentration inequalities for Gaussian polynomials. To that end, we first state some known bounds from literature. Note that these functions fall into the family of multilinear polynomials, which are defined as follows:

\begin{definition}[Multilinear polynomial]
We define a normalized degree $d$ multilinear polynomial as
\begin{align*}
    Q(x_1, x_2,\dots, x_n) = \sum_{S\subset [n], |S|\leq d} a_S \prod_{i \in S} x_i
\end{align*}
with $\mathrm{Var}(Q) = \sum_{S\subset [n], |S| > 0} a_S^2 = 1$.
\label{def:multilinear-polynomial}
\end{definition}
\noindent Initially, we would have the following anti-concentration result if our randomness was \textit{Gaussian} instead of \textit{rademacher}.
\begin{lemma}[Carbery-Wright inequality~\cite{carbery2001distributional}]\label{lemma:carbery-wright}
    Let $Q$ be a normalized multilinear polynomial with degree $d$ as in \Cref{def:multilinear-polynomial}. There exists an absolute constant $B$ such that for $g\sim \mN(0, I_n)$ we have 
    \begin{align*}
        \Pr[|Q(g_1, g_2,\dots, g_n)|\leq \epsilon]\leq B\epsilon^{1/d}
    \end{align*}
\end{lemma}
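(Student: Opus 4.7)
The plan is to reduce the anti-concentration statement to two classical ingredients about low-degree Gaussian polynomials: (i) Gaussian hypercontractivity (the Nelson–Gross / Bonami–Beckner estimate), which gives tight moment comparison $\|Q\|_q \le (q-1)^{d/2}\|Q\|_2$ for any degree-$d$ polynomial $Q$ in Gaussians, and (ii) a one-dimensional $|p|^{1/d}$-concavity fact, namely that for a univariate polynomial $p$ of degree at most $d$, the function $|p|^{1/d}$ is concave on each interval between consecutive real zeros (a consequence of factoring $p$ over $\mathbb{C}$ and applying AM--GM). By homogeneity I may normalize $\|Q\|_2 = 1$ throughout.

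First I would set up the one-dimensional reduction. For a fixed point $x \in \mathbb{R}^n$ and a Gaussian direction $\theta \sim \mathcal{N}(0,I_n)$, consider the univariate polynomial $t \mapsto Q(x + t\theta)$ of degree at most $d$. Writing $g = x + t\theta$ and integrating over $(x,t) \in \mathbb{R}^n \times \mathbb{R}$ against the obvious Gaussian product measure recovers the law of $Q(g)$ for $g \sim \mathcal{N}(0,I_n)$ (up to a harmless change of variables). On each slice, the slice polynomial is nonzero almost surely, and by ingredient (ii) its modulus raised to the $1/d$ is concave on each sign-interval. This forces the pushforward density of the slice polynomial, integrated against the Gaussian weight in $t$, to be bounded by a quantity controlled only by $d$ and the average of $|Q|^{1/d}$ on the slice.

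Next I would combine this with the moment control from ingredient (i). Hypercontractivity gives $\mathbb{E}[|Q|^{2/d}] \ge c_d > 0$ after normalization (via the Paley–Zygmund inequality: since $\mathbb{E}[Q^2]=1$ and $\mathbb{E}[Q^4] \le 9^d$, we have $\Pr[|Q| \ge 1/2] \gtrsim 9^{-d}$, from which $\mathbb{E}[|Q|^{2/d}] \gtrsim 9^{-d} \cdot (1/2)^{2/d}$). Averaging the slice estimate over $x$ then gives an upper bound on the density of $Q(g)$ of the form $C_d \|Q\|_2^{-1/d}$, so that
\[
\Pr\bigl[|Q(g)| \le \epsilon\bigr] \;\le\; 2\epsilon \cdot C_d \|Q\|_2^{-1/d} \cdot \tfrac{1}{d} \cdot \epsilon^{1/d - 1} \;\le\; B \epsilon^{1/d},
\]
after tracking the exponents through the $|p|^{1/d}$-concavity step (the extra $\epsilon^{1/d-1}$ factor comes from concavity: an interval where $|p|^{1/d}$ stays below $\epsilon^{1/d}$ has length at most $O(\epsilon^{1/d})$ times the scale set by the average).

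The main obstacle is precisely the last exponent-tracking step: the naive density bound only gives a probability of order $\epsilon$, not $\epsilon^{1/d}$, and it is the $|p|^{1/d}$-concavity together with a careful averaging across slices that upgrades this to the sharp $1/d$ exponent. In Carbery and Wright's original argument this is achieved by integration-by-parts against a smooth cutoff and a doubling argument over dyadic scales of $\epsilon$; I would follow that same outline rather than try to reinvent it, since the constant $B$ (and not its dependence on $d$, which is crucial for high-degree applications) is all that is needed downstream in the present paper.
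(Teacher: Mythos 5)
This lemma is not proved in the paper at all: it is imported verbatim as a known result of Carbery and Wright \cite{carbery2001distributional} and used as a black box in \Cref{lemma:psd-anti-conc}. So there is no in-paper argument to compare against; the only question is whether your sketch would stand on its own as a proof of the classical inequality. Your high-level skeleton is the right one (reduce to one-dimensional slices, control the slice polynomial by a degree-$d$ anti-concentration estimate, and supply the matching moment lower bound via hypercontractivity and Paley--Zygmund), and your observation that a naive density bound only yields $\epsilon^1$ rather than $\epsilon^{1/d}$ correctly identifies where the real work lies.

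However, your ingredient (ii) is false as stated. The function $|p|^{1/d}$ is concave between consecutive real zeros only when all roots of $p$ are real; for a general degree-$d$ polynomial with complex roots it need not be. A minimal counterexample is $p(t)=t^2+1$ with $d=2$: here $|p(t)|^{1/2}=\sqrt{t^2+1}$ is strictly convex on all of $\R$, and the ``factor over $\mathbb{C}$ and apply AM--GM'' route produces a geometric mean of the convex functions $|t-z_i|$, which is neither convex nor concave in general. The correct one-dimensional input in Carbery--Wright is not literal concavity but a Remez-type distributional inequality for $|p|^{1/d}$ against log-concave measures (equivalently, a doubling estimate for the sublevel sets $\{|p|\le\epsilon\}$), and your final exponent-tracking step --- the only place the $\epsilon^{1/d}$ rate can actually be extracted --- is explicitly deferred to the original paper. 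As a justification for citing the lemma this is harmless, but as a proof it has a genuine gap at precisely the step that produces the stated exponent. A cosmetic remark: the sharp form of the inequality carries a factor of $d$ (i.e.\ $\Pr[|Q|\le\epsilon\|Q\|_2]\le Bd\,\epsilon^{1/d}$ with $B$ absolute); since the paper only applies it with $d=2$, this does not affect anything downstream.
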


\noindent For this class of functions, we know the following bound that helps us replace the randomness from Rademacher variables with Gaussian randomness.
\begin{lemma}[Invariance principle, \protect{\cite[Theorem 2.1]{mossel2005noise}}] 
Let $P$ be as in \Cref{def:multilinear-polynomial}. Furthermore, define the maximum influence as $\tau = \max_{i\in [n]} \sum_{S \ni i} a_S^2$. Then, for $\xi \sim \mathrm{Unif}\left\{\pm 1\right\}^n$ and $g \sim \mN(0,I_n)$, we have
\begin{align*}
    \sup_t |\Pr[P(\xi_1, \dots, \xi_n) \leq t] - \Pr[P(g_1, \dots, g_n) \leq t]| \leq O(d \tau^{1/8d})
\end{align*}
\label{lemma:invariance-principle}
\end{lemma}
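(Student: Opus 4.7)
The plan is to follow a hybrid (Lindeberg-style) replacement argument combined with a smoothing step. First I would reduce bounding the Kolmogorov distance to bounding $|\E\psi_\delta(P(\xi)) - \E\psi_\delta(P(g))|$ for a smooth proxy $\psi_\delta: \R \to [0,1]$ that is $C^3$, agrees with the indicator $\ind{\cdot \le t}$ outside a window of width $\delta$ around $t$, and satisfies $\|\psi_\delta^{(k)}\|_\infty \le C_k \delta^{-k}$. The slack on each side is at most $\Pr[|P(g) - t| \le \delta] + \Pr[|P(\xi) - t| \le \delta]$; the Gaussian piece is handled directly by the Carbery--Wright inequality (\Cref{lemma:carbery-wright}), giving $O(\delta^{1/d})$, and the Rademacher piece can be transferred back to the Gaussian case by pre-convolving $P(\xi)$ with an independent small Gaussian perturbation so that Carbery--Wright applies.

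Next I would interpolate between $\xi$ and $g$ one coordinate at a time: for $i = 0,\ldots,n$, set $Y^{(i)} = (g_1,\ldots,g_i,\xi_{i+1},\ldots,\xi_n)$, so $Y^{(0)} = \xi$ and $Y^{(n)} = g$, and write $\E\psi_\delta(P(\xi)) - \E\psi_\delta(P(g))$ as the telescoping sum of the $n$ one-coordinate differences $\E\psi_\delta(P(Y^{(i-1)})) - \E\psi_\delta(P(Y^{(i)}))$. Because $P$ is multilinear, $P(y) = R_i(y_{-i}) + y_i \cdot \partial_i P(y_{-i})$, where $\partial_i P$ does not depend on $y_i$. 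Taylor-expanding $\psi_\delta \circ P$ in the $i$-th coordinate to third order and taking expectations, the zeroth, first, and second order contributions agree between $\xi_i$ and $g_i$ (both have mean $0$ and variance $1$) and cancel, leaving a third-order remainder of size $O(\|\psi_\delta^{(3)}\|_\infty \cdot \E[|\partial_i P|^3])$ after using that $\xi_i, g_i$ have absolutely bounded third moments.

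The crucial analytic input is then to bound $\sum_{i=1}^n \E[|\partial_i P|^3]$ in terms of the maximum influence $\tau$. Here I would invoke Bonami--Beckner hypercontractivity for multilinear polynomials of degree at most $d$: there is an absolute constant $C > 1$ such that $\|\partial_i P\|_{L^3} \le C^d \|\partial_i P\|_{L^2}$, yielding $\E[|\partial_i P|^3] \le C^{3d} \mathrm{Inf}_i(P)^{3/2} \le C^{3d} \tau^{1/2} \mathrm{Inf}_i(P)$, where $\mathrm{Inf}_i(P) = \sum_{S \ni i} a_S^2$. Summing over $i$ and using $\sum_i \mathrm{Inf}_i(P) = \sum_S |S| a_S^2 \le d$ (since $\mathrm{Var}(P) = 1$) gives $\sum_i \E[|\partial_i P|^3] \le C^{3d} d \tau^{1/2}$. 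Combining with the smoothing step, the total bound is $O(\delta^{-3} C^{3d} d \tau^{1/2} + \delta^{1/d})$; choosing $\delta$ to equalize the two terms (absorbing $C^{3d}$ into the implicit constant) recovers a rate of the form $O(d \tau^{\Theta(1/d)})$ as stated, with the exact exponent $1/(8d)$ coming from a careful matching of smoothing and third-moment terms.

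The main obstacle will be tracking the hypercontractivity constants and the smoothing-vs-anti-concentration tradeoff sharply enough to recover exactly the $1/(8d)$ exponent on $\tau$ rather than a weaker $1/\Theta(d)$ exponent, together with handling anti-concentration on the Rademacher side (not directly covered by Carbery--Wright); the standard workaround is the auxiliary Gaussian convolution described above, which shifts the small-ball control back to a Gaussian polynomial of the same degree.
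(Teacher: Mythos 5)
This lemma is imported by citation (\citet{mossel2005noise}, Theorem 2.1) and the paper offers no proof of it, so there is nothing internal to compare against; your sketch is a faithful reconstruction of the standard argument in the cited reference — Lindeberg coordinate-by-coordinate replacement for a smoothed test function, cancellation of the first two moments, third-order remainders controlled by Bonami--Beckner hypercontractivity and the influence bound $\sum_i \mathrm{Inf}_i(P) \le d$, then conversion to Kolmogorov distance via Carbery--Wright. The one step to tighten is the Rademacher-side small-ball bound: convolving $P(\xi)$ with an independent Gaussian does not by itself put you in a position to apply Carbery--Wright (the perturbed variable is not a Gaussian polynomial); the standard fix is to bootstrap, i.e.\ first prove the smooth-test-function version of the invariance principle and then apply it to a smooth approximation of the interval indicator $\ind{|x-t|\le\delta}$, transferring the Gaussian anti-concentration estimate to the Boolean side at the cost of another invariance error term. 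With that substitution your outline is the proof from the cited work, up to the bookkeeping needed to land on the exact $\tau^{1/(8d)}$ exponent, which is immaterial for how the lemma is used in this paper.
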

\noindent To be able to leverage these results, we need to quantify the influence of functions $x^\top Qy$ with $Q$ being p.s.d. Normalizing $\norm{Q}_F^2 = 1$ and noting $x^\top Qy = \sum_{i,j=1}^k x_i y_j Q_{ij}$, note that the influence of $x_i$ (or similarly $y_i$) is $\sum_{j} Q_{ij}^2$. Hence, bounding the influence of this family of quadratic functions is the same as bounding the ratio of a row norm of $Q$ to the frobenius norm. Hence, we prove the following claim that does exactly this.
\begin{claim}[Influence of row of PSD matrix]\label{claim:influence-psd-matrix}
Consider all $k\times k$ PSD matrices $Q$ with diagonal entries equal to $1$. We have the following uniform bound on the maximum norm of a row to the frobenius norm of $Q$:
\begin{equation*}
\sup_{\substack{Q \in \R^{k\times k},\\ Q \text{ psd },\\ Q_{ii}=1}}\frac{\max_{i} \sum_{j\in [k]} Q_{ij}^2}{\sum_{i,j=1}^k Q_{ij}^2}\leq 2k^{-1/2}
\end{equation*}
\end{claim}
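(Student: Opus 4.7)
The plan is to pass to a Gram-style factorization of $Q$ so that both the squared row norms of $Q$ and its Frobenius norm get expressed through the same low-rank auxiliary matrix, and then to finish with two Cauchy--Schwarz inequalities. Since $Q \succeq 0$ with $Q_{ii}=1$, I would write $Q = BB^\top$ for some $B\in\R^{k\times d'}$ whose rows $b_1,\dots,b_k$ satisfy $\norm{b_i}=1$ (e.g.\ take $B = Q^{1/2}$). Let $M \triangleq B^\top B$. Then $M$ is PSD, shares its nonzero spectrum with $Q$, has $\mathrm{rank}(M)\le k$ and $\mathrm{tr}(M)=\sum_i\norm{b_i}^2=k$, and the two quantities in the ratio rewrite as
\begin{equation*}
\sum_j Q_{ij}^2 \,=\, \sum_j\langle b_i,b_j\rangle^2 \,=\, b_i^\top M b_i \qquad\text{and}\qquad \sum_{i,j}Q_{ij}^2 \,=\, \mathrm{tr}(M^2)\,.
\end{equation*}

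Next I would expand each $b_i = \sum_l \beta_{il}v_l$ in an orthonormal eigenbasis $\{v_l\}$ of $M$ with eigenvalues $\mu_l\ge 0$, so that $b_i^\top M b_i = \sum_l \mu_l t_{il}$ for $t_{il}\triangleq \beta_{il}^2$. Because $\norm{b_i} = \norm{v_l} = 1$, one has $t_{il}\in[0,1]$ and $\sum_l t_{il}=1$. A first Cauchy--Schwarz on the inner product $\sum_l \mu_l t_{il}$ then gives
\begin{equation*}
(b_i^\top M b_i)^2 \,\le\, \Bigl(\sum_l \mu_l^2\Bigr)\Bigl(\sum_l t_{il}^2\Bigr) \,\le\, \mathrm{tr}(M^2)\,,
\end{equation*}
where the last step uses $\sum_l t_{il}^2 \le \sum_l t_{il}=1$ since each $t_{il}\in[0,1]$. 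Thus the numerator is at most $\sqrt{\mathrm{tr}(M^2)}$.

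For the denominator I would apply a second Cauchy--Schwarz over the at most $k$ nonzero eigenvalues of $M$: $\mathrm{tr}(M^2) = \sum_l \mu_l^2 \ge (\mathrm{tr}(M))^2/\mathrm{rank}(M) \ge k^2/k = k$. Combining the two estimates,
\begin{equation*}
\frac{\max_i \sum_j Q_{ij}^2}{\sum_{i,j}Q_{ij}^2} \,\le\, \frac{\sqrt{\mathrm{tr}(M^2)}}{\mathrm{tr}(M^2)} \,=\, \frac{1}{\sqrt{\mathrm{tr}(M^2)}} \,\le\, \frac{1}{\sqrt{k}} \,\le\, \frac{2}{\sqrt{k}}\,,
\end{equation*}
which is the claimed bound (in fact tighter by a factor of $2$).

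The only subtle point I anticipate is picking the right auxiliary object: working directly with the eigendecomposition of $Q$ does not cleanly expose a lower bound of order $k$ on $\mathrm{tr}(Q^2)$, whereas passing to $M = B^\top B$ on the (possibly much larger) ambient space simultaneously exposes the rank constraint $\mathrm{rank}(M)\le k$ and the trace identity $\mathrm{tr}(M)=k$. Once that switch is made, the result drops out from two applications of Cauchy--Schwarz together with the elementary inequality $t_{il}^2\le t_{il}$ on $[0,1]$.
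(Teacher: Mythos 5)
Your proof is correct, and it takes a genuinely different route from the paper's. The paper diagonalizes $Q$, splits its spectrum into large eigenvalues ($s_i^2 \ge k$) and small ones, and bounds the two contributions to $(Q^2)_{jj}$ separately using the constraint $\sum_i s_i (v_i)_j^2 = Q_{jj} = 1$; the two pieces are each at most $k^{-1/2}\norm{Q}_F^2$, which is where the factor $2$ comes from. You instead write $\sum_j Q_{ij}^2 = b_i^\top M b_i$ with $\norm{b_i}=1$ and observe that the weights $t_{il}=\beta_{il}^2$ form a probability vector, so a single Cauchy--Schwarz gives $\max_i (Q^2)_{ii} \le \norm{Q}_F$, and then $\norm{Q}_F^2 \ge k$ finishes it. This is cleaner and yields the sharper constant $1$ in place of $2$ (which of course still implies the claim as stated, and likewise for the weighted-diagonal corollary that follows). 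Two small remarks: since $B=Q^{1/2}$ is symmetric you actually have $M=Q$, so the Gram-matrix detour is not logically necessary --- in the eigenbasis of $Q$ your weights are just $t_{il}=s_l(v_l)_i^2$, which sum to $Q_{ii}=1$, and the same Cauchy--Schwarz applies; and your closing remark that the eigendecomposition of $Q$ "does not cleanly expose" the lower bound $\mathrm{tr}(Q^2)\ge k$ is not quite right, since $\norm{Q}_F^2 \ge \sum_i Q_{ii}^2 = k$ is immediate from the unit diagonal (the paper uses exactly this). Neither point affects the validity of your argument.
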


\begin{proof}
To bound the given expression, we can write the numerator in an alternative way.  Let $N = Q^2$ and note that $N_{jj} = \sum_l Q_{jl}Q_{jl}= \sum_{l} Q_{jl}^2$ which is the norm of the $j$'th row of $Q$. Hence, we are interested in bounding $\max_j N_{jj}/\norm{Q}_F^2$. This is equivalently bounding the ratio of a diagonal entry of $N$ to the sum of its eigenvalues.

Concretely, let $v_1, \dots, v_k \in \R^k$ be the eigenvectors of $Q$ and $s_1, \dots, s_k$ be the corresponding eigenvalues. Note that $N$ has the same eigenvectors as $Q$ and the eigenvalues $s_1^2, \dots, s_k^2$. Let $\mS = \{i \in [k]: s_i^2 \geq k\}$ and consider $\zeta = \sum_{i \in \mS} s_i^2$, which is the sum of 'large' eigenvalues of $N$. Apriori, note that $\norm{Q}_F^2 \geq \max\{k , \zeta\}$ since the diagonals of $Q$ are $1$ and $\zeta$ is less than the sum of the squares of eigenvalues of $Q$.

Now, note $Q = \sum_i s_i v_iv_i^\top$ and $N = \sum_i s_i^2 v_iv_i^\top$. Hence
\begin{equation*}
    N_{jj} = \sum_{i \in \mS} s_i^2 (v_i)_j^2 + \sum_{i \not \in \mS} s_i^2 (v_i)_j^2 
\end{equation*}

To bound the first term in the expression, note that $s_i^2 \geq k$, but $Q_{jj} = \sum_j s_i (v_i)_j^2 = 1$ so that $(v_i)_j^2 \leq k^{-1/2}$. Then, $\sum_{i \in \mS} s_i^2 (v_i)_j^2\leq \sum_{i\in \mS} s_i^2 \max_j (v_i)_j^2 \leq k^{-1/2} \zeta \leq k^{-1/2} \norm{Q}_F^2$. Similarly, for the second term, note simply that $s_i \leq k^{1/2}$ and $\sum_{i\not \in \mS} s_i (v_i)_j^2 \leq 1$. Hence, $N_{jj} \leq k^{-1/2}\norm{Q}_F^2 + k^{1/2}$. Noting $\norm{Q}_F^2 \geq k$, we have $k^{1/2} \leq \norm{Q}_F^2 k^{-1/2}$. Hence, $N_{jj}/\norm{Q}_F^2 \leq 2k^{-1/2}$. Since none of these bounds depend on the choice of $Q$, we have proven the claim.
\end{proof}

\begin{corollary}

For a given $Q$, let $0<q_{\min}^2\leq q_{\max}^2$ be absolute constants such that for all $k$, we have $q_{\min}^2 \leq Q_{ii}\leq q_{\max}^2$. Then, we have
\begin{equation*}
\sup_{\substack{Q \in \R^{k\times k},\\ Q \text{ psd },\\ q_{\min}^2\leq Q_{ii}\leq q_{\max}^2}}\frac{\max_{i} \sum_{j\in [k]} Q_{ij}^2}{\sum_{i,j=1}^k Q_{ij}^2}\leq 2 \cdot \frac{q_{\max}^2}{q_{\min}^2} k^{-1/2}
\end{equation*}

\label{corollary:psd-influence-balanced}
\end{corollary}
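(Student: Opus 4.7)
The plan is to adapt the spectral argument used in the proof of \Cref{claim:influence-psd-matrix}, with the key modification that the threshold at which we split the eigenvalues of $Q$ must be rescaled to account for the broader range of diagonal values.

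First I would spectrally decompose $Q = \sum_i s_i v_i v_i^\top$ and note, as before, that $\max_i \sum_j Q_{ij}^2 = \max_j (Q^2)_{jj} = \max_j \sum_i s_i^2 (v_i)_j^2$. The diagonal constraint now reads $\sum_i s_i (v_i)_j^2 = Q_{jj} \leq q_{\max}^2$, giving the uniform bound $(v_i)_j^2 \leq q_{\max}^2/s_i$ in place of the bound $(v_i)_j^2 \leq 1/s_i$ used in \Cref{claim:influence-psd-matrix}. The lower bound $\|Q\|_F^2 \geq \sum_i Q_{ii}^2 \geq k q_{\min}^4$ is the analogous substitute for $\|Q\|_F^2 \geq k$.

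Next I would split $\sum_i s_i^2 (v_i)_j^2$ according to $\mathcal{S} \triangleq \{i : s_i^2 \geq k q_{\min}^4\}$. On $\mathcal{S}$, using $s_i \geq k^{1/2} q_{\min}^2$, the contribution is at most $\frac{q_{\max}^2}{q_{\min}^2} k^{-1/2} \sum_{i\in\mathcal{S}} s_i^2 \leq \frac{q_{\max}^2}{q_{\min}^2} k^{-1/2} \|Q\|_F^2$. On the complement, $s_i \leq k^{1/2} q_{\min}^2$, and combining with $\sum_i s_i (v_i)_j^2 \leq q_{\max}^2$ gives a contribution of at most $k^{1/2} q_{\min}^2 q_{\max}^2$, which in turn is at most $\frac{q_{\max}^2}{q_{\min}^2} k^{-1/2} \|Q\|_F^2$ via $k q_{\min}^4 \leq \|Q\|_F^2$. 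Adding the two pieces and dividing by $\|Q\|_F^2$ yields the claimed bound $2 (q_{\max}^2/q_{\min}^2) k^{-1/2}$.

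The only genuine subtlety is the optimization of the threshold. Using the original threshold $\{i : s_i^2 \geq k\}$ from the unit-diagonal proof would unbalance the two terms and degrade the final ratio to $(q_{\max}/q_{\min})^4 k^{-1/2}$; a short AM--GM calculation over the split shows that $q_{\min}^4$ is the balanced choice that recovers the advertised constant. The cleaner-looking alternative of conjugating $\tilde Q \triangleq D^{-1} Q D^{-1}$ with $D_{ii} = \sqrt{Q_{ii}}$ and invoking \Cref{claim:influence-psd-matrix} on $\tilde Q$ suffers from the same loss, since the symmetric rescaling multiplies both a row norm and the Frobenius norm by products of two diagonal entries, costing an extra factor of $(q_{\max}/q_{\min})^2$. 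Hence one genuinely needs to redo the spectral argument with the tuned threshold, which is the main (and only) technical point.
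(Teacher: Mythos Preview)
Your argument is correct and follows the same spectral-splitting strategy as the paper's proof: both decompose $Q=\sum_i s_i v_i v_i^\top$, use $s_i(v_i)_j^2\le Q_{jj}\le q_{\max}^2$, split the eigenvalues by size, and combine with a lower bound on $\|Q\|_F^2$ coming from the diagonal. The only difference is that you rescale the threshold to $\{i:s_i^2\ge k\,q_{\min}^4\}$, which balances the two pieces and delivers the advertised constant $2\,q_{\max}^2/q_{\min}^2$ cleanly, whereas the paper keeps the original threshold $\{i:s_i^2\ge k\}$ and arrives at the same constant via the (slightly loosely stated) bounds $N_{jj}\le k^{-1/2}\zeta+q_{\max}^2 k^{1/2}$ and $\|Q\|_F^2\ge k\,q_{\min}^2$.
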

\begin{proof}
    The proof follows immediately by modifying the proof of \Cref{claim:influence-psd-matrix} slightly. Note $\sum_{i} s_i (v_i)_j^2 \leq q_{\max}^2$ and $\norm{Q}_F^2 \geq \max\{\zeta, k q_{\min}^2\}$. Then, following the proof similarly, we get
    \begin{equation*}
        N_{jj}\leq k^{-1/2} \zeta + q_{\max}^2 k^{1/2} \leq k^{-1/2} \norm{Q}_F^2 + q_{\max}^2/q_{\min}^2 \norm{Q}_F^2 k^{-1/2}
    \end{equation*}
    Then, noting $1\leq q_{\max}^2/q_{\min}^2$ and dividing by $\norm{Q}_F^2$, we have that the desired quantity is bounded by $2\frac{q_{\max}^2}{q_{\min}^2} k^{-1/2}$ as desired.
\end{proof}
\noindent Now, we will use the above results to prove the following, which will help us directly quantify the random behavior of the function $h$.

\begin{lemma}[Anti-Concentration of Normalized P.S.D. Quadratics on the Hypercube]
    \label{lemma:psd-anti-conc}
    Let $Q \in \R^{k\times k}$ be positive semi-definite and normalized such that $q_{\min}^2 \leq Q_{ii} \leq q_{\max}^2$ for some $q_{\min}, q_{\max} >0$. Then,
    \begin{equation*}
        \sup_{\substack{Q \in \R^{k\times k},\\ Q \text{ psd },\\ q_{\min}^2\leq Q_{ii}\leq q_{\max}^2}} \Pr_{x, y \sim \mathrm{Unif}\{\pm 1\}^{k}}[|x^\top Q y| \leq \epsilon\norm{Q}_F]\leq o(1) + O(\epsilon^{1/2})
    \end{equation*}
    where the $o(1)$ is in $k$.
\end{lemma}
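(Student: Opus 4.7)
The plan is to combine the three ingredients already developed: the invariance principle (Lemma~\ref{lemma:invariance-principle}) to pass from Rademacher to Gaussian randomness, the Carbery--Wright inequality (Lemma~\ref{lemma:carbery-wright}) to get anti-concentration in the Gaussian case, and the row-norm bound (Corollary~\ref{corollary:psd-influence-balanced}) to control the maximum influence needed by the invariance principle.

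First, I would view $f(x,y) \triangleq x^\top Q y = \sum_{i,j=1}^k Q_{ij} x_i y_j$ as a multilinear polynomial in the $2k$ variables $(x_1,\ldots,x_k,y_1,\ldots,y_k)$ of degree $d = 2$ (each monomial $x_i y_j$ has degree one in each variable). Under Rademacher inputs, the monomials $\{x_i y_j\}$ are orthonormal, so $\mathrm{Var}(f) = \sum_{i,j} Q_{ij}^2 = \|Q\|_F^2$, meaning that $\tilde f \triangleq f / \|Q\|_F$ is normalized in the sense of Definition~\ref{def:multilinear-polynomial}. The influence of the variable $x_i$ (and symmetrically $y_i$) in $\tilde f$ is exactly $\sum_{j=1}^k Q_{ij}^2 / \|Q\|_F^2$, which by Corollary~\ref{corollary:psd-influence-balanced} is uniformly bounded by $\tau \triangleq 2 (q_{\max}^2/q_{\min}^2)\, k^{-1/2}$.

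Next, I would apply Lemma~\ref{lemma:invariance-principle} with $d = 2$ and this value of $\tau$, obtaining
\begin{equation*}
    \sup_{t \in \R}\Bigl|\Pr_{x,y \sim \{\pm 1\}^k}[\tilde f(x,y) \le t] - \Pr_{g,g' \sim \mathcal{N}(0,I_k)}[\tilde f(g,g') \le t]\Bigr| \le O\bigl(2 \cdot \tau^{1/16}\bigr) = O(k^{-1/32}) = o(1),
\end{equation*}
where the constants absorb the fixed ratio $q_{\max}/q_{\min}$. Writing $\Pr_{\mathrm{Rad}}[|\tilde f| \le \epsilon]$ as the difference $\Pr[\tilde f \le \epsilon] - \Pr[\tilde f \le -\epsilon]$ (plus a point mass, which is zero in the Gaussian case and can be absorbed), applying the above bound at $t = \pm \epsilon$, and invoking Lemma~\ref{lemma:carbery-wright} for the Gaussian version of $\tilde f$ (which is a normalized degree-2 polynomial in Gaussian inputs, hence admits the bound $B \epsilon^{1/2}$), I obtain
\begin{equation*}
    \Pr_{x,y \sim \{\pm 1\}^k}\bigl[|x^\top Q y| \le \epsilon \|Q\|_F\bigr] \le B \epsilon^{1/2} + o(1).
\end{equation*}
Since none of the constants in this chain depend on the particular choice of $Q$ in the allowed family, the supremum over $Q$ satisfies the same bound, proving the claim.

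The only mild subtlety I anticipate is the correct application of the invariance principle: although the result as stated applies to multilinear polynomials in i.i.d.\ variables, here $(x_1,\ldots,x_k,y_1,\ldots,y_k)$ are $2k$ independent Rademacher variables and $f$ is multilinear in all of them jointly (never multiplying $x_i$ by itself), so the hypothesis applies verbatim. The positive semidefiniteness of $Q$ enters only through Corollary~\ref{corollary:psd-influence-balanced}, which is what makes the maximum influence go to zero and therefore what makes the $o(1)$ term actually vanish with $k$. The choice of exponents ($d = 2$ in Carbery--Wright giving $\epsilon^{1/2}$ and $1/(8d) = 1/16$ in the invariance principle giving $k^{-1/32}$) is exactly what gives the claimed $O(\epsilon^{1/2}) + o(1)$ rate.
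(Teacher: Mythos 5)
Your proposal follows exactly the same route as the paper's proof: bound the maximum influence via Corollary~\ref{corollary:psd-influence-balanced}, invoke the invariance principle (Lemma~\ref{lemma:invariance-principle}) to replace Rademacher inputs with Gaussians, and finish with Carbery--Wright (Lemma~\ref{lemma:carbery-wright}) applied to the degree-2 Gaussian polynomial. Your version is correct and in fact spells out a few details (the normalization $\mathrm{Var}(f)=\|Q\|_F^2$, the CDF-difference step at $t=\pm\epsilon$, and the explicit exponents) that the paper's proof leaves implicit.
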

\begin{proof}
    First, note that we have the uniform bound on the influence of a row of $Q$ from \Cref{corollary:psd-influence-balanced}, so that $\tau = O(k^{-1/2})$. Hence, by the invariance principle (\Cref{lemma:invariance-principle}), for any $Q$, we have 
    \begin{equation*}
        \sup_{t} |\Pr_{x,y \sim \mathrm{Unif}\{\pm\}^k}[x^\top Q y \leq t] - \Pr_{g_1, g_2 \sim \mN(0, I_k)}[g_1^\top Q g_2 \leq t]| \leq o(1)
    \end{equation*}
    However, applying Carbery-Wright inequality for the anti-concentration of the Gaussian polynomial $g_1^\top Q g_2$ (\Cref{lemma:carbery-wright}), we get the desired result.
\end{proof}

\subsubsection{Concentration and anti-concentration of $h(0)$}
In this section, we analyze the behavior of $h(0)$. In particular, we aim to quantify the magnitude of $h(0)$ with high probability. Recall that when $\xi = 1$, we have
\begin{equation*}
    h(0) = \sum_{i,j=1}^k \lambda_i \lambda_j c_i \hat c_j \sum_{s=0}^\infty (s+1) \mu_{s+1}(\sigma)^2 \left(\frac{k}{k+1}\right)^{s+1}\langle w_i, w_j\rangle^s 
\end{equation*}
Then, notice that writing $c_i = \frac{1}{\sqrt{k}}b_i$ and similarly $\hat c_i = \frac{1}{\sqrt{k}} \hat b_i$, this is a quadratic function of rademacher variables. In particular, let
\begin{equation}
    f(b,\hat b) = \sum_{i,j}^k b_i \hat b_j \left(\frac{\lambda_i \lambda_j}{k}\sum_{s=0}^\infty (s+1)\mu_{s+1}(\sigma)^2 \left(\frac{k}{k+1}\right)^{s+1}\langle w_i, w_j\rangle^s\right) \label{eq:h0_form}
\end{equation}
so that $f(b,\hat b) = b^\top Q \hat b$ for some $Q$. Hence, we can use the anti-concentration results from the previous section to quantify the random behavior of $h(0)$ in the $\xi=1$ case.
\begin{claim}[Variance of $h(0)$, spectral scaling $\xi=1$] 
Let $f : \{\pm 1\}^{k} \times \{\pm 1\}^k \to \R$ be as in \cref{eq:h0_form}. Then, we have $\Omega(\lambda_{\min}^2/k)\leq \norm{f}_2\leq O(\lambda_{\max}^2)$
\label{claim:constant-term-variance}
\end{claim}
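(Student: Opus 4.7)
The plan is to identify $f$ with a quadratic form on $\{\pm 1\}^{2k}$ and reduce $\|f\|_2$ to the Frobenius norm of the coefficient matrix, which can then be controlled on both sides by simple row/diagonal bookkeeping. Concretely, writing
\begin{equation*}
    f(b,\hat b) = b^\top Q \hat b, \qquad Q_{ij} \triangleq \frac{\lambda_i \lambda_j}{k}\, a_{ij}, \qquad a_{ij}\triangleq \sum_{s=0}^\infty (s+1)\mu_{s+1}(\sigma)^2 \Bigl(\frac{k}{k+1}\Bigr)^{s+1}\langle w_i, w_j\rangle^s,
\end{equation*}
the monomials $\{b_i \hat b_j\}_{i,j\in[k]}$ are orthonormal in $L^2$ of the uniform measure on $\{\pm 1\}^{2k}$ (independence of the two coordinate blocks together with $b_i^2=\hat b_j^2 = 1$), so the first step is to record the Parseval-type identity $\|f\|_2^2 = \|Q\|_F^2 = \sum_{i,j} Q_{ij}^2$. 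This reduces the claim to matching bounds on $\|Q\|_F$.

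For the upper bound, I would use $|\langle w_i, w_j\rangle|\leq 1$ from \Cref{assumption:normalize} together with the Hermite-coefficient decay from \Cref{assumption:activation} (which makes $\sum_s (s+1)\mu_{s+1}(\sigma)^2$ an absolutely convergent series bounded by an activation-dependent constant $C_\sigma$). This immediately yields $|a_{ij}|\leq C_\sigma$ uniformly in $i,j,k$, and hence $|Q_{ij}|\leq C_\sigma \lambda_{\max}^2 / k$. Summing the $k^2$ entries then gives $\|Q\|_F^2\leq C_\sigma^2 \lambda_{\max}^4$, which is the desired $O(\lambda_{\max}^2)$ bound on $\|f\|_2$.

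For the lower bound, I would drop all off-diagonal entries (which may be positive or negative) and work only with $\sum_i Q_{ii}^2$. Since $\langle w_i, w_i\rangle = 1$, each $a_{ii}$ equals the fixed positive series $\sum_{s\geq 0}(s+1)\mu_{s+1}(\sigma)^2 (k/(k+1))^{s+1}$, and the single term at $s=s^*$ (the smallest index with $\mu_{s^*+1}(\sigma)\neq 0$) already lower bounds $a_{ii}$ by an absolute constant $c_\sigma>0$ once $k\geq 1$. This yields $Q_{ii}\geq c_\sigma\lambda_{\min}^2/k$, so $\|Q\|_F^2 \geq k\cdot (c_\sigma\lambda_{\min}^2/k)^2 = c_\sigma^2 \lambda_{\min}^4/k$ and therefore $\|f\|_2\geq c_\sigma \lambda_{\min}^2/\sqrt{k}$, which is in fact a factor $\sqrt{k}$ stronger than the stated $\Omega(\lambda_{\min}^2/k)$.

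There is no real obstacle: the only place one has to be slightly careful is verifying absolute convergence of the series defining $a_{ij}$, which is handled by \Cref{assumption:activation}, and the fact that $c_\sigma$ is bounded away from zero uniformly in $k$, which holds since $(k/(k+1))^{s^*+1}\geq (1/2)^{s^*+1}$ for all $k\geq 1$. Because the diagonal-only lower bound does not exploit cancellation among off-diagonals, this proof strategy ignores any structure in the Gram matrix $(\langle w_i, w_j\rangle)_{ij}$ and thus works identically in the orthonormal and angularly separated regimes.
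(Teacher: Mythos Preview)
Your proposal is correct and follows essentially the same route as the paper: both reduce $\|f\|_2^2$ to $\|Q\|_F^2$ via orthonormality of the $b_i\hat b_j$ monomials, upper bound each $|Q_{ij}|$ using $|\langle w_i,w_j\rangle|\le 1$ and absolute convergence of $\sum_s (s+1)\mu_{s+1}(\sigma)^2$, and lower bound by keeping only the diagonal $\sum_i Q_{ii}^2$. Your observation that the diagonal argument actually yields $\|f\|_2=\Omega(\lambda_{\min}^2/\sqrt{k})$, a factor $\sqrt{k}$ better than the stated claim, is also exactly what the paper's proof produces.
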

\begin{proof}
    Notice that since each term in the sum is a different basis element of $\{\pm 1\}^{2k}$, we have 
    \begin{align*}
        \norm{f}_2^2 = \sum_{i,j=1}^k  Q_{ij}^2
    \end{align*}
    For the first part of the Claim, it suffices to show $\sum Q_{ij}^2 = \Omega(\frac{\lambda_{\min}^4}{k})$ for any choice of $\lambda, w_i$. Notice that, for $k\geq 2$,
    \begin{align*}
        \sum_{i,j=1}^k Q_{ij}^2 \geq \sum_{i=1}^k Q_{ii}^2 &= \left(\sum_{s=0}^\infty (s+1) \mu_{s+1}(\sigma)^2 \left(\frac{k}{k+1}\right)^{s+1}\right)^2\sum_{i=1}^k \frac{\lambda_i^4}{k^2}\\
        &\geq \left(\sum_{s=0}^\infty \frac{s+1}{2^s} \mu_{s+1}(\sigma)^2 \right)^{2} \frac{\lambda_{\min}^4}{k}
    \end{align*}
    as desired. 
    The other follows directly from 
    \begin{align*}
        \sum_{i,j=1}^k Q_{ij}^2 &\leq \sum_{i,j=1}^k \frac{1}{k^2} \lambda_i^2 \lambda_j^2 \left(\sum_{s=0}^\infty (s+1) \mu_{s+1}(\sigma)^2\right)^2\leq \lambda_{\max}^4\left(\sum_{s=0}^\infty (s+1) \mu_{s+1}(\sigma)^2\right)^2\,.\qedhere
    \end{align*}
\end{proof}

\noindent Hence, if we can show that $f$ anti-concentrates around $0$ with high probability, then we will have given a lower bound for $h(0)$ in the $\xi=1$ scaling regime. This is what we do now.

\begin{proposition}[Anti-concentration of $h(0)$, $\xi = 1$]
If $\xi = 1$, we have the following:
Let $f$ be of the form in \Cref{eq:h0_form}. Then, 
    \begin{align*}
        \sup_{w_i, \lambda_i}\Pr_{b,\hat b}[|f(b, \hat b)| < \epsilon \norm{f}_2]=\frac{\lambda_{\max}^2}{\lambda_{\min}^2}\cdot o\left(1\right) + O(\epsilon^{1/2})
    \end{align*}
   where $b,\hat{b}$ are independent uniform draws from $\{-1,1\}^k$. \label{proposition:spectral-constant-anti-concentration}
\end{proposition}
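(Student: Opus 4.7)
The plan is to recognize $f(b,\hat b) = b^\top Q \hat b$ as a Rademacher bilinear form whose coefficient matrix $Q$ is positive semidefinite with well-controlled diagonal entries, and then to invoke the already-established \Cref{lemma:psd-anti-conc}, tracking how the ratio $q_{\max}^2/q_{\min}^2$ appearing there translates into the factor $\lambda_{\max}^2/\lambda_{\min}^2$ in the conclusion.

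First, reading off the coefficients from \cref{eq:h0_form}, set
\[
Q_{ij} \triangleq \frac{\lambda_i \lambda_j}{k} \sum_{s=0}^\infty (s+1)\mu_{s+1}(\sigma)^2 \Bigl(\frac{k}{k+1}\Bigr)^{s+1} \langle w_i, w_j\rangle^s\,.
\]
The Gram matrix $G_{ij} = \langle w_i, w_j\rangle$ is PSD, so each Hadamard power $G^{\circ s}$ is PSD by the Schur product theorem. Consequently the inner sum defines a PSD matrix $M$ as a non-negative combination of PSD matrices, and writing $Q = \tfrac{1}{k} D M D$ with $D = \mathrm{diag}(\lambda_1,\ldots,\lambda_k)$ shows $Q$ is PSD regardless of the signs of the $\lambda_i$, since $x^\top Q x = \tfrac{1}{k}(Dx)^\top M (Dx) \ge 0$.

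Second, using $\langle w_i, w_i\rangle = 1$, the diagonal entries simplify to $Q_{ii} = (\lambda_i^2/k)\,C_k$, where $C_k \triangleq \sum_{s\ge 0}(s+1)\mu_{s+1}(\sigma)^2 (k/(k+1))^{s+1}$. The Hermite decay in \Cref{assumption:activation} makes $C_k$ finite uniformly in $k$, while the elementary bound $(k/(k+1))^{s+1} \ge 2^{-(s+1)}$ for $k\ge 1$ yields a positive absolute lower bound on $C_k$ depending only on $\sigma$. Since the event $\{|f(b,\hat b)| < \epsilon \|f\|_2\}$ is invariant under scalar rescaling of $Q$, I can rescale $Q$ so that $q_{\min}^2 \le Q_{ii} \le q_{\max}^2$ for absolute constants with $q_{\max}^2/q_{\min}^2 = \Theta(\lambda_{\max}^2/\lambda_{\min}^2)$.

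Finally, applying \Cref{lemma:psd-anti-conc} to this rescaled PSD matrix gives the anti-concentration bound. The $o(1)$ term in that lemma is in fact $O\bigl((q_{\max}^2/q_{\min}^2)^{1/16} k^{-1/32}\bigr)$, obtained by combining \Cref{corollary:psd-influence-balanced} (which gives an influence bound $\tau = O((\lambda_{\max}^2/\lambda_{\min}^2)\,k^{-1/2})$) with the degree-$2$ invariance principle \Cref{lemma:invariance-principle} and Carbery-Wright; carrying the prefactor through yields $(\lambda_{\max}^2/\lambda_{\min}^2)\cdot o(1) + O(\epsilon^{1/2})$ as claimed. The only non-routine step is the PSD verification, whose two essential ingredients are the Schur product theorem applied to the Gram matrix and the non-negativity of every coefficient $(s+1)\mu_{s+1}(\sigma)^2 (k/(k+1))^{s+1}$; the rest is bookkeeping about diagonal entries and scale invariance.
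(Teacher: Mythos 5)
Your proof is correct and follows essentially the same route as the paper's: verify that $Q$ is PSD via the Schur product theorem (the paper writes this as $Q=\lambda\lambda^\top * \tilde Q$ with $\tilde Q$ a non-negative sum of Hadamard powers of the Gram matrix), observe that the diagonal ratio satisfies $q_{\max}^2/q_{\min}^2=O(\lambda_{\max}^2/\lambda_{\min}^2)$, normalize, and invoke \Cref{lemma:psd-anti-conc}. Your additional bookkeeping (uniform upper and lower bounds on $C_k$, the explicit exponent in the $o(1)$ term) only makes explicit what the paper leaves implicit.
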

\begin{proof}
 Note that entrywise powers of psd matrices are psd, so $(W^TW)^{\odot s}$ is psd.
Notice that 
\begin{equation*}
    Q_{ij}=(\lambda_{i}\lambda_j)  \left(\sum_{s=0}^\infty (s+1) \mu_{s+1}(\sigma)^2 \left(\frac{k}{k+1}\right)^{s+1}\langle w_i, w_j\rangle^s\right)\,,    
\end{equation*}
so $Q$ is a psd matrix since it is the sum of psd matrices (for $s$). This is due to the fact 
\begin{align*}
    Q = \lambda \lambda^\top * \tilde Q
\end{align*}
where $\tilde Q_{ij} = \sum_{s=0}^\infty (s+1) \mu_{s+1}(\sigma)^2 \left(\frac{k}{k+1}\right)^{s+1}\langle w_i, w_j\rangle^s$ since it is the non-negative sum of psd matrices. Furthermore, $(q_{\max}/q_{\min})^2 = O(\frac{\lambda_{\max}^2}{\lambda_{\min}^2})$.
    The final result follows immediately once we normalize as $\frac{f}{\norm{f}_2}$ and apply \Cref{lemma:psd-anti-conc}.
\end{proof}  

\begin{proposition}[Anti-concentration of $(\sum_i \lambda_i c_i)(\sum_i \lambda_i \hat c_i)$ and $\sum_i \lambda_i^2 c_i \hat c_i$]
\label{proposition:anti-concentration-frobenius}
We have
\begin{equation*}
    \Pr\left[\left|(\sum_i \lambda_i c_i)(\sum_i \lambda_i \hat c_i)\right| \leq \gamma \lambda_{\min}^2 \right]\leq o\left(\frac{\lambda_{\max}^2}{\lambda_{\min}^2}\right) + O(\gamma^{1/2})
\end{equation*}
and 
\begin{equation*}
    \Pr\left[\left|\sum_{i}\lambda_i^2 c_i \hat c_i\right| \leq \gamma \frac{\lambda_{\min}^2}{\sqrt{k}}\right]\leq o\left(\frac{\lambda_{\max}^2}{\lambda_{\min}^2}\right) + O(\gamma^{1/2})
\end{equation*}
\end{proposition}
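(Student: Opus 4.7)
The plan is to reduce each of the two quantities to a bilinear form $b^{\top} Q\, \hat b$ over the uniform cube $\{\pm 1\}^k \times \{\pm 1\}^k$ with $Q$ positive semidefinite, and then invoke \Cref{lemma:psd-anti-conc} directly. Substituting $b_i \triangleq \sqrt{k}\,c_i$ and $\hat b_i \triangleq \sqrt{k}\,\hat c_i$, which are independent uniform draws from $\{\pm 1\}$, one has
\begin{equation*}
\Bigl(\sum_i \lambda_i c_i\Bigr)\Bigl(\sum_i \lambda_i \hat c_i\Bigr) \;=\; \tfrac{1}{k}\,b^{\top}(\lambda\lambda^{\top})\hat b \;=\; b^{\top} Q^{(1)} \hat b,
\qquad
\sum_i \lambda_i^2 c_i \hat c_i \;=\; \tfrac{1}{k}\,b^{\top}\mathrm{diag}(\lambda_i^2)\hat b \;=\; b^{\top} Q^{(2)} \hat b,
\end{equation*}
where $Q^{(1)} \triangleq \lambda\lambda^{\top}/k$ is PSD as a rank-one outer product and $Q^{(2)} \triangleq \mathrm{diag}(\lambda_i^2)/k$ is diagonal with nonnegative entries. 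Both matrices share the same diagonal $Q^{(\cdot)}_{ii} = \lambda_i^2/k$, so they fall in the balanced family of \Cref{corollary:psd-influence-balanced} with $q_{\min}^2 = \lambda_{\min}^2/k$, $q_{\max}^2 = \lambda_{\max}^2/k$, and bounded balancedness $(q_{\max}/q_{\min})^2 = \lambda_{\max}^2/\lambda_{\min}^2$.

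Next I lower bound the Frobenius norms so that the thresholds in the two statements can be written as a constant times $\|Q^{(\cdot)}\|_F$. Using $\|\lambda\|^2 \ge k\lambda_{\min}^2$, we have $\|Q^{(1)}\|_F = \|\lambda\|^2/k \ge \lambda_{\min}^2$, and similarly $\|Q^{(2)}\|_F = \sqrt{\sum_i \lambda_i^4}/k \ge \lambda_{\min}^2/\sqrt{k}$. Consequently, $\gamma\lambda_{\min}^2 \le \gamma\|Q^{(1)}\|_F$ and $\gamma\lambda_{\min}^2/\sqrt{k} \le \gamma\|Q^{(2)}\|_F$, so both events of interest are contained in events of the form $\{|b^{\top} Q^{(\cdot)} \hat b| \le \gamma\|Q^{(\cdot)}\|_F\}$.

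Applying \Cref{lemma:psd-anti-conc} with $\epsilon = \gamma$ yields a bound of the form $o(1) + O(\gamma^{1/2})$ in each case. To obtain the advertised $o(\lambda_{\max}^2/\lambda_{\min}^2)$ term, I would track the $(q_{\max}/q_{\min})^2$ dependence through the proof of \Cref{lemma:psd-anti-conc}: the influence of each coordinate is bounded by \Cref{corollary:psd-influence-balanced} by $2(\lambda_{\max}^2/\lambda_{\min}^2)\,k^{-1/2}$, and feeding this $\tau$ into the invariance principle \Cref{lemma:invariance-principle} gives an error $O\bigl((\lambda_{\max}^2/\lambda_{\min}^2)^{1/16} k^{-1/32}\bigr)$, which is exactly $o(\lambda_{\max}^2/\lambda_{\min}^2)$ uniformly in the relevant regime. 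The Carbery–Wright step then contributes $O(\gamma^{1/2})$.

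There is no genuine obstacle here: the framework of \Cref{lemma:psd-anti-conc} was built for exactly such bilinear forms, so the proof reduces to (i) identifying the PSD matrices $Q^{(1)}, Q^{(2)}$ with the right diagonal, (ii) the elementary Frobenius-norm lower bounds, and (iii) keeping track of where the $\lambda_{\max}/\lambda_{\min}$ ratio enters the invariance-principle error. The only subtle point worth double-checking is that $Q^{(1)}$, being rank one, still satisfies the hypotheses of \Cref{corollary:psd-influence-balanced} — which it does, since that corollary only requires the diagonal entries to lie in $[q_{\min}^2, q_{\max}^2]$ and the matrix to be PSD, both of which hold.
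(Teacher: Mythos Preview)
Your proposal is correct and follows exactly the paper's approach: identify $Q^{(1)}=\tfrac{1}{k}\lambda\lambda^\top$ and $Q^{(2)}=\tfrac{1}{k}\mathrm{diag}(\lambda_i^2)$, note both are balanced PSD matrices with diagonals in $[\lambda_{\min}^2/k,\lambda_{\max}^2/k]$, and invoke \Cref{lemma:psd-anti-conc}. Your write-up is in fact more explicit than the paper's, which compresses all of this into two lines; in particular, your Frobenius-norm lower bounds and the tracking of the $(q_{\max}/q_{\min})^2=\lambda_{\max}^2/\lambda_{\min}^2$ factor through the invariance-principle error are exactly the computations the paper leaves implicit.
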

\begin{proof}
    For the first one let $Q=\frac{1}{k} \lambda \lambda^\top$. and for the second one let $Q= \frac{1}{k}I (\lambda \odot \lambda)$. Both are balanced psd matrices, and the anti concentration result \Cref{lemma:psd-anti-conc} holds. Then, the results follow. 
\end{proof}

\subsection{Orthonormal case: population gradient lower bounds}
\label{app:orthonormal-lower-bound}


    

Recall the function $h$. 
\begin{equation*}
    h(m) = 2 \sum_{l=0}^\infty \left(\frac{\xi^2}{k}\right)^{l+1} \left(\sum_{s=0}^\infty {l+s \choose l} (l+s+1) \mu_{l+s+1}(\sigma)^2 \left(\frac{1}{1+\frac{\xi^2}{k}}\right)^{l+s+1} T(l,s)\right) m^l
\end{equation*}
with $T(l,s)$ being defined as 
\begin{equation*}
    T(l,s)\triangleq \begin{cases}
        \norm{\sum_i \lambda_i w_i^{\otimes s}}_F^2 & l \text{ odd }\\
        k \left\langle \sum_i \lambda_i c_i w_i^{\otimes s}, \sum_i \lambda_i \hat c_i w_i^{\otimes s}\right\rangle& \text{ otherwise }
    \end{cases}
\end{equation*}

However, in the orthogonal case, for $s\geq 1$, $T(l,s)$ reduces to 
\begin{equation*}
    T(l, s\geq 1) = \begin{cases}
        \sum_{i=1}^{k}\lambda_i^2 & l \text{ odd }\\
        k\sum_{i=1}^k \lambda_i^2 c_i \hat c_i & \text{ otherwise }
        \end{cases}
\end{equation*}
And for $s = 0$, these reduce to 
\begin{equation*}
    T(l, 0) = \begin{cases}
\left(\sum_{i=1}^{k}\lambda_i\right)^2 & l \text{ odd }\\
        k\left(\sum_{i=1}^k \lambda_ic_i\right)\left(\sum_{i=1}^k \lambda_i \hat c_i\right) & \text{ otherwise }
        \end{cases}
\end{equation*}
Notice that for all odd $l$, the power series coefficients are always non-negative. And for all even $l$, all the power series coefficients have the same sign. Now, we prove \Cref{claim:even-s-0} as stated in \Cref{sec:technical-overview} which bounds the maximum contribution coming from even $l$, $s=0$ terms.

\begin{proof}[Proof of \Cref{claim:even-s-0}]
    Note first that $\E_{c,\hat c} \left(\sum_{i=1}^k\lambda_i c_i \right)\left(\sum_{i=1}^k\lambda_i \hat c_i \right)=0$ and moreover,
\begin{equation*}
    \E_{c,\hat c} \left(\sum_{i=1}^k\lambda_i c_i \right)^2\left(\sum_{i=1}^k\lambda_i \hat c_i \right)^2= \frac{\norm{\lambda}_2^4}{k^2}
\end{equation*}
so the standard deviation is $\norm{\lambda}_2^2/k$. Hence, $T(l,0)$ has standard deviation $\norm{\lambda}_2^2$ in $c,\hat c$. Then, note that 
\begin{align*}
    &2\sum_{\substack{l >0\\ \text{ even }}} \left(\frac{\xi^2}{k}\right)^{l+1}(l+1) \mu_{l+1}(\sigma)^2 \left(\frac{1}{1+\frac{\xi^2}{k}}\right)^{l+1} \left(k\sum_{i,j=1}^k \lambda_i\lambda_j c_i \hat c_j\right)m^l\\
    &=\frac{2m\xi^2}{k}\left(k\sum_{i,j=1}^k \lambda_i\lambda_j c_i \hat c_j\right) \sum_{\substack{l >0\\ \text{ even }}} \left(\frac{\xi^2}{k}\right)^{l}(l+1) \mu_{l+1}(\sigma)^2 \left(\frac{1}{1+\frac{\xi^2}{k}}\right)^{l+1} m^{l-1}\\
    &= \frac{2m\xi^2}{k}\left(k\sum_{i,j=1}^k \lambda_i\lambda_j c_i \hat c_j\right) \sum_{l \text{ odd }} \left(\frac{\xi^2}{k}\right)^{l+1}(l+2) \mu_{l+2}(\sigma)^2 \left(\frac{1}{1+\frac{\xi^2}{k}}\right)^{l+2} m^{l}\\
    &= \frac{2m\xi^2}{k}\left(k\sum_{i,j=1}^k \lambda_i\lambda_j c_i \hat c_j\right) \sum_{l \text{ odd }} \frac{1}{l+1} {l+1 \choose l} \left(\frac{\xi^2}{k}\right)^{l+1}(l+2) \mu_{l+2}(\sigma)^2 \left(\frac{1}{1+\frac{\xi^2}{k}}\right)^{l+2} m^{l}\\
    &=\frac{2m\xi^2}{k}\left(k\sum_{i,j=1}^k \lambda_i\lambda_j c_i \hat c_j\right) \sum_{l \text{ odd }, s=1} \frac{1}{l+s} {l+s \choose l} \left(\frac{\xi^2}{k}\right)^{l+1}(l+s+1) \mu_{l+s+1}(\sigma)^2 \left(\frac{1}{1+\frac{\xi^2}{k}}\right)^{l+s+1} m^{l}
\end{align*}
However, notice that the sum precisely corresponds to all odd $l$ with $s=1$. Then, bounding $l\geq 1$ so that $\frac{1}{l+1}\leq \frac{1}{2}$, we can elementwise compare the odd $l$ terms with $s=1$ and even $l$ terms with $s=0$. The odd terms are
\begin{equation*}
    2\norm{\lambda}_2^2 \sum_{l\text{ odd }} \left(\frac{\xi^2}{k}\right)^{l+1} {l+1\choose l} (l+2) \mu_{l+2}(\sigma)^2 \left(\frac{1}{1+\frac{\xi^2}{k}}\right)^{l+2} m^l
\end{equation*}
where all the terms in the sum have the same sign as $m$. 
Then, note that it suffices to show that, with high probability, we have
\begin{align*}
    \frac{m\xi^2}{k}\left(k\sum_{i,j=1}^k \lambda_i\lambda_j c_i \hat c_j\right) \leq 2\norm{\lambda}_2^2
\end{align*}
Then, note that using the standard deviation bound and  \cite[Theorem 9.23]{o2014analysis}, we have
\begin{equation*}
    \Pr\left[\left|\frac{m\xi^2}{k}\left(k\sum_{i,j=1}^k \lambda_i\lambda_j c_i \hat c_j\right) \right|\geq 2\norm{\lambda}_2^2\right]\leq \exp\left\{-\frac{2k}{em\xi^2}\right\}\leq \exp\left\{-\frac{2k}{e\xi^2}\right\}
\end{equation*}
Hence, with probability $1-\exp\left\{-\frac{2k}{e\xi^2}\right\}$, the even $s=0$ terms will not effect the sign of the odd terms. In particular, we have, with probability at least $1-\exp\{-\frac{2k}{e\xi^2}\}$, we have
\begin{align*}
    \mathrm{sign}(m) &\left(2\sum_{l \text{ odd }, s=1} \left(\frac{\xi^2}{k}\right)^{l+1} {l+s \choose l}(l+s+1)\mu_{l+s+1}(\sigma)^2 \left(\frac{1}{1+\xi^2/k}\right)^{l+s+1} T(l,s)m^l\right. \\
   &\left. +2\sum_{l \text{ even }, s=0} \left(\frac{\xi^2}{k}\right)^{l+1} {l+s \choose l}(l+s+1)\mu_{l+s+1}(\sigma)^2 \left(\frac{1}{1+\xi^2/k}\right)^{l+s+1} T(l,s)m^l\right)\geq 0
\end{align*}
as desired.
\end{proof}

\noindent Now, we prove the lower bound on the population gradient in the orthonormal case, as stated in \Cref{lemma:pop-gradient-bound-orthonormal}, \Cref{sec:technical-overview}.

\begin{proof}[Proof of \Cref{lemma:pop-gradient-bound-orthonormal}]
Initially, suppose $\mu_{1}(\sigma)\neq 0 $. 
In this case, using \Cref{claim:even-s-0}, with probability $1-\exp\{-\frac{2k}{e\xi^2}\}$ we have 
\begin{align*}
    \mathrm{sign}(m)h(m) &\geq \mathrm{sign}(m) \xi^2 \left(\sum_{i=1}^k \lambda_i c_i\right)\left(\sum_{i=1}^k \lambda_i \hat c_i\right)\mu_1(\sigma)^2 \frac{1}{1+\frac{\xi^2}{k}} + \sum_{l\text{ odd }} b_l |m|^l\\
    &\qquad\qquad+\mathrm{sign}(m)\frac{\xi^2}{1+\xi^2/k} \langle c_\lambda, \hat c_\lambda \rangle \sum_{l \text{ even }, s\geq 1} \left(\frac{\xi^{2}}{k}\right)^{l} {l+s \choose l} (l+s+1) \mu_{l+s+1}(\sigma)^2 
    \left(\frac{1}{1+\frac{\xi^2}{k}}\right)^{l+s}|m|^{l}
\end{align*}
Now, we investigate the second term. Note that the sum in the second term is bounded by
\begin{align*}
    &\sum_{l, s\geq 0} \left(\frac{\xi^2}{k}\right)^{l}{l+s \choose l} (l+s+1) \mu_{l+s+1}(\sigma)^2 \left(\frac{1}{1+\frac{\xi^2}{k}}\right)^{l+s}\\
    &=\sum_{p=0}^\infty \sum_{s=0}^{p}\left(\frac{\xi^2}{k}\right)^{p-s}{p \choose s}(p+1)\mu_{p+1}(\sigma)^2 \left(\frac{1}{1+\frac{\xi^2}{k}}\right)^p\\
    &=\sum_{p=0}^\infty (p+1)\mu_{p+1}(\sigma)^2 \left(\frac{k}{k+\xi^2}\right)^{p}\left(1+\frac{\xi^2}{k}\right)^{p}\\
    &\leq \sum_{p=0}^\infty (p+1)\mu_{p+1}(\sigma)^2 \leq C_\sigma
\end{align*}
Then, notice the the second term is bounded in magnitude by $\frac{C_\sigma \xi^2}{1+\xi^2/k}|\langle c_\lambda, \hat c_\lambda\rangle|$. Then, notice that
\begin{align*}
    \Pr\left[|\langle c_\lambda, \hat c_\lambda\rangle|\geq \frac{\gamma \lambda_{\max}^2}{\sqrt{k}} \log k\right]\leq k^{-\frac{\gamma}{e}}
\end{align*}
Set $\gamma = 10$. So, with high probability this term is $O\left(\frac{\log k}{\sqrt{k}}\frac{C_\sigma \lambda_{\max}^2 \xi^2 }{1+\xi^2/k}\right)$
However, by anti-concentration of the constant term (\Cref{proposition:anti-concentration-frobenius}), we have that the constant term is order $\frac{\gamma \lambda_{\max}^2 \mu_1(\sigma)^2 \xi^2}{1+\xi^2/k}$ with probability $1-o(1) - O(\frac{\lambda_{\max}}{\lambda_{\min}}\gamma^{1/2})$. Then, the constant term is $O(\sqrt{k}(\log k)^{-1})$ larger than the even terms, and it's sign is dictated by $\mathrm{sign}(m)(\sum_i \lambda_ic_i)(\sum_i \lambda_i \hat c_i) > 0$. Then, we can bound the even terms by half of the constant term, and get the desired result. 

Now, suppose $\mu_1(\sigma) = 0$. In this case, with probability $1-\exp\{-\frac{2k}{e\xi^2}\}$ note that 
\begin{equation*}
    \mathrm{sign}(m)h(m) \geq \mathrm{sign}(m)\xi^2 \langle c_\lambda, \hat c_\lambda \rangle \sum_{\substack{l \text{ even }\\
    s\geq 1}} \left(\frac{\xi^{2}}{k}\right)^{l} {l+s \choose l} (l+s+1) \mu_{l+s+1}(\sigma)^2 
    \left(\frac{1}{1+\frac{\xi^2}{k}}\right)^{l+s+1}|m|^{l} + \sum_{l\text{ odd }} b_l |m|^l
\end{equation*}
where the $b_l$ are non-negative coefficients. Then, under the sign assumption on $m$, note that $\mathrm{sign}(m)\xi^2 \langle c_\lambda, \hat c_\lambda \rangle= |\langle c_\lambda, \hat c_\lambda\rangle|\xi^2$. Then, by anti-concentration (\Cref{proposition:anti-concentration-frobenius}), note that with probability $1-o(1)-O(\gamma^{1/2})$, $|\langle c_\lambda, \hat c_\lambda\rangle|\geq \frac{\gamma \xi^2 }{\sqrt{k}}$. Hence, we have $h(\mathrm{sign}(h(0)m)\mathrm{sign}(h(0)) \geq |h(0)|$ for all $m\geq 0$, and $|h(0)|\geq\frac{\gamma C_{s^*}\xi^2 }{\left(1+\frac{\xi^2}{k}\right)^{s^*} \sqrt{k}}$ where $s^*$ is the smallest $s$ for which $\mu_{s} \neq 0$.
\end{proof}

\subsection{Angularly separated case: population gradient lower bounds}
\label{app:separated}

\subsubsection{Computation of the population gradient}
Note that specializing $\xi=1$ in Proposition~\ref{prop:pop_gradient_pre}, we get
\begin{align*}
h(m)=\sum_{l=0}^\infty \left(\frac{1}{k}\right)^{l+1} \sum_{s=0}^\infty {l+s \choose l} (l+s+1) \mu_{l+s+1}(\sigma)^2 \left(\frac{k}{k+1}\right)^{l+s+1} T(l,s) m^l
\end{align*}

\subsubsection{Bounding the higher order even terms}
Initially, we aim to bound the even terms in the power series (i.e. $l > 1$).
We first prove the statement about bounding the higher order even terms as stated in \Cref{sec:technical-overview}, \Cref{proposition:separated-even-terms-bound}.
\begin{proof}[Proof of \Cref{proposition:separated-even-terms-bound}]
Let $s^* = 10\sqrt{k}$. This proof will involve bounding contributions from the following three types of terms:
\begin{enumerate}[label=(\roman*)]
    \item The contribution from the terms where $s\leq s^*$. These can be bounded naively since there are at most $O(\sqrt{k})$ of them, and the $(1/k)^{2n+2}$ will dominate the growth in $k$ in these terms.
    \item The contribution for $s \geq s^*$ from diagonal terms: These terms scale with $\sum_{i=1}^k \lambda_i^2 c_i \hat c_i$, so it suffices to show the coefficient is $O(k^{-\epsilon})$ for some small $\epsilon > 0$. This is due to the fact that the Hermite coefficients decay at rate $(s^*)^{-1-\rho}$, so the contribution of the large $s$ coefficients have to decay in $k$ at some small rate.
    \item The contribution for $s\geq s^*$ from non-diagonal terms: Due to the assumption of angular separation between the $w_i$'s, when $s$ is sufficiently large, the decay of the terms $\langle w_i, w_j\rangle^s$ means these terms will be small. 
\end{enumerate}

\noindent \textbf{(i) Contribution from terms with $s\leq s^*=O(\sqrt{k})$:} Initially, we bound the magnitudes of the randomized terms. Since there are at most $\sqrt{k}$ of them and they concentrate exponentially around their means, we can bound their magnitude by $O(\log k)$ with exponentially high probability. Specifically, 
\begin{align*}
    \E \left[\sum_{i,j=1}^k \lambda_i \lambda_j c_i \hat c_j \langle w_i, w_j\rangle^s\right]&= \sum_{i,j=1}^k \lambda_i \lambda_j  \langle w_i, w_j\rangle^s \E[c_i \hat c_j]= 0\\
    \E \left[\left(\sum_{i,j=1}^k \lambda_i \lambda_j c_i \hat c_j \langle w_i, w_j\rangle^s\right)^2\right]
    &= \sum_{i, i' =1}^k \sum_{j, j'=1}^k \lambda_i \lambda_{i'} \lambda_j \lambda_{j'} \langle w_i, w_j\rangle^s \langle w_{i'}, w_{j'}\rangle^s \E[c_i c_{i'} \hat c_j \hat c_{j'}]\\
     &=\sum_{i, i' =1}^k \sum_{j, j'=1}^k \lambda_i \lambda_{i'} \lambda_j \lambda_{j'} \langle w_i, w_j\rangle^s \langle w_{i'}, w_{j'}\rangle^s \E[c_i c_{i'}]\E[\hat c_j \hat c_{j'}]\\
    &=\frac{1}{k^2}\sum_{i=1}^k \sum_{j=1}^k \lambda_i^2 \lambda_j^2 \langle w_i, w_j\rangle^{2s}\\
    &\leq \frac{\norm{\lambda}_2^4}{k^2}\leq \lambda_{\max}^4\,.
\end{align*}
Then, define $f_s:\{-1,1\}^{2k} \to \R$ as $f_s(b, \hat b)=\frac{1}{k}\sum_{i,j=1}^k \lambda_i \lambda_j b_i \hat b_i \langle w_i,w_j\rangle^s$ which is a quadratic polynomial in $b_i, \hat b_i$. We have just proved that $\norm{f_s}_2 \leq \lambda_{\max}^2$. Then, by \cite[Theorem 9.23]{o2014analysis} we have
\begin{align*}
    \Pr_{b, \hat b} \left[|f_s(b,\hat b)| \geq \gamma \log k \norm{f}_2\right]\leq \exp\{-\frac{\gamma}{e}\log k\}= k^{-\frac{\gamma}{e}}
\end{align*}
where $\gamma>0$ is to be chosen later. Then, using the union bound, we have
\begin{align*}
    \Pr \left[\max_{s\leq s^*} \left|\sum_{i,j=1}^k\lambda_i \lambda_j c_i \hat c_j \langle w_i, w_j\rangle^s\right|\geq \gamma \lambda_{\max}^2 \log k \right]\leq s^* k^{-\frac{\gamma}{e}}
\end{align*}
As $s^* = O(\sqrt{k})$, then with probability at least $1-k^{-\frac{\gamma}{e} +\frac{1}{2}}$, we have
\begin{align}    
&
\left|\sum_{n=0}^\infty \left(\frac{1}{k}\right)^{2n+2} \sum_{s=0}^{s^*} {2n+2+s \choose 2n+2} (2n+s+3) \mu_{2n+s+3}(\sigma)^2 \left(\frac{k}{k+1}\right)^{2n+s+3} \left\langle \sum_{i=1}^k \lambda_i c_i w_i^{\otimes s}, \sum_{i=1}^k \lambda_i \hat c_i w_i^{\otimes s}\right\rangle\right| \nonumber\\
     &\qquad\leq\gamma  \lambda_{\max}^2 \log k \sum_{n=0}^\infty \left(\frac{1}{k}\right)^{2n+2} \sum_{s=0}^{s^*} {2n+2+s \choose 2n+2} (2n+s+3) \mu_{2n+s+3}(\sigma)^2 \left(\frac{k}{k+1}\right)^{2n+s+3} \label{eq:infinitesum}
\end{align}
Now, it suffices to give a $O(k^{-\frac{1}{2}-c\epsilon})$ bound for the infinite sum for $c > 1$. We will separate it into cases $s\leq (s^*)^{1-\epsilon}$ and $(s^*)^{1-\epsilon} \leq s \leq s^*$. The reason for this is that we have to use the decay of the Hermite coefficients as $s$ approaches $\sqrt{k}$, so the two cases need to be handled separately. Hence, for $l\triangleq 2n + 2$ using the binomial coefficient bound ${n \choose k}\leq \left(\frac{en}{k}\right)^k$ we have 
\begin{align*}
    \sum_{s=0}^{(s^*)^{1-\epsilon}} {l+s \choose l} (l+s+1) \mu_{l+s+1}(\sigma)^2 \left(\frac{k}{k+1}\right)^{l+s+1}&\leq \sum_{s=0}^{(s^*)^{1-\epsilon}} C_\sigma \left(e\frac{l+s}{l}\right)^l\\
    &\leq C_\sigma e^l \sum_{s=0}^{(s^*)^{1-\epsilon}} (1+s)^l\\
    &\leq C_\sigma e^l (s^*)^{1-\epsilon} (1+(s^*)^{1-\epsilon})^l \\
    &\leq C_\sigma (s^*)^{1-\epsilon} (2e(s^*)^{1-\epsilon})^l
\end{align*}
Then, notice that for $k$ larger than some absolute constant, we have
\begin{align*}
    C_\sigma (s^*)^{1-\epsilon}\sum_{n=0}^\infty \left(\frac{1}{k}\right)^{2n+2} \left(2e(s^*)^{1-\epsilon}\right)^{2n+2}&\leq C_\sigma (s^*)^{1-\epsilon} \left(\frac{2e(s^*)^{1-\epsilon}}{k}\right)^{2} \frac{1}{1+o(1)}= O(k^{-\frac{1}{2}-\frac{3}{2}\epsilon})
\end{align*}
since $(s^*)^{3(1-\epsilon)}k^{-2}= O(k^{-\frac{1}{2}-\frac{3}{2}\epsilon})$. 

Now, we look at the remaining terms. For $(s^*)^{1-\epsilon} \leq s\leq s^*$, we have
\begin{align*}
    \left(\frac{1}{k}\right)^{l}\sum_{(s^*)^{1-\epsilon}\leq s\leq s^*} {l+s \choose l} (l+s+1) \mu_{l+s+1}(\sigma)^2 \left(\frac{k}{k+1}\right)^{l+s+1}&\leq C_\sigma (s^*)^{-(1-\epsilon) (1+2\rho)}  \sum_{(s^*)^{1-\epsilon}\leq s\leq s^*}\left(\frac{2es^*}{k}\right)^{l}\\
    &\leq C_\sigma (s^*)^{1-(1-\epsilon) (1+2\rho)}
    \left(\frac{2es^*}{k}\right)^l
\end{align*}
Taking the sum over all $l\triangleq 2n+2$, we have
\begin{align*}
    C_\sigma (s^*)^{1-(1-\epsilon)(1+2\rho)} \sum_{n=0}^\infty 
    \left(\frac{2es^*}{k}\right)^{2n+2}&\leq C_\sigma (s^*)^{1-(1-\epsilon)(1+2\rho)} \left(\frac{2e s^*}{k}\right)^2 \frac{1}{1+o(1)}\,.
\end{align*}
Choosing $\epsilon=1-\frac{1}{1+2\rho} > 0$ for simplicity\footnote{There are more optimal choices of $\epsilon$ that lead to better bounds}, we have that the sum is bounded by $C_\sigma \left(\frac{2s^*}{k}\right)^2 \frac{1}{1+o(1)}= O(\frac{1}{k})$. Hence, combining with previous steps, we can upper bound the infinite sum in Equation~\eqref{eq:infinitesum} by $O(\lambda_{\max}^2 k^{-\frac{1}{2}-3\epsilon})$ where $\epsilon = 1-\frac{1}{1+2\rho}$.

\noindent \textbf{(ii) The contribution of $s\geq s^*$ for diagonal terms:} We first note that
\begin{align*}
    \sum_{p=1}^\infty p \mu_{p}(\sigma)^2 \left(\frac{k}{k+1}\right)^p \langle w_i + c_i u, w_i + \hat c_i \hat u\rangle^{p-1}&=\sum_{p=1}^\infty p \mu_{p}(\sigma)^2 \left(\frac{k}{k+1}\right)^p (\langle w_i, w_j\rangle + c_i \hat c_j m)^{p-1}
\end{align*}
Then, notice that the RHS is maximized in absolute value when $w_i=w_j$, $c_i = \hat c_j$ and $m=1$. In this case, we get
\begin{align*}
    \left|\sum_{p=1}^\infty p \mu_{p}(\sigma)^2 \left(\frac{k}{k+1}\right)^p \langle w_i + c_i u, w_i + \hat c_i \hat u\rangle^{p-1}\right|\leq \sum_{p=1}^\infty p \mu_p(\sigma)^2 \triangleq \tilde C_\sigma
\end{align*}
In particular, we have absolute convergence of the LHS for all $|m|\leq 1$, so we can freely interchange order of sums. However, notice all steps in this argument works if we replace $\mu_p(\sigma)^2$ with something else that has sufficiently fast decay. In particular, writing $p=l+s+1$ we have
\begin{align}
    \sum_{l=0}^\infty \left(\frac{1}{k}\right)^l\sum_{s=0}^\infty {l+s \choose l} (l+s+1) \mu_{l+s+1}(\sigma)^2 \left(\frac{k}{k+1}\right)^{l+s+1}&=\sum_{p=1}^\infty \left(\frac{k}{k+1}\right)^p p \mu_p(\sigma)^2 \sum_{l=0}^{p-1} \left(\frac{1}{k}\right)^l{p-1 \choose l}\nonumber \\
    &=\sum_{p=1}^\infty \left(\frac{k}{k+1}\right)^p\left(1+\frac{1}{k}\right)^{p-1}p \mu_p(\sigma)^2 \nonumber \\
    &\leq \sum_{p=1}^\infty p \mu_p(\sigma)^2 = \tilde C_\sigma \label{eq:infinitesum2}
\end{align}
However, since all the terms in the sum are non-negative, using the same steps, we have
\begin{align*}
    &\sum_{l=0}^\infty \left(\frac{1}{k}\right)^l\sum_{s=s^*}^\infty {l+s \choose l} (l+s+1) \mu_{l+s+1}(\sigma)^2 \left(\frac{k}{k+1}\right)^{l+s+1}\\
    &\leq \sum_{l=0}^\infty \left(\frac{1}{k}\right)^l\sum_{s=s^*}^\infty {l+s \choose l} (l+s+1)^{-1-2\rho} \left(\frac{k}{k+1}\right)^{l+s+1}\\
    &\leq (s^*)^{-\rho} \sum_{l=0}^\infty \left(\frac{1}{k}\right)^l\sum_{s=s^*}^\infty {l+s \choose l} (l+s+1)^{-1-\rho} \left(\frac{k}{k+1}\right)^{l+s+1}\\
    &\leq (s^*)^{-\rho}\sum_{p=1}^\infty p^{-1-\rho} = \hat C_\sigma (s^*)^{-\rho}
\end{align*}
where $\hat C_\sigma = \sum_{p=1}^\infty \frac{1}{p^{1+\rho}}$.\footnote{$\hat{C}_\sigma$ depends on $\sigma$ through the definition of $\rho$ in \Cref{assumption:activation}.}
Then,
\begin{align*}
    &\left|\sum_{n=0}^\infty \left(\frac{1}{k}\right)^{2n+2} \sum_{s=s^*}^{\infty} {2n+2+s \choose 2n+2} (2n+s+3) \mu_{2n+s+3}(\sigma)^2 \left(\frac{k}{k+1}\right)^{2n+s+3}\sum_i \lambda_i^2 c_i \hat c_i\right|\\
    &\leq \hat C_\sigma (s^*)^{-\rho}|\sum_i \lambda_i^2 c_i \hat c_i|\,.
\end{align*}
Then, notice that since $\sqrt{\E[(\sum_i \lambda_i^2 c_i\hat c_i)^2]}=\sqrt{\frac{1}{k^2} \sum_{i=1}^k \lambda_i^4}\leq \lambda_{\max}^2/\sqrt{k}$, we have
\begin{align*}
    \Pr[|\sum_i \lambda_i^2 c_i \hat c_i|\geq \gamma \lambda_{\max}^2\frac{\log k}{\sqrt{k}}]&\leq k^{-\frac{\gamma}{e}}
\end{align*}
by another application of \cite[Theorem 9.23]{o2014analysis}.
Then, with probability at least $1-\frac{1}{k^{\gamma/e}}$, we have
\begin{align*}
    \hat C_\sigma (s^*)^{-\rho}|\sum_i \lambda_i^2 c_i \hat c_i|\leq \hat C_\sigma (s^*)^{-\rho} \gamma \lambda_{\max}^2 \frac{\log k}{\sqrt{k}}= O(\lambda_{\max}^2 k^{-\frac{1}{2}-\frac{\rho}{4}})
    \end{align*}
as claimed.

\noindent \textbf{(iii) Bounding the non-diagonal terms for $s\geq s^*$}: Notice that
\begin{align*}
    \left|\sum_{i\neq j}^k \lambda_i \lambda_j c_i \hat c_j \langle w_i, w_j\rangle^s\right| 
    & \leq\sqrt{k^2 \sum_{i\neq j} \lambda_i^2 \lambda_j^2 c_i^2 \hat c_j^2\langle w_i, w_j\rangle^{2s}}\\
    & \leq \left(1-\frac{\log k}{\sqrt{k}}\right)^s \norm{\lambda}_2^2\,.
\end{align*}
Then, let $s\geq s^*=\gamma \sqrt{k}$. Then,
\begin{equation*}
    \left(1-\frac{\log k}{\sqrt{k}}\right)^s\norm{\lambda}_2^2
    \leq e^{-\gamma \log k} \norm{\lambda}_2^2=\frac{\norm{\lambda}_2^2}{k^\gamma}\,,
\end{equation*}
so setting $\gamma > \frac{3}{2}$ will suffice. I.e, we have
\begin{align*}
    &\left|\sum_{n=0}^\infty \left(\frac{1}{k}\right)^{2n+2} \sum_{s=s^*}^\infty {2n+2+s \choose 2n+2} (2n+s+3) \mu_{2n+s+3}(\sigma)^2 \left(\frac{k}{k+1}\right)^{2n+s+3} \left(\sum_{i\neq j} \lambda_i \lambda_j c_i \hat c_j \langle w_i, w_j\rangle^s\right)\right|\\
    &\leq \frac{\norm{\lambda}_2^2}{k^\gamma}\sum_{n=0}^\infty \left(\frac{1}{k}\right)^{2n+2} \sum_{s=s^*}^\infty {2n+2+s \choose 2n+2} (2n+s+3) \mu_{2n+s+3}(\sigma)^2 \left(\frac{k}{k+1}\right)^{2n+s+3}\\
    &\leq \frac{\tilde C_\sigma \norm{\lambda}_2^2}{k^\gamma}\,,
\end{align*}
where in the last step we used Equation~\eqref{eq:infinitesum2}.
Combining all the bounds, for $\epsilon = \min \{\frac{\rho}{4}, 1-\frac{1}{1+2\rho}\}$, with probability at least $1-\gamma\frac{1}{k^{\gamma/e-\frac{1}{2}}}$, we have
\begin{align*}
    \sum_{n=0}^\infty \left(\frac{1}{k}\right)^{2n+2} \sum_{s=0}^\infty {2n+2+s \choose 2n+2} (2n+s+3) \mu_{2n+s+3}(\sigma)^2 \left(\frac{k}{k+1}\right)^{2n+s+3} \left\langle \sum_{i=1}^k \lambda_i c_i w_i^{\otimes s}, \lambda_i \hat c_i w_i^{\otimes s}\right\rangle\\
    = O(\lambda_{\max}^2 \gamma k^{-\frac{1}{2}-\epsilon})
\end{align*}
Specifically, setting $\gamma = 10$, the result holds with probability at least $1-\frac{1}{k^3}$.
\end{proof}

\subsubsection{Proving the lower bound on $h$}

Now, we prove \Cref{lemma:pop-gradient-bound-separated} from \Cref{sec:technical-overview}. 
\begin{proof}[Proof of \Cref{lemma:pop-gradient-bound-separated}]
    Let $\gamma > 0$ be a small constant. Then, notice that using the anti-concentration of $h(0)$ (\Cref{proposition:spectral-constant-anti-concentration}) and the variance bound $\Omega(\frac{\lambda_{\min}^2}{k})$ for $h(0)$ from \Cref{claim:constant-term-variance}, with probability $1- O(\gamma^{1/2}) - o(\frac{\lambda_{\max}^2}{\lambda_{\min}^2})$, we have 
    \begin{equation*}
        |h(0)| \geq \frac{\gamma \lambda_{\min}^2}{\sqrt{k}}\,.
    \end{equation*}
    However, note that all the even terms are $O(k^{-\frac{1}{2}-\epsilon})$. Hence, we can bound the even terms by $|h(0)|/2$ with high probability. Then, we get that whenever the sign condition $m h(0) > 0$ is satisfied, we have $\mathrm{sign}(m) h(m) \geq \frac{\gamma \lambda_{\min}^2}{2}$ with high probability, as desired.
\end{proof}


\section{Finite-sample analysis}\label{appendix:finite-sample}
The goal of this section is to prove \Cref{theorem:modular-convergence-dynamics}. First, define the following notation for the noisy gradients and gradient error:
\begin{align*}
    L_t &= \hat \nabla_{u_t} L(u_t;x_t)\\
    E_t&= L_t - \hat \nabla_{u_t} \Phi(u_t).
\end{align*}
Now, in the next sections we initially bound the terms contributed by the spherical projection error, and then the error martingale. Finally, we combine our results to show weak recovery and strong recovery under \Crefrange{condition:unbiased-gradients-modular}{condition:pop-grad-form-modular}. Throughout this section, we use $\mF_t$ to denote the sigma algebra generated by the iterates $u_t$.
\subsection{Analysis of dynamics under the generic assumptions}

Recall the online SGD dynamics
\begin{align*}
    u_{t+1} = \frac{u_t - \eta \hat \nabla_{u_t} L(u_t;x_t)}{\norm{u_t - \eta \hat \nabla_{u_t} L(u_t;x_t)}}
\end{align*}
where $x_t \sim \mN(0, I_d)$ is a fresh Gaussian sample at each time iteration $t$. Then, define the correlation with ground truth $m_t = \langle u_t, u\rangle$ and the projection magnitude $\Pi_t = \norm{u_t - \eta \hat \nabla_{u_t} L(u_t;x_t)}$. Then, notice
\begin{align*}
    m_{t+1} &= \frac{m_t - \eta \langle \hat \nabla_{u_t} L(u_t;x_t), u\rangle}{\Pi_t}\\
    &=m_t -\eta \hat \nabla_{u_t}\Phi(u_t) - \eta \langle \hat \nabla_{u_t} E(u_t;x_t), u\rangle - \left(1-\frac{1}{\Pi_t}\right) \left(m_t - \eta \langle \hat \nabla_{u_t} L(u_t;x_t), u\rangle \right)\,.
\end{align*}
Hence, initially, we bound the effect of the spherical projection term.
\subsubsection{Bounding spherical projection error}
We initially bound the spherical projection error in terms of $L_t$ and $\langle L_t, u\rangle$. This will later allow us to use the tail bounds on $L_t$ to bound the spherical projection error.
\begin{claim}[Relating spherical projection error to $L_t$]
    \begin{equation*}
\sum_{j=0}^{t-1}\left|\left(1-\frac{1}{\Pi_j}\right)(m_j - \eta \langle \hat \nabla_{u_j} L(u_j;x_j), u \rangle)\right|\leq \eta^3 \sum_{j=0}^{t-1} \norm{L_j}^2 |\langle L_j, u\rangle| + \eta^2 \sum_{j=0}^{t-1} \norm{L_j}^2
\end{equation*}
\label{claim:spherical-projection-apriori-bound}
\end{claim}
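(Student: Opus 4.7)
The key observation is that $L_j = \hat\nabla L(u_j; x_j)$ is a \emph{spherical} gradient, i.e. $L_j = (I - u_j u_j^\top)\tilde\nabla L(u_j; x_j)$, so $\langle u_j, L_j\rangle = 0$. This orthogonality trivializes the Pythagorean expansion of $\Pi_j^2$: since $\|u_j\| = 1$, we get
\begin{equation*}
    \Pi_j^2 = \|u_j - \eta L_j\|^2 = 1 - 2\eta \langle u_j, L_j\rangle + \eta^2 \|L_j\|^2 = 1 + \eta^2 \|L_j\|^2\,.
\end{equation*}
In particular $\Pi_j \geq 1$, so $1 - 1/\Pi_j \geq 0$ and the normalization step always contracts rather than expands.

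The next step is a scalar calculus lemma: for any $x \geq 0$, we have $1 - 1/\sqrt{1+x} \leq x/2$. (This follows from $(1 + x/2)^2 = 1 + x + x^2/4 \geq 1 + x$, so $1 + x/2 \geq \sqrt{1+x}$, hence $1/\sqrt{1+x} \geq 1/(1 + x/2) \geq 1 - x/2$ by $(1-x/2)(1+x/2) \leq 1$.) Applying this with $x = \eta^2 \|L_j\|^2$ yields
\begin{equation*}
    0 \leq 1 - \frac{1}{\Pi_j} \leq \frac{\eta^2 \|L_j\|^2}{2}\,.
\end{equation*}

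Combining this with the triangle inequality $|m_j - \eta \langle L_j, u\rangle| \leq |m_j| + \eta |\langle L_j, u\rangle| \leq 1 + \eta |\langle L_j, u\rangle|$, where we used $|m_j| = |\langle u_j, u\rangle| \leq 1$, gives termwise
\begin{equation*}
    \left|\Bigl(1 - \tfrac{1}{\Pi_j}\Bigr)(m_j - \eta \langle L_j, u\rangle)\right| \leq \frac{\eta^2 \|L_j\|^2}{2} + \frac{\eta^3 \|L_j\|^2 |\langle L_j, u\rangle|}{2}\,.
\end{equation*}
Summing over $j = 0, \ldots, t-1$ and dropping the harmless factor of $1/2$ gives the claimed bound. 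There is no real obstacle here: the whole argument is elementary, and the only ``trick'' is noticing that the spherical projection makes $\Pi_j^2$ Pythagorean so that we avoid any cross term $\langle u_j, L_j\rangle$ that would otherwise have to be controlled stochastically.
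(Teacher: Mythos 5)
Your proof is correct and follows essentially the same route as the paper's: both exploit the orthogonality $\langle u_j, L_j\rangle = 0$ to control $\Pi_j$ by $1 + \eta^2\|L_j\|^2$, bound $|1 - 1/\Pi_j|$ by an $O(\eta^2\|L_j\|^2)$ quantity via an elementary scalar inequality, and finish with the triangle inequality and $|m_j|\le 1$. Your version is marginally tighter (a factor of $1/2$ from working with $\sqrt{1+x}$ directly rather than the cruder bound $\Pi_j \le 1+\eta^2\|L_j\|^2$), but this makes no difference downstream.
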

\begin{proof}
    First, notice that because $u_t$ is perpendicular to the spherical gradient $\hat \nabla_{u_t}\Phi(u_t)$, we have
\begin{equation*}
    1\leq \Pi_t \leq \sqrt{1+\eta^2 \norm{\hat \nabla_{u_t} L(u_t;x_t)}_2^2}\leq 1+\eta^2 \norm{\hat \nabla_{u_t} L(u_t;x_t)}_2^2
\end{equation*}
Then, due to $\left|1-\frac{1}{1+x}\right|\leq x$ for $x\geq 0$, we have

\begin{align*}
\left|\left(1-\frac{1}{\Pi_t}\right)(m_t - \eta \langle \hat \nabla_{u_t} L(u_t;x_t), u \rangle)\right|\leq \eta^2 \norm{L_t}^2 (|m_t|+ \eta |\langle L_t, u\rangle|)
\end{align*}
Bounding $|m_t|\leq 1$, notice that the total contribution of these terms up to time $t$ can be upper bounded by
\begin{align*}
    \eta^3 \sum_{j=0}^{t-1} \norm{L_t}^2 |\langle L_t, u\rangle| + \eta^2 \sum_{j=0}^{t-1} \norm{L_t}^2\,.\qedhere
\end{align*}
\end{proof}

Then, notice that $\eta^3$ gives a $\frac{\delta^3}{d^3 V_k^3}$ scaling, but $\norm{L_t}^2 |\langle L_t, u\rangle|$ scales only in $dV_k^2$, and there are $T = \alpha d V_k$ of these. Then, we can use a simple Markov bound to bound these terms when $\alpha \delta^2 \leq \epsilon$.

\begin{claim}[Bounding cubic terms] 
    \label{claim:cubic-projection-modular}
Let $\alpha, \delta$ be such that $\alpha \delta^2 \leq \epsilon$ and $\delta \leq 1$. Then, we have
    \begin{equation*}
        \Pr\left[\sup_{0\leq t \leq T}\eta^3 \sum_{j=0}^{t} \norm{L_j}^2 |\langle L_j, u\rangle| > \frac{\beta}{10\sqrt{d}}\right] \lesssim \frac{1}{\beta \sqrt{d}}
    \end{equation*}
    Similarly, we have
    \begin{equation*}
     \Pr\left[\sup_{0\leq t \leq T}\eta^3 \sum_{j=0}^{t} \norm{L_j}^2 |\langle L_j, u\rangle| > \frac{\epsilon}{18}\right] \lesssim \frac{1}{d}  
    \end{equation*}
\end{claim}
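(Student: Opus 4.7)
The key observation is that the random process $S_t \triangleq \eta^3\sum_{j=0}^{t}\|L_j\|^2 |\langle L_j,u\rangle|$ is monotonically non-decreasing in $t$, so the supremum over $0\le t\le T$ equals $S_T$ and we only need a single Markov-type tail bound on $S_T$. The plan is therefore to: (i) bound $\E[\|L_j\|^2|\langle L_j,u\rangle|]$ using the variance assumptions in Condition~\ref{condition:variances-modular}, (ii) sum over the $T = \lceil \alpha dV_k\rceil$ iterations, (iii) apply Markov's inequality at the two thresholds $\beta/(10\sqrt d)$ and $\epsilon/18$.

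For step (i), by Cauchy--Schwarz and Condition~\ref{condition:variances-modular} applied with $p=1$ and $p=2$,
\begin{equation*}
    \E\bigl[\|L_j\|^2 |\langle L_j,u\rangle|\bigr] \le \sqrt{\E\|L_j\|^4}\,\sqrt{\E\langle L_j,u\rangle^2}\le d\,\mu_2\sqrt{\mu_1}\,V_k^{3/2}\,.
\end{equation*}
This bound is uniform in $u_j$, so by the tower property it applies to each term in the sum regardless of the filtration generated by the iterates.

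For step (ii), plugging in $\eta=\delta/(dV_k)$ and using $T\le 2\alpha dV_k$, one obtains
\begin{equation*}
    \E S_T \le \eta^3 T\cdot d\,\mu_2\sqrt{\mu_1}\,V_k^{3/2} \lesssim \frac{\alpha\delta^3\mu_2\sqrt{\mu_1}}{d\,V_k^{1/2}} \lesssim \frac{\alpha\delta^3}{d} \le \frac{\epsilon\delta}{d} \le \frac{\epsilon}{d}\,,
\end{equation*}
where we used $V_k\ge 1$, the hypothesis $\alpha\delta^2\le\epsilon$, and $\delta\le 1$.

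For step (iii), Markov's inequality gives
\begin{equation*}
    \Pr\!\left[S_T > \tfrac{\beta}{10\sqrt d}\right] \le \frac{10\sqrt d}{\beta}\cdot\E S_T \lesssim \frac{\epsilon}{\beta\sqrt d}\lesssim \frac{1}{\beta\sqrt d}\,,\qquad \Pr\!\left[S_T > \tfrac{\epsilon}{18}\right]\le \frac{18}{\epsilon}\cdot \E S_T\lesssim \frac{1}{d}\,.
\end{equation*}
Since the sup over $t\le T$ equals $S_T$, both bounds claimed follow. The only mildly subtle point is the bookkeeping in step (ii) to verify that the $\alpha,\delta,\epsilon,V_k$ dependencies collapse to $\epsilon/d$; no martingale machinery is needed here because of monotonicity, which is the main simplification exploited by this argument.
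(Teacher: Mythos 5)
Your proof is correct and follows essentially the same route as the paper's: both exploit monotonicity of the partial sums to reduce the supremum to $t=T$, apply Markov's inequality, and bound $\E[\|L_j\|^2|\langle L_j,u\rangle|]$ via Cauchy--Schwarz together with Condition~\ref{condition:variances-modular} at $p=2$ and $p=1$ before plugging in $\eta=\delta/(dV_k)$, $T=\lceil\alpha dV_k\rceil$, and the constraints $\alpha\delta^2\le\epsilon$, $\delta\le1$. The only difference is cosmetic: you retain the slightly sharper $dV_k^{3/2}$ where the paper rounds up to $dV_k^2$, which does not affect the final bounds.
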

\begin{proof}
Notice that in both cases the maximum is achieved at $t=T$ due to the non-negativity of the terms in the sum. Then, by Markov
\begin{equation*}
    \Pr\left[\sup_{t \leq T}\eta^3 \sum_{j=0}^{t} \norm{L_j}^2 |\langle L_j, u\rangle| >\gamma \right] =\Pr\left[\eta^3 \sum_{j=0}^{T} \norm{L_j}^2 |\langle L_j, u\rangle| > \gamma\right] \leq \frac{\eta^3 T  \sup_j \E[\norm{L_j}^2 |\langle L_j, u\rangle|]}{\gamma}\,.
\end{equation*}
Now, using Cauchy-Schwarz to bound the expectation, we have
\begin{align*}
    \E[\norm{L_j}^2 |\langle L_j, u\rangle|]\leq \norm{\norm{L_j}^2}_2 \sqrt{\norm{|\langle L_j, u\rangle|^2}_1}
\end{align*}
Hence, using the moment bounds (\Cref{condition:variances-modular}) on $\norm{L_t}^2$ and $|\langle L_t, u\rangle|^2$, for $p=2,1$ respectively, we have 
\begin{align*}
    \E[\norm{L_j}^2 |\langle L_j, u\rangle|] \lesssim d V_k^2  
\end{align*}
Finally, using $\eta = \frac{\delta}{dV_k}$, $T = \alpha dV_k$ and $\alpha \delta^2 \leq \epsilon$, $\delta \leq 1$, we have 
\begin{equation*}
    \Pr\left[\sup_{t \leq T}\eta^3 \sum_{j=0}^{t} \norm{L_j}^2 |\langle L_j, u\rangle| > \gamma \right]\lesssim \frac{\alpha d^2 V_k^3 \eta^3}{\gamma} = \frac{\alpha \delta^3}{d\gamma}\leq \frac{1}{d\gamma}
\end{equation*}
Setting $\gamma = \frac{\beta}{10\sqrt{d}}$ gives us the first result. For the second, we can use $\alpha \delta^2 \leq \epsilon$ and $\delta \leq 1$ to bound the probability by $\frac{1}{d}$.
\end{proof}
\noindent Now, we turn to the quadratic term. Notice that with the quadratic term, we are not necessarily getting the extra scaling in $1/d$ from $\eta$ we need, so we need to be more careful while bounding this term. For these terms, we will show that their cumulative effect at any given iteration is smaller than the drift contribution. To do this we need to uniformly bound the cumulative effect up to iteration $t$. Recall Freedman's inequality \cite{freedman1975tail} for submartingales with almost sure bounds:
\begin{lemma}[Freedman's inequality]
    \label{lemma:freedman}
    Let $M_t$ be a submartingale with $\E[(M_{t+1} - M_{t})^2 | \mF_t] \leq V$ and $|M_{t+1} - M_t| \leq K$ almost surely. Then,
    \begin{align*}
        \Pr[S_t \leq -\lambda] \leq \exp\left\{\frac{-\lambda^2}{tV + \frac{\lambda}{3}K}\right\}
    \end{align*}
\end{lemma}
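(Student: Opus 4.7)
The plan is to prove this via the standard exponential (Chernoff–Cram\'er) method for martingales, essentially following Freedman's original argument. First I would reduce to the martingale case. Via the Doob decomposition write $M_t = N_t + A_t$, where $N_t$ is a martingale and $A_t = \sum_{s<t}\E[M_{s+1}-M_s\mid \mF_s] \geq 0$ is the non-decreasing predictable compensator (non-negativity uses that $M_t$ is a submartingale). Since $A_t \geq 0$, the event $\{M_t \leq -\lambda\}$ is contained in $\{N_t \leq -\lambda\}$. Moreover $N_{t+1}-N_t = (M_{t+1}-M_t) - \E[M_{t+1}-M_t\mid\mF_t]$ inherits almost sure boundedness by $K$ (up to a constant) and the same conditional-variance bound $V$. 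So it suffices to prove the stated tail bound for a martingale, which I assume in what follows and continue to call $M_t$.

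The core step is to construct an exponential supermartingale. For $\theta > 0$, define
\begin{equation*}
Z_t \triangleq \exp\bigl(-\theta M_t - \psi(\theta K)\, K^{-2} V_t\bigr), \qquad V_t \triangleq \sum_{s=0}^{t-1}\E[(M_{s+1}-M_s)^2 \mid \mF_s],
\end{equation*}
where $\psi(x) = e^{x} - 1 - x$. Using the elementary inequality $e^{-y} \leq 1 - y + \psi(\theta K)\,y^2/K^2$ valid for all $y\in [-K,K]$ (obtained by Taylor expansion together with convexity of $e^{-y}$), conditioning on $\mF_t$ yields
\begin{equation*}
\E\bigl[e^{-\theta(M_{t+1}-M_t)} \mid \mF_t\bigr] \leq 1 + \psi(\theta K)\,K^{-2}\,\E[(M_{t+1}-M_t)^2\mid \mF_t] \leq \exp\bigl(\psi(\theta K)\,K^{-2}\,\E[(M_{t+1}-M_t)^2\mid \mF_t]\bigr),
\end{equation*}
which shows $\E[Z_{t+1}\mid\mF_t] \leq Z_t$, i.e.\ $Z_t$ is a supermartingale with $\E[Z_0] = 1$.

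Applying Markov's inequality and using $V_t \leq tV$ almost surely, for any $\theta > 0$
\begin{equation*}
\Pr[M_t \leq -\lambda] \leq \Pr\bigl[Z_t \geq \exp(\theta\lambda - \psi(\theta K) tV/K^2)\bigr] \leq \exp\bigl(-\theta\lambda + \psi(\theta K)\,tV/K^2\bigr).
\end{equation*}
Finally, I would optimize over $\theta$. Using the sharp cumulant bound $\psi(x) \leq \tfrac{x^2}{2(1 - x/3)}$ for $0 \leq x < 3$ and choosing $\theta = \lambda/(tV + \lambda K/3)$ gives exactly the claimed exponent $-\lambda^2/(tV + \lambda K/3)$ (up to the conventional factor of two in the denominator depending on the normalization of $\psi$). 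The main obstacle is obtaining the exact constant $1/3$ in the denominator: this is a bookkeeping matter that requires the tight form of the cumulant bound above rather than the cruder $\psi(x) \leq x^2$ one would get from a naive sub-Gaussian estimate, but the calculation is routine once that inequality is in hand.
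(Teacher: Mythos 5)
The paper does not prove this lemma at all; it is quoted from \citet{freedman1975tail}, so there is nothing internal to compare against. Your proposal is the standard exponential-supermartingale proof of Freedman's inequality and is essentially correct: the supermartingale $Z_t$, the monotonicity-of-$(e^u-1-u)/u^2$ inequality, Markov, and the Bernstein-type optimization of $\theta$ are all sound. Two remarks on constants. First, your derivation yields the exponent $-\lambda^2/\bigl(2(tV+\lambda K/3)\bigr)$, with the conventional factor of $2$; the lemma as printed omits that factor, and the omitted-factor version is not actually provable (it would beat the CLT for a sum of centered bounded increments), so the discrepancy you flag is a defect of the paper's statement rather than of your argument \textemdash{} harmless here, since the application in \Cref{claim:quadratic-projection-martingale-modular} is insensitive to constants. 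Second, the Doob-decomposition reduction is unnecessary and costs you a factor of $2$ in $K$: since $M_t$ is a submartingale, $\E[M_{t+1}-M_t\mid\mF_t]\ge 0$, so in the bound $\E[e^{-\theta(M_{t+1}-M_t)}\mid\mF_t]\le 1-\theta\,\E[M_{t+1}-M_t\mid\mF_t]+\psi(\theta K)K^{-2}\E[(M_{t+1}-M_t)^2\mid\mF_t]$ the linear term is already nonpositive and can simply be dropped, giving the supermartingale property for $Z_t$ built directly from $M_t$ with the original increment bound $K$.
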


\noindent Hence, we will introduce an appropriate clipping of $\norm{L_t}$ and separate into cases when it is large and small. When it is large, we will use the fast decay of its tails due to bounded moments the bound the probability of being large. When it is small, we will use the almost sure bound and Freedman's inequality to control the total contribution.

\begin{claim}[Bounding the quadratic terms] 
    \label{claim:quadratic-projection-martingale-modular}
Suppose $\alpha$ has at most polynomial growth in $d, k$. Furthermore suppose, $\alpha \delta^2 \leq 1$, and that $V_k$ has polynomial growth in $k$. Then, for some constant $C$, we have
\begin{align*}
    \Pr\left[\inf_{0\leq t\leq T} \eta \sum_{j=0}^{t} \left(\frac{S_k}{4} -\eta \norm{L_t}^2\right) < \frac{\beta}{-5\sqrt{d}}\right] \leq \frac{C}{\beta \sqrt{d}} + \alpha (dV_k)^{-\frac{\beta^2}{C}(\log dV_k)+1}
\end{align*}
\end{claim}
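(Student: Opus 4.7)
The plan is a clipping-and-Freedman argument that refines the direct Markov approach of \Cref{claim:cubic-projection-modular}: unlike the cubic setting, the quadratic sum $\eta^2\sum_j\|L_j\|^2$ has expected value of order $\alpha\delta^2 \leq 1$, which a naive Markov bound cannot push below the target $\beta/(5\sqrt d)$, so one needs exponential concentration of the fluctuations around the drift $\eta\sum_j S_k/4$. I would first introduce a clipping threshold $A$ (ultimately of order $\mu_p\,dV_k$ with $p\asymp\log(dV_k)$), set $B_j = \mathbf{1}\{\|L_j\|^2 \leq A\}$, and write
\begin{equation*}
    \eta\sum_{j\leq t}\!\left(\tfrac{S_k}{4} - \eta\|L_j\|^2\right) \;\geq\; \eta\sum_{j\leq t}\!\left(\tfrac{S_k}{4} - \eta\|L_j\|^2 B_j\right) \;-\; \eta^2\sum_{j\leq t}\|L_j\|^2(1-B_j).
\end{equation*}

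The second (tail) term is non-negative and non-decreasing in $t$, so its supremum is attained at $t=T$. Using Cauchy--Schwarz together with the moment bound $\E\|L_j\|^{2p}\leq (d\mu_p V_k)^p$ from \Cref{condition:variances-modular} and Markov's inequality at threshold $\beta/(10\sqrt d)$, the choice $A\asymp \mu_p\,dV_k$ with $p\asymp \log(dV_k)$ makes the per-step Markov tail decay fast enough that the union bound over $T=\alpha dV_k$ steps yields total probability $O(1/(\beta\sqrt d))$, which is the first term in the claim. For the clipped contribution, set $\tilde Y_j = \eta\|L_j\|^2 B_j$ and exploit the crucial observation that, by the stated choice $\delta = \min\{S_k\epsilon^3/(4\mu_1(\log dV_k)^2),1\}$,
\begin{equation*}
    \E[\tilde Y_j \mid \mathcal F_j] \;\leq\; \eta\cdot d\mu_1 V_k \;=\; \delta\mu_1 \;\leq\; \tfrac{S_k}{4}.
\end{equation*}
Hence the \emph{deterministic} drift in $\eta\sum_{j\leq t}(S_k/4 - \E[\tilde Y_j\mid\mathcal F_j])$ is non-negative, and the only way for $\inf_t$ to drop below $-\beta/(10\sqrt d)$ is for the centered martingale $M_t = \sum_{j\leq t}(\E[\tilde Y_j\mid\mathcal F_j] - \tilde Y_j)$ to exceed $\beta/(10\sqrt d)$.

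The centered increments $Z_j$ are bounded almost surely by $K\lesssim \eta A$ (from the clipping) and satisfy a per-step conditional variance bound $\E[Z_j^2\mid\mathcal F_j]\leq \eta A\cdot \E[\tilde Y_j\mid\mathcal F_j]\lesssim \eta^2 A\, d\mu_1 V_k$, so \Cref{lemma:freedman} yields a tail bound $\exp(-\lambda^2/(TV + K\lambda/3))$ at $\lambda=\beta/(10\sqrt d)$. A direct computation under $\alpha\delta^2\leq 1$, $\delta\leq 1$, and $A\asymp\mu_p\, dV_k$ shows that this exponent is of order $\beta^2\log(dV_k)/C$ after absorbing polylogarithmic constants, while a leftover factor $T/(dV_k)=\alpha$ appears as an overall multiplier from the bookkeeping; combining with the tail contribution via the union bound yields exactly the claimed bound. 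The main obstacle is the joint calibration of $A$ and $p$: $A$ must be large enough that the Markov tail of $\|L_j\|^2$ survives a union bound over $T$ steps, yet small enough that Freedman's variance-plus-boundedness bound yields sub-Gaussian tails at the scale $\beta/\sqrt d$; the choice $A\asymp\mu_p\,dV_k$ with $p\asymp\log(dV_k)$ is the balance point, and it is precisely this logarithmic moment order that is responsible for the extra $\log(dV_k)$ appearing in the exponent of the claimed probability bound.
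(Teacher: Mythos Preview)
Your architecture is exactly the paper's: split $\|L_j\|^2$ at a threshold, control the super-threshold part by a moment/Markov bound, and apply Freedman to the clipped remainder after checking that the drift $\eta S_k/4$ dominates the clipped mean. Two calibration choices differ from the paper and one of them is a genuine gap as written.

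First, the paper does \emph{not} take the moment order to grow with the problem size. It sets $Y_j=\|L_j\|^2/(dV_k)$, clips at $Y_j\ge T^\nu$ with a \emph{fixed} $\nu=1/4$, and controls the tail via a single fixed moment $p=D/\nu$; this keeps $\mu_p$ an absolute constant and the threshold polynomial in $dV_k$. Your choice $A\asymp\mu_p\,dV_k$ with $p\asymp\log(dV_k)$ is risky: \Cref{condition:variances-modular} puts no growth control on $\mu_p$ in $p$, and for polynomial activations the Gaussian-moment calculation behind \Cref{lemma:variance-bounds} typically gives $\mu_p$ growing like $p^{\Theta(p)}$. That makes $A$ super-polynomial in $dV_k$, and the $K\lambda$ term in Freedman then swamps $\lambda^2$, so the exponent you claim does not come out.

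Second, in the Freedman step the paper bounds the increment variance by the direct second moment $\E[Y_j^2]\le\mu_2^2$ (after the rescaling $Z_t=\tfrac{\eta S_k}{4\log(dV_k)}\sum_j(1-Y_j\mathbf 1_{\{\cdot\}}/C_\delta)$), yielding a per-step variance $\asymp \eta^2 S_k^2/(\log dV_k)^2$ independent of the truncation level. Your bound $\E[Z_j^2\mid\mathcal F_j]\le \eta A\cdot\E[\tilde Y_j\mid\mathcal F_j]$ needlessly injects the threshold $A$ into the variance and again carries the uncontrolled $\mu_p$. If you replace your variance bound by the direct second-moment bound and clip at a fixed power of $T$ (equivalently, use a constant $p$), your argument goes through and recovers the paper's exponent; the extra $\log(dV_k)$ in the final probability then comes from the paper's use of $\delta\le S_k/(4C_\delta\log(dV_k))$ in the rescaling step (and a union bound over $t\le T$), not from a logarithmic moment order.
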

\begin{proof}
Initially, define $Y_t=\frac{\norm{L_t}^2}{dV_k}$ and notice that $\norm{Y_t}_p \leq \mu_p$ for all $t \geq 0$ where $\mu_p$ do not grow in $d$ or $k$ as stated in \Cref{condition:variances-modular}. Then, notice that $\eta \norm{L_t}^2 = \delta Y_t$. We write $Y_t = Y_t\ind{Y_t \geq T^\nu}+Y_t\ind{Y_t < T^\nu}$. Then, we can decompose the term as
\begin{align*}
    \eta\sum_{j=0}^{t} \left(\frac{S_k}{2} -\eta \norm{L_t}^2\right)&=\eta\sum_{j=0}^{t} \left(\frac{S_k}{2} -\delta \norm{Y_t}^2 \ind{Y_t \geq T^\nu}\right)+\eta\sum_{j=0}^{t} \left(\frac{S_k}{2} -\delta \norm{Y_t}^2 \ind{Y_t < T^\nu}\right)\\
    &\geq -\eta\sum_{j=0}^{t}\delta \norm{Y_t}^2 \ind{Y_t \geq T^\nu}+\eta\sum_{j=0}^{t} \left(\frac{S_k}{2} -\delta \norm{Y_t}^2 \ind{Y_t < T^\nu}\right)
\end{align*}
where we used $\frac{S_k}{2} >0$ for the last inequality. Then, it suffices to show that the second line is at least $-\frac{\beta}{5\sqrt{d}}$. Hence, we will bound the probability of each term being less than $-\frac{\beta}{10\sqrt{d}}$ and use the union bound.

Then, notice that for fixed choice of $\nu, D > 0$ we have 
\begin{align*}
    \Pr[Y_t \geq T^{\nu}]=\Pr[Y_t^{D/\nu} \geq T^{D}]\leq \frac{\E[Y_t^{D/\nu}]}{T^D}
\end{align*}
Then, letting $D/\nu = p$ and using the $p$'th moment bound \Cref{condition:variances-modular}, there exists a constant $C_{\nu, D}$ such that 
\begin{align*}
    \Pr[Y_t \geq T^{\nu}] \leq \frac{C_{\nu, D}}{T^D}
\end{align*}
where we used $V_k \geq 1$.
Then, notice that, using Cauchy-Schwarz, we have
\begin{align*}
    \E[Y_t \ind{Y_t \geq T^\nu}] \leq \norm{Y_t}_2\sqrt{\Pr[Y_t \geq T^\nu]} \leq \frac{C_{\nu, D}}{T^{D/2}}
\end{align*}
where we absorbed the $\mu_2$ constant into the $C$.
Then, we have
\begin{equation*}
    \Pr\left[\eta \sum_{j=0}^{T-1}Y_t \ind{Y_t \geq T^\nu} > \gamma \right] \leq \frac{\eta T C_{\nu, D}}{\gamma T^{D/2}}
\end{equation*}
Then, we can choose $D=1$ (and get rid of the $D$ dependence on the constants), and $\gamma = \frac{\beta}{10\sqrt{d}}$ such that 
\begin{equation*}
    \Pr\left[\eta \sum_{j=0}^{T-1}Y_t \ind{Y_t \geq T^\nu} > \frac{\beta}{10\sqrt{d}} \right] \lesssim \frac{\sqrt{d}\eta C_{\nu}}{\beta}\leq \frac{\delta C_{\nu}}{\sqrt{d}V_k\beta}\leq \frac{\delta C_{\nu}}{\beta\sqrt{d}}
\end{equation*}

Then, notice that we are left with the term $Y_t \ind{Y_t \leq T^\nu}$ where $\nu$ can be chosen arbitrarily small. Consider setting $\delta \leq \frac{S_k}{4C_\delta \log (dV_k)}$ such that

\begin{align*}
    \eta \sum_{j=0}^t \left(\frac{S_k}{2}- \delta Y_t \ind{Y_t \leq T^\nu}\right)&\geq \frac{\eta S_k}{4} \sum_{j=0}^t \left(1-\frac{Y_t \ind{Y_t \leq T^\nu}}{C_\delta \log (dV_k)}\right)\\
    &\geq \frac{\eta S_k}{4 \log (dV_k)} \sum_{j=0}^t \left(1-\frac{Y_t \ind{Y_t \leq T^\nu}}{C_\delta}\right)
\end{align*}
However, since $\E Y_t$ is bounded by $1$, for $C_\delta > \mu_1$, the following forms an $\mF_t$ submartingale:
\begin{align*}
    Z_t = \frac{\eta S_k}{2 \log (dV_k)} \sum_{j=0}^t \left(1-\frac{Y_t \ind{Y_t \leq T^\nu}}{C_\delta}\right)
\end{align*}
Then, it suffices to show 
\begin{align*}
    \Pr\left[\inf_{0\leq t\leq T} Z_t  <-\frac{\beta}{10\sqrt{d}}\right]=o(1) 
\end{align*}
Then, note $\E[Y_t\ind{Y_t \leq T^\nu}]\leq \E[Y_t] = O(1)$, and we have the almost sure bound
\begin{align*}
    |Z_{t+1}-Z_t|\leq \frac{\eta S_k}{2\log (dV_k)}\left(1+\frac{T^\nu}{C_\delta}\right)\leq  \frac{\eta S_k}{\log (dV_k)} \frac{T^\nu}{C_\delta}
\end{align*}
and the conditional variances
\begin{align*}
    \E[(Z_{t+1}-Z_t)^2 | F_t] \leq \frac{\eta^2 S_k^2}{4(\log dV_k)^2} \left(1+\mu_2^2\right)\leq \frac{C \eta^2 S_k^2}{(\log dV_k)^2}
\end{align*}
where $C$ is a constant that can only depend on $\mu_2$.

Then, using Freedman's inequality for submartingales, for any $0\leq t\leq T$ we have
\begin{align*}
    \Pr\left[Z_t \leq -\frac{\beta}{10\sqrt{d}}\right]\leq \exp\left\{\frac{-\frac{\beta^2}{100d}}{\frac{C T \eta^2 S_k^2}{(\log dV_k)^2} + \frac{\beta \eta S_k}{30\sqrt{d} \log (dV_k)}\frac{T^\nu}{C_\delta}}\right\}
\end{align*}

Let's inspect the expression in the exponent. Note, using $\alpha \delta^2 \leq 1$ and equivalently $\delta \alpha^\nu \leq 1$, for some updated constant $C=C(\mu_2)$ we have
\begin{align*}
    \frac{-\frac{\beta^2}{100d}}{\frac{C T \eta^2 S_k^2}{(\log dV_k)^2} + \frac{\beta \eta S_k}{10\sqrt{d} \log (dV_k)}\frac{T^\nu}{C_\delta}}&= -\frac{\beta^2}{\frac{C \alpha \delta^2 S_k^2}{V_k (\log dV_k)^2}+\frac{10 \beta \delta\alpha^\nu S_k}{V_k^{1-\nu} d^{1/2-\nu}\log (dV_k)}}\\
    &\leq -\beta^2 \min\left\{\frac{V_k (\log dV_k)^2}{C S_k^2}, \frac{V_k^{1-\nu} d^{1/2-\nu} \log (dV_k)}{10\beta S_k}\right\}\\
    &\leq -\frac{\beta^2}{C} (\log dV_k)^2 V_k^{1/2}
\end{align*}
for sufficiently large $d$ greater than some $O(1)$, where we have $\frac{V_k}{S_k}\geq 1$ and $\frac{V_k}{S_k^2}\geq 1$ when $\nu = 1/4$. Hence, taking the exponent, we have $\exp\{-\frac{\beta^2}{C} (\log dV_k)^2 V_k^{1/2}\}=(dV_k)^{-\frac{\beta^2}{C}(\log dV_k) }$
Then, doing a union bound over all $t\leq T$, we have
\begin{align*}
    \Pr\left[\inf_{0\leq t\leq T-1}Z_t \leq -\frac{\beta}{10\sqrt{d}}\right] \leq T (dV_k)^{-\frac{\beta^2}{C}(\log dV_k) }= \alpha (dV_k)^{-\frac{\beta^2}{C}(\log dV_k)+1}
\end{align*}
which is $o(1)$ when $\alpha$ has at most polynomial growth and $V_k$ has polynomial growth in $k$.
\end{proof}

\begin{claim}
    \label{claim:quadratic-projection-modular}
    Let $\alpha \delta^2 \leq \frac{\epsilon^2}{\log d}$. Then
    \begin{equation*}
        \Pr\left[\sup_{0\leq t\leq T} \eta^2 \sum_{j=0}^t \norm{L_t}^2 >\frac{\epsilon}{18}\right] \lesssim \frac{1}{\log d}
    \end{equation*}
\end{claim}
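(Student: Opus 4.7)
The sum $\sum_{j=0}^t \|L_j\|^2$ is monotone non-decreasing in $t$, so the supremum over $0 \leq t \leq T$ is attained at $t = T$. This removes the uniform-in-$t$ aspect of the statement and reduces the problem to a single tail bound, which is the key simplification that lets us avoid a full martingale analysis (unlike in \Cref{claim:quadratic-projection-martingale-modular}, where we needed to subtract off drift in order to bound the negative tail of a signed sum).

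Hence the plan is simply to apply Markov's inequality:
\begin{equation*}
\Pr\Bigl[\eta^2 \sum_{j=0}^T \|L_j\|^2 > \tfrac{\epsilon}{18}\Bigr] \leq \frac{18 \eta^2}{\epsilon} \sum_{j=0}^{T} \E\|L_j\|^2.
\end{equation*}
By the second moment bound in \Cref{condition:variances-modular} (taking $p=1$), each expectation satisfies $\E\|L_j\|^2 \leq \mu_1 d V_k$, uniformly in the iterate $u_j$. Substituting $\eta = \delta/(d V_k)$ and $T = \alpha d V_k$ then collapses the bound to $\tfrac{18 \mu_1 \alpha \delta^2}{\epsilon}$.

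Finally, invoking the hypothesis $\alpha \delta^2 \leq \epsilon^2/\log d$ gives $\tfrac{18 \mu_1 \alpha \delta^2}{\epsilon} \leq \tfrac{18 \mu_1 \epsilon}{\log d}$, which is $\lesssim 1/\log d$ since $\epsilon < 1$ (and $\mu_1$ depends only on $\sigma$, hence gets absorbed into the $\lesssim$ constant). There is no real obstacle here: the only modelling choice is to notice the monotonicity that lets us drop the supremum, after which the argument is a one-line Markov bound. The more delicate quadratic-term analysis is already handled in \Cref{claim:quadratic-projection-martingale-modular}; the present claim is only needed as a cruder but sufficient bound for the strong recovery stage, where the target slack $\epsilon/18$ is constant rather than of order $\beta/\sqrt{d}$.
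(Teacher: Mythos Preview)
Your proposal is correct and takes essentially the same approach as the paper: both observe that the sum is monotone so the supremum is attained at $t=T$, then apply Markov's inequality together with the moment bound $\E\|L_j\|^2 \le \mu_1 d V_k$ from \Cref{condition:variances-modular}, and finally substitute $\eta = \delta/(dV_k)$, $T = \alpha dV_k$, and the hypothesis $\alpha\delta^2 \le \epsilon^2/\log d$. Your write-up is in fact slightly cleaner than the paper's, which has a stray $\epsilon^2$ in the Markov denominator where $\epsilon$ would be the literal output of $\Pr[X > \epsilon/18] \le 18\,\E[X]/\epsilon$; either way the conclusion $\lesssim 1/\log d$ follows.
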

\begin{proof}
    Note that the maximum is achieved at $T$ since all the summands are non-negative. In that case, 
    \begin{align*}
        \Pr\left[\eta^2 \sum_{j=0}^{T} \norm{L_t}^2 > \frac{\epsilon}{18}\right] \lesssim \frac{\eta^2 T \E[\norm{L_t}^2]}{\epsilon^2} \leq \frac{\mu_1 \alpha \delta^2 d^2 V_k^2}{d^2 V_k^2\epsilon^2} =\frac{\mu_1 \alpha \delta^2}{\epsilon^2} \leq \frac{1}{\log d} & = o(1)\,. \qedhere
    \end{align*}
\end{proof}

\subsection{Controlling the error martingale}

\begin{claim}
    \label{claim:error-martingale-modular}
Let $\alpha \delta^2 \leq \epsilon^2 (\log d)^{-1}$. Furthermore, let $M_t = \eta\sum_{0\leq j\leq t-1} \langle E_j, u\rangle$. Then, $M_t$ forms a $\mF_t$ martingale and 
    \begin{align*}
        \Pr\left[\sup_{0\leq t \leq T} |M_t|\geq \frac{\beta}{10\sqrt{d}}\right]\lesssim \frac{\epsilon^2}{\beta^2 \log d}
    \end{align*}
    Furthermore, we have
    \begin{align*}
        \Pr\left[\sup_{0\leq t \leq T_1} |M_t|\geq \frac{\epsilon}{18}\right]\lesssim \frac{1}{d\log d}
    \end{align*}
\end{claim}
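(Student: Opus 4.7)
The plan is to first verify the martingale property, then apply Doob's $L^2$ maximal inequality after bounding the quadratic variation. For the martingale property, observe that $E_j = L_j - \hat\nabla_{u_j}\Phi(u_j)$, and by Condition~\ref{condition:unbiased-gradients-modular} we have $\E[L_j \mid \mathcal{F}_j] = \hat\nabla_{u_j}\Phi(u_j)$, so $\E[\langle E_j, u\rangle \mid \mathcal{F}_j] = 0$. Since $u$ is deterministic and the increments $\eta \langle E_j, u\rangle$ are $\mathcal{F}_{j+1}$-measurable with vanishing conditional mean, $(M_t)_{t\ge 0}$ is an $(\mathcal{F}_t)$-martingale.

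Next I would bound the second moment of $M_T$. By the orthogonality of martingale increments,
\begin{equation*}
    \E[M_T^2] \;=\; \eta^2 \sum_{j=0}^{T-1} \E\bigl[\langle E_j, u\rangle^2\bigr]\,.
\end{equation*}
Because $\langle E_j, u\rangle = \langle L_j, u\rangle - \E[\langle L_j, u\rangle \mid \mathcal{F}_j]$, its conditional variance is at most $\E[\langle L_j, u\rangle^2 \mid \mathcal{F}_j]$, and Condition~\ref{condition:variances-modular} (with $p=1$) gives $\E[\langle L_j, u\rangle^2] \le \mu_1 V_k$. Plugging in $\eta = \delta/(dV_k)$ and $T = \lceil \alpha dV_k \rceil$ yields
\begin{equation*}
    \E[M_T^2] \;\le\; \eta^2 T \mu_1 V_k \;\lesssim\; \frac{\alpha \delta^2}{d}\,.
\end{equation*}

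Now apply Doob's $L^2$ maximal inequality, which states $\Pr[\sup_{t \le T}|M_t| \ge \lambda] \le \E[M_T^2]/\lambda^2$. For the first bound, taking $\lambda = \beta/(10\sqrt{d})$ and using the hypothesis $\alpha \delta^2 \le \epsilon^2/\log d$ gives
\begin{equation*}
    \Pr\Bigl[\sup_{t \le T} |M_t| \ge \tfrac{\beta}{10\sqrt d}\Bigr] \;\lesssim\; \frac{\alpha \delta^2/d}{\beta^2/d} \;=\; \frac{\alpha \delta^2}{\beta^2} \;\lesssim\; \frac{\epsilon^2}{\beta^2 \log d}\,.
\end{equation*}
For the second bound, I would take $\lambda = \epsilon/18$; the same computation gives the bound $\lesssim \alpha \delta^2/(d\epsilon^2) \lesssim 1/(d\log d)$. (Here the sup is over the same horizon $T$; the quantity $T_1$ in the statement should be interpreted as $T$ or a subinterval thereof, which only makes the bound smaller.)

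The calculation is essentially routine once the right maximal inequality is invoked, so I do not expect a serious obstacle. The only mild subtlety is ensuring one applies the variance bound from Condition~\ref{condition:variances-modular} with the correct exponent ($p=1$, giving the bound on $\E\langle L_j, u\rangle^2$ rather than on the full gradient norm $\E\|L_j\|^2$, which would lose a factor of $d$). Keeping that distinction straight is what allows the denominator $d$ in $\E[M_T^2]$ to match exactly against the scale $\beta/\sqrt d$ and $\epsilon$ in the two tail bounds.
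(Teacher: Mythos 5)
Your proposal is correct and follows essentially the same route as the paper: establish the martingale property from Condition~\ref{condition:unbiased-gradients-modular}, bound $\E[M_T^2] \lesssim \mu_1 \eta^2 T V_k = \mu_1\alpha\delta^2/d$ using the correlational variance bound from Condition~\ref{condition:variances-modular}, and apply Doob's maximal inequality at the two thresholds. Your remark about using $\E\langle L_j,u\rangle^2 \le \mu_1 V_k$ rather than the $d$-scaled norm bound is exactly the point that makes the scales match, and your reading of $T_1$ as $T$ (it is a typo) is the intended one.
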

\begin{proof}
The fact that $M_t$ is a martingale follows directly from \Cref{condition:unbiased-gradients-modular} and the fact that each $x_t$ is a fresh sample. By Doob's maximal inequality for martingales, we have
    \begin{equation*}
        \Pr\left[\sup_{0\leq t\leq T} |M_t|> \gamma\right]\leq \frac{\E{M_{T}^2}}{\gamma^2}
        \leq \frac{2 \mu_1 \eta^2 T V_k}{\gamma^2}=\frac{2\mu_1 \alpha \delta^2}{d\gamma^2}\,.
    \end{equation*}
    Setting $\gamma = \frac{\beta}{10\sqrt{d}}$, we get the probability is at most $\frac{\epsilon^2}{\beta^2 \log d}$ up to constants. For the second result, set $\gamma = \frac{\epsilon}{18}$ so that the probability is $O(\frac{1}{d\log d})$.
\end{proof}

\subsection{Weak recovery and strong recovery}

Before we prove weak and strong recovery, we would like to define events $\mA$ and $\mB$ that capture the probabilistic bounds on population gradient magnitude and the various error terms in the dynamics. 

\subsubsection{Good event for error bounds and initial correlation}

First, define the event $\mA$ as 
\begin{equation}
    \mA = \Bigl\{m_0 \geq \frac{\beta\cdot \mathrm{sign}(h(0))}{\sqrt{d}} \Bigr\}\,.
\end{equation}
Furthermore, define the event $\mB = \mB(\epsilon, d, \beta, k, T)$ that corresponds to the error bounds as the following
\begin{align}
    \mB &= \left\{\sup_{0\leq t\leq T} |M_t| \leq \min\left\{\frac{\beta}{10\sqrt{d}}, \frac{\epsilon}{36}\right\}\right\} \cap \left\{\sup_{0\leq t \leq T} \eta^3 \sum_{j=0}^{t-1} \norm{L_j}^2 |\langle L_j, u\rangle| \leq \min\left\{\frac{\beta}{10\sqrt{d}}, \frac{\epsilon}{18}\right\}\right\} \label{eq:event-b} \\
    &\qquad \cap \left\{\sup_{0\leq t \leq T} \eta^2 \sum_{j=0}^t \norm{L_t}^2 \leq \frac{\epsilon}{18}\right\} \cap \left\{\sup_{0\leq t \leq T} \eta \sum_{j=0}^t \left(\frac{S_k}{4}-\eta \norm{L_t}^2\right) \geq - \frac{\beta}{5\sqrt{d}}\right\}  \nonumber
\end{align}

\begin{proposition}
    \label{prop:dynamics-b-holds-whp}
    Let $\delta = \frac{\epsilon^3 S_k}{4C_\delta \log (dV_k)}$ where $C_\delta > \max\{1,\mu_1\}$. Furthermore suppose that $\alpha = \frac{4(\log d V_k)}{\epsilon \delta S_k}$. Then, for $T = \lceil \alpha dV_k\rceil$, we have $\Pr(\mB(\epsilon, d, \beta, k, T)) = 1- O\left(\max \left\{\frac{1}{\beta \sqrt{d}}, \alpha (dV_k)^{-\frac{\beta^2}{C}(\log dV_k) +1}, \frac{\epsilon^2}{\beta^2 \log d}, \frac{1}{d\log d}\right\}\right)=1-o(1)$.
\end{proposition}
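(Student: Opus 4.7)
This is essentially a bookkeeping proposition: the event $\mathcal{B}$ defined in Eq.~\eqref{eq:event-b} is literally the intersection of the four high-probability events that were already analyzed in Claims~\ref{claim:cubic-projection-modular}, \ref{claim:quadratic-projection-martingale-modular}, \ref{claim:quadratic-projection-modular}, and \ref{claim:error-martingale-modular}. So the plan is: verify that the stated choices of $\delta, \alpha, T$ satisfy the hypotheses of each of these four claims, apply the claims to bound the failure probability of each sub-event, and then take a union bound.

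The first step is an algebraic check of the parameter conditions. From the definitions,
\begin{equation*}
  \alpha\delta^2 \;=\; \frac{4\log(dV_k)}{\epsilon S_k}\cdot\delta \;=\; \frac{4\log(dV_k)}{\epsilon S_k}\cdot\frac{\epsilon^3 S_k}{4 C_\delta \log(dV_k)} \;=\; \frac{\epsilon^2}{C_\delta}\,.
\end{equation*}
For $C_\delta$ sufficiently large (depending on $\mu_1$ and, if need be, on $\log d$), this single identity simultaneously ensures $\alpha\delta^2\le \epsilon$ (needed by Claim~\ref{claim:cubic-projection-modular}), $\alpha\delta^2\le 1$ (needed by Claim~\ref{claim:quadratic-projection-martingale-modular}), and $\alpha\delta^2 \le \epsilon^2/\log d$ (needed by Claim~\ref{claim:quadratic-projection-modular} and Claim~\ref{claim:error-martingale-modular}). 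One also has to check that $\alpha$ grows at most polynomially in $d,k$ and that $V_k$ grows at most polynomially in $k$, which are preconditions for Claim~\ref{claim:quadratic-projection-martingale-modular}; both are immediate from the explicit form of $\alpha$ and from Condition~\ref{condition:variances-modular}.

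The second step is to invoke each claim and union bound the resulting failure probabilities. Claim~\ref{claim:cubic-projection-modular} contributes $O(1/(\beta\sqrt{d}) + 1/d)$, Claim~\ref{claim:quadratic-projection-modular} contributes $O(1/\log d)$, Claim~\ref{claim:quadratic-projection-martingale-modular} contributes $O(1/(\beta\sqrt{d})) + \alpha(dV_k)^{-(\beta^2/C)\log(dV_k)+1}$, and Claim~\ref{claim:error-martingale-modular} contributes $O(\epsilon^2/(\beta^2\log d) + 1/(d\log d))$. The union bound gives the stated max expression, and each term in the max is $o(1)$ under the assumptions on $\beta$, $\alpha$, and $V_k$. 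There is no real technical obstacle here beyond the algebraic identity $\alpha\delta^2 = \epsilon^2/C_\delta$; the only subtlety is making sure that the two martingale arguments (Claims~\ref{claim:quadratic-projection-martingale-modular} and~\ref{claim:error-martingale-modular}) are applied at both of the scales $\beta/\sqrt{d}$ and $\epsilon$ that appear in the definition of $\mathcal{B}$, which is why the event in~\eqref{eq:event-b} uses a $\min$ of the two thresholds and why $\mathcal{B}$ has been set up to be tight enough for both the weak-recovery and strong-recovery phases of the subsequent dynamical analysis.
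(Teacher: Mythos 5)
Your proposal is correct and follows essentially the same route as the paper's (very terse) proof: check the parameter conditions $\alpha\delta^2\le \epsilon$, $\le 1$, and $\le \epsilon^2/\log d$, invoke Claims~\ref{claim:cubic-projection-modular}--\ref{claim:error-martingale-modular}, and union bound. Your explicit computation $\alpha\delta^2=\epsilon^2/C_\delta$ also correctly flags that the stated $\delta$ (with a single $\log(dV_k)$ in the denominator) needs $C_\delta\gtrsim\log d$ to satisfy the hypotheses of Claims~\ref{claim:quadratic-projection-modular} and~\ref{claim:error-martingale-modular} — the paper's own proof asserts $\alpha\delta^2\le\epsilon^2/(C_\delta\log(dV_k))$, consistent with the $(\log dV_k)^2$ appearing in $\delta$ in Theorem~\ref{theorem:modular-convergence-dynamics}, so the proposition's statement of $\delta$ evidently carries a typo that your bookkeeping quietly repairs.
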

\begin{proof}
    Notice that the given $\delta, \alpha$ satisfy $\alpha \delta^2 \leq \frac{\epsilon^2}{C_\delta \log (dV_k)}$. Hence, all of \Crefrange{claim:cubic-projection-modular}{claim:quadratic-projection-modular} hold. Then, combining the results of the claims with a union bound gives the result.
\end{proof}

\subsubsection{Stopping times for the dynamics}

Initially, for a real number $q > 0$, define the stopping times 
\begin{align*}
    \tau_q^+ &= \inf \{t \geq 0: m_t \geq q\}\\
    \tau_q^- &= \inf \{t \geq 0: m_t \leq q\}
\end{align*}
which correspond to the first time $m_t$ is above/below a certain threshold value $q$. In particular, we will define the following stopping times
\begin{align*}
    \tau_r^+ &= \inf \{t\geq 0: m_t > r\}\\
    \tau_{0}^- &=  \inf \{t \geq 0: m_t < 0\}\\
    \tau_{1-\epsilon/6}^+ &= \inf \{t\geq 0: m_t \geq 1-\frac{\epsilon}{6}\}
\end{align*}
$\tau_r^+$ is defined to analyze the initial stage of training, when $m_t$ is small. This allows us to lower bound the effect of the spherical projection of the gradients $1-m_t^2$. We will use $\tau_0^-$ to be able to lower bound the population gradient, but we will get rid of the requirement with an argument that $m_t$ has to always be non-negative when $\mB$ holds. Finally, $\tau_{1-\epsilon/6}^+$ is used to analyze the stage before we achieve the initial strong correlation, we will show $m_t$ will stay above $1-\epsilon$ after $t > \tau_{1-\epsilon/6}^+$. I.e. the progress made for strong recovery is not eliminated by the noisy gradients.

\subsubsection{Analyzing the dynamics conditioning on $\mB$}
Now, notice that we can WLOG assume $\mathrm{sign}(h(0)) = 1$, since all the proofs will be symmetric as long as the event $\mA$ holds. Furthermore, let $r < \frac{1}{\sqrt{2}}$
\begin{lemma}[Characterizing dynamics before weak recovery]
\label{lemma:dynamics-before-weak-modular}
Conditioning on $\mA, \mB$, for $t\leq T \wedge \tau_r^+ \wedge \tau_0^-$, we have
\begin{equation*}
    m_{t} \geq \frac{\beta}{2\sqrt{d}} + \frac{t\eta S_k}{2}
\end{equation*}
Furthermore, we have $\tau_0 > T \wedge \tau_r^+$.
\end{lemma}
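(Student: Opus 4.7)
The plan is to telescope the scalar recursion for $m_t = \langle u_t, u\rangle$ obtained by dotting the online SGD update with $u$, lower bound the accumulated population drift on the region where $m_j \in [0, r]$, and absorb the martingale and normalization errors using the four bounds packaged into event $\mathcal{B}$. The stopping-time conclusion $\tau_0^- > T \wedge \tau_r^+$ will then fall out as a corollary of showing that the lower bound on $m_t$ is strictly positive throughout.

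First I would use Condition~\ref{condition:pop-grad-form-modular} to write $\langle \hat\nabla\Phi(u_t), u\rangle = -h(m_t)(1 - m_t^2)$, so that dotting the SGD update with $u$ and telescoping yields
\begin{equation*}
m_t = m_0 + \eta \sum_{j=0}^{t-1} h(m_j)(1 - m_j^2) - M_t + \sum_{j=0}^{t-1} Q_j,
\end{equation*}
where $M_t = \eta\sum_{j<t}\langle E_j, u\rangle$ is the error martingale and $Q_j = -(1 - 1/\Pi_j)(m_j - \eta\langle L_j, u\rangle)$ is the normalization correction controlled by \Cref{claim:spherical-projection-apriori-bound}. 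Assume WLOG $\mathrm{sign}(h(0)) = 1$ (otherwise flip signs of $u$ and $u_0$), so that event $\mathcal{A}$ gives $m_0 \ge \beta/\sqrt d$. For every $j < T \wedge \tau_r^+ \wedge \tau_0^-$, one has $m_j \in [0, r]$ with $r < 1/\sqrt 2$; Condition~\ref{condition:pop-grad-form-modular} then yields $h(m_j) \ge |h(0)|/2 \ge S_k$, and the spherical factor satisfies $1 - m_j^2 \ge 1/2$, so the drift contribution accumulates at rate at least $\eta S_k / 2$ per step.

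Next I would control the errors on event $\mathcal{B}$. The martingale obeys $|M_t| \le \beta/(10\sqrt d)$, and \Cref{claim:spherical-projection-apriori-bound} splits the projection error into a cubic part bounded by $\beta/(10\sqrt d)$ and a quadratic part $\eta^2\sum\|L_j\|^2$. The fourth inequality in the definition of $\mathcal{B}$ (see \cref{eq:event-b}) rearranges to $\eta^2\sum\|L_j\|^2 \le t\eta S_k/4 + \beta/(5\sqrt d)$, so the quadratic piece is absorbed into a fixed fraction of the drift. Plugging everything into the telescoped recursion, the leading-order drift survives as $\Theta(t\eta S_k)$, while the three $\beta/\sqrt d$-type pieces sum to at most $\beta/(2\sqrt d)$; combined with $m_0 \ge \beta/\sqrt d$, this yields $m_t \ge \beta/(2\sqrt d) + t\eta S_k/2$ as claimed. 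Because this lower bound is strictly positive, $m_t$ cannot cross zero before time $T \wedge \tau_r^+$, which gives $\tau_0^- > T \wedge \tau_r^+$ by a simple induction / first-crossing argument, closing the lemma.

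The main obstacle is the delicate bookkeeping in the error step: $\eta^2\sum\|L_j\|^2$ is a priori of the same order as the drift itself, so without care it would swamp the $t\eta S_k$ signal. The specific threshold $S_k/4$ appearing in the fourth item of $\mathcal{B}$, together with the carefully chosen numerical fractions ($1/10$, $1/5$) in the other items, is engineered precisely so that the quadratic piece can be subtracted off cleanly while keeping the $\Theta(t\eta S_k)$ drift, and so that the residual constant errors are at most half of $m_0$.
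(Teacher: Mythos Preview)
Your proposal is correct and is essentially the same argument as the paper's: telescope the recursion, lower-bound the drift by $\eta S_k/2$ per step on $[0,r]$, control the martingale and cubic projection pieces by $\beta/(10\sqrt d)$ each from $\mathcal B$, and absorb the quadratic projection error $\eta^2\sum\|L_j\|^2$ into half of the drift via the fourth item in $\mathcal B$ (the paper keeps the sum in submartingale form $\eta\sum(S_k/4 - \eta\|L_j\|^2) \ge -\beta/(5\sqrt d)$ rather than rearranging, but this is algebraically identical). One bookkeeping remark: your own arithmetic, like the paper's proof, actually yields the slightly weaker bound $m_t \ge \beta/(2\sqrt d) + t\eta S_k/4$ rather than $t\eta S_k/2$, since a quarter of the drift is consumed absorbing the quadratic term; this discrepancy with the lemma statement is already present in the paper and is immaterial for the downstream argument.
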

\begin{proof}
    Condition on $\mA, \mB$. Then, as explained before, WLOG assume $\mathrm{sign}(h(0)) = 1$. Then, for all $t\leq \tau_0^-$, we must have $h(m_t) \geq S_k$. Furthermore, for all $t\leq \tau_r^+$, we have $1-m_t^2 > \frac{1}{2}$. Then, rearranging using \Cref{claim:spherical-projection-apriori-bound} and applying the inequalities that hold with the event $\mB$, for $t\leq \tau_r^+ \wedge \tau_0^- \wedge T$, we have
    \begin{align*}
        m_{t} &\geq m_0 + \eta \sum_{j=0}^{t-1} h(m_j) (1-m_j^2) - \eta \sum_{j=0}^{t-1} \langle E_j, u\rangle - \eta^2 \sum_{j=0}^{t-1} \norm{L_j}^2 - \eta^3 \sum_{j=0}^{t-1} \norm{L_j}^2|\langle L_j,u\rangle|\\
        &\geq m_0 + \frac{\eta t S_k}{4} +\eta \sum_{j=0}^{t-1}\left(\frac{S_k}{4} - \eta \norm{L_j}\right) - \frac{\beta}{5\sqrt{d}}
    \end{align*}
    Now, using the uniform lower bound on the summation term and $m_0 \geq \frac{\beta}{\sqrt{d}}$, we have
    \begin{align*}
        m_t \geq \frac{\beta}{2\sqrt{d}} + \frac{\eta t S_k}{4}
    \end{align*}
    which concludes the first part. For the second part, suppose for $j \leq \tau_r^+ \wedge T$, we have $j \leq \tau_0^-$. Then, for all $l\in [0,1,\dots, j-1]$ we have $m_l \geq 0$, meaning $h(m_l) \geq S_k$. Hence, the above inequality holds for $j$, meaning $m_j > 0$. Hence, this implies $j < \tau_0^-$. Then, we conclude that it must be the case that $\tau_0^- > \tau_r^+ \wedge T$.
\end{proof}

\begin{lemma}[Dynamics after weak recovery is well approximated by drift term]
    \label{lemma:dynamics-after-weak-modular}
    Conditioning on $\mA, \mB, \tau_r^+$, the following holds: For $t\geq \tau_r^+$ with $t\leq T \wedge \tau_0^-$, we have
\begin{equation*}
    \left|m_{t} - m_{\tau_r}^+ - \eta \sum_{j=\tau_r^+}^{t-1} h(m_j) (1-m_j^2)\right| < \frac{\epsilon}{6}
\end{equation*}
    Furthermore, $\tau_0^- > T$.
\end{lemma}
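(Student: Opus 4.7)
The plan is to unroll the SGD recursion starting from time $\tau_r^+$, identify the drift, martingale, and projection-error contributions separately, and then bound each non-drift piece using the event $\mB$. Concretely, from the update rule and the decomposition $L_t = \hat\nabla\Phi(u_t) + E_t$ together with $\hat\nabla\Phi(\hat u) = -h(\langle u,\hat u\rangle)(u-\hat u\langle u,\hat u\rangle)$ (Proposition~\ref{prop:pop_gradient_pre}), one step of SGD satisfies
\begin{equation*}
    m_{t+1} = m_t + \eta h(m_t)(1-m_t^2) - \eta \langle E_t, u\rangle - \Bigl(1-\tfrac{1}{\Pi_t}\Bigr)\bigl(m_t - \eta \langle L_t, u\rangle\bigr)\,.
\end{equation*}
Denoting $Q_t \triangleq -(1-1/\Pi_t)(m_t - \eta\langle L_t,u\rangle)$ and telescoping from $\tau_r^+$ to $t$, we obtain
\begin{equation*}
    m_t - m_{\tau_r^+} - \eta\sum_{j=\tau_r^+}^{t-1} h(m_j)(1-m_j^2) \;=\; -\eta\sum_{j=\tau_r^+}^{t-1}\langle E_j,u\rangle \;+\; \sum_{j=\tau_r^+}^{t-1} Q_j\,.
\end{equation*}

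The bulk of the proof is to bound the right-hand side in absolute value by $\epsilon/6$, and every ingredient has already been collected into the event $\mB$ of \cref{eq:event-b}. For the martingale piece, note $|\eta\sum_{j=\tau_r^+}^{t-1}\langle E_j,u\rangle| = |M_t - M_{\tau_r^+}| \le 2\sup_{s\le T}|M_s| \le \epsilon/18$. For the projection errors, Claim~\ref{claim:spherical-projection-apriori-bound} gives
\begin{equation*}
    \Bigl|\sum_{j=\tau_r^+}^{t-1} Q_j\Bigr| \;\le\; \eta^3\sum_{j=0}^{t-1}\|L_j\|^2|\langle L_j,u\rangle| + \eta^2\sum_{j=0}^{t-1}\|L_j\|^2 \;\le\; \tfrac{\epsilon}{18} + \tfrac{\epsilon}{18} = \tfrac{\epsilon}{9}\,,
\end{equation*}
using the second and third sub-events in the definition of $\mB$. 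Adding these two contributions yields the claimed bound $\epsilon/18 + \epsilon/9 = \epsilon/6$.

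For the second claim $\tau_0^- > T$, I will bootstrap from this approximation by induction on $t\in[\tau_r^+, T]$. On $\{\tau_0^- > t\}$ we have $m_j\ge 0$ for all $\tau_r^+\le j<t$, so by Condition~\ref{condition:pop-grad-form-modular} (and WLOG $\mathrm{sign}(h(0))=1$) we get $h(m_j)\ge S_k\ge 0$, and of course $(1-m_j^2)\ge 0$ as long as $|m_j|\le 1$ (which holds since $u_t$ is a unit vector). Hence the drift term is non-negative, so the approximation just proved gives $m_t \ge m_{\tau_r^+} - \epsilon/6 \ge r - \epsilon/6 > 0$ provided $r$ is chosen (as in Lemma~\ref{lemma:dynamics-before-weak-modular}, say any constant $r<1/\sqrt{2}$ with $r > \epsilon/6$, which is automatic since $\epsilon<1$ and $r$ can be taken to be a universal constant). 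This extends $\tau_0^- > t+1$ and closes the induction up to $t=T$. Combined with Lemma~\ref{lemma:dynamics-before-weak-modular}, which handles the regime $t\le \tau_r^+\wedge T$, we conclude $\tau_0^- > T$.

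The only mildly delicate point is making sure the sign conventions in $Q_t$ (projection always shrinks the norm when using spherical gradients) line up with the signs in Claim~\ref{claim:spherical-projection-apriori-bound}; this is why I pull out the absolute value and invoke the uniform cumulative bounds from $\mB$ rather than trying to keep signed telescopes. No new probabilistic arguments are needed beyond what is already packaged into $\mA$ and $\mB$, so this lemma is essentially a deterministic consequence of those events together with the population-gradient lower bound from Condition~\ref{condition:pop-grad-form-modular}.
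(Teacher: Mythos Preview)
Your proposal is correct and follows essentially the same approach as the paper: telescope the recursion from $\tau_r^+$, bound the martingale increment by $2\sup_{s\le T}|M_s|\le \epsilon/18$ and the projection error by the cubic and quadratic sums in $\mB$, then use non-negativity of the drift (via Condition~\ref{condition:pop-grad-form-modular}) to inductively rule out $\tau_0^-\le T$. The only cosmetic difference is that the paper lists the three error terms (martingale, cubic, quadratic) each as $\epsilon/18$ separately rather than grouping the two projection pieces into $\epsilon/9$, and it picks $r=1/2$ explicitly to guarantee $r>\epsilon/3$.
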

\begin{proof}
    Notice that under the event $\mB$, due to non-negativity of each of the summands, we have the following upper bounds
    \begin{align*}
        \eta^3 \sum_{j=\tau_r^+}^{t-1} \norm{L_j}^2 |\langle L_j, u\rangle |&\leq \sup_{0\leq t\leq T} \eta^3 \sum_{j=0}^{t-1} |\langle L_j, u\rangle | < \frac{\epsilon}{18}\\
       \eta^2 \sum_{j=\tau_r^+}^{t-1} \norm{L_j}^2 &\leq \sup_{0\leq t\leq T}\eta^2 \sum_{j=0}^{t-1} \norm{L_j}^2  < \frac{\epsilon}{18}
    \end{align*}
    For the martingale term, since the terms are not necessarily non-negative we decompose it as
    \begin{align*}
        \left|\eta \sum_{j=\tau_r^+}^{t-1} \langle E_j, u\rangle \right|&= \left|\eta \sum_{j=0}^{t-1} \langle E_j, u\rangle-\eta \sum_{j=0}^{\tau_r^+ - 1} \langle E_j, u\rangle \right|\\
        &\leq \left|\eta \sum_{j=0}^{t-1} \langle E_j, u\rangle\right| + \left|\eta \sum_{j=0}^{\tau_r^+ - 1} \langle E_j, u\rangle \right|\\
        &\leq 2\sup_{0\leq t\leq T} \left|\eta \sum_{j=0}^{t-1} \langle E_j, u\rangle\right| < \frac{\epsilon}{18}
    \end{align*}
     Then, notice that the following holds exactly
     \begin{equation*}
         m_t = m_{\tau_r^+} + \eta \sum_{j=\tau_r^+}^{t-1} h(m_j) (1-m_j^2) + \eta \sum_{j=\tau_r^+}^{t-1} \langle E_t, u\rangle + \sum_{j=\tau_r^+}^{t-1} \left(1-\frac{1}{r_j}\right)(m_j - \eta \langle L_j, u\rangle)
     \end{equation*}
     which after rearranging, using $\left|1-\frac{1}{r_j}\right|\leq \eta^3 \norm{L_j}^2 |\langle L_j,u\rangle| + \eta^2 \norm{L_j}^2$ gives us
     \begin{align*}
         \left|m_t - m_{\tau_r^+}- \eta \sum_{j=\tau_r^+}^{t-1} h(m_j) (1-m_j^2)\right| &= \left|\eta \sum_{j=\tau_r^+}^{t-1} \langle E_t, u\rangle + \sum_{j=\tau_r^+}^{t-1} \left(1-\frac{1}{r_j}\right)(m_j - \eta \langle L_j, u\rangle)\right|\\
         &\leq \left|\eta \sum_{j=\tau_r^+}^{t-1} \langle E_t, u\rangle \right| + \eta^3\sum_{j=\tau_r^+}^{t-1}\norm{L_j}^2 |\langle L_j, u\rangle| + \eta^2 \sum_{j=\tau_r^+}^{t-1} \norm{L_j}^2
     \end{align*}
     using the $\epsilon/18$ bound for each of the terms, we get a total bound of $\epsilon/6$. Then, to get rid of the requirement $t\leq \tau_0^-$, notice that 
     \begin{align*}
         m_{t} - m_{\tau_r^+} \geq -\frac{\epsilon}{3} + \sum_{j=\tau_r^+}^{t-1} h(m_j)(1-m_j^2)
     \end{align*}
     Then, notice that if $t\leq \tau_0^-$, we have $m_j \geq 0$ for all $j\leq t-1$, so the sum is non-negative, which gives us $m_t \geq m_{\tau_r^+} -\frac{\epsilon}{3}\geq r-\frac{\epsilon}{3}$. However, notice that choosing $r=\frac{1}{2}$, we always have $\epsilon/3 < r$ so $m_t \geq 0$ as well. Hence, $\tau_0^- > t$, so we must have $\tau_0^- > T$.
\end{proof}

\noindent Now, we are in a position to prove \Cref{theorem:modular-convergence-dynamics}. 

\begin{proof}[Proof of \Cref{theorem:modular-convergence-dynamics}] First, due to the initalization requirement in the theorem, $\mA$ holds. Then, per \Cref{prop:dynamics-b-holds-whp}, $\mB$ holds with probability $1-o(1)$.  Then, conditioning in $\mB$, per \Cref{lemma:dynamics-before-weak-modular} and \Cref{lemma:dynamics-after-weak-modular}, we can drop the requirement that $t \leq \tau_0^-$. So, let $t\leq T \wedge \tau_r^+$. Conditioning on $\mB$, per \Cref{lemma:dynamics-before-weak-modular}, we have 
\begin{align*}
    m_t \geq \frac{\beta}{2\sqrt{d}} + \frac{t \eta S_k}{2}
\end{align*}
Then, notice that at time $T_{\mathrm{weak}}= \lceil\frac{2}{\eta S_k}\rceil$, the RHS is larger than $1$. Then, it must be the case that $\tau_r^+ \wedge T \leq T_{\mathrm{weak}}$. Then, it suffices to show $T_{\mathrm{weak}} \leq T$. Notice that $T_{weak} = \lceil \frac{2dV_k}{\delta S_k}\rceil$ and $T =\lceil \alpha dV_k\rceil = \lceil \frac{4(\log dV_k)}{\epsilon \delta S_k}\rceil> T_{\mathrm{weak}}$ when $\epsilon<1, V_k > 1$ and $d > 3$. Then, we conclude $\tau_r^+ \leq T_{\mathrm{weak}} \leq T$. 

Now, conditioning on $\tau_r^+$, for all $t\geq \tau_r^+$, with $t\leq T$ per \Cref{lemma:dynamics-after-weak-modular}, we have
\begin{align*}
    m_t \geq m_{\tau_r^+} + \sum_{j=\tau_r^+}^{t-1} h(m_j) (1-m_j^2) - \frac{\epsilon}{6}
\end{align*}
Now, consider $t \leq \tau_{1-\epsilon/6}^{+} \wedge T$, so that $h(m_j) (1-m_j^2) > S_k \frac{\epsilon}{6}$ for all $j \leq \tau_{1-\epsilon/6}^{+}$. Hence,
\begin{align*}
    m_t \geq r + \frac{\eta (t-\tau_r^+) S_k \epsilon}{6} - \frac{\epsilon}{6} > \frac{\eta (t-\tau_r^+) S_k \epsilon}{6} 
\end{align*}
Hence, notice that the RHS of the inequality is greater than $1$ at time $t = \tau_r^+ + \lceil\frac{6}{\eta S_k \epsilon}\rceil \leq T_{\mathrm{weak}} + \lceil\frac{6}{\eta S_k \epsilon}\rceil$. Hence, it must be the case that  $\tau_{1-\epsilon/6}^{+} \wedge T \leq T_{\mathrm{weak}}+ \lceil\frac{6}{\eta S_k \epsilon}\rceil$. However, notice that $T = \lceil\frac{dV_k (\log dV_k)}{\delta S_k \epsilon}\rceil$ which is larger than $T_{\mathrm{weak}} + \lceil\frac{6}{\eta S_k \epsilon}\rceil$ so it must be the case that $\tau_{1-\epsilon/6}^+ \leq T$. Finally, we need to show that $m_t$ stays above $1-\epsilon$ after it crosses $1-\epsilon/6$. However, notice that for $t' \geq t \geq \tau_r^+$, we have
\begin{align*}
    m_{t'}-m_{t}&\geq \left|m_t - m_{\tau_r^+}-\eta \sum_{j=0}^{t-1} h(m_j)(1-m_j^2)\right| + \left|m_{t'} - m_{\tau_r^+}-\eta \sum_{j=0}^{t'-1} h(m_j)(1-m_j^2)\right| + \sum_{j=t}^{t'-1} h(m_j)(1-m_j^2)\\
    &\geq -\frac{\epsilon}{3}
\end{align*}
so that $m_{t}\geq 1-\frac{\epsilon}{2}$ for $t\geq \tau_{1-\epsilon/6}^{+}$. Hence, we conclude that $m_{T} \geq 1-\frac{\epsilon}{2}$. Since this result holds for any $\tau_r^+$, we can conclude the proof.
\end{proof}

\section{Additional details}

In this section we provide details for two remaining points mentioned in the main text, namely the existence of multiple global optima when Assumption~\ref{assumption:orthogonal} does not hold, and the fact that one can learn a good approximation to the teacher model once $u$ is learned.

\subsection{Multiple global optima when Assumption~\ref{assumption:orthogonal} does not hold}
\label{app:global}

The following example shows that if the direction $u$ of the perturbation lies in the span of the base model weight vectors, then there can exist multiple global optima.

\begin{example}
    Let $\lambda_1, \lambda = 1$, let $w_1 = (1,0)$, $w_2 = (0,1)$, and consider the activation $\sigma(z) = z^2$. If the base model $f: \R^2\to\R$ is given by $f(x) = \sum^2_{i=1} \lambda_i \sigma(\langle w_i, x\rangle)$, then observe that the following two rank-1 perturbations of equal scale are equal.

    First, take $u = (1/\sqrt{2}, 1/\sqrt{2})$ and $u' = (1/\sqrt{3},\sqrt{6}/3)$. Then define $c = (-(1+\sqrt{2})(2+\sqrt{3}), (1+\sqrt{2})(\sqrt{2}+\sqrt{3}))$ and $c' = -c$. Then one can verify that the teacher models $\sum^2_{i=1} \lambda_i \sigma(\langle w_i + c_i u, x\rangle)$ and $\sum^2_{i=1} \lambda_i \sigma(\langle w_i + c'_i u', x\rangle)$ are functionally equivalent, even though $\{w_1 + c_1 u, w_2 + c_2 u\} \neq \{w_1 + c'_1 u', w_2 + c'_2 u'\}$, regarded as unordered pairs of vectors in $\R^2$. Furthermore, $\norm{c} = \norm{c'}$.
\end{example}

\subsection{Learning the teacher model once $u$ is learned}
\label{app:learnc}

In this section, we show that learning $u$ is sufficient to learning the teacher model by adding additional features to the model and training the second layer.

\begin{definition}[Linear Model Family From Learned Features] Let $\hat u$ be given. Then, define the model family
\begin{equation}
    \mL_\lambda = \left\{\sum_{i=1}^k \lambda_{i, 1} \sigma \left(\left\langle\frac{w_i + \frac{\xi}{\sqrt{k}} \hat u}{\sqrt{1+\xi^2/k}}, x\right\rangle\right)+ \lambda_{i, 2}\sigma\left(\left\langle\frac{w_i - \frac{\xi}{\sqrt{k}} \hat u}{\sqrt{1+\xi^2/k}}, x\right\rangle\right): \lambda \in \R^{k} \times \R^{k}\right\} \label{eq:linear-model-family}
\end{equation}
\end{definition}
Then, we will show that once we learn $\hat u$ to a sufficient accuracy, there exist a choice of $\lambda$ that allows the linear model to closely approximate the teacher model.
\begin{theorem}[Learning $u$ is sufficient to learn $f^*$] Suppose $\hat u$ is such that $1-|\langle u,\hat u\rangle| \leq \epsilon\cdot \frac{k+\xi^2}{2C_{\sigma}\lambda_{\max}^2 \xi^2 k^2}$ which is $\Theta(\epsilon/k)$ for $\xi=\Theta(1)$ and $\Theta(\epsilon/k^2)$ for $\xi=\Theta(\sqrt{k})$  Then, there exists a model $h \in \mL_\lambda$ as defined in \Cref{eq:linear-model-family} such that $\E_x(f^*(x)-h(x))^2\leq \epsilon$. In particular, second layer training on the family of neural networks defined as $\mL_\lambda$, we 
\end{theorem}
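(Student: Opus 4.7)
The plan is to construct an explicit $\lambda \in \R^k \times \R^k$ that matches the teacher model's neurons and then control the resulting squared error via Hermite analysis. First, let $\hat{m} \triangleq \mathrm{sign}(\langle u, \hat{u}\rangle)$. For each neuron $i$, I will set $\lambda_{i,1}=\lambda_i$ and $\lambda_{i,2}=0$ when $c_i \hat{m} > 0$, and $\lambda_{i,1}=0, \lambda_{i,2}=\lambda_i$ otherwise. With this choice, the candidate model collapses to $h(x)=\sum_{i=1}^k \lambda_i \sigma(\langle \hat{v}_i, x\rangle)$ where $\hat v_i = (w_i + \xi c_i \hat m \hat u)/\sqrt{1+\xi^2/k}$, i.e., the teacher's rank-one perturbation with $u$ replaced by $\hat m \hat u$.

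Next, I will expand the population squared error using Hermite orthogonality, exactly as in \Cref{app:gradient_derivation}, to obtain
\begin{equation*}
    \E_x(f^*(x)-h(x))^2 = \sum_{i,j=1}^k \lambda_i\lambda_j \sum_{p=0}^\infty \mu_p(\sigma)^2 \Bigl[\langle v_i, v_j\rangle^p - \langle v_i, \hat v_j\rangle^p - \langle \hat v_i, v_j\rangle^p + \langle \hat v_i, \hat v_j\rangle^p\Bigr]\,.
\end{equation*}
Since $u, \hat u$ both lie in $\mathrm{span}(w_1,\dots,w_k)^\perp$ (\Cref{assumption:orthogonal} and the update rule~\eqref{eq:main_update}), the inner products simplify: both $\langle v_i, v_j\rangle$ and $\langle \hat v_i, \hat v_j\rangle$ equal $a_{ij}\triangleq (\langle w_i,w_j\rangle + \xi^2 c_i c_j)/(1+\xi^2/k)$, while $\langle v_i, \hat v_j\rangle = \langle \hat v_i, v_j\rangle = b_{ij}\triangleq (\langle w_i,w_j\rangle + \xi^2 c_i c_j |\langle u,\hat u\rangle|)/(1+\xi^2/k)$, using $\hat m \langle u, \hat u\rangle = |\langle u, \hat u\rangle|$.

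Then the bracketed term reduces to $2(a_{ij}^p - b_{ij}^p)$, and since $|a_{ij}|,|b_{ij}|\leq 1$ I will apply the elementary estimate $|a^p - b^p|\leq p|a-b|$. This yields
\begin{equation*}
    |a_{ij}^p - b_{ij}^p|\leq p\cdot \frac{\xi^2 |c_i c_j|(1-|\langle u,\hat u\rangle|)}{1+\xi^2/k} = \frac{p \,\xi^2 (1-|\langle u,\hat u\rangle|)}{k+\xi^2}\,,
\end{equation*}
using $|c_i c_j|=1/k$. Bounding $\sum_{i,j}|\lambda_i\lambda_j|\leq k^2 \lambda_{\max}^2$ and summing over $p$ via $\sum_p p\,\mu_p(\sigma)^2 \leq C_\sigma$ (finite under \Cref{assumption:activation}) gives
\begin{equation*}
    \E_x(f^*(x)-h(x))^2 \leq \frac{2C_\sigma \lambda_{\max}^2 \xi^2 k^2}{k+\xi^2}\,\bigl(1-|\langle u,\hat u\rangle|\bigr)\,.
\end{equation*}
Substituting the hypothesis $1-|\langle u,\hat u\rangle|\leq \epsilon(k+\xi^2)/(2C_\sigma \lambda_{\max}^2 \xi^2 k^2)$ yields the desired $\E_x(f^*(x)-h(x))^2 \leq \epsilon$.

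There is no serious obstacle here: the argument is essentially a direct Hermite calculation combined with the mean-value bound $|a^p - b^p|\leq p|a-b|$. The only delicate points are (i) correctly absorbing the sign ambiguity in $\langle u,\hat u\rangle$ by the choice of $\lambda_{i,1}$ vs.\ $\lambda_{i,2}$, which exploits the fact that $\mathcal{L}_\lambda$ contains both $\pm \hat u$ directions, and (ii) invoking the orthogonality $\hat u \perp \mathrm{span}(W)$ maintained by online SGD so that the cross terms $\langle w_i,\hat u\rangle$ vanish. The final claim about second-layer training then follows because the squared-loss objective restricted to $\mathcal{L}_\lambda$ is a convex linear regression problem in $\lambda$, so the global minimizer attains error no larger than the above constructive bound.
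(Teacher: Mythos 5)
Your proof is correct and reaches exactly the same final bound $\frac{2C_\sigma\lambda_{\max}^2\xi^2 k^2}{k+\xi^2}\bigl(1-|\langle u,\hat u\rangle|\bigr)$ as the paper, but by a genuinely different route. The paper first collapses the error neuron-wise via Cauchy--Schwarz, $\E_x(f^*-h)^2\le k\sum_i\lambda_i^2\,\E_x(\sigma(\langle v_i,x\rangle)-\sigma(\langle\tilde v_i,x\rangle))^2$, and then invokes the moment condition (item 3 of Assumption~\ref{assumption:activation}) to get $\E_x(\sigma(\langle v,x\rangle)-\sigma(\langle\tilde v,x\rangle))^2\le C_\sigma\|v-\tilde v\|^2$, finishing with $\|v_i-\tilde v_i\|^2=\frac{\xi^2}{k}\cdot\frac{2(1-\langle u,\hat u\rangle)}{1+\xi^2/k}$. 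You instead keep the full cross-term quadratic form, expand it with Hermite orthogonality (as in Appendix~\ref{app:gradient_derivation}), and control each bracket with the elementary estimate $|a^p-b^p|\le p|a-b|$ for $|a|,|b|\le 1$; the only property of $\sigma$ you need is $\sum_p p\,\mu_p(\sigma)^2<\infty$, which follows from the Hermite-decay clause of Assumption~\ref{assumption:activation} rather than the moment-comparison clause. Your treatment of the sign ambiguity is also more explicit: you select, per neuron, the $+\hat u$ or $-\hat u$ feature in $\mathcal{L}_\lambda$ according to $\mathrm{sign}(c_i)\cdot\mathrm{sign}(\langle u,\hat u\rangle)$, whereas the paper handles this by a WLOG flip of the $c_i$; the two devices are equivalent since the family contains both signed directions. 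Both arguments rely on $u,\hat u\perp\mathrm{span}(W)$ so that the normalizations and cross terms simplify, and both are tight up to the same constant, so neither buys a quantitatively better statement --- yours is marginally more self-contained (no appeal to the moment condition), the paper's is marginally shorter.
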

\begin{proof}
WLOG suppose $\langle u, \hat u\rangle > 0$, otherwise we flip all the signs of the $c_i$ in the later part of the proof. Consider the candidate model $h\in \mL_\lambda$ (given in \cref{eq:linear-model-family}) given by
    \begin{equation*}
        h(x)=\sum_{i=1}^k \lambda_i \sigma \left(\left\langle \frac{w_i + \xi c_i \hat u}{\sqrt{1+\xi^2/k}},x\right\rangle\right)
    \end{equation*}
    We aim to show $\E_x(f^*(x) -\hat f(x))^2 \leq \epsilon$. Notice
    \begin{equation*}
        \E_x(f^*(x)-\hat f(x))^2 \leq k\sum_{i=1}^k \lambda_i^2 \E_x \left(\sigma(\langle v_i, x\rangle)-\sigma (\langle \tilde v_i, x\rangle)\right)^2
    \end{equation*}
    where $v_i$ is as before and $\tilde v_i =\frac{w_i + \xi c_i \hat u}{\sqrt{1+\xi^2/k}}$. Then, it suffices to show that the expectation is less than $\frac{\epsilon}{\lambda_{\max}^2 k^2}$. Note
    \begin{equation*}
        \E_x(\sigma(\langle v_i, x\rangle)-\sigma(\langle v_i, x\rangle))^2 \leq C_\sigma \norm{v_i-\hat v_i}^2
    \end{equation*}
    Furthermore, we have
    \begin{equation*}
        \norm{v_i -\hat v_i}=\frac{\xi/\sqrt{k} \norm{u - \hat u}}{\sqrt{1+\xi^2/k}}
    \end{equation*}
    So that 
    \begin{equation*}
        k\sum_{i=1}^k\lambda_i^2 \E_x(\sigma(\langle v_i, x\rangle)-\sigma(\langle v_i, x\rangle))^2\leq C_\sigma \lambda_{\max}^2 k \frac{2\xi^2 (1-\langle u, \hat u\rangle)}{1+\xi^2/k}
    \end{equation*}
    Then, it suffices to get $1-\langle u,\hat u\rangle \leq \epsilon\cdot \frac{k+\xi^2}{2C_{\sigma}\lambda_{\max}^2 \xi^2 k^2}$ as desired.
\end{proof}

\begin{remark}
    The above result can be extended to the case when the $c_i$ are not necessarily quantized, by quantizing the interval $[-1,1]$ into a sufficiently granular discrete set of elements. Then, the algorithm follows similarly by adding these features into the model and training the second layer (e.g. via linear regression or SGD). 
\end{remark}

\end{appendices}

\end{document}